\newlength{\hfwidth}
\newcommand{\hflogo}{%
  \setlength{\hfwidth}{1.2em}
  \raisebox{-0.15em}{%
    \includegraphics[height=\hfwidth]{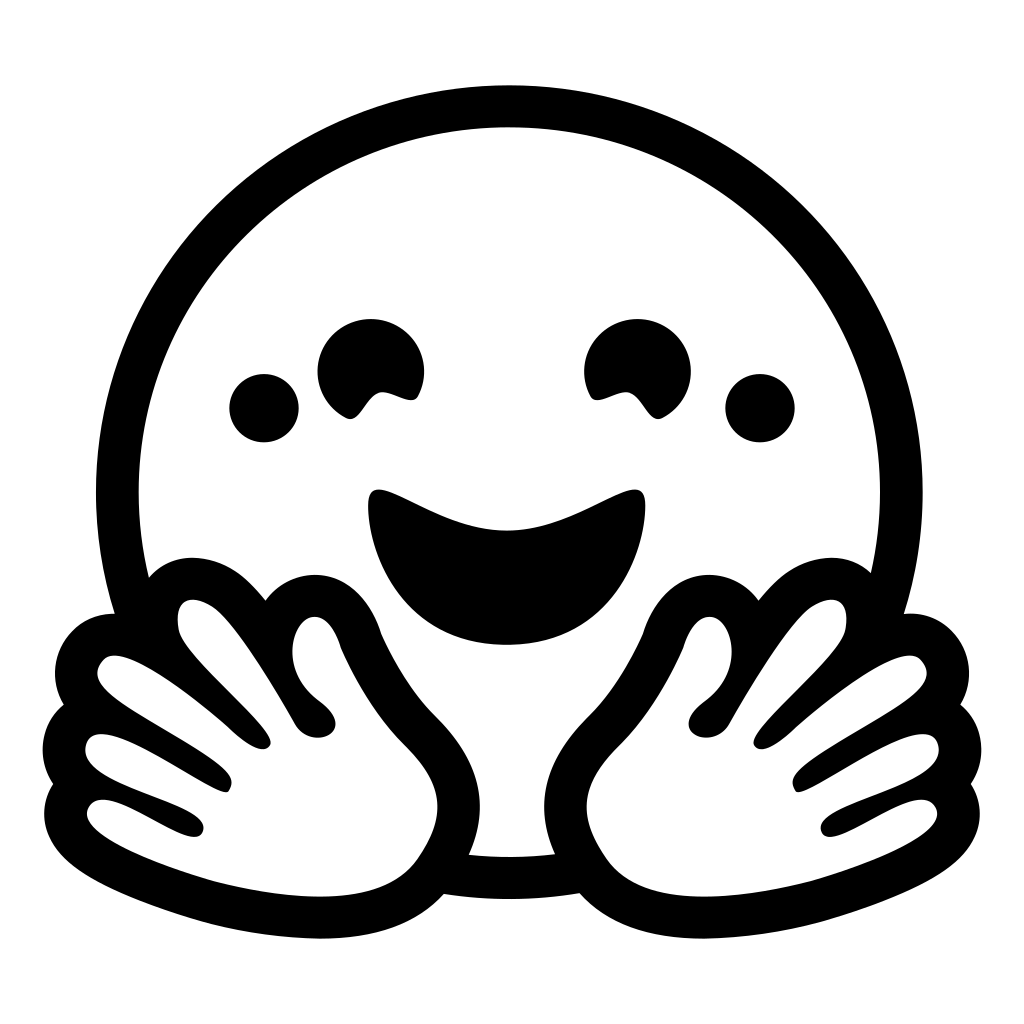}%
  }%
}
\setlist[itemize]{itemsep=0.1cm, parsep=0pt, topsep=0.2cm, partopsep=0pt}
\setlist[enumerate]{itemsep=0.1cm, parsep=0pt, topsep=0.2cm, partopsep=0pt}
\theoremstyle{plain}
\newtheorem*{remark}{Remark}
\newtheorem{proposition}{Proposition}
\newtheorem{definition}{Definition}
\begin{document}
\title{Do Reasoning Models Enhance Embedding Models?}
\author{Wun Yu Chan$^1$, 
Shaojin Chen$^1$,
Huihao Jing$^1$,
Kwun Hang Lau$^1$,
Elton Chun-Chai Li$^1$, \\
Zihao Wang$^1$,
Haoran Li$^1$,
Yangqiu Song$^1$ \\
\\
$^1$\textit{CSE, HKUST} \\
\small Correspondance: \texttt{wychanbu@connect.ust.hk} \\
\small \faGithub \, \href{https://github.com/HKUST-KnowComp/Reasoning-Embedding}{Reasoning-Embedding} \,\,\hflogo\, \href{https://huggingface.co/collections/lucaswychan/reasoning-embedding}{Reasoning-Embedding}}

\maketitle
\begin{abstract}
State-of-the-art embedding models are increasingly derived from decoder-only Large Language Model (LLM) backbones adapted via contrastive learning. Given the emergence of reasoning models trained via Reinforcement Learning with Verifiable Rewards (RLVR), a natural question arises: do enhanced reasoning translate to superior semantic representations when these models serve as embedding initializations? Contrary to expectation, our evaluation on MTEB and BRIGHT reveals a \textbf{null effect}: embedding models initialized from RLVR-tuned backbones yield no consistent performance advantage over their base counterparts when subjected to identical training recipes. To unpack this paradox, we introduce \textbf{H}ierarchical \textbf{R}epresentation \textbf{S}imilarity \textbf{A}nalysis (HRSA), a framework that decomposes similarity across representation, geometry, and function levels. HRSA reveals that while RLVR induces irreversible latent manifold's local geometry reorganization and reversible coordinate basis drift, it preserves the global manifold geometry and linear readout. Consequently, subsequent contrastive learning drives strong alignment between base- and reasoning-initialized models, a phenomenon we term \textbf{Manifold Realignment}. Empirically, our findings suggest that unlike Supervised Fine-Tuning (SFT), RLVR optimizes trajectories within an existing semantic landscape rather than fundamentally restructuring the landscape itself.
\end{abstract}

\section{Introduction}
\label{sec: intro}

\begin{figure}[t]
    \centering
  \includegraphics[width=0.7\columnwidth]{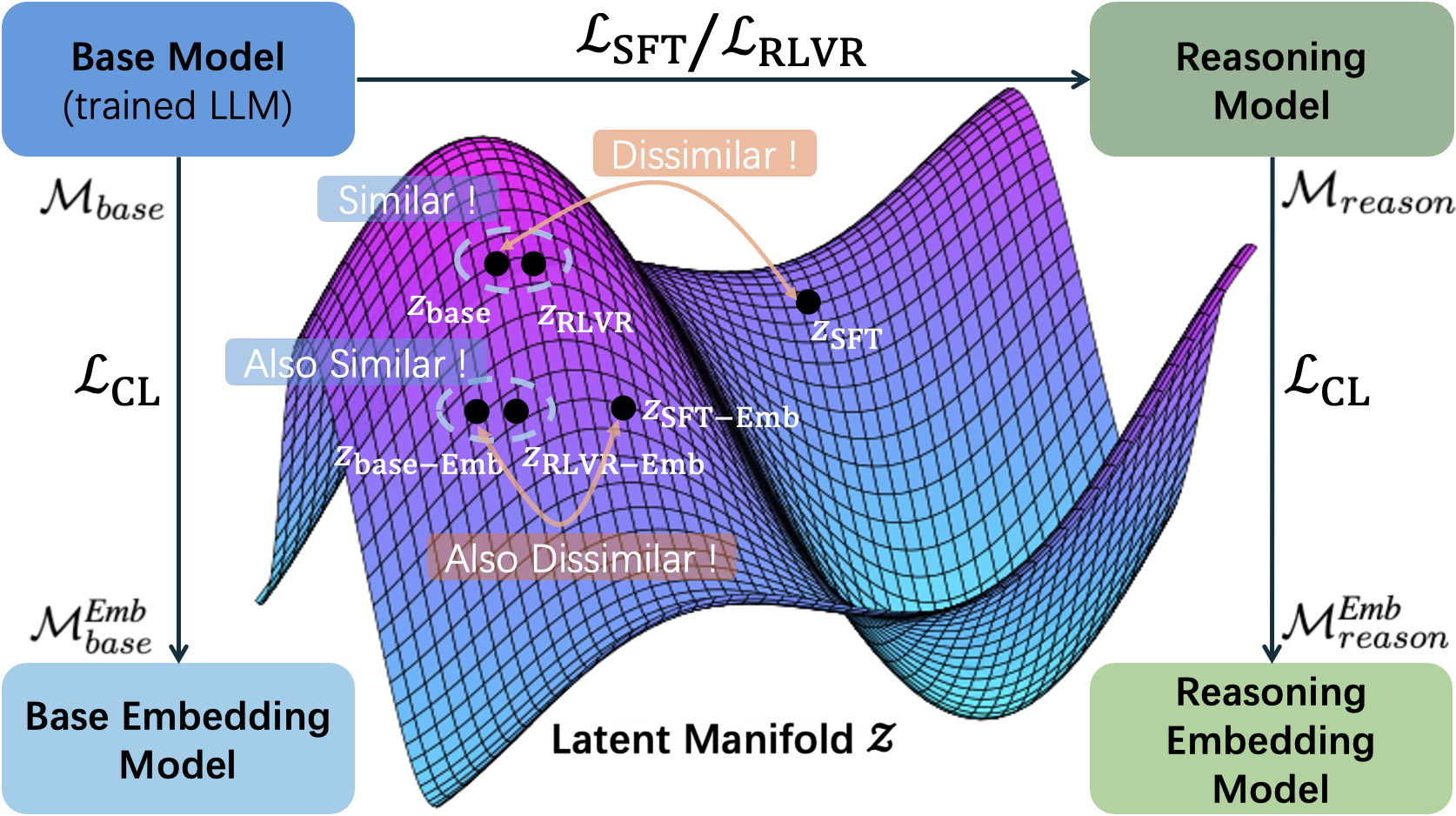}
  \caption{\textbf{Latent manifold and model relationships.} \textbf{CL}, \textbf{SFT}, and \textbf{RLVR} denote Contrastive Learning, Supervised Fine-Tuning, and Reinforcement Learning with Verifiable Rewards, respectively. $\textbf{z}$ indicates the representations of the corresponding models. Suffix ``-Emb'' is added to the model name to indicate the embedding model. We demonstrate the ideas of similar and dissimilar representations of RLVR-tuned pairs and SFT-tuned pairs, respectively.}
  \label{fig: thumbnail}
\end{figure}

Vector representations of text, known as text embeddings, are a core abstraction in modern natural language processing (NLP) \citep{word2vec}. 
As Large Language Models (LLMs) continue to evolve, embedding models have now been built by adapting decoder-only LLMs~\citep{NV-Embed,Qwen3-Embedding,Gemini-Embedding} as backbones to leverage the rich semantics and world knowledge stored in their parameters.

Most recently, reasoning models optimized via Reinforcement Learning with Verifiable Rewards (RLVR) on base models have demonstrated a qualitative leap in complex problem-solving and reasoning~\citep{DeepSeek-R1,Tulu3_with_RLVR_proposed,tinymodel-biglogic,GSPO}. This development raises a natural hypothesis for representation learning: \textit{Does the enhanced reasoning translate to a superior text embedding space?} Intuitively, a model that "thinks" more deeply should structure semantic relationships more effectively.

Counter-intuitively, our results reveal a \textbf{null effect}. Across comprehensive benchmarks including MTEB(Multilingual, v2) \citep{MMTEB}, MTEB(Code, v1) \citep{MTEB}, and BRIGHT \citep{BRIGHT}, embedding models initialized from RLVR-tuned reasoning models perform statistically identically to base-initialized models after contrastive learning \citep{InfoNCE,SimCSE}.
This observation presents a scientific puzzle: \textit{Why do reasoning and non-reasoning backbones yield indistinguishable results following contrastive learning?}

In this paper, we argue that existing performance metrics are insufficient for diagnosing the internal dynamics of representations. We introduce \textbf{H}ierarchical \textbf{R}epresentation \textbf{S}imilarity \textbf{A}nalysis (HRSA), a hierarchical analysis framework inspired by Representational Similarity Analysis (RSA)~\citep{revisited-similarity}. HRSA allows us to dissect model similarity at increasing levels of abstraction:

\begin{itemize}
    \item \textbf{Representation Level}: Focus on the coordinate basis and features.
    \item \textbf{Geometry Level}: Focus on the shape (geometry) of the latent manifold.
    \item \textbf{Function Level}: Focus on the input-output mappings.
\end{itemize}

Applying HRSA uncovers a phenomenon we term \emph{Manifold Realignment}. We find that RLVR largely preserves the \emph{global geometry} of the latent manifold, including the \emph{linear readout} associated with downstream tasks, while irreversibly reshaping the manifold’s \emph{local geometry}. The resulting drift in the coordinate basis is modest under typical training regimes but becomes pronounced under prolonged RLVR. Strikingly, when these backbones are later adapted into embedding models via contrastive learning, both base- and reasoning-initialized models exhibit strong realignment even in the presence of coordinate basis changes. We interpret the realignment as evidence that representational drift is largely reversible at the global level, yet accompanied by irreversible local distortions. Overall, our results suggest that, unlike SFT, RLVR primarily optimizes trajectories through an existing semantic landscape rather than fundamentally redrawing the landscape itself.

Our contributions are as follows:

\begin{enumerate}
    \item \textbf{Systematic Benchmarking}: We conduct the first controlled comparison of RLVR-optimized vs. its base model as backbones for text embeddings by fine-tuning a diverse suite of state-of-the-art reasoning models into embedding models and evaluate them against their base counterparts, establishing that current RLVR methods do not inherently improve embedding quality. 
    \item \textbf{The HRSA Framework}: We propose a hierarchical RSA framework (Representation, Geometry, Function) to diagnose why models behave similarly, offering a toolkit for future interpretability studies, and unifying the disorganized RSA framework.
    \item \textbf{Discover Manifold Realignment}: We demonstrate that RLVR do not fundamentally alter the latent manifold, but it can reorganize the local neighborhood structure, and only the coordinate basis will be drifted when training is prolonged. However, the contrastive learning can overwrite the reversible drift and exhibit strong alignment between base- and reasoning-initialized embedding models.
\end{enumerate}

\begin{table*}[t]
    \centering
    \small 
    \setlength{\tabcolsep}{1.0pt} 

    \caption{
       \textbf{Mean embedding benchmark performance (3 seeds).} We compare the base backbone $\mathcal{M}_{base}$ versus its RLVR-tuned reasoning model backbone $\mathcal{M}_{reason}$; we also include an SFT-tuned backbone for reference. The $\Delta$ (\textbf{Std}) column (\textcolor{gray}{gray}) shows the mean performance gap $\pm$ standard deviation. The near-zero deltas for RLVR indicate that RLVR largely preserves the base model’s semantic effectiveness, contrasting with larger shifts under SFT.
    }

    \begin{tabular}{l ccc c ccc c ccc}
        \toprule
        & \multicolumn{3}{c}{\small \textbf{MTEB(Multilingual, v2)}} && \multicolumn{3}{c}{\small \textbf{MTEB(Code, v1)}} && \multicolumn{3}{c}{\small \textbf{BRIGHT}} \\
        \cmidrule{2-4} \cmidrule{6-8} \cmidrule{10-12}
        \textbf{Model Pair Backbone} & 
        $\mathcal{M}_{base}^\textit{Emb}$ & $\mathcal{M}_{reason}^\textit{Emb}$ & \textbf{$\Delta$ (Std)} && 
        $\mathcal{M}_{base}^\textit{Emb}$ & $\mathcal{M}_{reason}^\textit{Emb}$ & \textbf{$\Delta$ (Std)} && 
        $\mathcal{M}_{base}^\textit{Emb}$ & $\mathcal{M}_{reason}^\textit{Emb}$ & \textbf{$\Delta$ (Std)} \\
        \midrule
        
        \multicolumn{12}{l}{\textbf{SFT}} \\
        \scalebox{0.8}{\href{https://huggingface.co/Qwen/Qwen3-0.6B-Base}{Qwen3-0.6B-Base} vs \href{https://huggingface.co/Qwen/Qwen3-0.6B}{Qwen3-0.6B}} &
        53.50 & 41.47 & \textcolor{gray}{-12.03}~{\tiny$\pm$\,0.14} && 
        55.29 & 56.28 & \textcolor{gray}{+0.99}~{\tiny$\pm$\,0.05} &&
        13.06 & 13.71 & \textcolor{gray}{+0.65}~{\tiny$\pm$\,0.06} \\ 

        \midrule

        \multicolumn{12}{l}{\textbf{RLVR}} \\
        
        \scalebox{0.8}{\href{https://huggingface.co/Qwen/Qwen2.5-1.5B}{Qwen2.5-1.5B} vs \href{https://huggingface.co/hkust-nlp/Qwen-2.5-1.5B-SimpleRL-Zoo}{Qwen2.5-1.5B-SRL-Zoo}} & 
        54.73 & 54.54 & \textcolor{gray}{-0.19}~{\tiny$\pm$\,0.07} && 
        58.98 & 58.72 & \textcolor{gray}{-0.26}~{\tiny$\pm$\,0.08} &&
        17.71 & 17.89 & \textcolor{gray}{+0.18}~{\tiny$\pm$\,0.04} \\ 
        
        \scalebox{0.8}{\href{https://huggingface.co/Qwen/Qwen2.5-0.5B}{Qwen2.5-0.5B} vs \href{https://huggingface.co/hkust-nlp/Qwen-2.5-0.5B-SimpleRL-Zoo}{Qwen2.5-0.5B-SRL-Zoo}} & 
        51.25 & 51.27 & \textcolor{gray}{+0.02}~{\tiny$\pm$\,0.09} && 
        57.41 & 57.53 & \textcolor{gray}{+0.12}~{\tiny$\pm$\,0.07} &&
        14.05 & 13.99 & \textcolor{gray}{-0.06}~{\tiny$\pm$\,0.05} \\
        
        \scalebox{0.8}{\href{https://huggingface.co/deepseek-ai/DeepSeek-R1-Distill-Qwen-1.5B}{DS-Distill-1.5B} vs \href{https://huggingface.co/ nvidia/Nemotron-Research-Reasoning-Qwen-1.5B}{NV-ProRL}} & 
        46.19 & 46.25 & \textcolor{gray}{+0.06}~{\tiny$\pm$\,0.06} && 
        45.47 & 45.87 & \textcolor{gray}{+0.40}~{\tiny$\pm$\,0.07} &&
        9.02  & 9.47  & \textcolor{gray}{+0.45}~{\tiny$\pm$\,0.02} \\
        
        \scalebox{0.8}{\href{https://huggingface.co/Qwen/Qwen3-4B}{Qwen3-4B} vs \href{https://huggingface.co/TianHongZXY/Qwen3-4B-PSR}{Qwen3-4B-PSR}} & 
        59.85 & 59.79 & \textcolor{gray}{-0.06}~{\tiny$\pm$\,0.05} && 
        63.90 & 64.57 & \textcolor{gray}{+0.67}~{\tiny$\pm$\,0.03} &&
        18.10 & 18.17 & \textcolor{gray}{+0.07}~{\tiny$\pm$\,0.03} \\
        
        \bottomrule
    \end{tabular}

    \label{tab: embedding performance}
\end{table*}
\section{Embedding Model Performances}
\label{sec: embedding benchmarks performance}

We first unify our terminology by explicitly separating a \emph{starting checkpoint} from the \emph{fine-tuning stage}.
\paragraph{Backbone LLMs.}
Given a base model $\mathcal{M}_{base}$ as a trained LLM, we consider a reasoning model $\mathcal{M}_{reason}$ as an LLM that undergoes either Supervised Fine-Tuning (SFT) or RLVR directly on top of the base model.
We focus on \textit{zero-RL}~\citep{DeepSeek-R1} where RLVR starts directly from the base model without performing a warm-start SFT stage. Concretely, we evaluate and compare a matched pair of $\mathcal{M}_{base}$ and $\mathcal{M}_{reason}$, where $\mathcal{M}_{reason}$ must be fine-tuned on $\mathcal{M}_{base}$. The SFT-tuned pairs are used as an explicit control to highlight the very close similarity observed in the RLVR-tuned comparisons.

\paragraph{Embedding models.}
We term the base embedding model $\mathcal{M}_{base}^\textit{Emb}$ and reasoning embedding model $\mathcal{M}_{reason}^\textit{Emb}$ with the backbone $\mathcal{M}_{base}$ and $\mathcal{M}_{reason}$ respectively. The embedding models are formed by removing the language modeling head and applying a pooling operator to the final-layer hidden states to produce a fixed-dimensional vector. We train embedding models with an InfoNCE objective~\citep{InfoNCE} to align semantically similar texts. Within a pair the two embedding models share identical architectures and training recipes, differing only in their backbone initialization (i.e., $\mathcal{M}_{base}$ vs.\ $\mathcal{M}_{reason}$). Appendix~\ref{appendix: embedding-model-training} provides training details.

To rigorously assess the impact of RLVR optimization on embedding benchmark, we trained and evaluated multiple matched pairs consisting $\mathcal{M}_{base}^\textit{Emb}$ and $\mathcal{M}_{reason}^\textit{Emb}$. We evaluated these models across a diverse suite of benchmarks, including MTEB(Multilingual, v2) \citep{MMTEB}, MTEB(Code, v1) \citep{MTEB}, and BRIGHT \citep{BRIGHT}, to ensure coverage of retrieval, clustering, and semantic similarity tasks, as well as the data in the same domain as trained in RLVR.

The results, presented in Table~\ref{tab: embedding performance}, reveal that $\mathcal{M}_{reason}^\textit{Emb}$ with RLVR-tuned backbone $\mathcal{M}_{reason}$ consistently achieve performance parity with $\mathcal{M}_{base}^\textit{Emb}$ across all benchmarks. Rather than interpreting this as a limitation, we view it as a significant indicator of representational robustness. The RLVR process refines the model's trajectory-generation policy without destructively overwriting the rich world knowledge and semantic relationships established during pre-training.
\section{The HRSA Framework: Dissecting Model Similarity}
\label{sec: hrsa}

In Section~\ref{sec: embedding benchmarks performance}, we established that $\mathcal{M}_{reason}^\textit{Emb}$ with RLVR-tuned backbone maintain performance parity with $\mathcal{M}_{base}^\textit{Emb}$ across all benchmarks, exhibiting no degradation in general semantic tasks. This macroscopic observation suggests a hypothesis that the RLVR optimization trajectory preserves the intrinsic geometry of the pre-trained \textbf{Latent Manifold $\mathcal{Z}$}, altering only the policy for traversing it rather than the landscape itself.

To rigorously test this hypothesis, we must look beyond aggregate benchmark scores, which can mask internal representational shifts. We introduce \textbf{HRSA} to dissect the relationship between $\mathcal{M}_{base}$ and $\mathcal{M}_{reason}$ at three nested levels of abstraction. 

Crucially, HRSA is not defined by the specific metrics used in this study (e.g., CKA), but by the invariance properties required at each level of abstraction. Researchers can substitute metrics or theoretical constraints, provided they respect the hierarchy's invariance rules. See Table~\ref{tab: appendix_hrsa_extensibility}.

\subsection{Common Setup and Notation}
\label{sec:common-notation}

\begin{figure*}[t]
    \includegraphics[width=\linewidth]{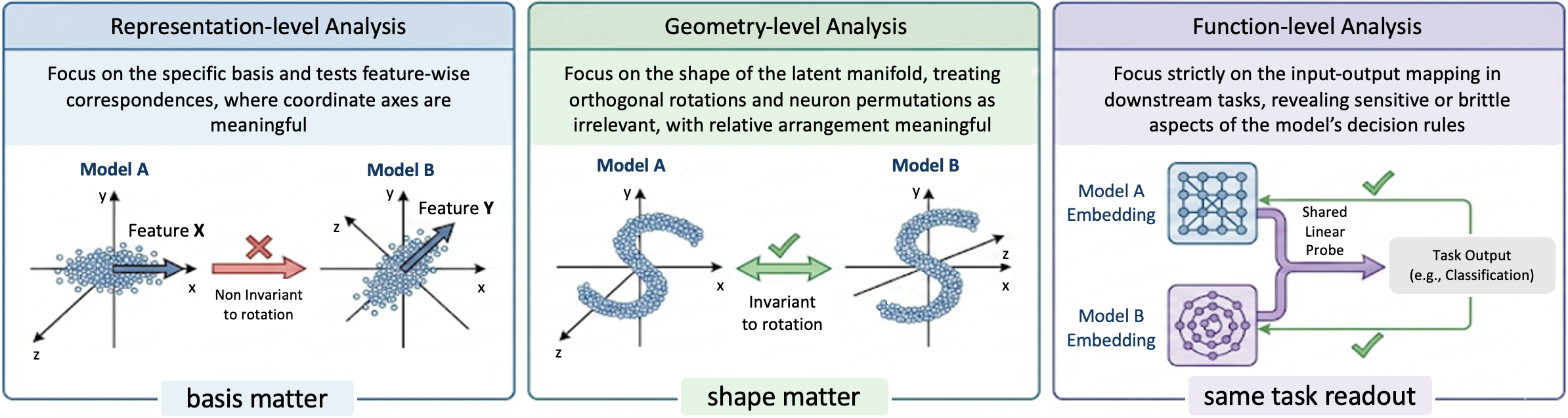}
    \caption{The overview of HRSA.}
    \vspace{-0.2in}
\end{figure*}

To analyze the structural differences between models, we compare their representations on a shared sequence of $N$ token positions. Let $\mathbf{X}, \mathbf{Y} \in \mathbb{R}^{N \times D}$ denote the $D$-dimensional token-level representation matrices produced by $\mathcal{M}_{base}$ (or $\mathcal{M}_{base}^{Emb}$) and $\mathcal{M}_{reason}$ (or $\mathcal{M}_{reason}^{Emb}$) respectively. The $i$-th row of each matrix, denoted as $\mathbf{x}_i$ or $\mathbf{y}_i$, represents a single token embedding such that $\mathbf{x}_i, \mathbf{y}_i \in \mathcal{Z}$, where $\mathcal{Z} \subset \mathbb{R}^D$ is the latent manifold induced by the distribution of the mapped inputs within the high-dimensional ambient space.

\subsection{Representation-Level Analysis}
\label{sec: representation-level}

Representation-level analysis focuses on the \emph{specific coordinate basis} of the latent manifold and tests feature-wise correspondences between models. At this level, we treat the coordinate basis themselves as meaningful objects, where rotating or permuting features can change the outcome of the analysis. In other words, representation-level metrics are \emph{not} invariant to orthogonal transformations or neuron permutations (see Appendix~\ref{appendix: representation-level-proof} for a formal discussion).

Intuitively, this level investigates \emph{whether two models implement similar features along similar coordinate basis or realize similar solutions in very different coordinate systems}. We operationalize this question with two complementary tools:

\begin{enumerate}
    \item \textbf{Dimension-Wise Correlation}: probe direct, axis-aligned feature correspondence~\citep{dimension-wise-correlation}.
    \item \textbf{Orthogonal Procrustes Analysis}: probe global linear equivalence~\citep{schonemann1966generalized}.
\end{enumerate}

\subsubsection{Dimension-Wise Correlation}

Dimension-wise correlation tests whether each coordinate in one model can be matched to the \emph{same} coordinate in another model. Let $\mathbf{X}_{:j}$ and $\mathbf{Y}_{:j}$ denote the $j$-th column vectors (features) of the matrices $\mathbf{X}$ and $\mathbf{Y}$, respectively, corresponding to feature $j$ across all token positions. After centering each column over tokens, we define the correlation for dimension $j$ as
\vspace{-3pt} 
\begin{equation}
\label{eq:dimwise-corr}
\rho_j(\mathbf{X}, \mathbf{Y})
=
\frac{(\mathbf{X}_{:j})^\top \mathbf{Y}_{:j}}{\|\mathbf{X}_{:j}\| \|\mathbf{Y}_{:j}\|},
\quad
j = 1, \dots, D.
\end{equation}
We summarize $\{\rho_j\}_{j=1}^D$ via mean.
High per-dimension correlations indicate that many features are already aligned one-to-one without any transformation, while low per-dimension correlations suggest that, even if the models encode similar information overall, information carried by a single feature in one model may be distributed across multiple features in the other.

\subsubsection{Orthogonal Procrustes Analysis}

Dimension-wise correlation is strict: it does not allow any mixing between feature dimensions. Orthogonal Procrustes analysis relaxes this by asking whether one representation space can be mapped to the other via a single orthogonal transformation. Formally, we solve
\begin{equation}
\label{eq:procrustes}
O^* = \arg\min_{O^\top O = I}
\;
\|\mathbf{X} O - \mathbf{Y}\|_F^2,
\end{equation}
where $O^* \in \mathbb{R}^{D \times D}$ is orthogonal and $\|\cdot\|_F$ denotes the Frobenius norm. This objective allows a global orthogonal mixing of features: each feature of $\mathbf{Y}$ can be an orthogonal combination of features in $\mathbf{X}$. If $O^\star$ is a near-diagonal or near-permutation, then features can be matched almost one-to-one after a simple rotation or permutation. This corresponds to relatively localized, interpretable feature correspondences.
In contrast, if $O^\star$ is dense, then each feature in one model is a distributed combination of many features in the other. The same information may be present, but in an entangled, non-localized form.
We consider the inverse row entropy of $O^*$ as the quantified metric. See Appendix~\ref{appendix: opa} for the details.


\subsection{Geometry-Level Analysis}
\label{sec: geometry-level}

\begin{table*}[t]
\caption{\textbf{HRSA result summary.} It shows how different training algorithms impact the model's manifold. SFT causes fundamental restructuring, whereas RLVR acts as a trajectory optimization. Contrastive Learning (CL) successfully realigns the latent manifold.}
\label{tab:manifold_realignment}
\centering
\footnotesize 
\renewcommand{\arraystretch}{1.1} 
\setlength{\tabcolsep}{3.5pt} 

\begin{tabular*}{\textwidth}{@{\extracolsep{\fill}} l l | c c c @{}}
\toprule
\textbf{Level} & \textbf{Metric Focus} & \textbf{SFT (Backbone)} & \textbf{RLVR (Backbone)} & \textbf{Post-CL (Embedding)} \\ 
& & $\mathcal{M}_{base}$ vs $\mathcal{M}_{reason}$ & $\mathcal{M}_{base}$ vs $\mathcal{M}_{reason}$ & $\mathcal{M}_{base}^{Emb}$ vs $\mathcal{M}_{reason}^{Emb}$ \\
\midrule
\textbf{1. Representation} & \textit{Coordinate Basis} & Destructive Mixing & \textbf{Preserved} & Re-Aligned \\
\addlinespace[0.5ex]

\multirow{2}{*}{\textbf{2. Geometry}} 
 & \textit{Global Geometry} & Anisotropic Distortion & \textbf{Isometric} & Stable \\
 & \textit{Local Geometry} & Reorganized & Reorganized & Irreversible \\
\addlinespace[0.5ex]

\textbf{3. Function} & \textit{Linear Readout} & Degraded & \textbf{Transferred} & Aligned \\

\midrule
\multicolumn{2}{l|}{\textit{\textbf{Manifold Status}}} & \textit{Fundamental Restructuring} & \textit{Trajectory Optimization} & \textit{\textbf{Manifold Realignment}} \\
\bottomrule
\end{tabular*}
\end{table*}

Geometry-level analysis moves one step up in abstraction. Instead of caring about the specific coordinate system, we focus on the \emph{shape} of the latent manifold. At this level, orthogonal rotations and neuron permutations are treated as irrelevant; the relative arrangement of points is paramount. Geometry-level metrics are therefore invariant to changes of coordinate basis, but sensitive to deformations that alter distances or local neighborhoods. See Appendix~\ref{appendix: geometry-level-proof}.

Conceptually, this level investigates \emph{whether two models organize embeddings into similar manifold shapes even using different axes}. We study geometry-level similarity using two complementary metrics:

\begin{enumerate}
    \item \textbf{Linear Centered Kernel Alignment (Linear CKA)}: measures \emph{global geometry} of manifold, via their Gram matrices, up to orthogonal transforms and isotropic scaling~\citep{revisited-similarity}.
    \item \textbf{$k$-Nearest Neighbors ($k$-NN) Overlap}: measures \emph{local geometry} of manifold by quantifying the preservation of nearest-neighbor relationships~\citep{knn}.
\end{enumerate}

\subsubsection{Linear CKA}

CKA compares representations via their kernel matrices rather than their raw coordinates. In Linear CKA, we consider the linear kernel $K_X = \mathbf{X}\mathbf{X}^\top$ and $K_Y = \mathbf{Y}\mathbf{Y}^\top$, where $K_X, K_Y \in \mathbb{R}^{N \times N}$. The Hilbert--Schmidt Independence Criterion (HSIC)~\citep{hsic} between the kernels $K_X$ and $K_Y$ is
\vspace{-3pt} 
\begin{equation}
\label{eq:hsic}
\mathrm{HSIC}(K_X, K_Y)
=
\frac{1}{(N-1)^2} \, \mathrm{tr}(K_X H K_Y H).
\end{equation}
where $H = I - \frac{1}{N}\mathbf{1}\mathbf{1}^\top \in \mathbb{R}^{N \times N}$ is the centering matrix, $I$ is the identity matrix and $\mathbf{1}$ is a vector of $1$. Linear CKA is then

\begin{equation}
\label{eq:cka}
\mathrm{CKA}(\mathbf{X}, \mathbf{Y})
=
\frac{\mathrm{HSIC}(K_X, K_Y)}{\sqrt{\mathrm{HSIC}(K_X, K_X) \, \mathrm{HSIC}(K_Y, K_Y)}}.
\end{equation}

Linear CKA quantifies how similarly two models organize the \emph{global geometry} of manifolds, capturing features like cluster structure and anisotropy, by assessing whether their manifolds can be aligned through orthogonal transformations and uniform scaling, without the need for nonlinear deformations.

\citet{decodable-info} has shown that Linear CKA quantifies the average alignment between optimal linear readouts across a distribution of decoding tasks. However, it can be manipulated without large changes in functional behavior under high dimensions~\citep{cka-reliability,rep-sim-capture-func}, so it should not be interpreted as a direct proxy for linear separability or task equivalence. Here, we use Linear CKA specifically as a global geometry descriptor.

\subsubsection{$k$-NN Overlap}
\label{sec:knn-overlap}

While CKA captures global manifold geometry, $k$-NN overlap focuses on local manifold geometry. Intuitively, it investigates whether each embedding’s local neighborhood is preserved between two models.

Let the cosine similarity be $s_{\mathbf{Z}}(i,j) = \frac{z_i^\top z_j}{\|z_i\|\,\|z_j\|}$, where $z_i \in \mathbb{R}^D$ is the $i$-th embedding (row) of the representation matrix $\mathbf{Z}$. We define the $k$-nearest neighbor sets under models with representations $\mathbf{X}$ and $\mathbf{Y}$ as
$N_k^X(i) = \operatorname{TopK}_j \, s_{\mathbf{X}}(i, j)$ and $N_k^Y(i) = \operatorname{TopK}_j \, s_{\mathbf{Y}}(i, j)$, respectively.
The $k$-NN overlap score $\tilde{s}_k$ is
\begin{equation}
\label{eq:knn-overlap}
\tilde{s}_k
=
\frac{1}{N}
\sum_{i=1}^N
\frac{\left|N_k^X(i) \cap N_k^Y(i)\right|}
{\left|N_k^X(i) \cup N_k^Y(i)\right|}
\end{equation}
where we use the Jaccard index to quantify agreement of neighbor sets. Because neighborhood relations are preserved under orthogonal transforms and permutations, but disrupted by non-isometric distortions (e.g., anisotropic scaling), $k$-NN overlap directly reflects how similarly two models instantiate the \emph{local geometry} of the manifold.

\subsection{Function-Level Analysis}
\label{sec: function-level}

\begin{figure*}[t]
    \centering
    \small 
    \setlength{\tabcolsep}{3pt} 
    \renewcommand{\arraystretch}{1.05} 

    \newlength{\imgwidth}
    \setlength{\imgwidth}{0.17\textwidth} 

    \newcommand{\headerbox}[2]{%
        \parbox{\linewidth}{%
            \centering
            \colorbox{#1}{\makebox[\dimexpr\linewidth-2\fboxsep\relax]{\textbf{#2}}}%
        }%
    }

    \newcommand{\verticalbox}[2]{%
        \rotatebox[origin=c]{90}{%
            \colorbox{#1}{\makebox[\dimexpr\imgwidth-2\fboxsep\relax]{\scriptsize \textbf{#2}}}%
        }%
    }

    \begin{tabular}{c c m{\imgwidth} m{\imgwidth} c @{\hspace{2em}} m{\imgwidth} m{\imgwidth} c}
        
        & & 
        \multicolumn{2}{c}{\textbf{Dimension-Wise Correlation}} & 
        & 
        \multicolumn{2}{c}{\textbf{Linear CKA}} & 
        \\
        
        & & 
        \headerbox{red!20}{SFT} & \headerbox{blue!20}{RLVR} & 
        & 
        \headerbox{red!20}{SFT} & \headerbox{blue!20}{RLVR} & 
        \\

        \multirow{2}{*}{\rotatebox{90}{\textbf{Base Model Layer Index}}} & 
        
        \verticalbox{orange!20}{LLM} & 
        
        \begin{subfigure}{\linewidth}
            \includegraphics[width=\linewidth]{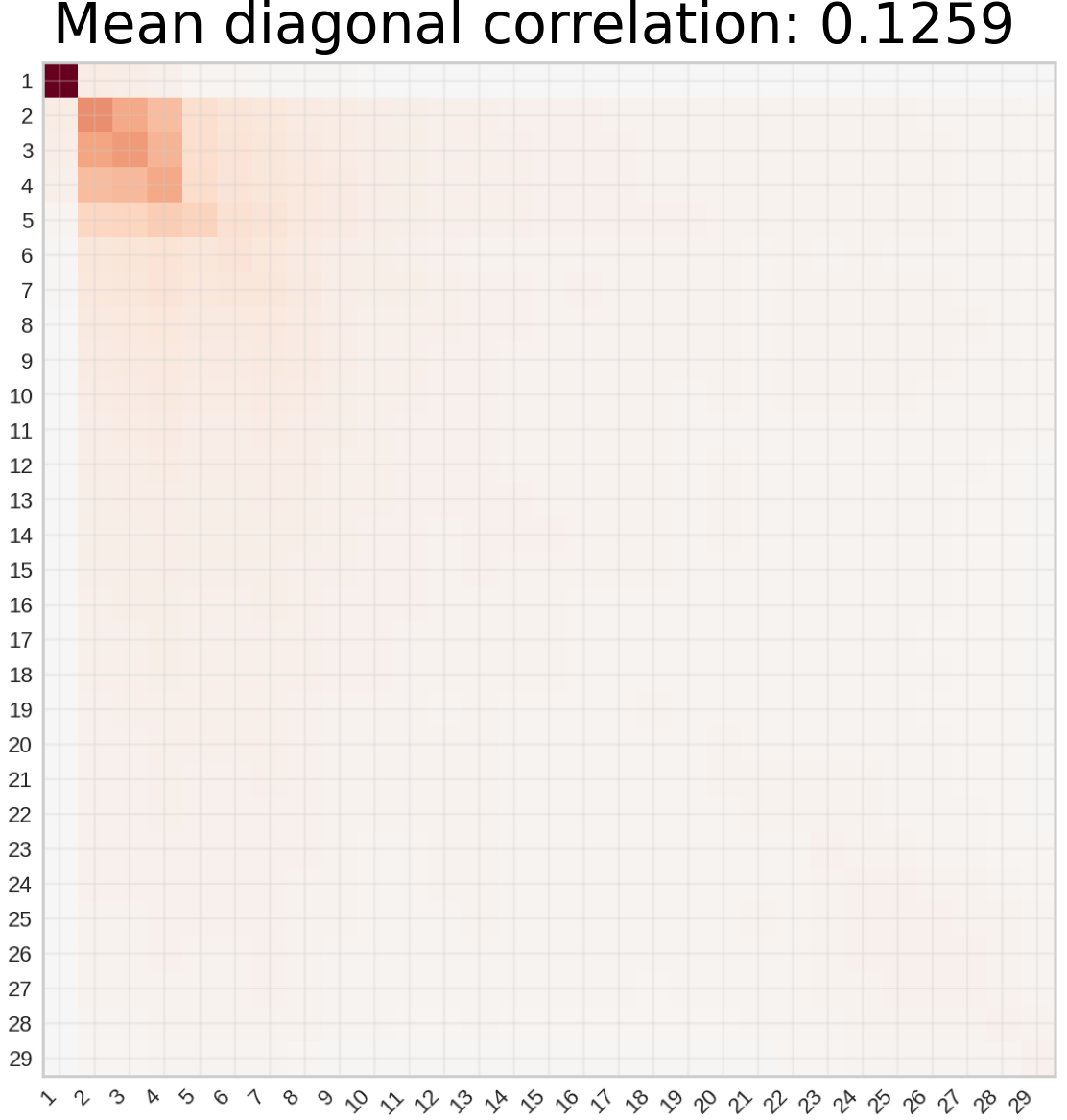}
            \caption*{\tiny Qwen2.5 vs DS} 
        \end{subfigure} & 
        \begin{subfigure}{\linewidth}
            \includegraphics[width=\linewidth]{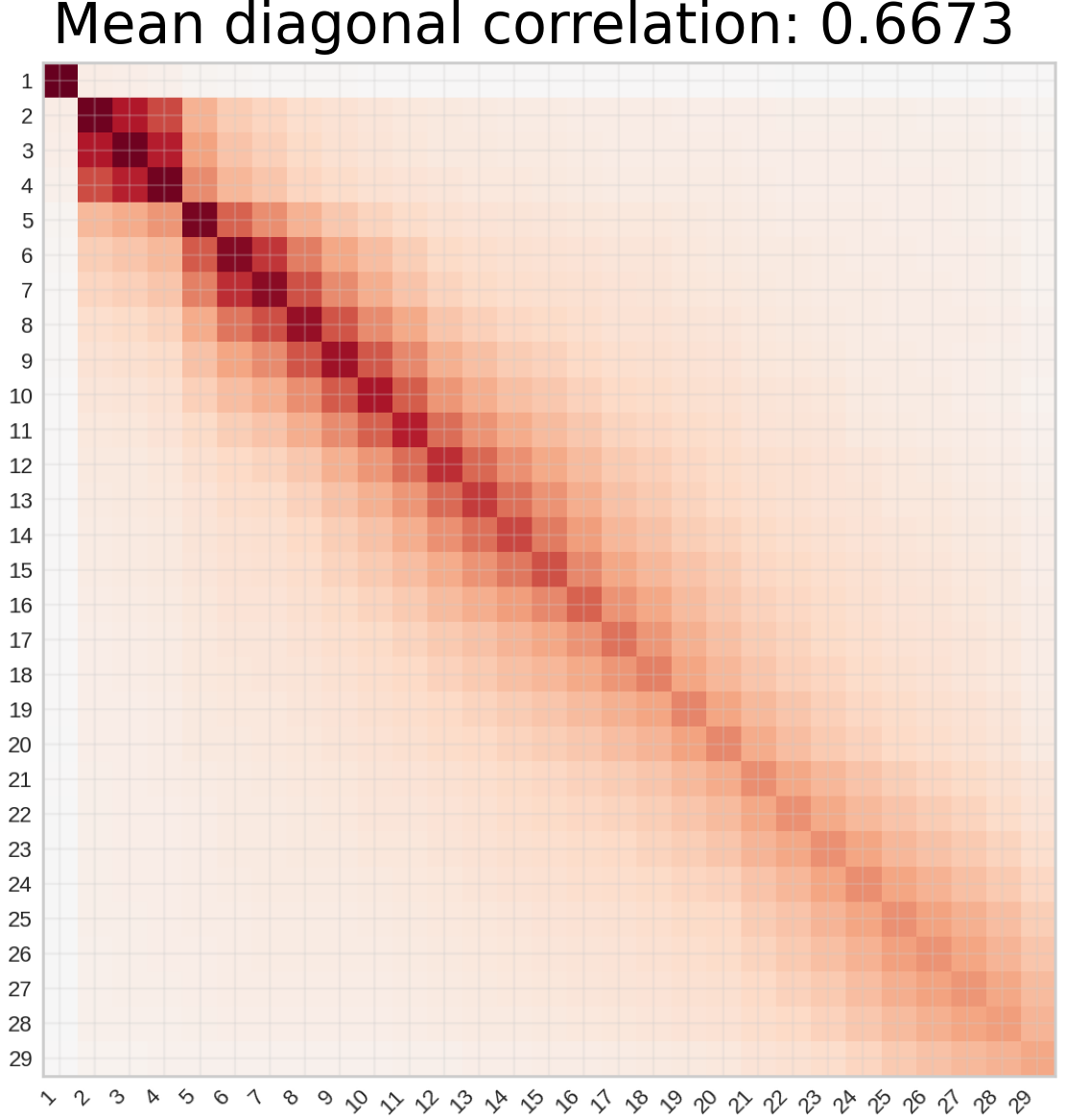}
            \caption*{\tiny DS vs ProRL}
        \end{subfigure} & 
        
        \raisebox{0.6cm}{\multirow{2}{*}{\includegraphics[height=6cm, width=0.04\textwidth, keepaspectratio]{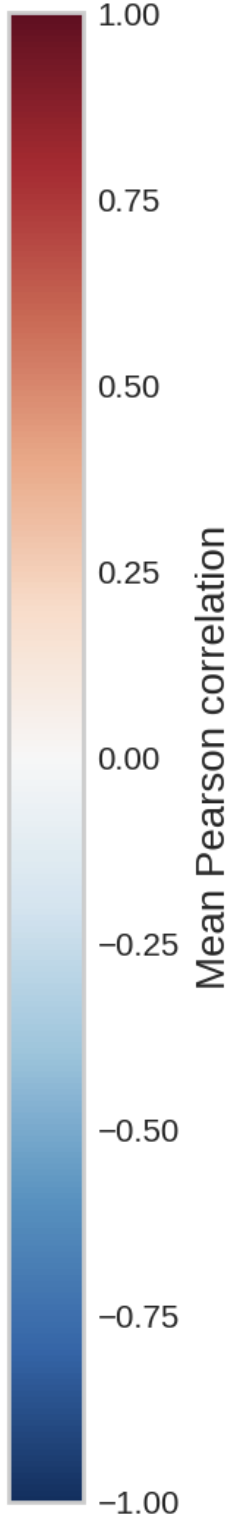}}} & 

        \begin{subfigure}{\linewidth}
            \includegraphics[width=\linewidth]{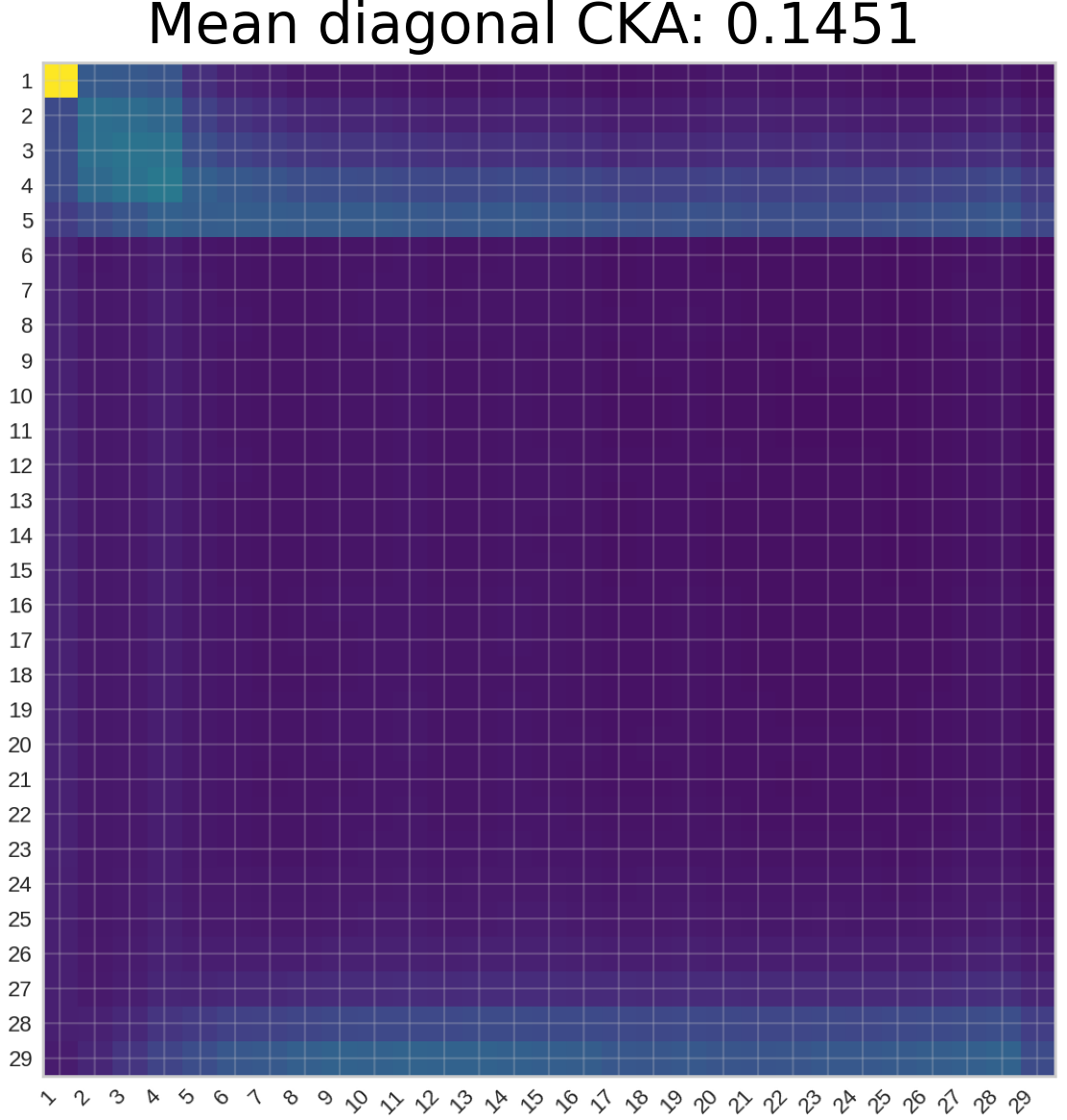}
            \caption*{\tiny Qwen2.5 vs DS}
        \end{subfigure} & 
        \begin{subfigure}{\linewidth}
            \includegraphics[width=\linewidth]{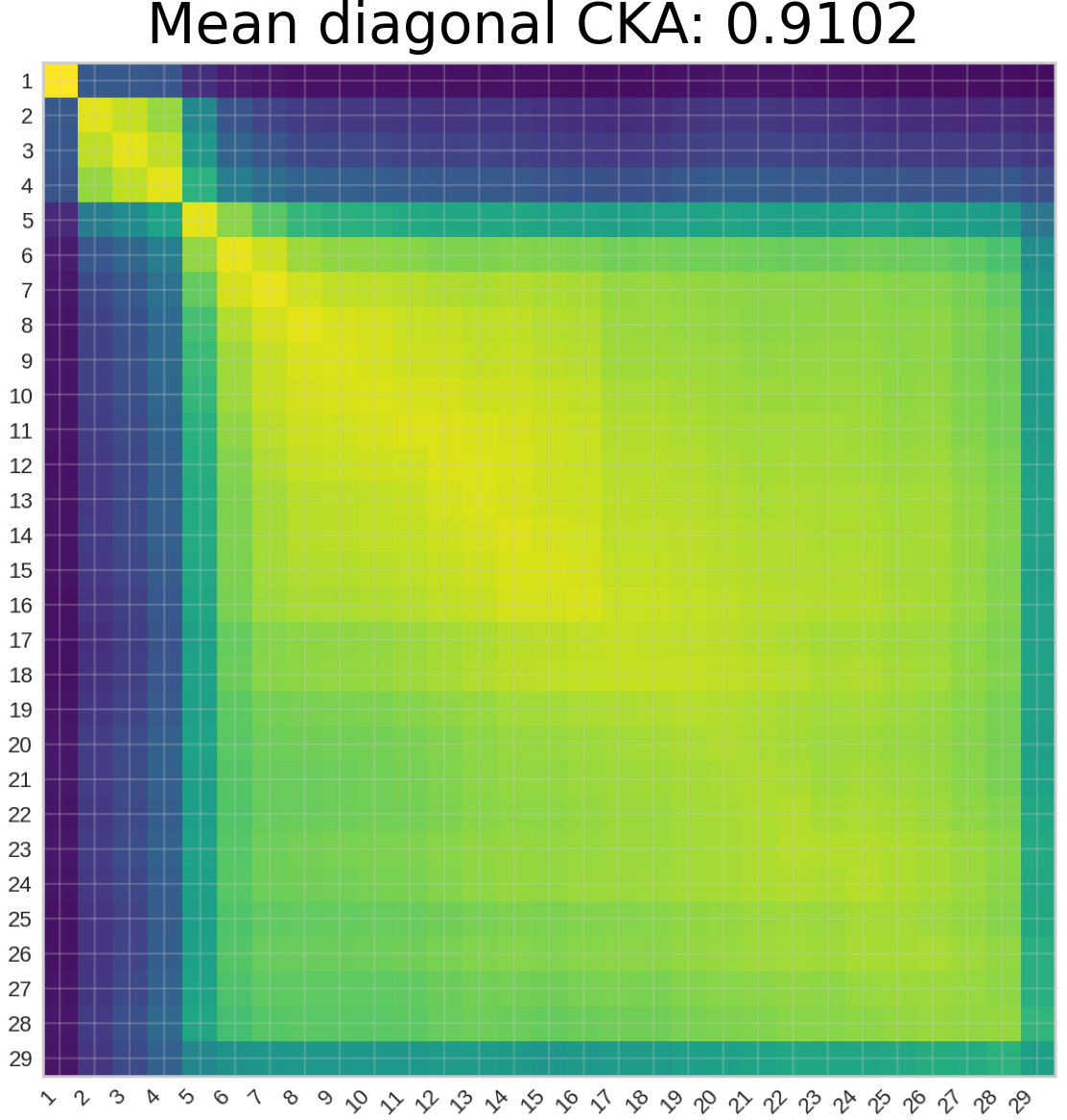}
            \caption*{\tiny DS vs ProRL}
        \end{subfigure} &
        
        \raisebox{0.8cm}{\multirow{2}{*}{\includegraphics[height=6cm, width=0.04\textwidth, keepaspectratio]{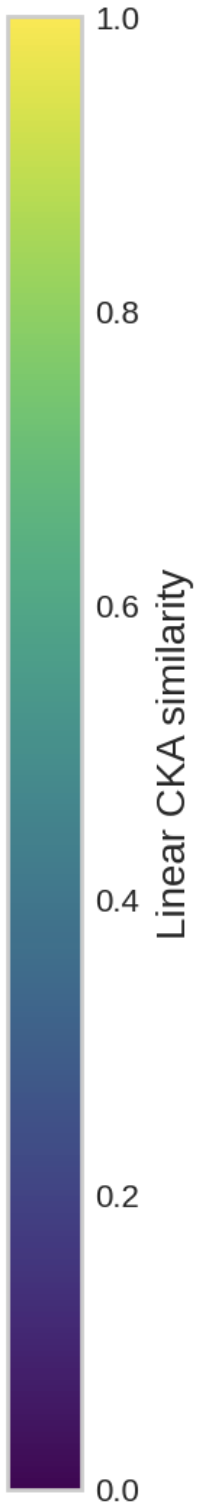}} }
        \\

        & 
        
        \verticalbox{orange!20}{Embed Model} & 
        
        \begin{subfigure}{\linewidth}
            \includegraphics[width=\linewidth]{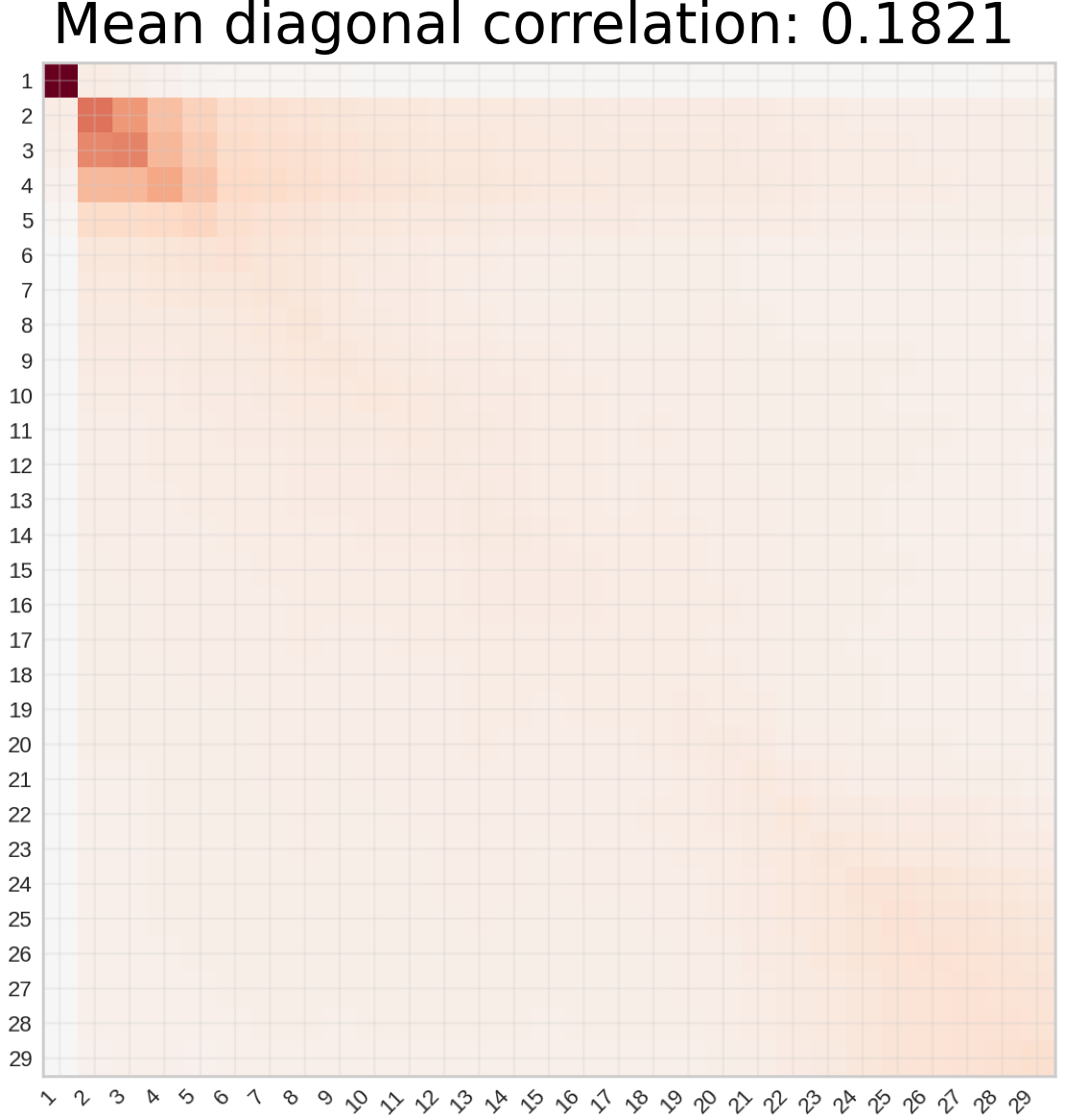}
            \caption*{\tiny Qwen2.5-\textit{Emb} vs DS-\textit{Emb}}
        \end{subfigure} & 
        \begin{subfigure}{\linewidth}
            \includegraphics[width=\linewidth]{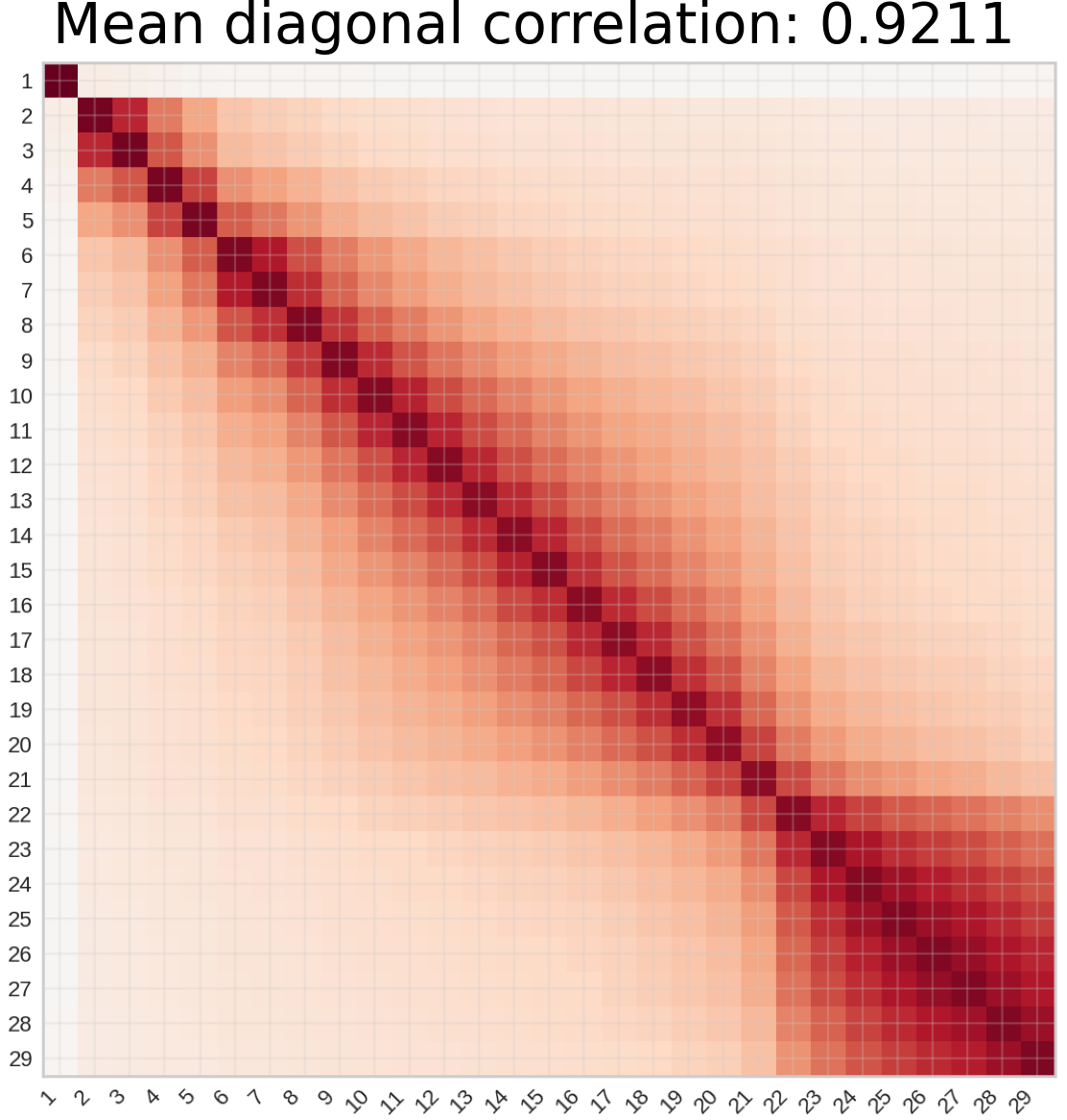}
            \caption*{\tiny DS-\textit{Emb} vs ProRL-\textit{Emb}}
        \end{subfigure} & 
        
        & 
        
        \begin{subfigure}{\linewidth}
            \includegraphics[width=\linewidth]{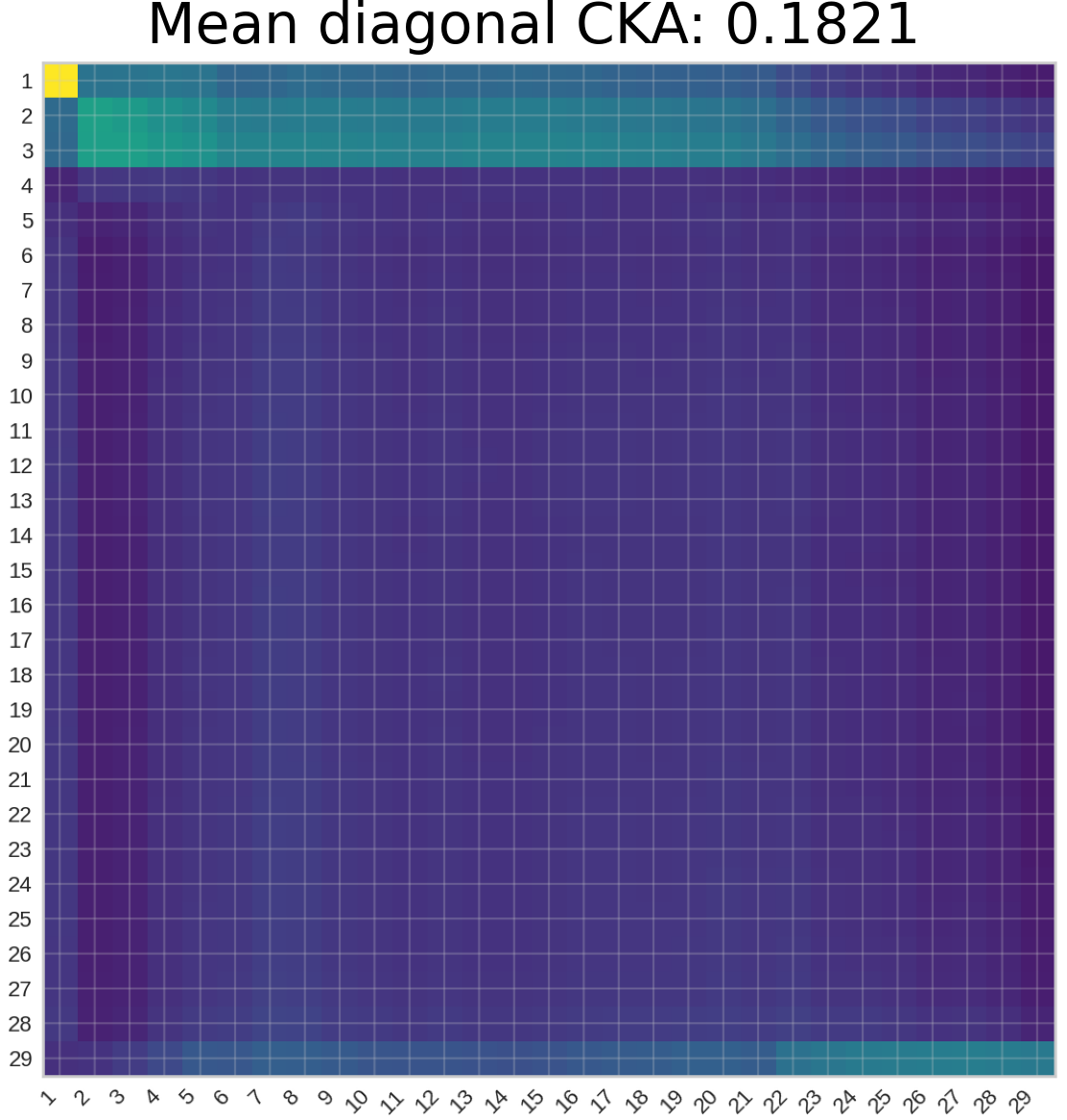}
            \caption*{\tiny Qwen2.5-\textit{Emb} vs DS-\textit{Emb}}
        \end{subfigure} & 
        \begin{subfigure}{\linewidth}
            \includegraphics[width=\linewidth]{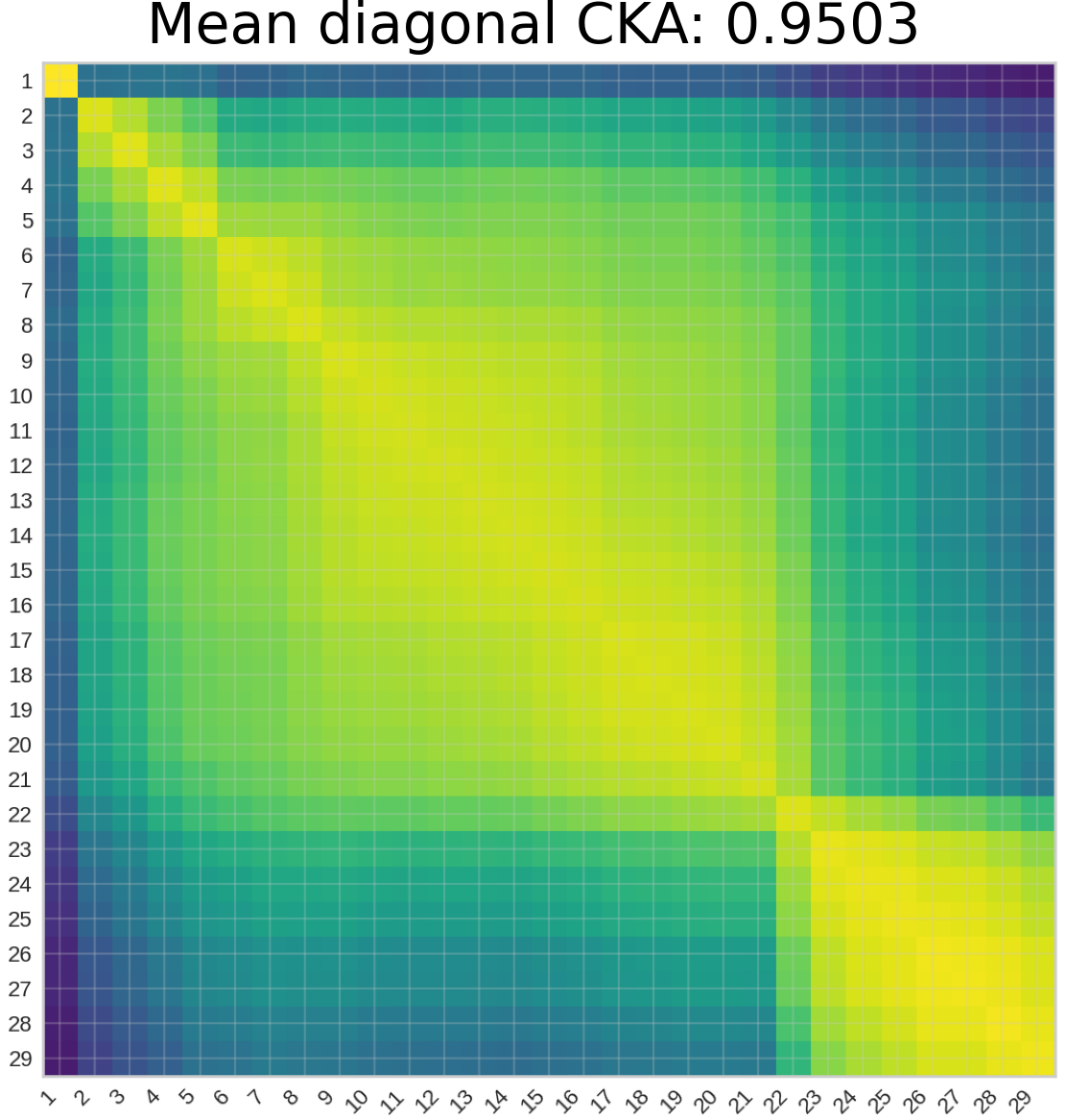}
            \caption*{\tiny DS-\textit{Emb} vs ProRL-\textit{Emb}}
        \end{subfigure} &

        \\
        
    \end{tabular}

    \vspace{0.2em}
    \centerline{\textbf{Reasoning Model Layer Index}}

    \caption{\textbf{Heatmap of Dimension-Wise Correlation (left) and Linear CKA (right).} \textbf{Columns}: SFT vs. RLVR. \textbf{Rows}: $\mathcal{M}_{base}$ vs. $\mathcal{M}_{reason}$ and $\mathcal{M}_{base}^\textit{Emb}$ vs. $\mathcal{M}_{reason}^\textit{Emb}$.}
    \label{fig: corr_and_cka_figures}
    \vspace{-0.15in}
\end{figure*}

Function-level analysis abstracts away from the internal representation manifold to focus strictly on \emph{the input–output transformations exhibited during downstream tasks}. Two models may have very different representation- or geometry-level metrics, yet still be functionally similar if the same tasks are solvable with comparable readouts. Conversely, even modest changes in embeddings can yield different behaviors under specific decoders. See Appendix~\ref{appendix: function-level-proof}.

We instantiate function-level similarity with the \textbf{Cross-Model Linear Probes}~\citep{cross-model-linear-probe} to test whether the \emph{same} linear readout generalizes across models.

\subsubsection{Cross-Model Linear Probes}

Cross-model linear probes provide a task-conditioned measure of function-level similarity between embedding spaces. Let $y \in \mathbb{R}^N$ be the labels. We first fit a linear probe on $\mathbf{X}$:
\begin{equation}
\label{eq:probe-on-X}
\hat{y} = \mathbf{X} W_X + b_X
\end{equation}
where $(W_X, b_X)$ are learned via logistic or ridge regression, depending on the task. We then \emph{freeze} $(W_X, b_X)$ and apply the same linear map to representations from the other model. High cross-model performance (probe trained on $\mathbf{X}$, evaluated on $\mathbf{Y}$) relative to self-performance (trained and evaluated on $\mathbf{X}$) indicates that the same linear decision boundary is useful in both spaces. This implies strong function-level similarity: two models support essentially the same set of linearly decodable functions for that task.


\subsection{From Representations to Functions: How the Levels Fit Together}


\begin{wraptable}{r}{0.5\columnwidth}  
    \vspace{0.2in}
    \small
    \centering
    \setlength{\tabcolsep}{2.5pt}
    \renewcommand{\arraystretch}{1.15}
    \caption{The inverse row entropy of the orthogonal matrix $O^*$. For each training method (SFT or RLVR), we report $\mathcal{M}_{base}$ vs. $\mathcal{M}_{reason}$ and $\mathcal{M}_{base}^\textit{Emb}$ vs. $\mathcal{M}_{reason}^\textit{Emb}$ (suffix \textit{-Emb}) comparisons. A higher inverse row entropy indicates $O^*$ corresponds to one-to-one feature mapping.}
    
    \begin{tabular}{@{}lc@{}}
        \toprule
        \textbf{Model Pair} & \textbf{Inverse Row Entropy}$\, \uparrow$ \\
        \midrule
        \textbf{SFT:} {Qwen2.5} vs {DS} & 0.108 \\
        \quad $\rightarrow$ {Qwen2.5}-\textit{Emb} vs {DS}-\textit{Emb} & 0.142 \\
        
        \midrule
        
        \textbf{RLVR:} {DS} vs {ProRL} & \textbf{0.161} \\
        \quad $\rightarrow$ {DS}-\textit{Emb} vs {ProRL}-\textit{Emb} & \textbf{0.863} \\
        \bottomrule
    \end{tabular}
    \label{tab: orthogonal O numbers}
\end{wraptable}

HRSA forms a hierarchy of abstraction over the same underlying representations. \textbf{Representation level} asks whether the latent manifolds of two models share the same coordinate basis. \textbf{Geometry level} discards the choice of coordinate basis and asks whether the latent manifold has a similar shape globally and locally. \textbf{Function level} discards most geometric detail and asks which input--output mappings are supported and realized.

By separating these levels, we can distinguish:
\begin{itemize}
    \item Cases where $\mathcal{M}_{base}$ and $\mathcal{M}_{reason}$ differ mainly by a reparameterization (e.g., rotation) but preserve geometry and function.
    \item Cases where global or local geometry changes but downstream behavior remains similar, suggesting redundant internal solutions.
    \item Cases where modest representational changes induce large functional differences in readout directions, revealing sensitive or brittle aspects of the model’s decision rules.
\end{itemize}

This decomposition allows us to turn the initial puzzle—why $\mathcal{M}_{base}^\textit{Emb}$ and $\mathcal{M}_{reason}^\textit{Emb}$ look so similar on benchmarks—into a structured investigation of \emph{where} any differences live. Is the reason of the negligible deviation lived in their underlying backbone models?
\section{Evaluation Setups}
\label{sec: setups}

To empirically validate the hypothesis, we apply the HRSA framework across two dimensions: LLMs ($\mathcal{M}_{base}$ vs. $\mathcal{M}_{reason}$) and downstream adaptation ($\mathcal{M}_{base}^\textit{Emb}$ vs. $\mathcal{M}_{reason}^\textit{Emb}$). Furthermore, we extend this analysis by also comparing the SFT-tuned reasoning models, showing the clear differences in SFT-tuned and RLVR-tuned reasoning models. See Appendix~\ref{appendix: additional-results} for more experiment results.

\paragraph{Datasets.}
To verify if the latent manifold is preserved even within reasoning trajectories, we construct a Chain-of-Thought (CoT) dataset and use the hard-level subset for evaluation. Further generation details are provided in Appendix~\ref{appendix: cot-datasets}. In function-level analysis, we use the AG's News Topic Classification Dataset~\citep{zhang2015characterlevel} to evaluate the linear readout directions.

\paragraph{Models.}
For SFT comparison, we use \textit{Qwen2.5-Math-1.5B} (base)~\citep{qwen2.5-series} vs. \textit{DeepSeek-R1-Distill-Qwen-1.5B} (reasoning)~\citep{DeepSeek-R1}. For RLVR comparison, we use \textit{DeepSeek-R1-Distill-Qwen-1.5B} (base) vs. \textit{Nemotron-Research-Reasoning-Qwen-1.5B}~\citep{NV-ProRL-model} (reasoning, trained with prolonged RLVR). We abbreviate these as \textbf{Qwen2.5}, \textbf{DS}, and \textbf{ProRL} respectively. Downstream embedding models add ``\textit{-Emb}'' suffix. Additional results for RLVR models with different training algorithms (GRPO~\citep{DeepSeekMath}, DAPO~\citep{DAPO}) and training datasets are in Appendix~\ref{appendix: additional-results}, showing consistent latent manifold across variations. We also demonstrate the training dynamic of manifold realignment.


Our HRSA analysis examines model activations at every layer, before any pooling. Specifically, for each model in a matched pair with $L$ layers, we collect the entire set of hidden states: $\{\mathbf{X}_l\}_{l=1}^L$ and $\{\mathbf{Y}_l\}_{l=1}^L$, where each $\mathbf{X}_l, \mathbf{Y}_l \in \mathbb{R}^{N \times D}$. This per-layer, per-token perspective preserves the full representational structure for a more comprehensive analysis, avoiding any information loss due to pooling.
\section{Results}
\label{sec: experiments}

\newcolumntype{Y}{>{\raggedright\arraybackslash}X}

\begin{wraptable}{r}{0.5\columnwidth}  
\centering
\caption{$k$-NN mean overlap across layers between $\mathcal{M}_{base}$ and $\mathcal{M}_{reason}$ (and their Emb variants). Higher mean overlap indicates more preservation in \emph{local geometry} of latent manifold.}
\label{tab: knn-neighborhood-overlap}

\footnotesize
\setlength{\tabcolsep}{3.5pt} 
\renewcommand{\arraystretch}{1.12}

\begin{tabularx}{0.48\columnwidth}{@{}Yccc@{}}
\toprule
\multirow{3}{*}{\textbf{Model Pairs}}
  & \multicolumn{3}{c}{\textbf{Mean Overlap} \(\uparrow\)} \\
\cmidrule(lr){2-4}
  & \(\,k{=}5\,\) & \(\,k{=}10\,\) & \(\,k{=}50\,\) \\
\midrule
\textbf{SFT:} Qwen2.5 vs DS
  & 0.052 & 0.068 & 0.132\\
\quad $\rightarrow$ Qwen2.5-\textit{Emb} vs DS-\textit{Emb}
  & 0.069 & 0.068 & 0.091 \\

\midrule
\textbf{RLVR:} {\small DS} vs {\small ProRL} 
& \textbf{0.455} & \textbf{0.484} & \textbf{0.577} \\
\quad $\rightarrow$ {\small DS}-\textit{Emb} vs {\small ProRL}-\textit{Emb}
& \textbf{0.451} & \textbf{0.474} & \textbf{0.531} \\
\bottomrule
\end{tabularx}
\end{wraptable}
\begin{figure*}[t]
    \centering
    \setlength{\tabcolsep}{1.5pt} 
    \renewcommand{\arraystretch}{1.2} 

    \newcommand{\headerbox}[3]{%
        \parbox{#1}{%
            \centering
            \colorbox{#2}{\makebox[\dimexpr#1-2\fboxsep\relax]{\textbf{#3}}}%
        }%
    }

    \begin{tabular}{c c c c c}
        & 
        \multicolumn{2}{c}{\headerbox{\dimexpr0.47\textwidth+2\tabcolsep\relax}{red!20}{SFT}} & 
        
        \multicolumn{2}{c}{\headerbox{\dimexpr0.47\textwidth+2\tabcolsep\relax}{blue!20}{RLVR}} \\
        
        \noalign{\vspace{2pt}} 

        \raisebox{0.55\height}{\rotatebox{90}{\textbf{Accuracy}}} &
        
        \begin{subfigure}[b]{0.235\textwidth}
            \includegraphics[width=\textwidth]{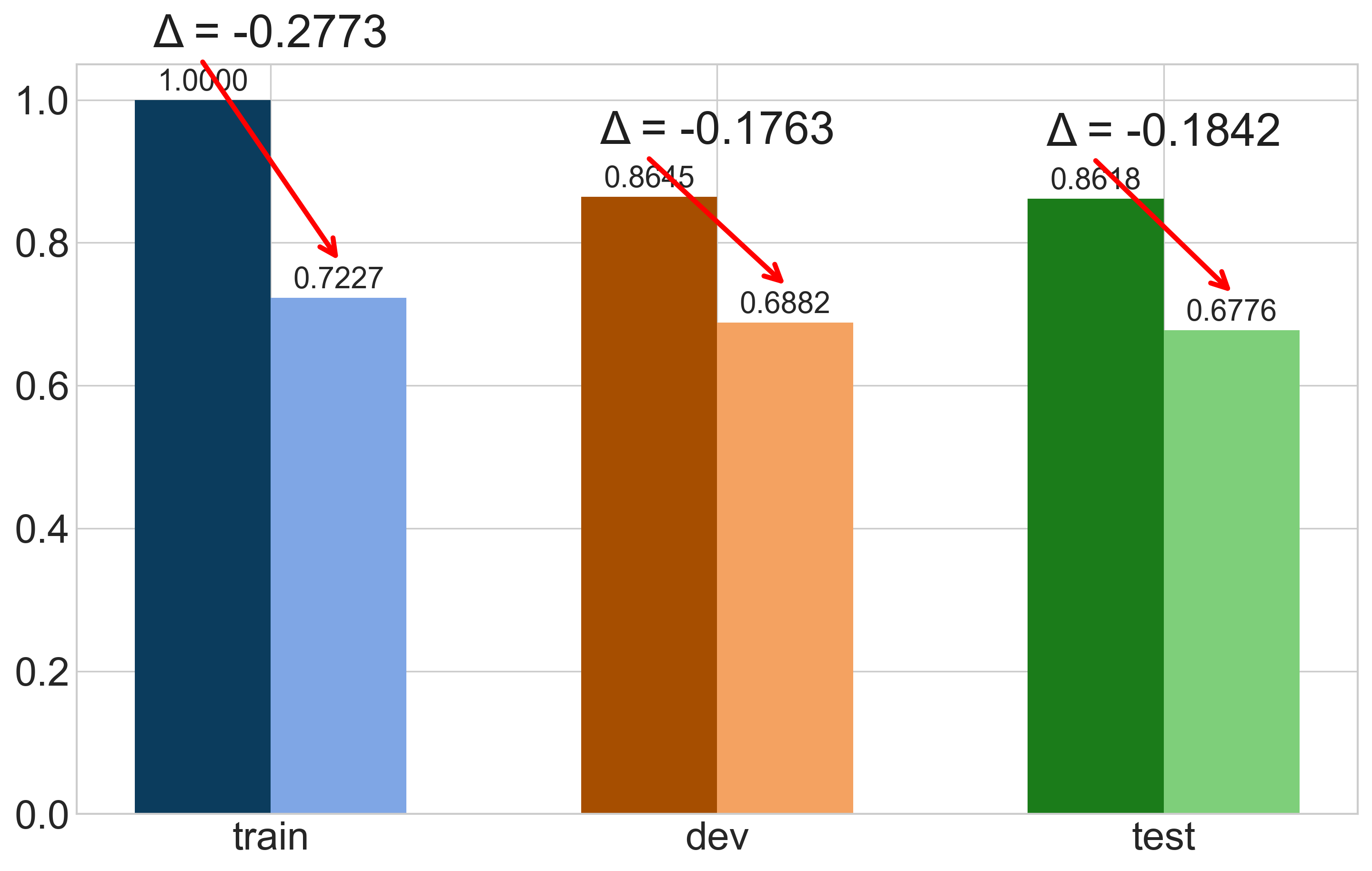}
            \caption{\scriptsize Qwen2.5 vs DS}
        \end{subfigure} &
        
        \begin{subfigure}[b]{0.235\textwidth}
            \includegraphics[width=\textwidth]{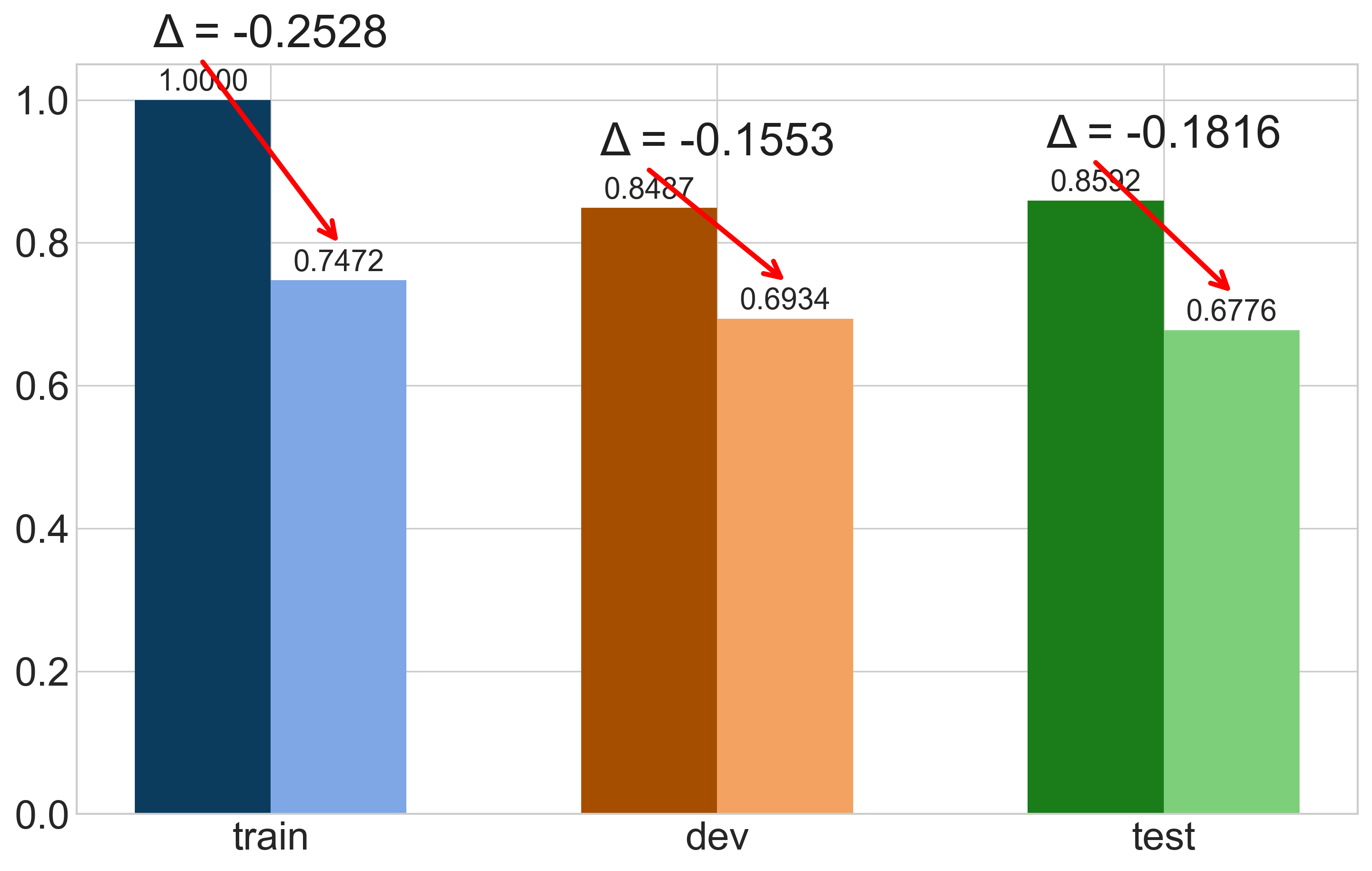}
            \caption{\scriptsize Qwen2.5-\textit{Emb} vs DS-\textit{Emb}}
        \end{subfigure} &
        
        \begin{subfigure}[b]{0.235\textwidth}
            \includegraphics[width=\textwidth]{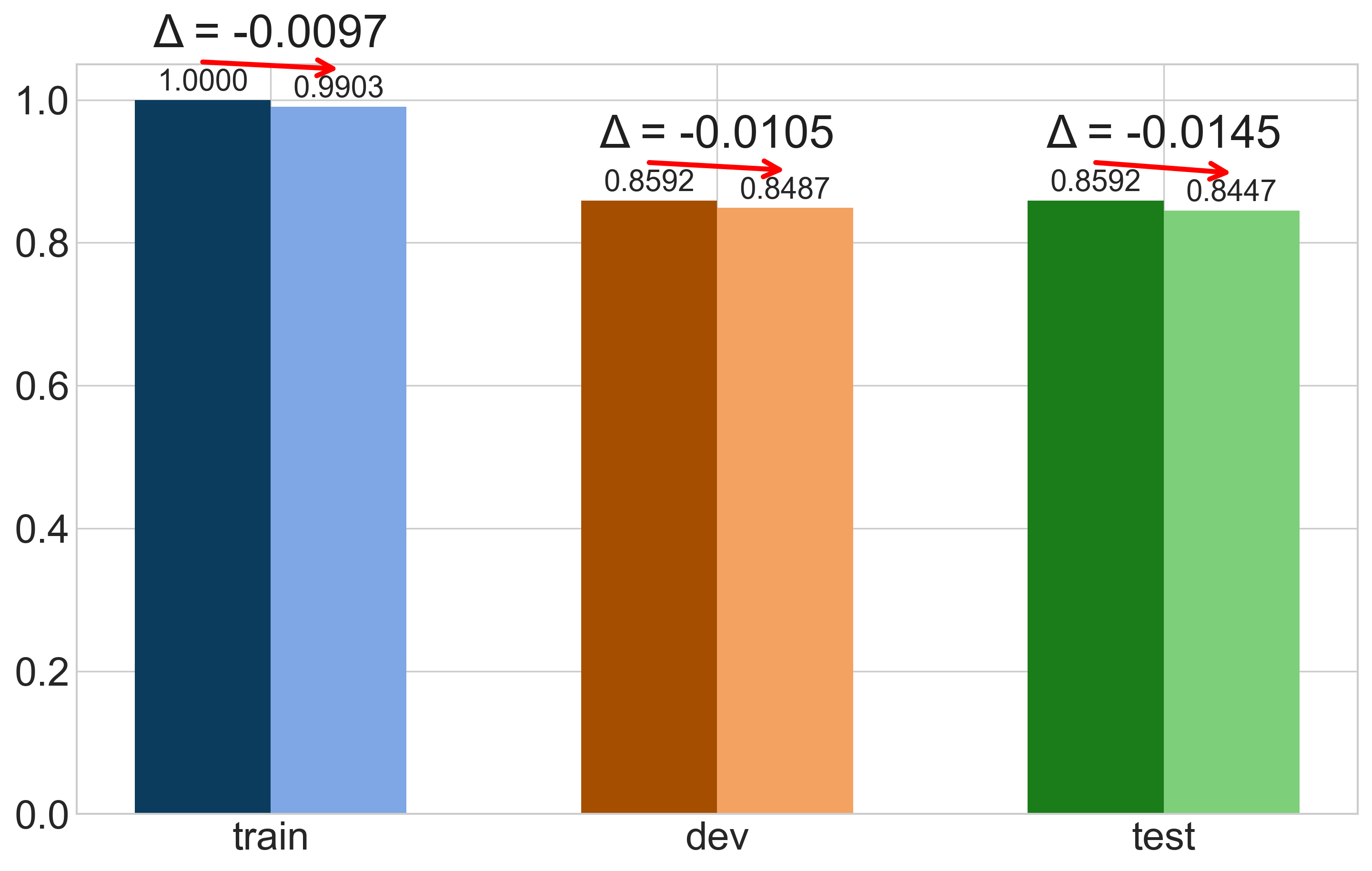}
            \caption{\scriptsize DS vs ProRL}
        \end{subfigure} &
        
        \begin{subfigure}[b]{0.235\textwidth}
            \includegraphics[width=\textwidth]{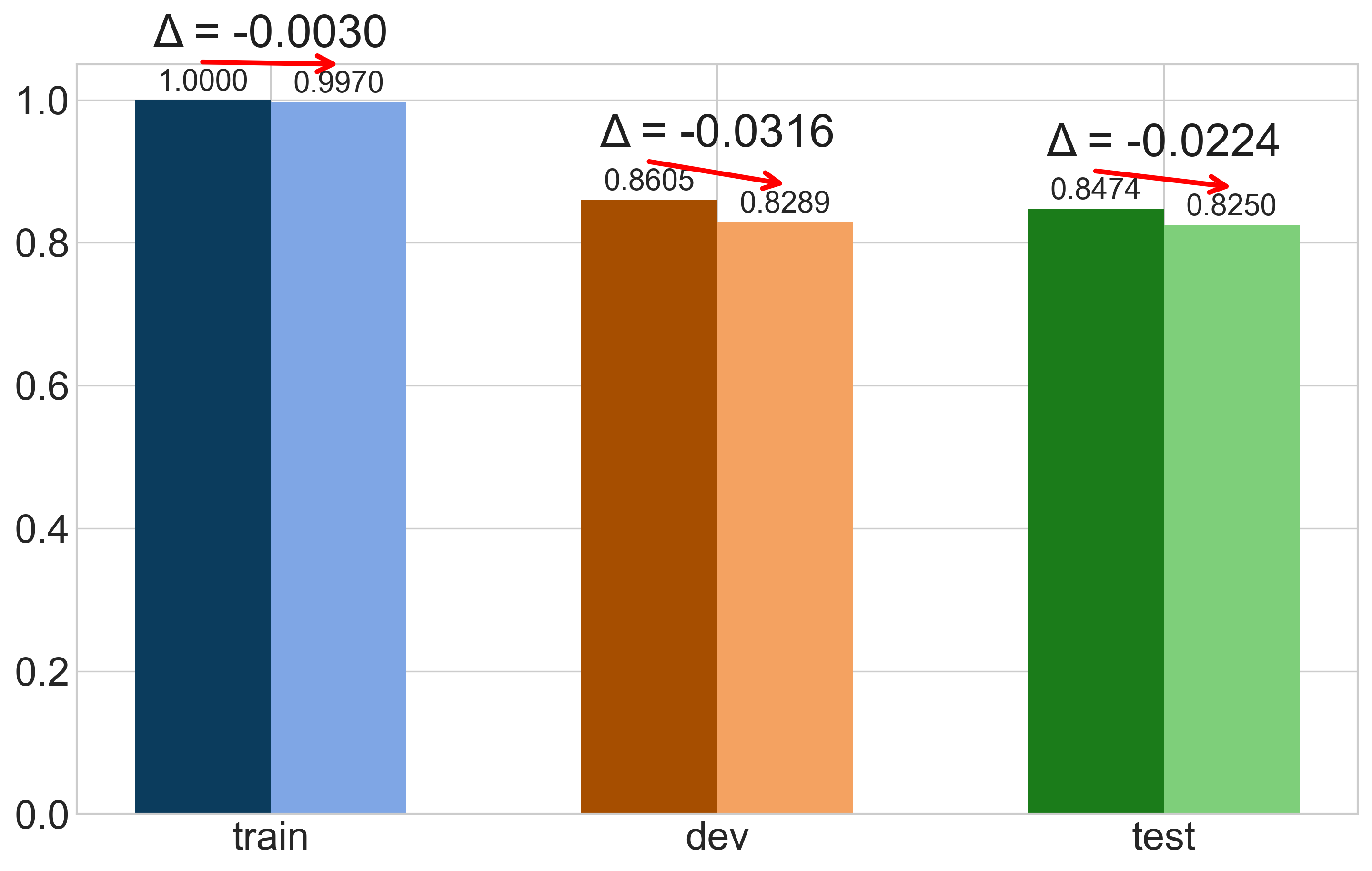}
            \caption{\scriptsize DS-\textit{Emb} vs ProRL-\textit{Emb}}
        \end{subfigure} \\
        
    \end{tabular}

    \vspace{-0.1in}
    \caption{\textbf{Cross-Model Linear Probe Results.} For each dataset split (train, dev, test), the left bar corresponds to $\mathcal{M}_{base}$ (or $\mathcal{M}_{base}^\textit{Emb}$) and the right bar corresponds to $\mathcal{M}_{reason}$ (or $\mathcal{M}_{reason}^\textit{Emb}$). The linear probe is trained on $\mathcal{M}_{base}$ (or $\mathcal{M}_{base}^\textit{Emb}$ in embedding model analysis) representations and evaluated on both models. The smaller the $\Delta$, the stronger the cross-model linear probe transfer.}
    \vspace{-0.1in}
    \label{fig: cross-model-linear-probe}
\end{figure*}

\subsection{Representation-Level Results}

\paragraph{Dimension-Wise Correlation}
Figure~\ref{fig: corr_and_cka_figures} (left) shows that SFT yields weak axis-aligned feature correspondence, while RLVR retains substantially higher per-dimension correlations. Notably, the clearest deviation from diagonal structure appears only under \emph{prolonged} RLVR (our main RLVR example), whereas contrastive learning largely restores axis alignment between the resulting embedding models, consistent with \emph{Manifold Realignment}.

\paragraph{Orthogonal Procrustes Analysis}
Table~\ref{tab: orthogonal O numbers} supports this global alignment perspective. While SFT results in a dense orthogonal map $O^\star$ (implying high feature mixing), prolonged RLVR yields an $O^\star$ that is nearly a permutation matrix, becoming strongly permutative after contrastive learning.
Table~\ref{tab: appendix-orthogonal-base-vs-reasoning} shows that $O^\star$ is already near-permutation for most $\mathcal{M}_{base}$ vs. RLVR-tuned $\mathcal{M}_{reason}$ comparisons. This suggests RLVR does not induce feature mixing; instead, coordinate basis drift is limited to prolonged training scenarios (as in ProRL). RLVR encourages the model to construct correct paths using existing capabilities, learning only the sequence of feature activations required for rewards. Thus, the coordinate basis remains largely unchanged.

\subsection{Geometry-Level Results}

\paragraph{Linear CKA}
Figure~\ref{fig: corr_and_cka_figures} (right) shows a sharp contrast in \emph{global manifold geometry}. Linear CKA drops under SFT but remains high under RLVR, consistent with an approximately isometric relationship. After contrastive learning, the $\mathcal{M}_{base}^\textit{Emb}$ and $\mathcal{M}_{reason}^\textit{Emb}$ move even closer in CKA, highlighting \emph{Manifold Realignment} at the geometry level. RLVR functions as a near-isometric transformation, rigidly preserving the shape of the latent manifold. Consequently, the semantic distances established during pre-training remain invariant, which explains why downstream embedding performance does not improve.

\paragraph{$k$-NN Overlap}
Table~\ref{tab: knn-neighborhood-overlap} shows that RLVR preserves substantially more \emph{local} structure (higher mean overlap) than SFT, yet overlap remains substantially below $1$, indicating local geometry reorganization. This gap persists even when the embedding model manifolds are pulled closer by contrastive learning, showing the idea that RLVR introduces irreversible local geometry reorganization, which is different to the rigid global geometry. We hypothesize that this irreversible local reorganization reflects RLVR optimization in grouping related reasoning steps effectively, clusters the decision trajectory without altering the global semantic map.

\subsection{Function-Level Results}

\begin{wrapfigure}{r}{0.5\columnwidth}  
    \vspace{-0.61in}
    \centering
    \includegraphics[width=0.48\columnwidth]{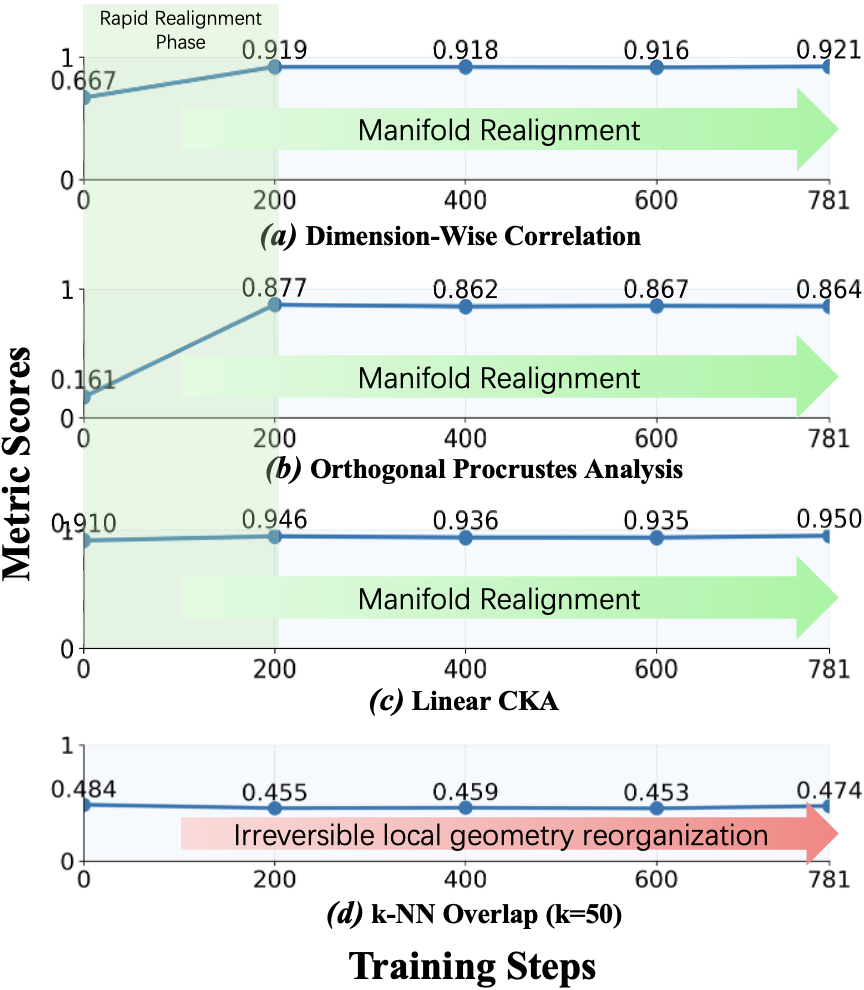}
    \caption{\textbf{The training dynamics of the embedding model pairs DS-$Emb$ vs ProRL-$Emb$.} Step 0 indicates LLM backbones, and step 781 indicates the final checkpoint of the embedding models.}
    \label{fig: training-progress}
    \vspace{-0.3in}
    
\end{wrapfigure}

\paragraph{Cross-Model Linear Probes}
Figure~\ref{fig: cross-model-linear-probe} shows stronger cross-model probe transfer under RLVR than SFT, implying that task-relevant linear readout directions of the latent manifold are more stable. For the embedding model pairs, transfer remains consistently high, reflecting \emph{Manifold Realignment}: contrastive learning maintains strong functional alignment even when local geometry do not fully coincide.

\subsection{Manifold Realignment in Training Dynamics}

Figure~\ref{fig: training-progress} illustrates the dynamics of adapting LLMs to embedding models over training steps. By applying HRSA to intermediate checkpoints, we observe that manifold realignment occurs rapidly in the early training stages (Steps 0–200), after which representational similarity stabilizes. This trajectory demonstrates the manifold realignment, which contrastive learning effectively drives strong alignment between base- and reasoning-initialized embedding models. In contrast, the $k$-NN mean overlap across layers decreases during this process, confirming that the RLVR-induced reorganization of local geometry is irreversible.
\section{Related Works}
\label{sec:relate}

\subsection{Reinforcement Learning with Verifiable Rewards (RLVR)}

RLVR optimizes models using deterministic, verifiable rewards rather than heuristic preference signals~\citep{DeepSeek-R1}. Recent analyses suggest RLVR stays close to the pretrained solution (e.g., KL-anchored/on-policy behavior)~\citep{RlRazor} and improves via weight updates that avoid large principal-subspace changes~\citep{ThePathNotTaken}, without introducing fundamentally novel reasoning beyond the base model~\citep{yue2025does}. However, these works do not directly characterize the \emph{representational} changes induced by RLVR; we show (via HRSA) that RLVR largely preserves global manifold structure while reorganizing local geometry and, with prolonged training, exhibiting some coordinate basis drift.

\subsection{Embedding Models}

Many state-of-the-art text embedding models now leverage decoder-only LLM backbones with bidirectional attention and contrastive training to produce strong encoders~\citep{Qwen3-Embedding,Gemini-Embedding}. While reward-driven or RL-based embedding learning has been explored~\citep{EmbeddingRL,SearchR3}, it remains unclear whether RLVR-tuned reasoning models improve embedding geometry or retrieval. Our study directly tests this connection and finds that RLVR-tuned reasoning models do not reliably enhance embedding quality.

\subsection{Representational Similarity Analysis}

Representational similarity analysis (RSA) and related metrics (e.g., CKA) are widely used to compare layer representations across models and tasks~\citep{RSA2008,revisited-similarity,RSA-LLMs,RSA-LLMs2,CKAonLLMs}. Prior work typically reports single-level alignment and does not organize how changes manifest across abstraction levels, nor does it connect RLVR update properties to representation geometry~\citep{RlRazor,Sparsity}. We address this with HRSA, which disentangles coordinate basis, manifold geometry, and readout-direction changes, showing substantial global preservation in RLVR-tuned reasoning models.

\section{Discussion and Conclusion}

In this paper, we introduced HRSA, a hierarchical representation similarity analysis framework for diagnosing how training reshapes the latent manifold, and conducted the first systematic benchmarking of RLVR-optimized vs. its base model as backbones for text embedding models. Applying HRSA to base backbones and their RLVR-tuned backbones, we identified which components of the latent manifold change and characterized a consistent pattern we term manifold realignment. Across settings, RLVR largely preserves global geometry and linear readout, while producing irreversible reorganization of local geometry. Coordinate basis drift emerges primarily under prolonged RLVR, but appears reversible: subsequent contrastive learning corrects this drift and reinstates strong realignment.

These results support the view that RLVR primarily optimizes trajectories through an existing semantic landscape rather than rewriting that landscape itself. As latent-space-centric paradigms such as World Models~\citep{world-model} and JEPA~\citep{LLM-JEPA} gain prominence, our findings point to a practical trade-off: RLVR tends to preserve the base model’s representational backbone (which may help retain broad generalization), yet on its own is unlikely to fundamentally improve the underlying global organization of the latent manifold. Put differently, RLVR seems to “move behavior” mainly by reshaping local geometry (how nearby states relate) while leaving the large-scale coordinate system and linear readout mostly intact.

Our analysis also suggests an actionable hypothesis for training design: if RLVR’s distinctive footprint is local geometry reorganization under global geometry stability, then similar behavior might be achievable via SFT augmented with geometry- and basis-aware regularization. For example, one could explicitly constrain global manifold distances or penalize excessive coordinate basis drift while encouraging controlled local geometry reorganization. Testing whether such constrained SFT can match RLVR’s representational effects offers a concrete direction for follow-up work.

Several open questions remain about the mechanism. In particular, we do not yet fully explain \emph{why} RLVR produces persistent local geometry reorganization while leaving global geometry and linear readout directions relatively stable, nor what training signals govern the onset and reversibility of coordinate basis drift. Progress here may require controlled interventions (e.g., reward shaping, curriculum, or KL/entropy constraints) paired with HRSA to isolate which components of the RLVR objective drive each geometric effect.

Finally, while our experiments focus on text embedding models, the hierarchy of effects uncovered by HRSA reflects a training-agnostic geometric signature rather than a modality-specific artifact. We therefore expect manifold realignment to be a general phenomenon that extends to representation learning in vision and audio, and we position HRSA as a practical diagnostic to verify this claim across modalities and objectives.

\nocite{GritLM}
\nocite{PSR-model}
\nocite{Polaris2025}
\nocite{Chat-LLM}
\nocite{PPO}

\bibliography{ref}
\bibliographystyle{icml2026}

\newpage
\appendix
\section{Embedding Model Training}
\label{appendix: embedding-model-training}

In this section, we reveal all the training details of the embedding models.

\subsection{Training Details}
We optimize the InfoNCE loss~\citep{InfoNCE} defined in Equation~\ref{eq:infonce}. This objective aims to maximize the similarity between the query $q$ and the positive passage $p$, while simultaneously minimizing the similarity between $q$ and the negative passages. Let $B$ denote the set of in-batch passages (which includes $p$ and negatives from other instances), $\mathcal{N}$ be the set of hard negatives, and sim be the cosine similarity. The loss is calculated as:
\begin{equation}
\label{eq:infonce}
\mathcal{L}(q, p, B, \mathcal{N}) = - \log \frac{\exp(\text{sim}(q, p) / \tau)}{\sum_{d \in B \cup \mathcal{N}} \exp(\text{sim}(q, d) / \tau)}
\end{equation}

We select decoder-only LLMs as the embedding model backbone, take the last layer's activation as the final output, and perform mean pooling to obtain a fixed-dimension embedding vector. We also enable bi-directional attention in the backbone by discarding the causal attention mask to capture more semantic details and relationships between tokens.
We use mixed precision with \texttt{bfloat16} and gradient checkpointing to reduce the memory pressure on the hardware. 
We use Flash Attention 2 as the attention backend algorithm. For more details on the settings, the reader can refer to Table~\ref{tab: training-parameters}. We employ the instruction-tuning technique. In particular, we use the instruction template \texttt{Instruction: \{instruction\}\textbackslash nQuery: {query}}, where \texttt{\{instruction\}} and \texttt{\{query\}} are the placeholders for the instruction and query, respectively. All of our training is conducted on 4x Nvidia L20 GPUs, with VRAM 44GB per GPU.

\begin{table}[h]
    \centering
    \small 
    \setlength{\tabcolsep}{12pt} 
    \caption{Training Hyperparameters}
    \label{tab: training-parameters}
    
    \renewcommand{\arraystretch}{1.2} 
    
    \begin{tabular}{l l}
        \toprule
        \textbf{Variables} & \textbf{Values} \\
        \midrule
        Batch Size                       & 2048 \\
        Learning Rate (LR)               & $2\times 10^{-5}$ \\
        LR Warm-up Ratio                 & 0.03 \\
        LR Scheduler                     & Cosine \\
        Weight Decay                     & 0.05 \\
        Optimizer                        & AdamW \\
        Padding Side                     & Right \\
        Number of data                   & 1,603,172 \\
        Number of training steps         & 782 \\
        Number of hard negatives         & 3 \\
        Temperature                      & 0.02 \\
        Pooling                          & Mean \\
        \bottomrule
    \end{tabular}
\end{table}

Although many prior works~\citep{Qwen3-Embedding,Gemini-Embedding,NV-Embed} use LoRA to train the embedding models, in our work, we discard it, since we find that training without LoRA yields better performance, and full parameters can better record the training dynamics. See Table~\ref{tab: lora-comparison}.

\begin{table}[h]
    \centering
    \small 
    \setlength{\tabcolsep}{10pt} 
    \caption{LoRA comparison on performance in MTEB (Multilingual, v2).}

    \begin{tabular}{l c}
        \toprule
        \textbf{Model} & \textbf{Performance} \\
        \midrule
        
        \multicolumn{2}{l}{\textbf{With LoRA}} \\
        DS-Distill-Qwen-1.5B\textit{-Emb} & 42.450 \\
        NV-ProRL\textit{-Emb}             & 42.064 \\

        \midrule

        \multicolumn{2}{l}{\textbf{Without LoRA}} \\
        DS-Distill-Qwen-1.5B\textit{-Emb} & \textbf{46.185} \\
        
        NV-ProRL\textit{-Emb}             & \textbf{46.247} \\
        \bottomrule
    \end{tabular}
    \label{tab: lora-comparison}
\end{table}

\subsection{Training Data Statistics}

We consider a wide range of datasets, forming the training dataset by composing 11 separate datasets. We used \textit{Qwen3-Embedding-0.6B}~\citep{Qwen3-Embedding} to mine 3 hard negatives per query, and employ the positive-aware hard negative mining technique introduced in \citet{NV-Retriever}, with 95\% margin to the positive score. 

\begin{table}[h]
  \centering
  \small
  \caption{Training Datasets Details}

  \begin{tabular}{l  c}
    \toprule
    \textbf{Datasets} & \textbf{Number of Samples} \\
    \midrule
     FEVER & 105,893 \\
     NaturalQuestions & 97,912 \\
     NLI  & 277,217 \\
     MSMARCO & 499,184 \\
     Quora & 94,443 \\
     Mr.Tydi & 102,796 \\
     DUReader & 17,493 \\
     TriviaQA & 65,465 \\
     HotpotQA & 167,808 \\
     SQuAD & 84,494 \\
     T2Ranking & 90,467 \\
     \midrule
     Total & 1,603,172 \\
     \bottomrule
  \end{tabular}
  \label{tab: training-datasets}
\end{table}


\section{HRSA Proof}
\label{appendix: proof}

\begin{table}[t]
\caption{\textbf{HRSA Framework Extensibility.} The HRSA framework is defined by invariance properties, not specific metrics. Researchers can select alternative metrics (right column) for different modalities or theoretical needs, provided they respect the invariance constraints of the target analysis level.}
\label{tab: appendix_hrsa_extensibility}
\centering
\small  
\begin{tabular}{@{}l p{3.5cm} l p{4.5cm}@{}}
\toprule
\textbf{Level} & \textbf{Invariance Constraints} & \textbf{Default Metric} & \textbf{Alternative Valid Metrics} \\ 
\midrule
\multirow{5}{*}{\textbf{Representation}} 
 & \textbf{Non-Invariant to:} & \multirow{2}{*}{\makecell[l]{Dimension-Wise\\Correlation}} & \textbf{Optimal Transport (Wasserstein)} \\
 & Orthogonal Transformation & & {\footnotesize Measures cost to move mass from basis $X$ to $Y$ without rotation.} \\
 \cmidrule{3-4}
 & \textbf{Goal:} Assess alignment & \multirow{2}{*}{\makecell[l]{Orthogonal\\Procrustes ($O^*$)}} & \textbf{Manifold Alignment Loss} \\
 & of specific axes. & & {\footnotesize Direct penalization of feature mismatch.} \\
\midrule
\multirow{5}{*}{\textbf{Geometry}} 
 & \textbf{Invariant to:} & \multirow{2}{*}{Linear CKA} & \textbf{RBF Kernel CKA} \\
 & Orthogonal Transformation & & {\footnotesize Captures non-linear similarity.} \\
 \cmidrule{3-4}
 & \textbf{Non-Invariant to:} & \multirow{3}{*}{\makecell[l]{k-NN Overlap\\(Jaccard)}} & \textbf{Riemannian Metrics} \\
 & Invertible Linear Transforms & & {\footnotesize Geodesic distance comparison.} \\
 & (Scaling/Shear) & & \\
\midrule
\multirow{4}{*}{\textbf{Function}} 
 & \textbf{Invariant to:} & \multirow{2}{*}{\makecell[l]{Linear Probing\\Transfer}} & \textbf{Mutual Information $I(X;Y)$} \\
 & Any transform preserving & & {\footnotesize Information theoretic upper bound.} \\
 \cmidrule{3-4}
 & the decision boundary. & \multirow{2}{*}{\makecell[l]{Zero-Shot\\Accuracy}} & \textbf{Behavioral Consistency} \\
 & & & {\footnotesize Exact match on downstream tasks.} \\
\bottomrule
\end{tabular}
\end{table}

In this section, we provide more details on HRSA, including all the invariance properties of each level analysis and the proof of their invariance properties.

We emphasize again that HRSA is not dependent on the specific metrics selected for this study, such as Dimension-Wise Correlation or Linear CKA. Rather, it is grounded in the hierarchy of invariance properties established earlier. Consequently, any metric that satisfies the invariance requirements of a specific level can be employed to analyze that level's focus. Refer to Table~\ref{tab: appendix_hrsa_extensibility} for a summary of these properties and a catalog of alternative valid metrics.

\subsection{Representation-Level Proof}
\label{appendix: representation-level-proof}

\begin{definition}[Representation-Level Analysis]
  \label{def:rep_level}
  The representation-level analysis examines the explicit coordinate basis of the latent manifold. A metric at this level must demonstrate sensitivity to coordinate basis rotations. Specifically:
  \begin{itemize}
    \item \textbf{Non-invariant to:} Orthogonal transformations (Rotation/Permutation) and General Linear transformations.
  \end{itemize}
\end{definition}

\subsubsection{Dimension-Wise Correlation}
Recall the definition of Dimension-Wise Correlation from equation~\ref{eq:dimwise-corr}.

\begin{proposition}
  \label{prop:dimwise_ortho}
  Dimension-Wise Correlation is non-invariant to orthogonal transformations.
\end{proposition}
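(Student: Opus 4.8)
The plan is to prove non-invariance by exhibiting a single explicit counterexample: a pair of representation matrices $\mathbf{X}, \mathbf{Y}$ and an orthogonal matrix $O$ such that the mean dimension-wise correlation of $(\mathbf{X}, \mathbf{Y})$ differs from that of $(\mathbf{X}O, \mathbf{Y})$ (equivalently, taking $\mathbf{Y} = \mathbf{X}O$ and comparing against $\mathbf{Y} = \mathbf{X}$). Since the claim is a negation of a universal invariance statement, one concrete instance suffices, and the cleanest route is a low-dimensional example, e.g. $D = 2$, where $O$ is a rotation by $\pi/2$ that maps the first coordinate axis onto the second.

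First I would set up the simplest nontrivial case: take $\mathbf{Y} = \mathbf{X}$, so that $\rho_j(\mathbf{X}, \mathbf{Y}) = 1$ for every $j$ after centering (each column is perfectly correlated with itself), giving a mean of $1$. Then I would apply the permutation/rotation $O = \begin{pmatrix} 0 & 1 \\ 1 & 0\end{pmatrix}$ (or a $\pi/2$ rotation), which sends column $j$ of $\mathbf{X}$ to a different column index in $\mathbf{X}O$. Choosing the columns of $\mathbf{X}$ to be centered and mutually orthogonal (e.g. $\mathbf{X}_{:1} \perp \mathbf{X}_{:2}$ as vectors in $\mathbb{R}^N$), each $\rho_j(\mathbf{X}O, \mathbf{X})$ becomes the correlation between two orthogonal vectors, namely $0$, so the mean is $0 \neq 1$. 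This already demonstrates non-invariance to orthogonal transformations. To cover the permutation sub-case explicitly (since permutations are a special case of orthogonal maps, this is automatic, but the paper may want it highlighted), the same $O$ serves as a permutation matrix. If the paper also wants general linear transformations addressed, I would note that a diagonal rescaling leaves $\rho_j$ invariant (correlation is scale-free), so the right statement there is non-invariance to general linear maps that are not products of a permutation and a diagonal — again witnessed by the same rotation example, or by a shear that mixes columns.

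The key steps in order: (1) state that it suffices to produce one counterexample; (2) fix $N \geq 2$, $D = 2$, and pick $\mathbf{X}$ with centered, orthogonal columns; (3) compute the mean dimension-wise correlation for $\mathbf{Y} = \mathbf{X}$, obtaining $1$; (4) apply $O$ (the swap/rotation), recompute, obtaining $0$; (5) conclude the two values differ, hence the metric is not invariant under $O^\top O = I$. I do not expect a genuine obstacle here — the result is elementary. The only subtlety worth care is the centering convention: the proposition's formula centers columns over tokens before computing $\rho_j$, so I must make sure the counterexample's columns are genuinely centered (mean zero over the $N$ rows) so that the post-centering correlations are exactly what I claim; choosing, say, $\mathbf{X}_{:1} = (1, -1, 0, \dots)^\top$ and $\mathbf{X}_{:2} = (0, 0, 1, -1, \dots)^\top$ handles this cleanly and makes the orthogonality manifest.
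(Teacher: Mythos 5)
Your proposal is correct and takes a genuinely different route from the paper. The paper's proof writes out the post-transformation correlation $\rho_j(XQ, Y) = \frac{(\sum_k X_{:k} Q_{kj})^\top y_{:j}}{\|\sum_k X_{:k} Q_{kj}\|\,\|y_{:j}\|}$ in general form and then argues verbally that because $Q$ mixes columns of $X$, the value ``changes arbitrarily depending on $Q$'' and hence differs from $\rho_j(X,Y)$. That argument identifies the right mechanism but stops short of a rigorous derivation: it asserts rather than demonstrates that some $Q$ actually changes the value, and the unqualified claim $\rho_j(XQ,Y) \neq \rho_j(X,Y)$ is false for some $Q$ (e.g.\ $Q=I$, or any $Q$ fixing the relevant columns). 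Your construction repairs this gap. You correctly observe that non-invariance is the negation of a universal statement, so one witness suffices, and you produce an explicit one: $D=2$, centered orthogonal columns such as $\mathbf{X}_{:1} = (1,-1,0,0)^\top$, $\mathbf{X}_{:2} = (0,0,1,-1)^\top$, with $\mathbf{Y}=\mathbf{X}$ (mean $\rho=1$) versus the column swap $O = \bigl(\begin{smallmatrix}0&1\\1&0\end{smallmatrix}\bigr)$ (mean $\rho=0$). This is elementary and closes the proof completely. The paper's version has the advantage of exhibiting the general structural reason (mixing across columns) before concluding, which reads well as exposition and motivates the geometry level; your version has the advantage of being a self-contained, fully rigorous counterexample with no appeal to the reader's intuition. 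If anything, the ideal proof would combine the two: give the mixing formula to explain \emph{why}, then your explicit $O$ to nail down \emph{that} it happens. Your aside about diagonal rescalings leaving $\rho_j$ invariant is also a correct and useful caveat the paper omits, though it is strictly beyond what the proposition claims.
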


\begin{proof}
  Let $Q \in \mathbb{R}^{D \times D}$ be an orthogonal matrix ($Q^\top Q = I$) such that $X' = XQ$. The $j$-th column becomes $x'_{:j} = \sum_{k=1}^D X_{:k} Q_{kj}$.
  The correlation of the $j$-th column becomes:
  \begin{equation}
    \rho_j(XQ, Y) = \frac{(\sum_k X_{:k} Q_{kj})^\top y_{:j}}{\|\sum_k X_{:k} Q_{kj}\|_2 \|y_{:j}\|_2}.
  \end{equation}
  Since $Q$ mixes information from multiple columns $X_{:k}$ into the new column $x'_{:j}$, the correlation with the fixed target $y_{:j}$ changes arbitrarily depending on $Q$. Thus, $\rho_j(XQ, Y) \neq \rho_j(X, Y)$, satisfying the requirement for coordinate basis sensitivity.
\end{proof}

\subsubsection{Orthogonal Procrustes Analysis}
\label{appendix: opa}

Recall the Orthogonal Procrustes solution $O^*$ defined in Equation~\ref{eq:procrustes}. To quantify the extent of coordinate alignment, we introduce the \emph{inverse row entropy}, denoted as $H_{\text{inv}}$. We interpret the squared elements of each row in $O^*$ as a probability distribution. This is mathematically valid because $O^*$ is orthogonal, meaning its rows have unit Euclidean norm (i.e., $\sum_j (O^*_{ij})^2 = 1$).

We compute $H_{\text{inv}}$ by calculating the mean row entropy, normalizing it by the maximum possible entropy ($\log D$), and taking the complement:

\begin{align*}
H &= - \frac{1}{D \log D} \sum_{i=1}^D \sum_{j=1}^D (O^*_{ij})^2 \log (O^*_{ij})^2 \\
H_{\text{inv}} &= 1 - H
\end{align*}
where $O^*_{ij}$ denotes the element of $O^*$ at row $i$ and column $j$, and $D$ represents the dimensionality. The intermediate term $H$ is normalized to the range $[0, 1]$. Consequently, a higher $H_{\text{inv}}$ indicates that the coordinate basis is preserved (i.e., $O^*$ is sparse and approximates a permutation matrix), whereas a lower $H_{\text{inv}}$ indicates that features are "smeared" or rotated across multiple dimensions.

\begin{proposition}
  \label{prop:procrustes_ortho}
  The structure of the optimal mapping in Orthogonal Procrustes Analysis is non-invariant to orthogonal transformations.
\end{proposition}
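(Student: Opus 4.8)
The plan is to use the substitution $P = QO$ to obtain a closed form for the optimal map after the reparameterization $X \mapsto XQ$, and then to exhibit an explicit witness on which the structural descriptor — the inverse row entropy $H_{\text{inv}}$ defined in Section~\ref{appendix: opa} — strictly decreases. Fix an orthogonal $Q \in \mathbb{R}^{D \times D}$ and set $X' = XQ$. In $\min_{O^\top O = I} \|X'O - Y\|_F^2 = \min_{O^\top O = I} \|XQO - Y\|_F^2$, substitute $P = QO$; since $Q$ is orthogonal, $P$ ranges over all $D \times D$ orthogonal matrices as $O$ does, and the objective becomes $\|XP - Y\|_F^2$. Hence the new minimizer $\tilde O$ satisfies $Q\tilde O = O^*$, i.e. $\tilde O = Q^\top O^*$. (Equivalently, from the SVD $X^\top Y = U\Sigma V^\top$ one has $O^* = UV^\top$ while $(XQ)^\top Y = (Q^\top U)\Sigma V^\top$, so $\tilde O = (Q^\top U)V^\top = Q^\top O^*$.) Thus the reparameterized optimal map is the original one with its rows linearly recombined: row $i$ of $\tilde O$ equals $\sum_k Q_{ki}\,(O^*)_{k:}$, so the entire family $\{\, Q^\top O^* : Q \text{ orthogonal} \,\}$ of maps is realizable, and these exhaust every possible row-entropy profile.

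Next I would pin down a witness on which the descriptor visibly moves. Take $Y = X$ with $X$ of full column rank. Then $\|XO - X\|_F^2 = \|X(O - I)\|_F^2$ vanishes iff $O = I$, so $O^* = I$ is the unique Procrustes solution; every row of $O^*$ is a standard basis vector, the $0\log 0 = 0$ convention gives row entropy $0$, and $H_{\text{inv}}(O^*) = 1$. Now let $Q$ be the Givens rotation by an angle $\theta$ in the $(1,2)$-coordinate plane and the identity on the remaining $D-2$ coordinates, with $\theta$ chosen so that $\cos\theta, \sin\theta \notin \{0, \pm 1\}$. Then $\tilde O = Q^\top$, whose top-left block is $\begin{pmatrix} \cos\theta & \sin\theta \\ -\sin\theta & \cos\theta \end{pmatrix}$; its first row has squared entries $(\cos^2\theta, \sin^2\theta)$, both strictly in $(0,1)$, so that row contributes strictly positive entropy. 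Therefore $H_{\text{inv}}(\tilde O) < 1 = H_{\text{inv}}(O^*)$, and since $X' = XQ$ is merely an orthogonal change of coordinate basis on $X$, this proves that the structure of the Procrustes map — and any row-entropy-based quantification of it — is not invariant under orthogonal transformations.

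I expect the only delicate point to be well-posedness of ``the'' optimal map: when $X^\top Y$ has repeated or zero singular values the Procrustes minimizer is not unique, so $O^*$ is only defined up to an orthogonal action on the degenerate subspace, and a careless argument could compare non-canonical representatives. I sidestep this entirely in the witness: with $X$ of full column rank and $Y = X$, $X^\top Y = X^\top X$ is positive definite and has the identity as its unique orthogonal polar factor, so both $O^* = I$ and $\tilde O = Q^\top$ are unambiguous, and the conclusion needs no genericity caveat beyond the explicit condition on $\theta$. The remaining steps are routine bookkeeping: checking that $H_{\text{inv}}$ is well-defined on $\tilde O$ because its rows have unit Euclidean norm (as $\tilde O$ is orthogonal), and that substituting $O^* = I$ into the formula of Section~\ref{appendix: opa} yields exactly $H_{\text{inv}} = 1$.
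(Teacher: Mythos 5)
Your proof is correct, and its core step — deriving the transformation law of the Procrustes minimizer under an orthogonal reparametrization — is exactly the one the paper uses (the paper states $O^*(XQ, YR) = Q^\top O^*(X,Y) R$; you take the special case $R = I$, which suffices). Where you go beyond the paper is that the paper stops at the conjugation law and appeals informally to the claim that this ``in general destroys diagonality or one-hot structure,'' whereas you close the gap with an explicit witness: take $Y = X$ of full column rank so that $X^\top Y = X^\top X$ is positive definite, giving a unique minimizer $O^* = I$ with $H_{\text{inv}} = 1$, and show that a Givens rotation $Q$ with $\cos\theta, \sin\theta \notin \{0,\pm1\}$ produces $\tilde O = Q^\top$ with strictly positive row entropy, hence $H_{\text{inv}}(\tilde O) < 1$. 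You also flag and resolve the well-posedness concern (non-uniqueness of $O^*$ under degenerate singular values) that the paper silently ignores. So this is the same route, but carried to a complete, self-contained argument rather than an informal assertion; if anything your version is the one that should appear in the appendix.
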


\begin{proof}
  For any orthogonal matrices $Q,R \in \mathbb{R}^{D\times D}$, if we transform $X,Y$ to $X' = XQ$, $Y' = YR$, then an optimal map for the new problem is
  \begin{equation}
    O^*(X',Y') = Q^\top O^*(X,Y) R.
  \end{equation}
  This is a conjugation of $O^*$ by orthogonal matrices, which in general destroys diagonality or one-hot structure.
\end{proof}

\begin{remark}
  One may claim that Orthogonal Procrustes Analysis should be classified as a geometry-level measurement because the residual is invariant to orthogonal transformation. In our work, we only focus on the structure of $O^*$, specifically by considering its inverse row entropy. As shown in Proposition~\ref{prop:procrustes_ortho}, $O^*$ remains dependent on the chosen coordinate system.
\end{remark}

\subsection{Geometry-Level Proof}
\label{appendix: geometry-level-proof}

\begin{definition}[Geometry-Level Analysis]
  \label{def:geo_level}
  The geometry-level analysis examines the intrinsic shape and topology of the latent manifold $\mathcal{Z}$. Metrics at this level must quantify the arrangement of points relative to one another, independent of the specific coordinate system used to describe them.
  \begin{itemize}
    \item \textbf{Invariant to:} Similarity transformations, defined as the composition of orthogonal rotation/reflection ($Q \in \mathbb{R}^{D \times D}, Q^\top Q = I$) and isotropic scaling ($c \in \mathbb{R}, c > 0$).
    \item \textbf{Non-invariant to:} Anisotropic linear transformations (e.g., non-uniform scaling, shearing) where the transformation matrix $A$ satisfies $A^\top A \neq cI$.
  \end{itemize}
\end{definition}

In the following, we provide proofs for the invariance properties of Linear CKA and Cosine $k$-NN Overlap.

\subsubsection{Linear CKA}
Recall that Linear CKA is defined via the Hilbert--Schmidt Independence Criterion (HSIC) of centered Gram matrices. Let $K_X = XX^\top$ and $H = I - \frac{1}{N}\mathbf{1}\mathbf{1}^\top$.

\begin{proposition}
\label{prop:cka_sim_hsic}
Linear CKA is invariant to similarity transformations $X \mapsto cXQ$ where $c > 0$ and $Q$ is orthogonal.
\end{proposition}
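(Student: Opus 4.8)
The plan is to track how the linear Gram matrix, and then HSIC, behaves under the substitution $X \mapsto X' = cXQ$, and to observe that every scalar factor that appears cancels in the CKA ratio. First I would compute the transformed Gram matrix directly: $K_{X'} = X'X'^\top = (cXQ)(cXQ)^\top = c^2\, X Q Q^\top X^\top = c^2\, XX^\top = c^2 K_X$, where the orthogonality $QQ^\top = I$ is the only property of $Q$ used. The conceptual point worth stating is that $Q$ acts on the feature (column) side of $X$, so it is annihilated when forming the $N\times N$ outer product, whereas an orthogonal change applied on the token side would \emph{not} cancel; the isotropic scale $c$ survives only through its square. Since the centering matrix $H = I - \tfrac1N \mathbf{1}\mathbf{1}^\top$ does not depend on $X$ at all, we get $H K_{X'} H = c^2\, H K_X H$.

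Next I would use that HSIC, as written in Equation~\eqref{eq:hsic}, is linear in each of its two kernel arguments: it is $\tfrac{1}{(N-1)^2}$ times the trace of a product involving the (centered) kernels. Therefore $\mathrm{HSIC}(K_{X'}, K_Y) = c^2\,\mathrm{HSIC}(K_X, K_Y)$ and $\mathrm{HSIC}(K_{X'}, K_{X'}) = c^4\,\mathrm{HSIC}(K_X, K_X)$, while $\mathrm{HSIC}(K_Y, K_Y)$ is left untouched. Substituting into the CKA ratio in Equation~\eqref{eq:cka}, the numerator acquires a factor $c^2$ and the denominator acquires $\sqrt{c^4 \cdot 1} = c^2$, so the two cancel and $\mathrm{CKA}(cXQ, Y) = \mathrm{CKA}(X, Y)$, as claimed. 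I would close by noting that, because CKA is symmetric in its two arguments, the identical computation shows invariance when $Y$ is simultaneously replaced by $c'YR$ with $c' > 0$ and $R$ orthogonal, which is the full similarity-transformation statement of Definition~\ref{def:geo_level}; I would also flag the standing non-degeneracy assumption $\mathrm{HSIC}(K_X,K_X), \mathrm{HSIC}(K_Y,K_Y) > 0$ needed for the ratio to be defined.

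There is essentially no hard step here; the proposition is routine bilinearity plus the orthogonality identity. The only places to be careful are (i) making explicit that the orthogonal factor is applied on the feature side so that it disappears in the Gram matrix rather than in the token index, and (ii) checking that the particular normalization chosen in Equation~\eqref{eq:cka} — dividing by the geometric mean of the self-HSIC terms — is exactly what converts the $c^2$ scaling into a cancelling factor, so that isotropic (but not anisotropic) rescaling is neutralized. A brief remark contrasting this with a non-uniform rescaling $X \mapsto XA$ with $A^\top A \neq cI$, which would change $K_X = X A A^\top X^\top$ in a way not proportional to $K_X$, makes clear why the invariance is limited to similarity transformations and motivates the companion $k$-NN overlap metric.
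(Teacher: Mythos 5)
Your argument is correct and follows the same route as the paper's proof: compute $K_{X'} = c^2 K_X$ via $QQ^\top = I$, propagate the scalar through the bilinearity of HSIC to get $c^2$ in the numerator and $\sqrt{c^4}=c^2$ in the denominator, and observe the cancellation in the CKA ratio. The extra remarks you add (feature-side vs.\ token-side action of $Q$, symmetry in the second argument, the non-degeneracy assumption, and the contrast with anisotropic $A$) go beyond what the paper writes here but are all accurate and compatible; the paper defers the anisotropic point to its separate Proposition on non-invariance.
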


\begin{proof}
Let $X' = cXQ$. We first derive the Gram matrix for the transformed representation:
\begin{equation}
K_{X'} = (cXQ)(cXQ)^\top = c^2 X Q Q^\top X^\top.
\end{equation}
Since $Q$ is orthogonal ($Q Q^\top = I$), this simplifies to:
\begin{equation}
K_{X'} = c^2 X X^\top = c^2 K_X.
\end{equation}

Now we examine the HSIC term in the numerator. Using the property $\mathrm{tr}(cA) = c\,\mathrm{tr}(A)$:
\begin{equation}
\begin{aligned}
\mathrm{HSIC}(K_{X'}, K_Y) &= \frac{1}{(N-1)^2} \mathrm{tr}(K_{X'} H K_Y H) \\
                            &= \frac{1}{(N-1)^2} \mathrm{tr}(c^2 K_X H K_Y H) \\
                            &= c^2 \cdot \mathrm{HSIC}(K_X, K_Y).
\end{aligned}
\end{equation}

Similarly, for the normalization term in the denominator:
\begin{equation}
\begin{aligned}
\mathrm{HSIC}(K_{X'}, K_{X'}) &= \frac{1}{(N-1)^2} \mathrm{tr}(c^2 K_X H c^2 K_X H) \\
&= c^4 \cdot \mathrm{HSIC}(K_X, K_X).
\end{aligned}
\end{equation}

Substituting these into the full Linear CKA equation:
\begin{equation}
\begin{aligned}
\mathrm{CKA}(X', Y) &= \frac{c^2 \mathrm{HSIC}(K_X, K_Y)}{\sqrt{c^4 \mathrm{HSIC}(K_X, K_X) \cdot \mathrm{HSIC}(K_Y, K_Y)}} \\
&= \frac{c^2 \mathrm{HSIC}(K_X, K_Y)}{c^2 \sqrt{\mathrm{HSIC}(K_X, K_X) \cdot \mathrm{HSIC}(K_Y, K_Y)}} \\
&= \mathrm{CKA}(X, Y).
\end{aligned}
\end{equation}
The scalar factors cancel perfectly, proving invariance.
\end{proof}

\begin{proposition}
\label{prop:cka_aniso}
Linear CKA is generally non-invariant to anisotropic linear transformations.
\end{proposition}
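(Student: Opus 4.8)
The plan is to exhibit a single explicit counterexample: a pair of representations $X, Y$ and an anisotropic linear map $A$ (with $A^\top A \neq cI$) such that $\mathrm{CKA}(XA, Y) \neq \mathrm{CKA}(X, Y)$. A clean construction works in low dimensions, say $D = 2$ with $N = 3$ tokens, which keeps all Gram matrices $3 \times 3$ and all traces computable by hand. The key observation driving the whole argument is that, unlike the similarity-transformation case in Proposition~\ref{prop:cka_sim_hsic}, the transformed Gram matrix $K_{XA} = XAA^\top X^\top$ does \emph{not} reduce to a scalar multiple of $K_X$ when $AA^\top$ is not a multiple of the identity; it genuinely reshapes the pairwise inner-product structure, so the HSIC numerator and denominator no longer rescale by matched powers of a common constant.

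The steps I would carry out, in order: (1) Fix $Y$ and its centered Gram matrix $HK_YH$; choose $Y$ so that this matrix is generic (not proportional to the centering matrix itself). (2) Choose $X$ and compute $\mathrm{CKA}(X, Y)$ explicitly via $\mathrm{tr}(K_X H K_Y H)$ and the two normalizers. (3) Choose $A = \mathrm{diag}(a_1, a_2)$ with $a_1 \neq a_2$ (the simplest anisotropic map, since then $A^\top A = \mathrm{diag}(a_1^2, a_2^2) \neq cI$), compute $K_{XA} = XAA^\top X^\top$, and then $\mathrm{CKA}(XA, Y)$. (4) Verify the two CKA values differ for a concrete numeric choice (e.g. $a_1 = 1, a_2 = 2$). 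To make step (4) robust, I would actually argue it at the level of a one-parameter family: writing $K_{XA}$ as $a_1^2 (\text{rank-one part from column 1}) + a_2^2 (\text{rank-one part from column 2})$, the function $t \mapsto \mathrm{CKA}(X\,\mathrm{diag}(1,\sqrt{t}), Y)$ is a nonconstant rational function of $t$ for generic $X, Y$, hence cannot equal the constant $\mathrm{CKA}(X,Y)$ for all $t$; picking any $t \neq 1$ at which it differs finishes the proof. One should also note a shearing example (e.g. $A = \begin{psmallmatrix}1 & 1\\ 0 & 1\end{psmallmatrix}$) works identically if one prefers a non-diagonal witness.

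I would also remark, for completeness, why this does not contradict Proposition~\ref{prop:cka_sim_hsic}: the cancellation there relied precisely on $QQ^\top = I$ forcing $K_{X'} = c^2 K_X$; anisotropy breaks exactly that identity, which is the structural reason $k$-NN overlap and CKA sit at the geometry level (invariant to rotations and isotropic scaling) but not below it.

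\medskip

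\noindent The main obstacle I anticipate is purely presentational rather than mathematical: choosing $X$ and $Y$ small and generic enough that the trace computations are short, yet non-degenerate enough that the inequality is visibly strict (it is easy to accidentally pick data where centering or a coincidental symmetry collapses the difference to zero). The safeguard is the one-parameter-family argument above — showing nonconstancy of a rational function is cleaner and less error-prone than pinning down two specific decimal values, and it makes transparent that the equality fails for \emph{almost every} anisotropic $A$, not just a hand-picked one. No deep tool is needed; HSIC's bilinearity in each Gram argument and the fact that $\mathrm{tr}(\cdot H \cdot H)$ is linear in each slot are the only properties invoked.
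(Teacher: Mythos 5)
Your approach is correct, and it takes a genuinely different route from the paper. The paper's proof is a structural, not a constructive, argument: it observes that the matrix $M = AA^\top$ stays ``trapped'' between $X$ and $X^\top$ inside the HSIC trace and therefore cannot be pulled out as a scalar (the way $c^2 I$ could in Proposition~\ref{prop:cka_sim_hsic}), and then appeals informally to the fact that $M$ reweights the singular directions of $X$ and hence alters the Gram structure that CKA compares. It never exhibits a concrete $(X,Y,A)$ nor verifies numerically that the CKA value actually moves. You instead propose a bona fide proof of the ``generally'' claim: fix $A = \mathrm{diag}(1,\sqrt{t})$, note $K_{XA} = X_{:1}X_{:1}^\top + t\,X_{:2}X_{:2}^\top$ is affine in $t$, so $\mathrm{HSIC}(K_{XA},K_Y)$ is affine in $t$ while $\mathrm{HSIC}(K_{XA},K_{XA})$ is quadratic in $t$, and argue the resulting CKA cannot be constant in $t$ for generic $X,Y$. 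That is tighter than the paper's argument — it pins down \emph{why} the inequality holds for almost every anisotropic map rather than asserting it — at the cost of a slightly longer exposition. What the paper's version buys is a one-paragraph explanation of the structural asymmetry with the isotropic case, which is pedagogically useful but, read strictly, stops short of a proof.

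One small correction to your write-up: the map $t \mapsto \mathrm{CKA}(X\,\mathrm{diag}(1,\sqrt{t}),\,Y)$ is not a rational function of $t$ — the denominator contains $\sqrt{\mathrm{HSIC}(K_{XA},K_{XA})}$, the square root of a quadratic in $t$. Squaring gives a genuine rational function (linear-squared over quadratic), and nonconstancy for generic $X,Y$ follows from the same coefficient-matching argument, so the conclusion is unaffected; just state it as nonconstancy of the squared CKA to keep the wording accurate. Your observation about why this does not contradict Proposition~\ref{prop:cka_sim_hsic} — that the cancellation there hinged precisely on $QQ^\top = I$ giving $K_{X'} = c^2 K_X$ — is exactly the structural point the paper makes, so the two proofs agree on the mechanism; they differ only in whether that mechanism is then converted into a concrete demonstration.
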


\begin{proof}
Let $X' = XA$, where $A \in \mathbb{R}^{D \times D}$ is invertible and anisotropic ($AA^\top \neq cI$). The Gram matrix becomes:
\begin{equation}
K_{X'} = X A A^\top X^\top.
\end{equation}
Let $M = A A^\top$. The numerator HSIC term becomes proportional to $\mathrm{tr}(X M X^\top H K_Y H)$. Unlike the isotropic case, the matrix $M$ is "trapped" between $X$ and $X^\top$ inside the trace. Unless $M$ is a scalar multiple of the identity, it reweights the singular values of $X$, effectively altering the principal components of the representation space. Since Linear CKA measures the alignment of these principal components, $\mathrm{CKA}(XA, Y) \neq \mathrm{CKA}(X, Y)$.
\end{proof}

\subsubsection{$k$-NN Overlap}
As defined in the Section~\ref{sec:knn-overlap}, $k$-NN overlap relies on the ranking of cosine similarities $s(u, v) = \frac{u^\top v}{\|u\|\|v\|}$.

\begin{proposition}
\label{prop:knn_sim}
Cosine-based $k$-NN Overlap is invariant to similarity transformations.
\end{proposition}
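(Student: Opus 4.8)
The plan is to show that the cosine similarity function $s(u,v) = \frac{u^\top v}{\|u\|\,\|v\|}$ is itself invariant under any similarity transformation $u \mapsto cQu$ with $c>0$ and $Q^\top Q = I$, and then observe that the entire $k$-NN overlap construction depends on the representations \emph{only} through the values of $s$. First I would take two embeddings $u,v$ and compute $s(cQu, cQv)$. The numerator becomes $(cQu)^\top(cQv) = c^2\, u^\top Q^\top Q v = c^2\, u^\top v$, and each norm in the denominator becomes $\|cQu\| = c\,\|Qu\| = c\,\|u\|$ since orthogonal maps preserve Euclidean norm. Hence the denominator picks up a factor $c^2$, and $s(cQu,cQv) = \frac{c^2 u^\top v}{c^2 \|u\|\,\|v\|} = s(u,v)$.

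The second step is to push this pointwise invariance through the definition of the neighbor sets. Applying a similarity transformation to all rows of $\mathbf{X}$ leaves every pairwise cosine similarity $s_{\mathbf{X}}(i,j)$ unchanged, so the ranking of candidates by similarity is identical, and therefore $\operatorname{TopK}_j s_{\mathbf{X}}(i,j)$ returns exactly the same set $N_k^X(i)$ for each $i$ (here I would note that we assume ties are broken consistently, e.g.\ by index, so the operator is well-defined). The same holds for $\mathbf{Y}$. Since the Jaccard ratio $|N_k^X(i)\cap N_k^Y(i)| / |N_k^X(i)\cup N_k^Y(i)|$ and the outer average over $i$ are built purely from these unchanged sets, the overall score $\tilde{s}_k$ is unchanged. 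This establishes invariance to similarity transformations applied to either (or both) representations.

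I would present this as a short two-line derivation for $s$ followed by one sentence lifting it to the neighbor sets and to $\tilde{s}_k$; there is no real obstacle here since the computation is elementary. The only subtlety worth flagging explicitly is the tie-breaking convention for $\operatorname{TopK}$, because in principle a transformation that is invariant at the level of similarity \emph{values} could still be sensitive to an ill-specified ranking when multiple neighbors share the same similarity; stating the fixed tie-break rule removes this concern. If the paper also wants the companion statement — non-invariance to anisotropic transforms — the argument would be the mirror of Proposition~\ref{prop:cka_aniso}: an anisotropic $A$ distorts angles between vectors non-uniformly, so for a generic point cloud some neighbor ranking flips, and a one-line counterexample in $\mathbb{R}^2$ (e.g.\ scaling one coordinate heavily so a previously-far point becomes the nearest) suffices; but the stated proposition only asks for the invariance direction, so I would keep the proof to the three short steps above.
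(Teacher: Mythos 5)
Your proof is correct and follows essentially the same route as the paper: show that cosine similarity is pointwise invariant under $u \mapsto cQu$ via $Q^\top Q = I$ and norm cancellation, then observe that the $\operatorname{TopK}$ sets and the Jaccard-averaged score depend only on those similarity values. The only addition beyond the paper's argument is your explicit remark about tie-breaking in $\operatorname{TopK}$, which is a reasonable hygiene point but not a substantive departure.
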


\begin{proof}
Let $x$ and $y$ be any two embedding vectors (rows of $X$). We apply the transformation $x' = cQx$ and $y' = cQy$, with $c > 0$ and $Q^\top Q = I$. The cosine similarity between the transformed vectors is:
\begin{equation}
\begin{aligned}
s(x', y') &= \frac{(cQx)^\top (cQy)}{\|cQx\| \|cQy\|} \\
&= \frac{c^2 x^\top Q^\top Q y}{\sqrt{(cQx)^\top (cQx)} \sqrt{(cQy)^\top (cQy)}}.
\end{aligned}
\end{equation}
Using the orthogonality property $Q^\top Q = I$:
\begin{equation}
\begin{aligned}
s(x', y') &= \frac{c^2 x^\top y}{\sqrt{c^2 x^\top x} \sqrt{c^2 y^\top y}} \\
&= \frac{c^2 (x^\top y)}{c \|x\| \cdot c \|y\|} \\
&= \frac{x^\top y}{\|x\| \|y\|} = s(x, y).
\end{aligned}
\end{equation}
Since the pairwise similarity scores remain exactly the same, the ranking of neighbors is preserved. Thus, the set of top-$k$ nearest neighbors is identical: $N_k^{X'}(i) = N_k^X(i)$, and the overlap score is invariant.
\end{proof}

\begin{proposition}
\label{prop:knn_aniso}
Cosine-based $k$-NN Overlap is generally non-invariant to anisotropic linear transformations.
\end{proposition}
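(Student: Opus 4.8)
The plan is to exhibit an explicit counterexample: a small configuration of points together with an anisotropic invertible matrix $A$ whose action provably changes the cosine-based nearest-neighbor ranking, and hence the $k$-NN overlap score. Since the claim is only a generic non-invariance statement (``generally non-invariant''), a single well-chosen instance suffices, and the cleanest route is to work in $\mathbb{R}^2$ with $k=1$ and a diagonal scaling $A = \mathrm{diag}(a_1, a_2)$ with $a_1 \neq a_2$, so that $AA^\top = \mathrm{diag}(a_1^2, a_2^2) \neq cI$.

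First I would set up three vectors, say a query point $x_1$ and two candidates $x_2, x_3$, chosen so that under the original cosine similarity $x_2$ is the nearest neighbor of $x_1$ (i.e.\ $s(x_1,x_2) > s(x_1,x_3)$), but after applying $A$ the ordering flips to $s(Ax_1, Ax_3) > s(Ax_1, Ax_2)$. The mechanism to exploit is that anisotropic scaling stretches one axis relative to the other, so a candidate that was ``close in angle'' because it aligned with the compressed axis becomes relatively farther once that axis is stretched. Concretely, I would pick something like $x_1 = (1,1)$, $x_2 = (1, 1+\varepsilon)$ for small $\varepsilon > 0$ (nearly parallel, so $s(x_1,x_2) \approx 1$), and $x_3 = (0,1)$ (at $45^\circ$ from $x_1$), giving the original ranking $x_2 \succ x_3$; then under $A = \mathrm{diag}(t, 1)$ with $t$ large, $Ax_1 = (t,1)$ points nearly along the first axis, $Ax_2 = (t, 1+\varepsilon)$ stays nearly parallel to $Ax_1$, while $Ax_3 = (0,1)$ is still orthogonal-ish — so that particular choice won't flip. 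I would instead engineer the flip by making $x_3$ the vector that benefits from the stretch: e.g.\ choose $x_1$ near the bisector, $x_2$ slightly off toward the axis that gets compressed, and $x_3$ slightly off toward the axis that gets stretched, with the off-angles tuned so the pre-transformation gap favors $x_2$ and the post-transformation gap favors $x_3$.

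The key computational step is then just evaluating $s(x_1,x_2)$, $s(x_1,x_3)$, $s(Ax_1,Ax_2)$, $s(Ax_1,Ax_3)$ from the definition $s(u,v) = u^\top v / (\|u\|\|v\|)$ and checking the two strict inequalities go opposite ways for a suitable $t$; by continuity it is enough to check the limiting behavior as $t \to \infty$ (where $Ax_i / \|Ax_i\| \to \pm e_1$ for any vector with nonzero first coordinate, collapsing all such similarities to $1$) versus an intermediate value, or more simply to pick one concrete numerical $t$ and verify. Since $N_k^{X'}(1) \neq N_k^X(1)$ at the query index, the Jaccard overlap summand at $i=1$ changes (e.g.\ from $1$ to $0$ if the neighbor sets become disjoint), so the averaged score $\tilde s_k$ changes, establishing non-invariance. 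I would also remark that this immediately extends to any $k$ and any ambient dimension $D \geq 2$ by padding with coordinates fixed by $A$ and adding enough ``far away'' filler points.

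The main obstacle is purely bookkeeping: choosing the three vectors and the scaling factor so that both strict inequalities hold simultaneously and transparently, rather than relying on an opaque numerical check. I expect the cleanest presentation uses the observation that cosine similarity depends only on angles, so I can parametrize $x_1, x_2, x_3$ by angles $\theta_1, \theta_2, \theta_3$ in $[0, \pi/2]$ and track how the diagonal map $\mathrm{diag}(t,1)$ transforms an angle $\theta$ to $\arctan(\tan\theta / t)$ — a strictly monotone but nonlinear reparametrization of the circle that does not preserve the order of angular gaps, which is precisely why neighbor rankings can flip. Framing the proof around this angle-distortion picture makes the counterexample essentially self-evident and avoids grinding through norm computations.
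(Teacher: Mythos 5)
Your core idea is the same as the paper's: anisotropic $A$ induces a nonlinear distortion of angles (the paper phrases this as cosine similarity becoming the cosine under the inner product $\langle u,v\rangle_M = u^\top M v$ with $M = A^\top A$ having distinct eigenvalues, which stretches some angular neighborhoods and compresses others), and such a distortion can flip nearest-neighbor orderings. Where you diverge is in rigor and packaging: the paper stops at the verbal observation that the ordering ``can'' reverse, without constructing an instance, whereas you propose to exhibit an explicit three-point counterexample in $\mathbb{R}^2$ with $k=1$ and $A=\mathrm{diag}(t,1)$. Your angle-reparametrization observation $\theta \mapsto \arctan(\tan\theta / t)$ is exactly the right lens — it is strictly increasing but has derivative $\tfrac{t(1+\tan^2\theta)}{t^2+\tan^2\theta}$ ranging from $1/t$ near $\theta=0$ to $t$ near $\theta=\pi/2$, so angular gaps near the stretched axis grow while those near the compressed axis shrink, which immediately yields configurations where the closest-in-angle neighbor changes. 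That makes your version a genuine strengthening of the paper's argument: it turns a plausibility claim into a verifiable construction. One caveat: as written, your proposal is a blueprint rather than a finished proof — your first attempted triple $(1,1)$, $(1,1+\varepsilon)$, $(0,1)$ does not work (as you noticed), and the second construction is left at the ``I would engineer\ldots'' stage. To close the gap, pick concrete angles straddling the bisector on opposite sides, e.g.\ $\theta_1=45^\circ$ (query), $\theta_2=40^\circ$, $\theta_3=51^\circ$, so that pre-transform $x_2$ is the nearest neighbor ($5^\circ<6^\circ$), and then choose $t$ large enough that the post-transform gap toward $\pi/2$ shrinks less than the gap toward $0$, flipping the ordering; this follows directly from the derivative asymmetry above and requires only a one-line numerical check. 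With that instantiation the Jaccard summand at $i=1$ changes and $\tilde s_k$ is provably not invariant.
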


\begin{proof}
Let $x' = Ax$ and $y' = Ay$ with anisotropic $A$. The transformed similarity is:
\begin{equation}
s(x', y') = \frac{x^\top A^\top A y}{\sqrt{x^\top A^\top A x} \sqrt{y^\top A^\top A y}}.
\end{equation}
Let $M = A^\top A$. This expression represents the cosine of the angle between $x$ and $y$ in a space equipped with the inner product $\langle u, v \rangle_M = u^\top M v$. 

Because $A$ is anisotropic, $M$ has distinct eigenvalues. This transformation distorts angles: vectors aligned with the large eigenvectors of $M$ are "pulled" closer together in angular space, while vectors aligned with small eigenvectors are pushed apart.

Consequently, if we have $s(x, y) > s(x, z)$ (meaning $y$ is a closer neighbor to $x$ than $z$), an anisotropic $A$ can reverse this relationship such that $s(x', z') > s(x', y')$. This alters the composition of the $k$-nearest neighbor sets, changing the overlap score.
\end{proof}

\subsection{Function-Level Proof}
\label{appendix: function-level-proof}

\begin{definition}[Function-Level Analysis]
  \label{def:func_level}
  The function-level analysis examines the usable information accessible via linear readouts (probes) or the final behavioral output. This level specifically tests whether two models share the same ``readout directions'' for solving a task.
  \begin{itemize}
    \item \textbf{Invariant to:} Isomorphic transformations \textit{if and only if} the readout mechanism is transformed correspondingly.
    \item \textbf{Non-invariant to:} Linear Reparameterization under a \textit{fixed} readout hypothesis.
  \end{itemize}
\end{definition}

\subsubsection{Cross-Model Linear Probes}
Let $w_X^*$ be the optimal probe weights for task $Z$ on representations $X$, i.e., $w_X^* = \text{argmin}_w \|Xw - Z\|^2$. We evaluate these weights on $Y$: $\text{Error} = \|Y w_X^* - Z\|^2$.

\begin{proposition}
  \label{prop:cross_model_probe}
  Cross-Model Linear Probes are non-invariant to linear reparameterization under a fixed readout.
\end{proposition}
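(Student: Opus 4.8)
The plan is to exhibit an explicit counterexample showing that the quantity $\|Yw_X^\star - Z\|^2$ can change when $X$ is replaced by an invertible linear reparameterization $X' = XA$, as long as the probe direction $w_X^\star$ is held fixed (i.e.\ \emph{not} transformed along with the representation). First I would set up the situation cleanly: fix a task target $Z$, let $w_X^\star = \arg\min_w \|Xw - Z\|^2$ be the least-squares (or ridge/logistic) optimum on model $X$, and consider the cross-model error $\mathrm{Err}(X \to Y) = \|Yw_X^\star - Z\|^2$ as a function of the representation we fit on. The claim is that if we instead fit on $X' = XA$ for invertible $A$, the optimal weights become $w_{X'}^\star = A^{-1} w_X^\star$ (since $X'A^{-1}w_X^\star = XA A^{-1} w_X^\star = Xw_X^\star$ and the column space is unchanged, so the self-fit is invariant), but the \emph{frozen} direction $w_X^\star$ applied to the other model gives $Yw_X^\star$, while the reparameterized frozen direction gives $Y w_{X'}^\star = Y A^{-1} w_X^\star$, and these two vectors generally differ.

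The key steps, in order: (1) State the self-consistency fact that the within-model probe performance (train and evaluate on the same representation) is invariant under invertible reparameterization, because linear probing only depends on the column space / row space spanned by the representation — this makes precise the sense in which the transformation is "benign" at the level the function-level analysis nominally cares about. (2) Observe that the \emph{cross-model transfer} quantity is not of this self-consistent form: it applies a map learned in one coordinate system to a second model's activations, so it depends on the actual numerical weight vector $w_X^\star$, not just its span. (3) Construct a minimal example — e.g.\ $D = 2$, a handful of token rows, $Y \neq X$ (so the two models genuinely differ), and a diagonal anisotropic $A = \mathrm{diag}(\alpha, 1/\alpha)$ — and compute $\|Yw_X^\star - Z\|^2$ versus $\|Y A^{-1} w_X^\star - Z\|^2$, showing they are unequal for $\alpha \neq 1$. (4) Conclude that because the transfer error is altered by a reparameterization that leaves $X$'s own decodability untouched, Cross-Model Linear Probes are non-invariant under a fixed readout hypothesis, which is exactly the sensitivity property demanded by Definition~\ref{def:func_level}; and note that if one \emph{also} transformed the readout to $A^{-1}w_X^\star$ and correspondingly transformed $Y$, invariance would be restored, matching the "if and only if" clause of the definition.

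I expect the main obstacle to be conceptual framing rather than computation: one must be careful to distinguish "invariance of linear probing" (a true statement about information content, under matched transformations) from "invariance of the cross-model transfer statistic under a reparameterization of only one side" (false), and to make clear \emph{why} the latter is the relevant object — the whole point of the metric is that it keeps the readout $w_X^\star$ fixed and swaps the input distribution. A secondary subtlety is the case of logistic rather than ridge regression: there the argument is morally the same (the decision boundary's span is reparameterization-covariant but its coefficient vector is not), but one should either restrict the proof to the linear/ridge case where the algebra is exact, or phrase the counterexample generically enough (two affine readouts that agree on $X$ but disagree on $Y$) that it covers both. I would keep the proof short: the self-fit invariance in one line, the construction of $X, Y, A, Z$ in two lines, and the final inequality verified by direct substitution, concluding with the observation that this is precisely the non-invariance asserted.
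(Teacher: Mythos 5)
Your proposal is correct and arrives at the same underlying algebra as the paper --- an eigenvector condition $Aw_X^\star = w_X^\star$ (or equivalently $A^{-1}w_X^\star = w_X^\star$) that fails for generic invertible $A$ --- but the construction is a genuinely different facet of the same fact. The paper's proof reparameterizes the \emph{target}: it assumes $Y = XA$, applies the truly frozen probe $w_X^\star$, and compares $\hat{Z}_Y = XAw_X^\star$ against $\hat{Z}_X = Xw_X^\star$, concluding that even an informationally equivalent $Y$ will fail under a fixed readout unless $w_X^\star$ is a fixed point of $A$. You instead reparameterize the \emph{source}: you fix an arbitrary $Y$, replace $X$ by $X' = XA$, refit to get $w_{X'}^\star = A^{-1}w_X^\star$, and compare $\|Yw_X^\star - Z\|^2$ against $\|YA^{-1}w_X^\star - Z\|^2$. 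Because your construction involves re-estimating the readout on $X'$, it demonstrates sensitivity to the basis of $X$ rather than literal failure under a frozen readout, so it tracks the ``fixed readout'' clause slightly less directly than the paper's version; nevertheless both are valid demonstrations of the claimed non-invariance, and the eigenvalue-$1$ obstruction is identical up to relabeling $A \leftrightarrow A^{-1}$. Your extra steps --- the self-fit invariance preamble (probe performance on $X$ alone is unchanged by $X \mapsto XA$), the explicit ``if and only if'' check that transforming both the readout and $Y$ restores invariance, and the proposed $A = \mathrm{diag}(\alpha, 1/\alpha)$ numerical counterexample --- are correct and clarifying, but the paper dispenses with them and simply invokes genericity of $A$, which is enough.
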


\begin{proof}
  Assume $Y$ contains the exact same information as $X$ but is linearly transformed: $Y = XA$ (where $A$ is invertible).
  The prediction using transferred weights is:
  \begin{equation}
    \hat{Z}_{Y} = Y w_X^* = (X A) w_X^*.
  \end{equation}
  The original prediction was $\hat{Z}_X = X w_X^*$.
  For the predictions to be identical ($\hat{Z}_Y = \hat{Z}_X$) for all $X$, we require $X A w_X^* = X w_X^*$, implying $A w_X^* = w_X^*$.
  This equality only holds if $w_X^*$ is an eigenvector of $A$ with eigenvalue 1. For a general transformation $A$, $A w_X^* \neq w_X^*$.
  Therefore, even if $Y$ is geometrically isomorphic to $X$, the cross-model probe will fail if the \textit{direction} of the solution has shifted. This proves the metric satisfies the requirement set in Definition~\ref{def:func_level}.
\end{proof}


\begin{table*}[t]
    \centering
    \caption{LLM pairs used in additional HRSA analyses, separated by the training algorithms (SFT, RLVR).}

    \small 
    \setlength{\tabcolsep}{7pt} 
    \begin{tabular}{l l l l}
        \toprule
        \textbf{Base Model $\mathcal{M}_{base}$} & \textbf{Reasoning Model $\mathcal{M}_{reason}$} & \textbf{Algorithm} & \textbf{Data} \\
        \midrule

        \multicolumn{4}{l}{\textbf{SFT}} \\
        \href{https://huggingface.co/Qwen/Qwen2.5-Math-1.5B}{Qwen2.5-Math-1.5B} &
        \href{https://huggingface.co/deepseek-ai/DeepSeek-R1-Distill-Qwen-1.5B}{DeepSeek-R1-Distill-Qwen-1.5B} &
        SFT & Mixed \\

        \midrule
        \multicolumn{4}{l}{\textbf{RLVR}} \\

        \href{https://huggingface.co/Qwen/Qwen3-4B}{Qwen3-4B} &
        \href{https://huggingface.co/POLARIS-Project/Polaris-4B-Preview}{Polaris-4B-Preview} &
        DAPO & Math \\

        \href{https://huggingface.co/deepseek-ai/DeepSeek-R1-Distill-Qwen-7B}{DeepSeek-R1-Distill-Qwen-7B} &
        \href{https://huggingface.co/POLARIS-Project/Polaris-7B-Preview}{Polaris-7B-Preview} &
        DAPO & Math \\

        \href{https://huggingface.co/Qwen/Qwen2.5-7B}{Qwen2.5-7B} &
        \href{https://huggingface.co/princeton-nlp/zero__ppo__think__Qwen2.5-7B}{zero\_\_ppo\_\_think\_\_Qwen2.5-7B} &
        PPO & Chat \\

        \href{https://huggingface.co/Qwen/Qwen2.5-1.5B}{Qwen2.5-1.5B} &
        \href{https://huggingface.co/hkust-nlp/Qwen-2.5-1.5B-SimpleRL-Zoo}{Qwen-2.5-1.5B-SimpleRL-Zoo} &
        GRPO & Math \\

        \href{https://huggingface.co/Qwen/Qwen2.5-0.5B}{Qwen2.5-0.5B} &
        \href{https://huggingface.co/hkust-nlp/Qwen-2.5-0.5B-SimpleRL-Zoo}{Qwen-2.5-0.5B-SimpleRL-Zoo} &
        GRPO & Math \\

        \href{https://huggingface.co/deepseek-ai/DeepSeek-R1-Distill-Qwen-1.5B}{DeepSeek-R1-Distill-Qwen-1.5B} &
        \href{https://huggingface.co/nvidia/Nemotron-Research-Reasoning-Qwen-1.5B}{Nemotron-Research-Reasoning-Qwen-1.5B} &
        GRPO & Math \\

        \href{https://huggingface.co/Qwen/Qwen3-4B}{Qwen3-4B} &
        \href{https://huggingface.co/TianHongZXY/Qwen3-4B-PSR}{Qwen3-4B-PSR} &
        PSR & Math \\
        \bottomrule
    \end{tabular}
    \label{tab: appendix-model-details}
\end{table*}


\begin{table*}[t]
    \centering
    \caption{Embedding model pairs used in additional HRSA analyses. All of the embedding models are trained on the same dataset with InfoNCE loss. They are separated by the training algorithms used to train their reasoning model backbone.}

    \small 
    \setlength{\tabcolsep}{7pt} 
    \begin{tabular}{l l}
        \toprule
        \textbf{Base Embedding Model} $\mathcal{M}_{base}^\textit{Emb}$ & \textbf{Reasoning Embedding Model} $\mathcal{M}_{reason}^\textit{Emb}$ \\
        \midrule

        \multicolumn{2}{l}{\textbf{SFT}} \\
        \href{https://huggingface.co/lucaswychan/Qwen2.5-Math-1.5B-Reasoning-Embedding}{Qwen2.5-Math-1.5B-\textit{Emb}} & \href{https://huggingface.co/lucaswychan/DeepSeek-R1-Distill-Qwen-1.5B-Reasoning-Embedding}{DeepSeek-R1-Distill-Qwen-1.5B-\textit{Emb}} \\
        \href{https://huggingface.co/lucaswychan/Qwen3-0.6B-Base-Reasoning-Embedding}{Qwen3-0.6B-Base-\textit{Emb}} &
        \href{https://huggingface.co/lucaswychan/Qwen3-0.6B-Reasoning-Embedding}{Qwen3-0.6B-\textit{Emb}} \\

        \midrule
        \multicolumn{2}{l}{\textbf{RLVR}} \\
        \href{https://huggingface.co/lucaswychan/Qwen2.5-1.5B-Reasoning-Embedding}{Qwen2.5-1.5B-\textit{Emb}} &
        \href{https://huggingface.co/lucaswychan/Qwen-2.5-1.5B-SimpleRL-Zoo-Reasoning-Embedding}{Qwen-2.5-1.5B-SimpleRL-Zoo-\textit{Emb}} \\

        \href{https://huggingface.co/lucaswychan/Qwen2.5-0.5B-Reasoning-Embedding}{Qwen2.5-0.5B-\textit{Emb}} &
        \href{https://huggingface.co/lucaswychan/Qwen-2.5-0.5B-SimpleRL-Zoo-Reasoning-Embedding}{Qwen-2.5-0.5B-SimpleRL-Zoo-\textit{Emb}} \\

        \href{https://huggingface.co/lucaswychan/DeepSeek-R1-Distill-Qwen-1.5B-Reasoning-Embedding}{DeepSeek-R1-Distill-Qwen-1.5B-\textit{Emb}} &
        \href{https://huggingface.co/lucaswychan/Nemotron-Research-Reasoning-Qwen-1.5B-Reasoning-Embedding}{Nemotron-Research-Reasoning-Qwen-1.5B-\textit{Emb}} \\

        \href{https://huggingface.co/lucaswychan/Qwen3-4B-Reasoning-Embedding}{Qwen3-4B-\textit{Emb}} &
        \href{https://huggingface.co/lucaswychan/Qwen3-4B-PSR-Reasoning-Embedding}{Qwen3-4B-PSR-\textit{Emb}} \\
        \bottomrule
    \end{tabular}
    \label{tab: appendix-embedding-model-details}
\end{table*}

\section{CoT Datasets}
\label{appendix: cot-datasets}

To rigorously verify if the latent manifold is preserved within reasoning trajectories (as discussed in Section~\ref{sec: setups}), we constructed a specialized \textbf{CoT-Activations dataset}. Unlike standard semantic datasets, this corpus focuses on long-range, multi-step reasoning traces generated by state-of-the-art reasoning models.

\subsection{Dataset Composition and Hierarchy}

We curated a diverse suite of mathematical reasoning benchmarks to ensure our analysis covers varying degrees of reasoning complexity, ranging from elementary arithmetic to competition-level problem solving. The dataset is stratified into three difficulty levels:

\begin{itemize}
    \item \textbf{Easy:} Sourced from \textbf{GSM8K}~\citep{GSM8K}, focusing on grade-school math word problems that require multi-step arithmetic but limited abstract reasoning.
    \item \textbf{Moderate:} Sourced from \textbf{MATH-500}~\citep{MATH} (a curated subset of the MATH benchmark including AMC/AIME problems) and \textbf{NuminaMath} (CN K-12 curriculum). These datasets introduce higher-dimensional algebraic and geometric reasoning.
    \item \textbf{Hard:} Sourced from \textbf{LiveMathBench}~\citep{LiveMathBench} (2025 Hard Subset). These are recent competition-level problems requiring extremely long context windows and complex logical deductions.
\end{itemize}

Table~\ref{tab:cot_stats} summarizes the statistics of the generated CoT dataset.

\subsection{Generation Protocol}

To extract high-quality reasoning traces, we utilized \textit{Qwen3-32B}~\citep{qwen3-series} as the generator backbone. The generation process was designed to maximize the explicitness of the internal reasoning process (the ``chain of thought'').

\paragraph{Inference Configuration.} We enabled the internal ``thinking'' mode (\texttt{enable\_thinking: true}) to expose the raw reasoning tokens before the final answer. The generation parameters were set to temperature $T=0.6$ and nucleus sampling probability $p=0.95$ to balance creativity with logical coherence.

\paragraph{Token Limits.} To accommodate deep reasoning, we set a high context limit. For standard datasets, we allowed up to 8,000 reasoning tokens. For the \textbf{LiveMathBench} subset, we removed the CoT token limit entirely to allow for exhaustive search trajectories in hard problems.

\paragraph{Prompting.} We employed a standardized two-message chat format to enforce rigorous step-by-step reasoning. The \textbf{System Prompt} was defined as:
\begin{lstlisting}[caption={System Prompt of the CoT dataset generation}, label={appendix: system-message-generation}]
You are a helpful and rigorous math reasoning assistant.
\end{lstlisting}
The \textbf{User Prompt} wrapped the specific dataset problem with instructions to act as a competition solver:
\begin{lstlisting}[caption={User Prompt of the CoT dataset generation}, label={appendix: user-message-generation}]
You are an expert competition math solver. Read the problem carefully and solve it step by 
step.

Problem:
{Problem}
\end{lstlisting}

\subsection{Quality Control and Evaluation}

To ensure that our latent manifold analysis is based on valid reasoning trajectories rather than hallucinations, we implemented a strict verification pipeline using an LLM-as-a-Judge approach.

We employed \textbf{DeepSeek-V3.2-exp}~\citep{deepseekai2024deepseekv32} as the external evaluator. To ensure deterministic and strictly formatted outputs, we used greedy decoding ($T=0$) and explicitly disabled the model's internal chain-of-thought feature (\texttt{enable\_thinking: False}).

The interaction was structured as follows:

\begin{lstlisting}[caption={System Prompt of the external evaluator}, label={appendix: system-message-external-evaluator}]
You are a precise math answer evaluator. Respond only with 0 or 1.
\end{lstlisting}

\begin{lstlisting}[caption={User Prompt of the external evaluator}, label={appendix: user-message-external-evaluator}]
You are an expert math problem evaluator. Your task is to determine if the provided answer 
correctly solves the given problem.
        
Problem: {Problem}
Answer: {Answer}

Evaluate whether the answer is correct. Respond with ONLY "1" if the answer is correct, or 
"0" if it is incorrect. Do not provide any explanation.
\end{lstlisting}

The evaluator's output was parsed using a simple inclusion check: if the token ``1'' appeared in the response, the reasoning trace was marked as valid (\texttt{correctness\_label} $= 1$); otherwise, it was discarded.

\begin{table}[H]
    \centering
    \caption{
        \textbf{Statistics of the Generated CoT Reasoning Dataset.} 
        We report the yield of our generation pipeline across difficulty tiers. 
        \textbf{Total}: Number of initial prompts; \textbf{Valid}: Traces that passed the correctness verification (Correctness $=1$). 
        \textbf{Acc.}: The effective yield rate (Valid / Total).
    }

    \small 
    \setlength{\tabcolsep}{8pt} 
    \begin{tabular}{l ccc}
        \toprule
        & \multicolumn{3}{c}{\textbf{Generation Statistics}} \\
        \cmidrule{2-4}
        \textbf{Dataset (Difficulty)} & \textbf{Total} & \textbf{Valid} & \textbf{Acc. (\%)} \\
        \midrule
        
        GSM8K \textit{(Easy)} & 
        500 & 471 & 92.80 \\ 
        NuminaMath \textit{(Moderate)} & 
        161 & 154 & 91.30 \\
        
        MATH-500 \textit{(Moderate)} & 
        479 & 364 & 75.78 \\
        
        LiveMathBench \textit{(Hard)} & 
        57  & 32  & 56.14 \\
        
        \midrule
        
        \textbf{Total / Average} & 
        \textbf{1,197} & \textbf{1,021} & \textbf{85.30} \\
        
        \bottomrule
    \end{tabular}
    \label{tab:cot_stats}
\end{table}

\section{Additional Results}
\label{appendix: additional-results}

In this section, we demonstrate additional results of HRSA applying on more model pairs, including $\mathcal{M}_{base}$ vs. $\mathcal{M}_{reason}$ and $\mathcal{M}_{base}^\textit{Emb}$ vs. $\mathcal{M}_{reason}^\textit{Emb}$ (suffix $-Emb$). See Table~\ref{tab: appendix-model-details} and Table~\ref{tab: appendix-embedding-model-details} for the detailed model pairs.

Instead of only considering the CoT dataset, we also apply HRSA with the MMLU-Pro~\citep{MMLU-Pro} dataset to study the difference (if any) between the models in a general field rather than only the maths domain.

\newcommand{\rotlabel}[1]{%
    \raisebox{-0.5\height}{\rotatebox{90}{\parbox{3.5cm}{\centering\scriptsize \textbf{#1}}}}%
}

\newcommand{\sectionbox}[1]{%
    \noindent\colorbox{blue!20}{%
        \parbox{\dimexpr\linewidth-2\fboxsep\relax}{%
            \centering \vspace{0.1em}
            {\large \textbf{\textit{#1}}}
            \vspace{0.1em}
        }%
    }\par
}

\clearpage

\begin{figure*}[p]
    \centering
    {\large \textbf{Dimension-Wise Correlation}} \par\smallskip
    {\large \textbf{Base Models $\mathcal{M}_{base}$ vs. Reasoning Models $\mathcal{M}_{reason}$}} \par\medskip

    \setlength{\tabcolsep}{1pt}



    \sectionbox{Dataset: CoT Datset}
    \vspace{1ex} 

    \makebox[\textwidth][c]{%
    \begin{tabular}{
        c @{\hspace{1pt}} c  @{\hspace{0.5em}}
        c @{\hspace{1pt}} c  @{\hspace{0.5em}}
        c @{\hspace{1pt}} c  @{\hspace{0.5em}}
        c @{\hspace{1pt}} c
    }
        \rotlabel{Qwen2.5-Math-1.5B} &
        \setlength{\fboxsep}{3pt}%
        \colorbox{red!20}{
            \begin{minipage}{0.20\textwidth}
                \centering
                \includegraphics[width=\linewidth]{figures/re_experiments/embedding_correlation_evaluation/Qwen2.5-Math-1.5B_vs_DeepSeek-R1-Distill-Qwen-1.5B/qwen3_32b_LiveMathbench_evaluated/plots/corr_heatmap_Qwen2.5-Math-1.5B_vs_DeepSeek-R1-Distill-Qwen-1.5B_processed.png}
                \par\vspace{2pt} 
                \scriptsize \textbf{DeepSeek-R1-Distill-Qwen-1.5B}
            \end{minipage}%
        } &

        \rotlabel{Qwen3-4B} &
        \begin{minipage}{0.20\textwidth} 
            \includegraphics[width=\linewidth]{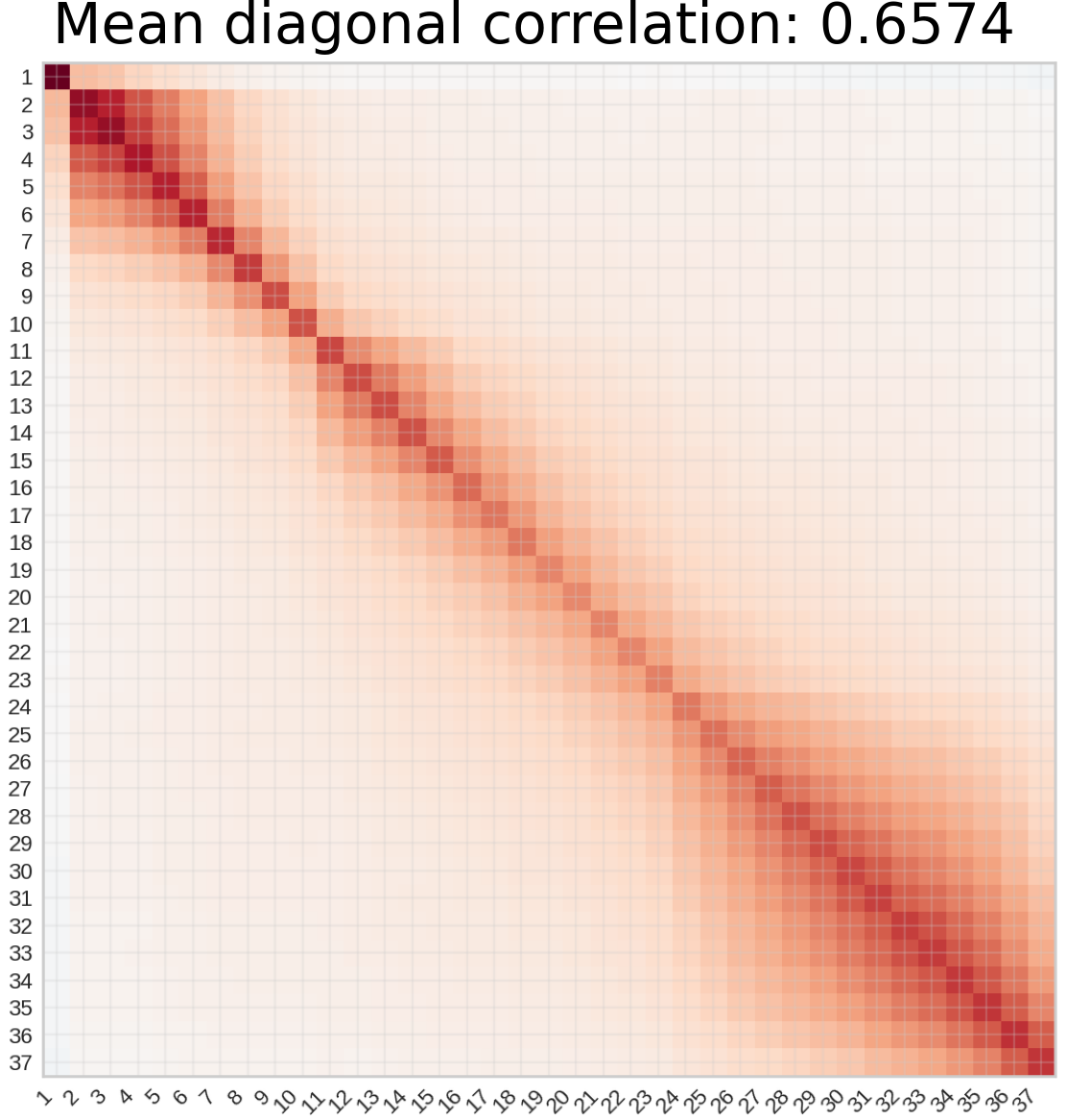} \\
            \centering \scriptsize \textbf{Polaris-4B-Preview}
        \end{minipage} &

        \rotlabel{DeepSeek-R1-Distill-Qwen-7B} &
        \begin{minipage}{0.20\textwidth} 
            \includegraphics[width=\linewidth]{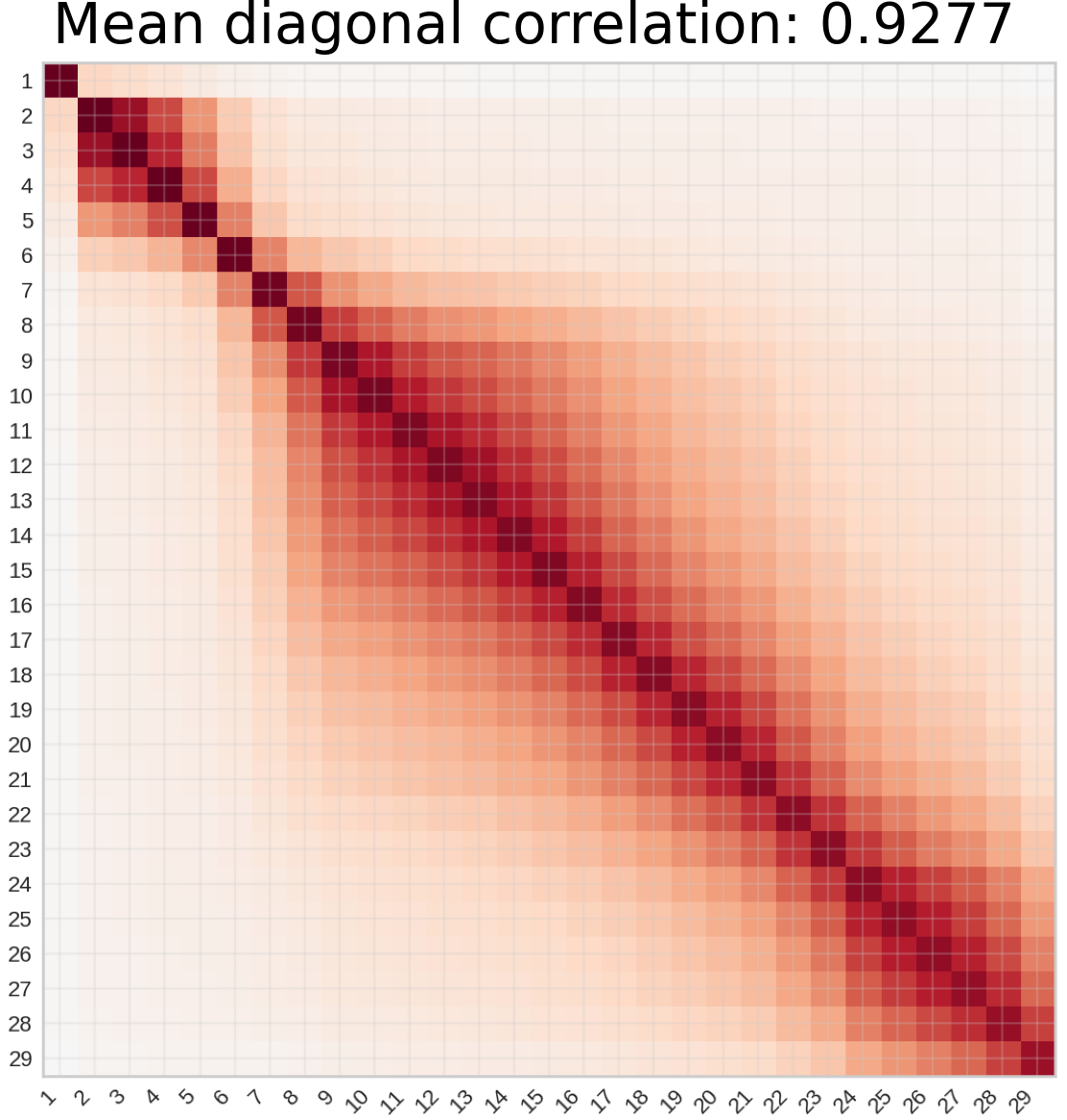} \\
            \centering \scriptsize \textbf{Polaris-7B-Preview}
        \end{minipage} &

        \rotlabel{Qwen2.5-7B} &
        \begin{minipage}{0.20\textwidth} 
            \includegraphics[width=\linewidth]{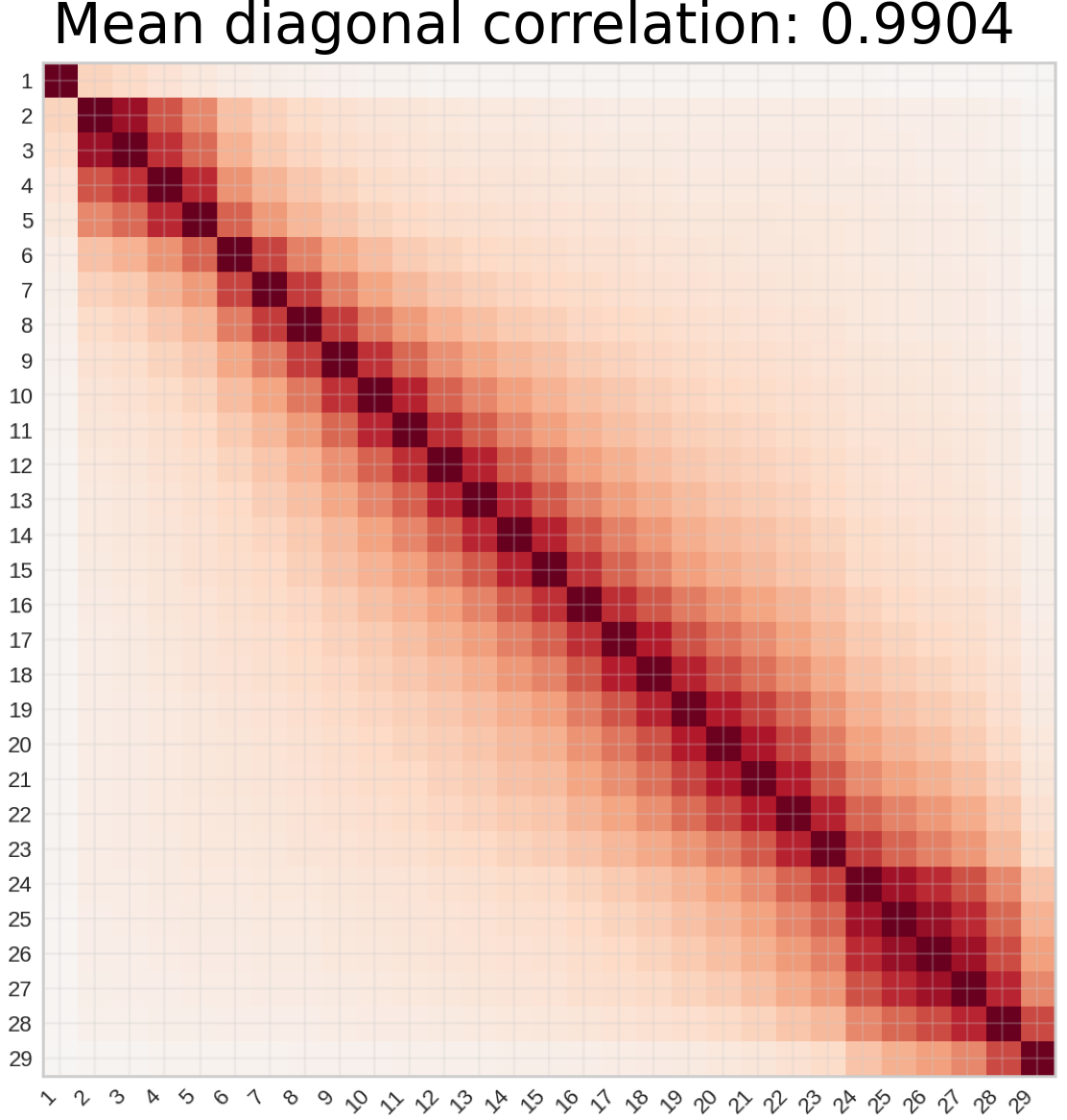} \\
            \centering \scriptsize \textbf{zero\_\_ppo\_\_think\_\_Qwen2.5-7B}
        \end{minipage} \\


        \rotlabel{Qwen2.5-1.5B} &
        \begin{minipage}{0.20\textwidth} 
            \includegraphics[width=\linewidth]{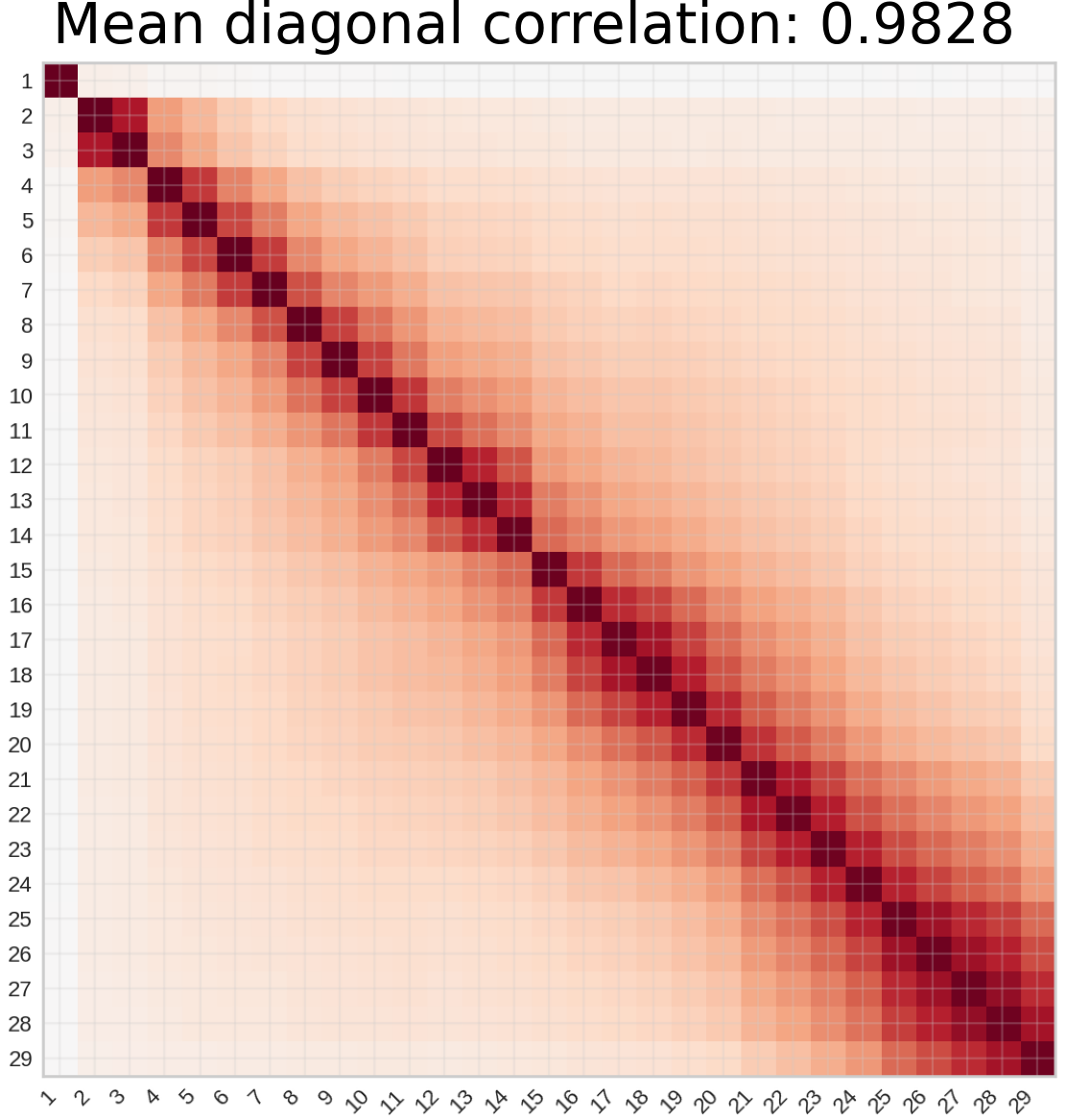} \\
            \centering \scriptsize \textbf{Qwen-2.5-1.5B-SimpleRL-Zoo}
        \end{minipage} &

        \rotlabel{Qwen2.5-0.5B} &
        \begin{minipage}{0.20\textwidth} 
            \includegraphics[width=\linewidth]{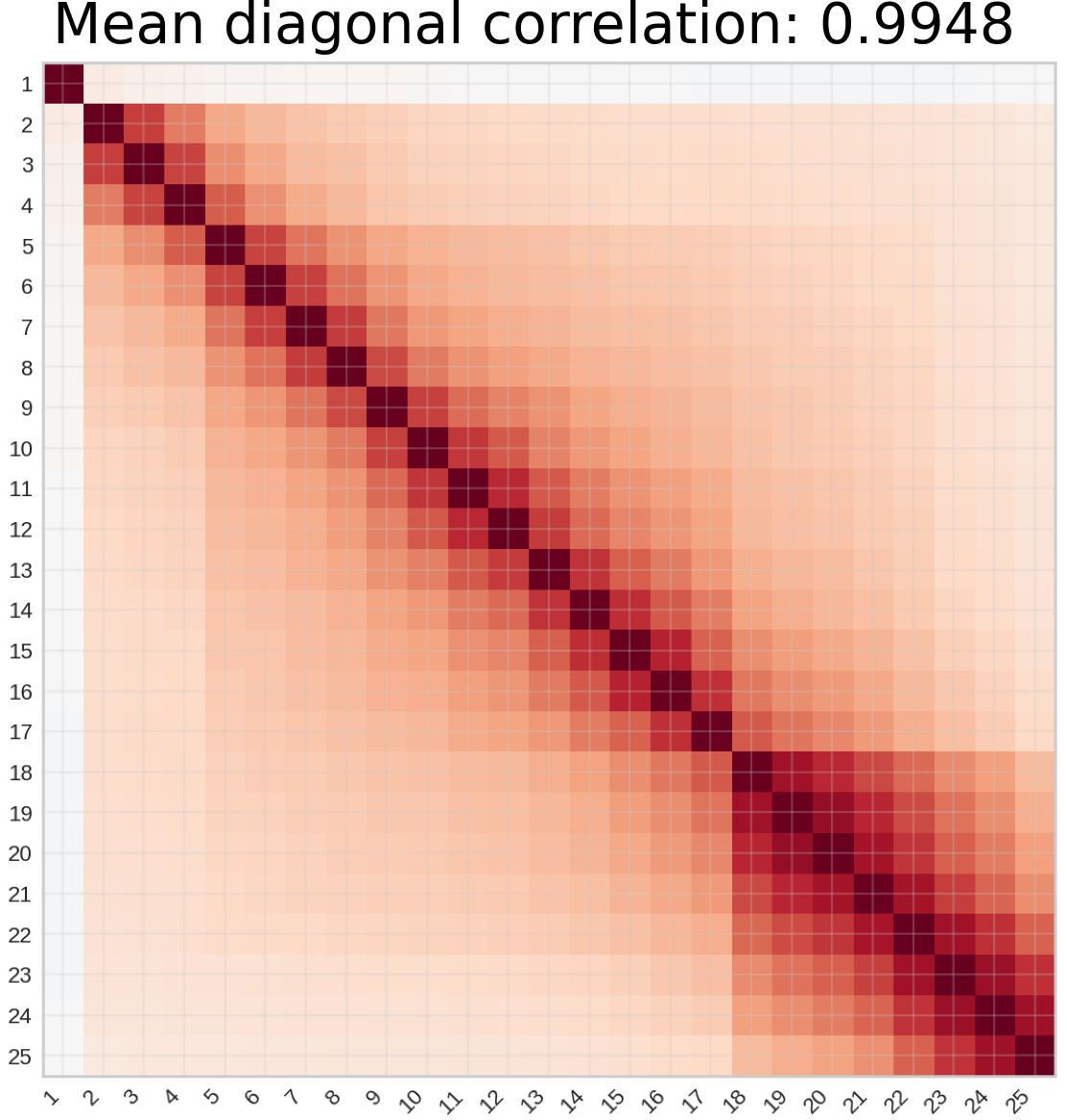} \\
            \centering \scriptsize \textbf{Qwen-2.5-0.5B-SimpleRL-Zoo}
        \end{minipage} &

        \rotlabel{DeepSeek-R1-Distill-Qwen-1.5B} &
        \begin{minipage}{0.20\textwidth} 
            \includegraphics[width=\linewidth]{figures/re_experiments/embedding_correlation_evaluation/DeepSeek-R1-Distill-Qwen-1.5B_vs_Nemotron-Research-Reasoning-Qwen-1.5B/qwen3_32b_LiveMathbench_evaluated/plots/corr_heatmap_DeepSeek-R1-Distill-Qwen-1.5B_vs_Nemotron-Research-Reasoning-Qwen-1.5B_processed.png} \\
            \centering \scriptsize \textbf{Nemotron-Research-Reasoning-Qwen-1.5B}
        \end{minipage} &

        \rotlabel{Qwen3-4B} &
        \begin{minipage}{0.20\textwidth} 
            \includegraphics[width=\linewidth]{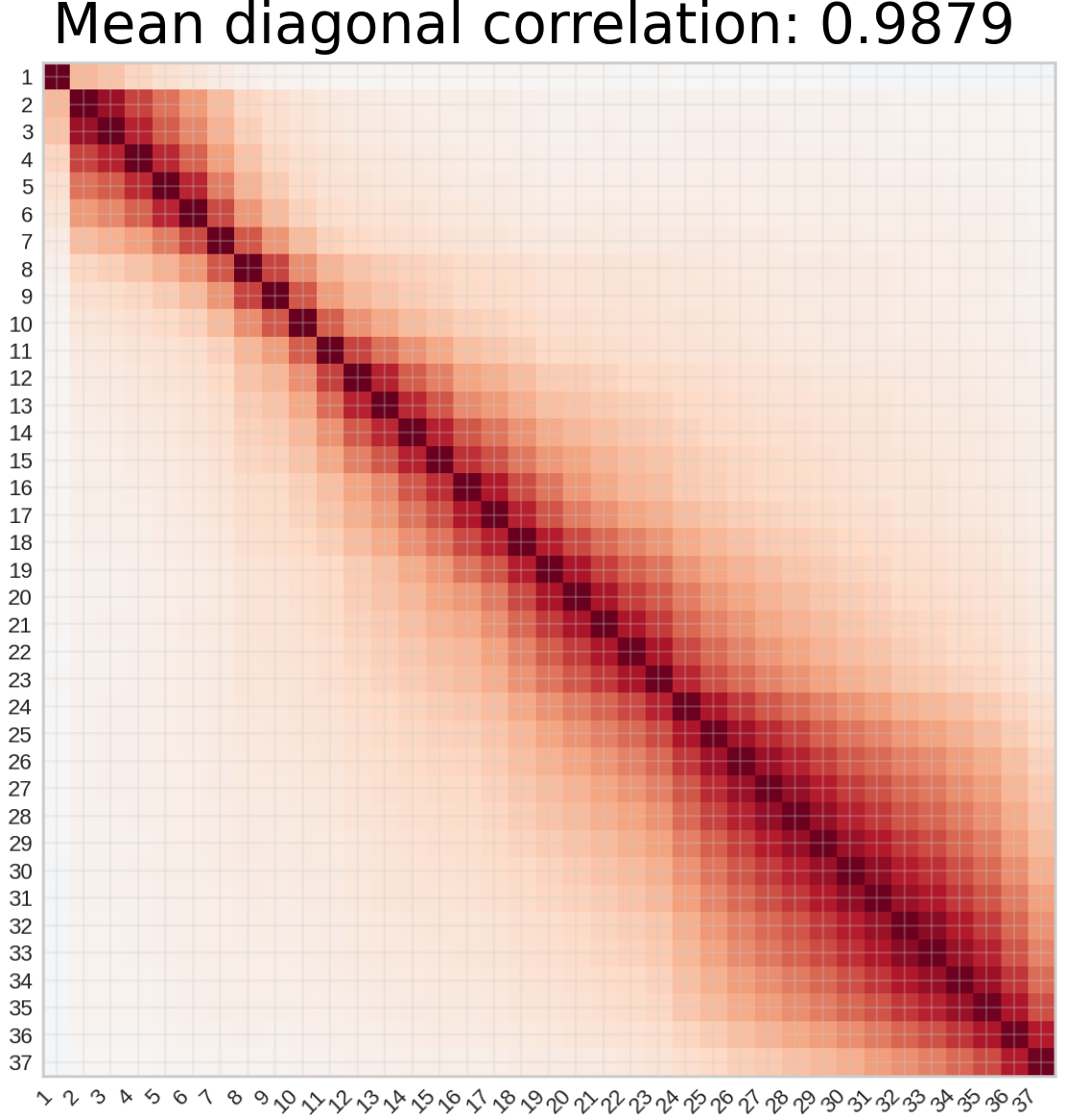} \\
            \centering \scriptsize \textbf{Qwen3-4B-PSR}
        \end{minipage} \\
    \end{tabular}%
    } 

    \vspace{0.5em}
    \rotatebox{-90}{\includegraphics[height=6cm, width=\linewidth, keepaspectratio]{figures/re_experiments/embedding_correlation_evaluation/corr_heatmap.png}} 
    
    \vspace{1.0em}

    \sectionbox{Dataset: MMLU-Pro}
    \vspace{1ex} 

    \makebox[\textwidth][c]{%
    \begin{tabular}{
        c @{\hspace{1pt}} c  @{\hspace{0.5em}}
        c @{\hspace{1pt}} c  @{\hspace{0.5em}}
        c @{\hspace{1pt}} c  @{\hspace{0.5em}}
        c @{\hspace{1pt}} c
    }

        \rotlabel{Qwen2.5-Math-1.5B} &
        \setlength{\fboxsep}{3pt}%
        \colorbox{red!20}{
            \begin{minipage}{0.20\textwidth}
                \centering
                \includegraphics[width=\linewidth]{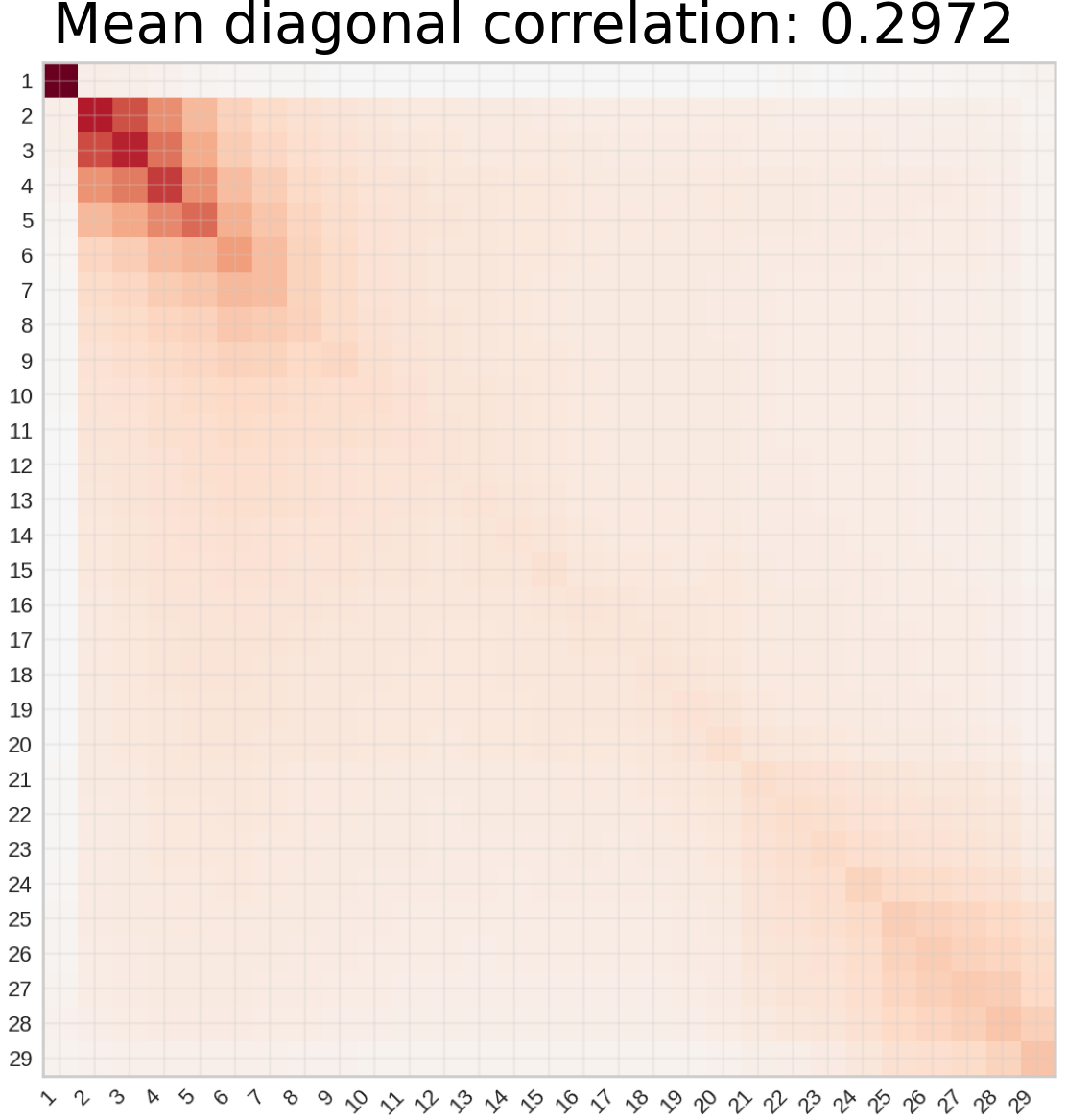}
                \par\vspace{2pt} 
                \scriptsize \textbf{DeepSeek-R1-Distill-Qwen-1.5B}
            \end{minipage}%
        } &

        \rotlabel{Qwen3-4B} &
        \begin{minipage}{0.20\textwidth} 
            \includegraphics[width=\linewidth]{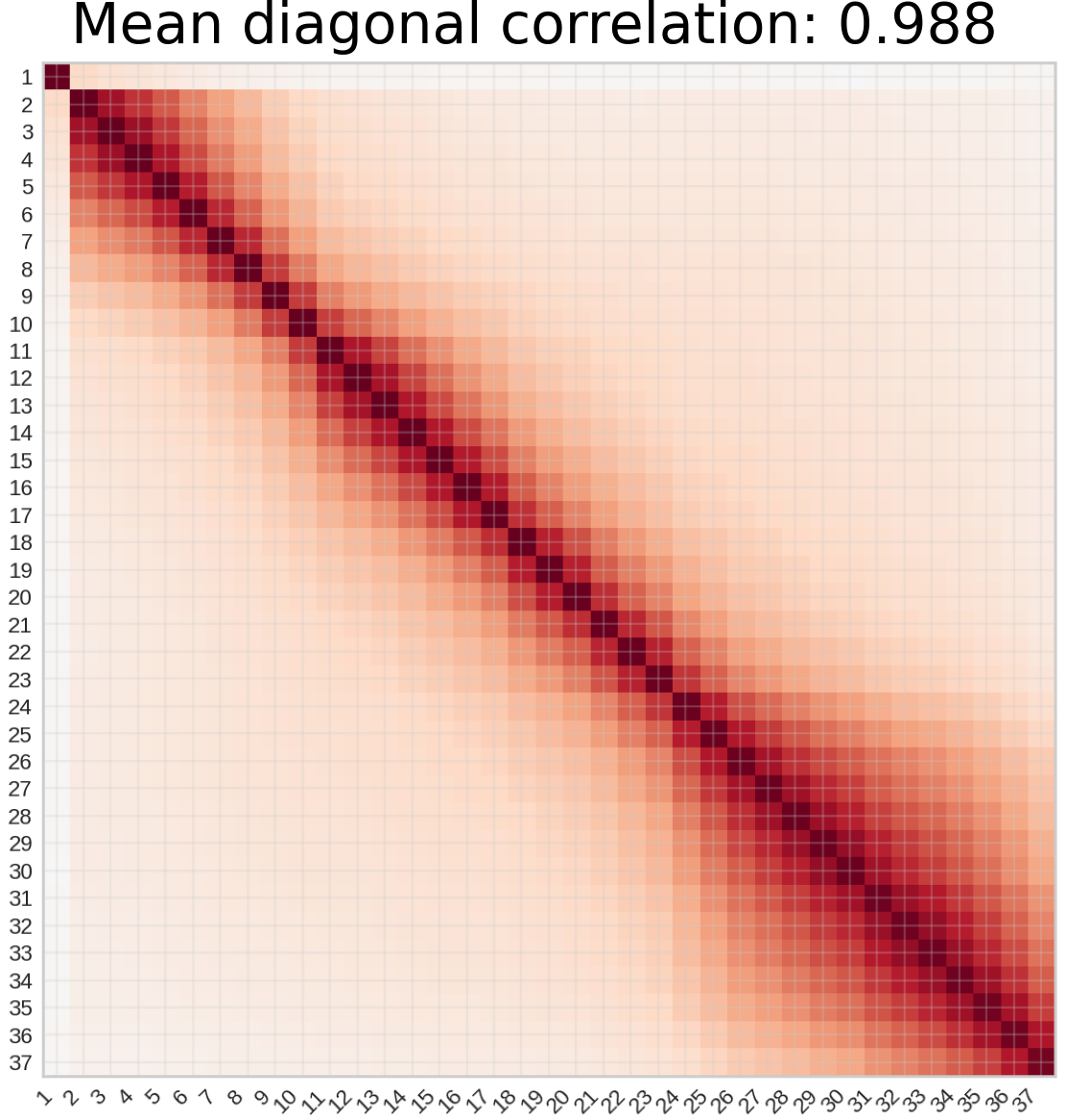} \\
            \centering \scriptsize \textbf{Polaris-4B-Preview}
        \end{minipage} &

        \rotlabel{DeepSeek-R1-Distill-Qwen-7B} &
        \begin{minipage}{0.20\textwidth} 
            \includegraphics[width=\linewidth]{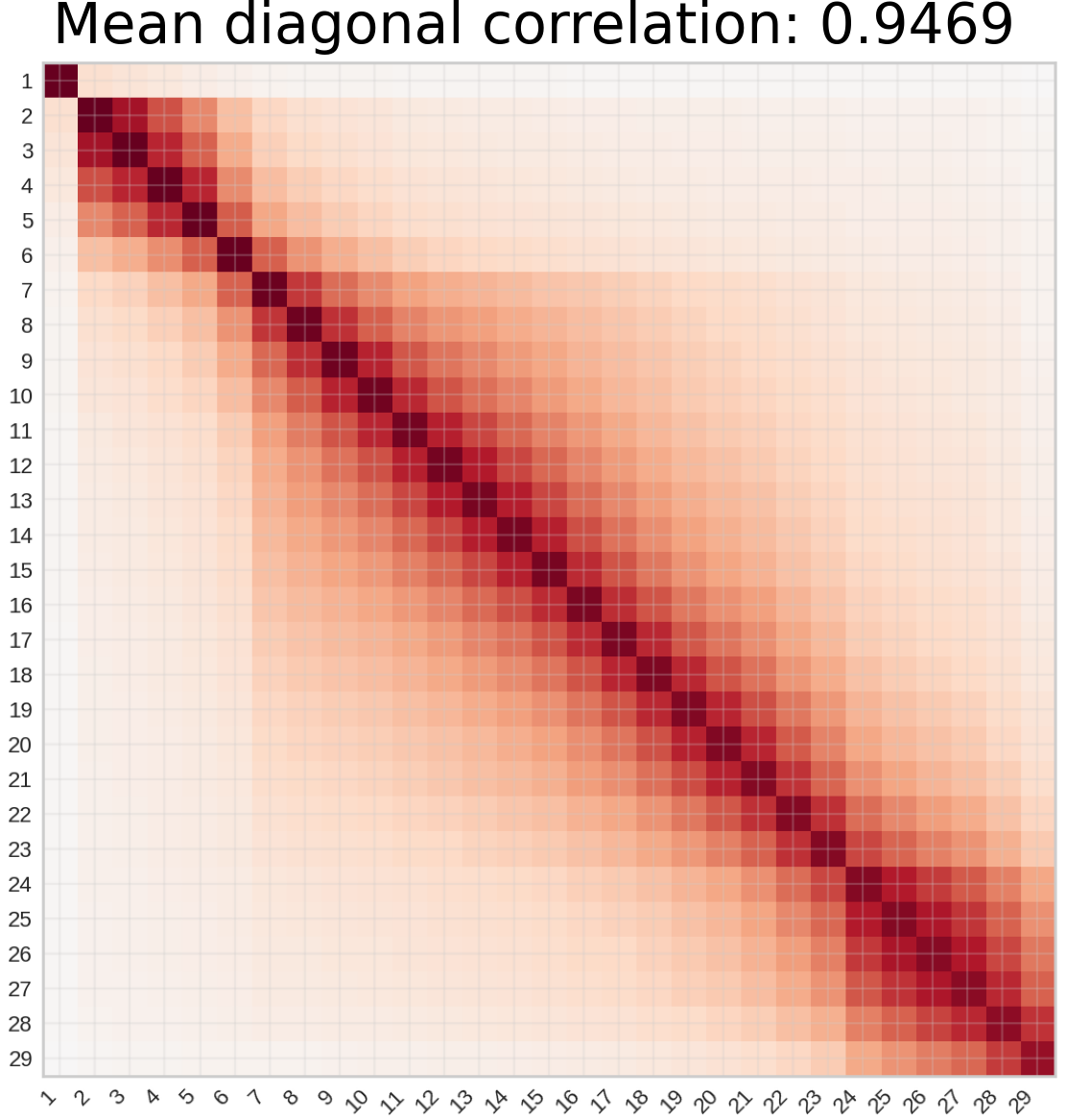} \\
            \centering \scriptsize \textbf{Polaris-7B-Preview}
        \end{minipage} &

        \rotlabel{Qwen2.5-7B} &
        \begin{minipage}{0.20\textwidth} 
            \includegraphics[width=\linewidth]{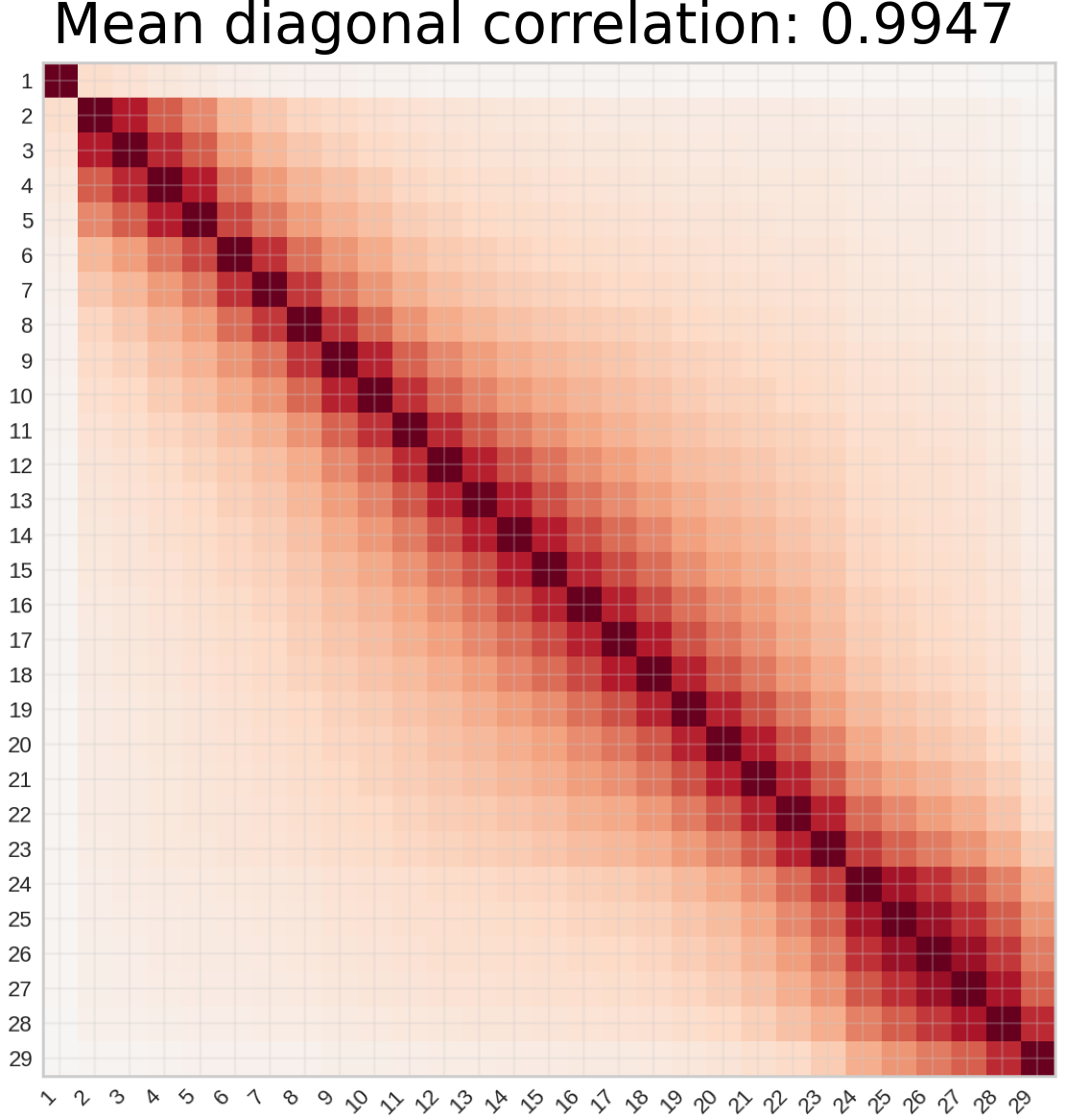} \\
            \centering \scriptsize \textbf{zero\_\_ppo\_\_think\_\_Qwen2.5-7B}
        \end{minipage} \\


        \rotlabel{Qwen2.5-1.5B} &
        \begin{minipage}{0.20\textwidth} 
            \includegraphics[width=\linewidth]{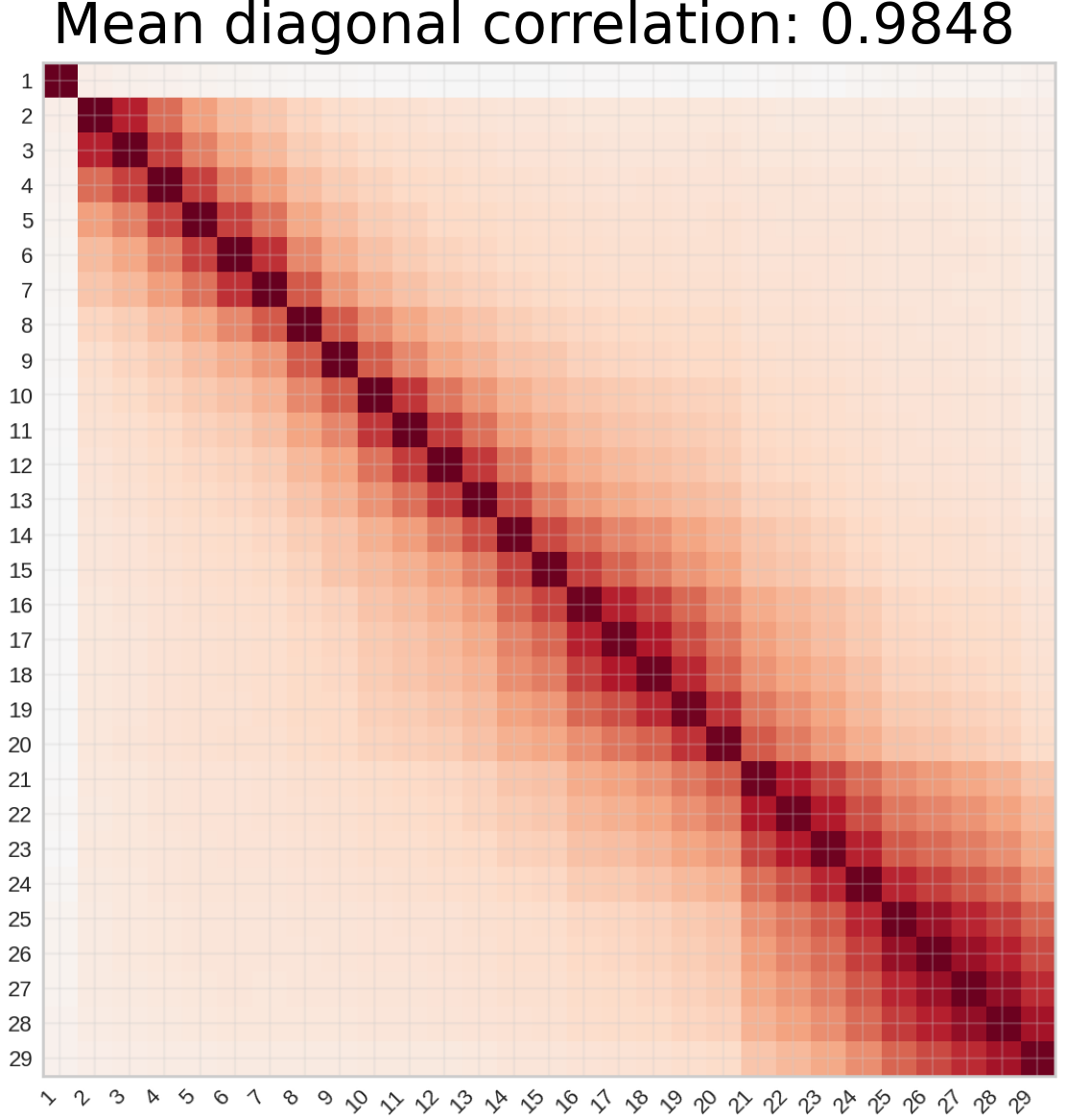} \\
            \centering \scriptsize \textbf{Qwen-2.5-1.5B-SimpleRL-Zoo}
        \end{minipage} &

        \rotlabel{Qwen2.5-0.5B} &
        \begin{minipage}{0.20\textwidth} 
            \includegraphics[width=\linewidth]{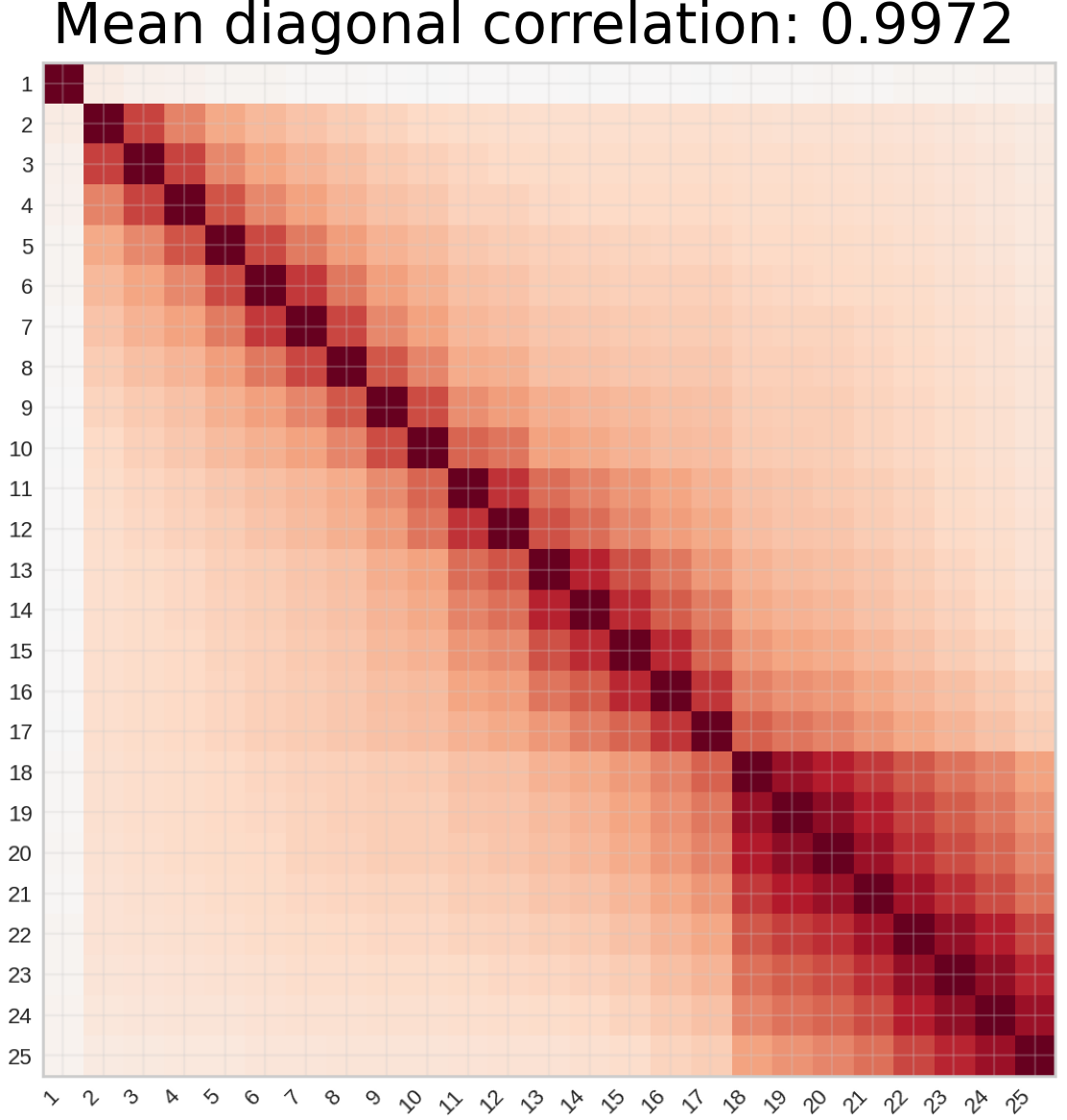} \\
            \centering \scriptsize \textbf{Qwen-2.5-0.5B-SimpleRL-Zoo}
        \end{minipage} &

        \rotlabel{DeepSeek-R1-Distill-Qwen-1.5B} &
        \begin{minipage}{0.20\textwidth} 
            \includegraphics[width=\linewidth]{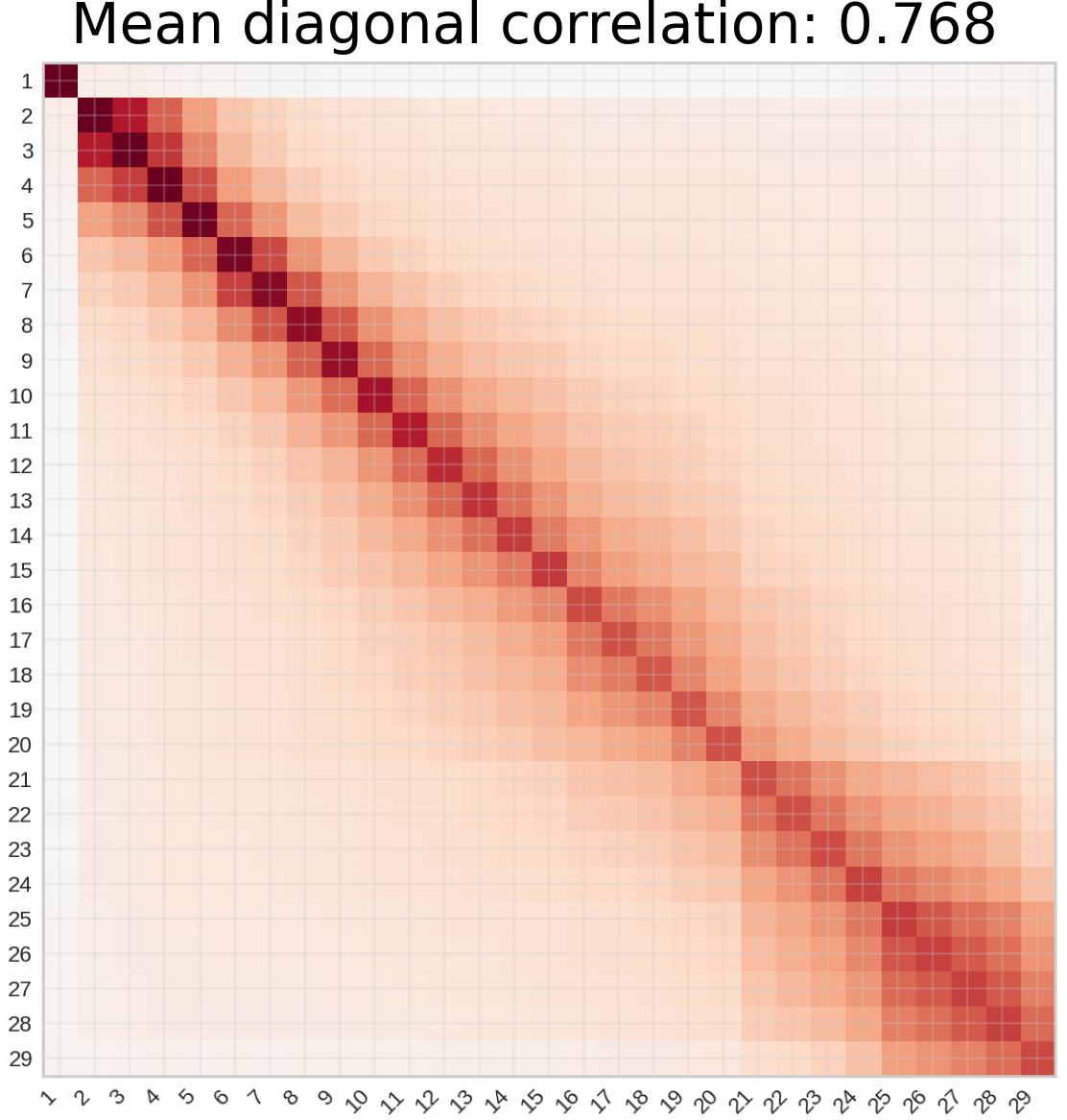} \\
            \centering \scriptsize \textbf{Nemotron-Research-Reasoning-Qwen-1.5B}
        \end{minipage} &

        \rotlabel{Qwen3-4B} &
        \begin{minipage}{0.20\textwidth} 
            \includegraphics[width=\linewidth]{figures/re_experiments/embedding_correlation_evaluation/Qwen3-4B_vs_Qwen3-4B-PSR/mmlu_pro_100_samples/plots/corr_heatmap_Qwen3-4B_vs_Qwen3-4B-PSR_processed.png} \\
            \centering \scriptsize \textbf{Qwen3-4B-PSR}
        \end{minipage} \\
    \end{tabular}%
    } 

    \vspace{0.5em}
    \rotatebox{-90}{\includegraphics[height=6cm, width=\linewidth, keepaspectratio]{figures/re_experiments/embedding_correlation_evaluation/corr_heatmap.png}} 


    \caption{Additional Results on Dimension-Wise Correlation separated by dataset. The vertical axis and horizontal axis are Base Model Layer Index and Reasoning Model Layer Index, respectively. The \textbf{\textcolor{red}{red}} background indicates SFT-tuned pairs.}
    \label{fig: appendix-corr-base-vs-reasoning}
\end{figure*}

\begin{figure*}[p]
    \centering
    {\large \textbf{Dimension-Wise Correlation}} \par\smallskip
    {\large \textbf{Base Embedding Models $\mathcal{M}_{base}^{Emb}$ vs. Reasoning Embedding Models $\mathcal{M}_{reason}^{Emb}$}} \par\medskip

    \setlength{\tabcolsep}{1pt}



    \sectionbox{Dataset: CoT Datset}
    \vspace{1ex} 

    \makebox[\textwidth][c]{%
    \begin{tabular}{
        c @{\hspace{1pt}} c  @{\hspace{0.5em}}
        c @{\hspace{1pt}} c  @{\hspace{0.5em}}
        c @{\hspace{1pt}} c
    }
        \rotlabel{Qwen2.5-Math-1.5B-\textit{Emb}} &
        \setlength{\fboxsep}{3pt}%
        \colorbox{red!20}{%
            \begin{minipage}{0.18\textwidth} 
                \includegraphics[width=\linewidth]{figures/re_experiments/embedding_correlation_evaluation/Qwen2.5-Math-1.5B-Reasoning-Embedding_vs_DeepSeek-R1-Distill-Qwen-1.5B-Reasoning-Embedding/qwen3_32b_LiveMathbench_evaluated/plots/corr_heatmap_Qwen2.5-Math-1.5B-Reasoning-Embedding_vs_DeepSeek-R1-Distill-Qwen-1.5B-Reasoning-Embedding_processed.png}
                \centering \scriptsize \textbf{DeepSeek-R1-Distill-Qwen-1.5B-\textit{Emb}}
            \end{minipage}%
        } &

        \rotlabel{Qwen3-0.6B-Base-\textit{Emb}} &
        \setlength{\fboxsep}{3pt}%
        \colorbox{red!20}{%
            \begin{minipage}{0.18\textwidth} 
                \includegraphics[width=\linewidth]{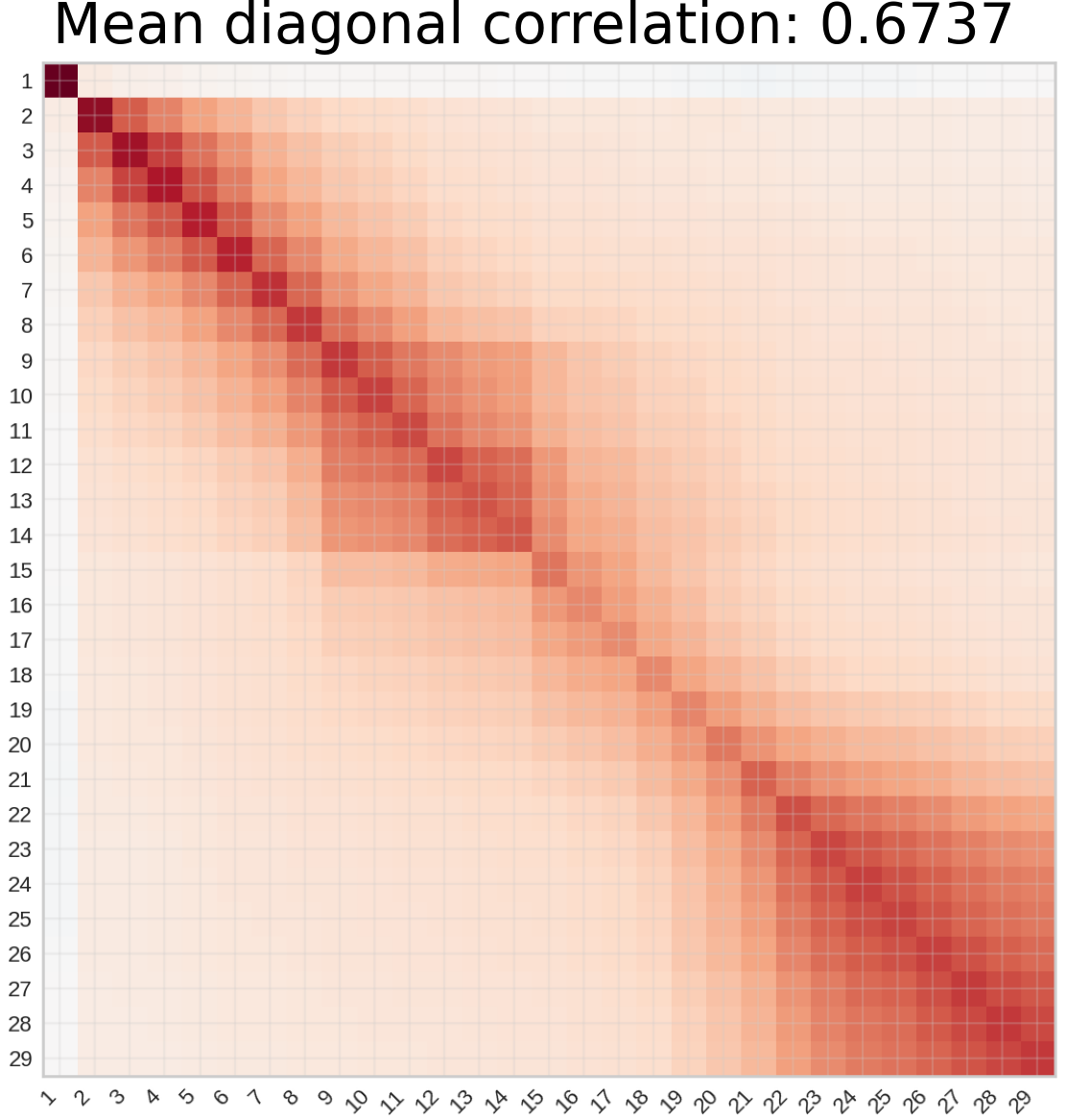} \\
                \centering \scriptsize \textbf{Qwen3-0.6B-\textit{Emb}}
            \end{minipage}%
        } &

        \rotlabel{Qwen2.5-1.5B-\textit{Emb}} &
        \begin{minipage}{0.18\textwidth} 
            \includegraphics[width=\linewidth]{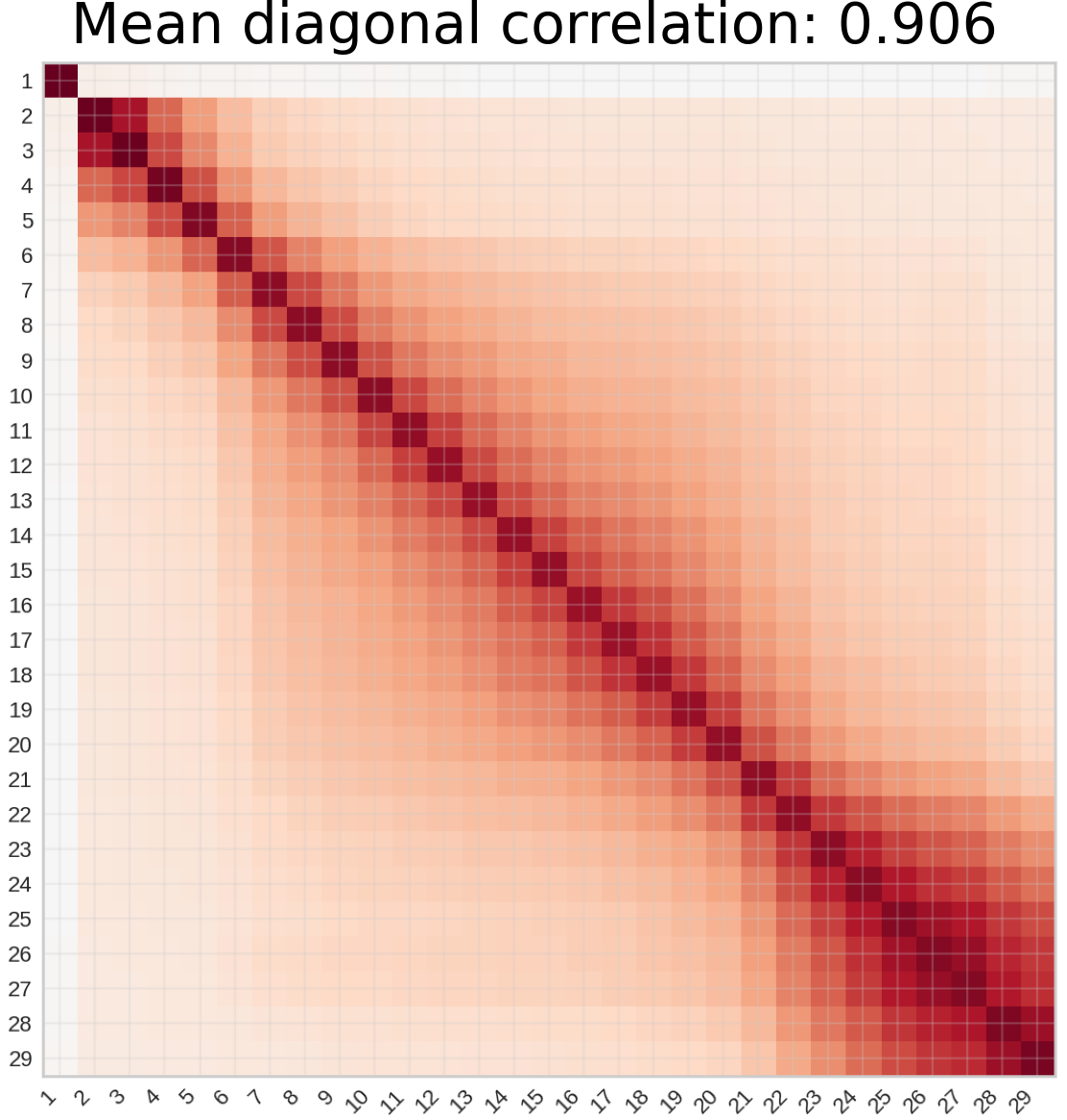} \\
            \centering \scriptsize \textbf{Qwen-2.5-1.5B-SimpleRL-Zoo-\textit{Emb}}
        \end{minipage} \\


        \rotlabel{Qwen2.5-0.5B-\textit{Emb}} &
        \begin{minipage}{0.18\textwidth} 
            \includegraphics[width=\linewidth]{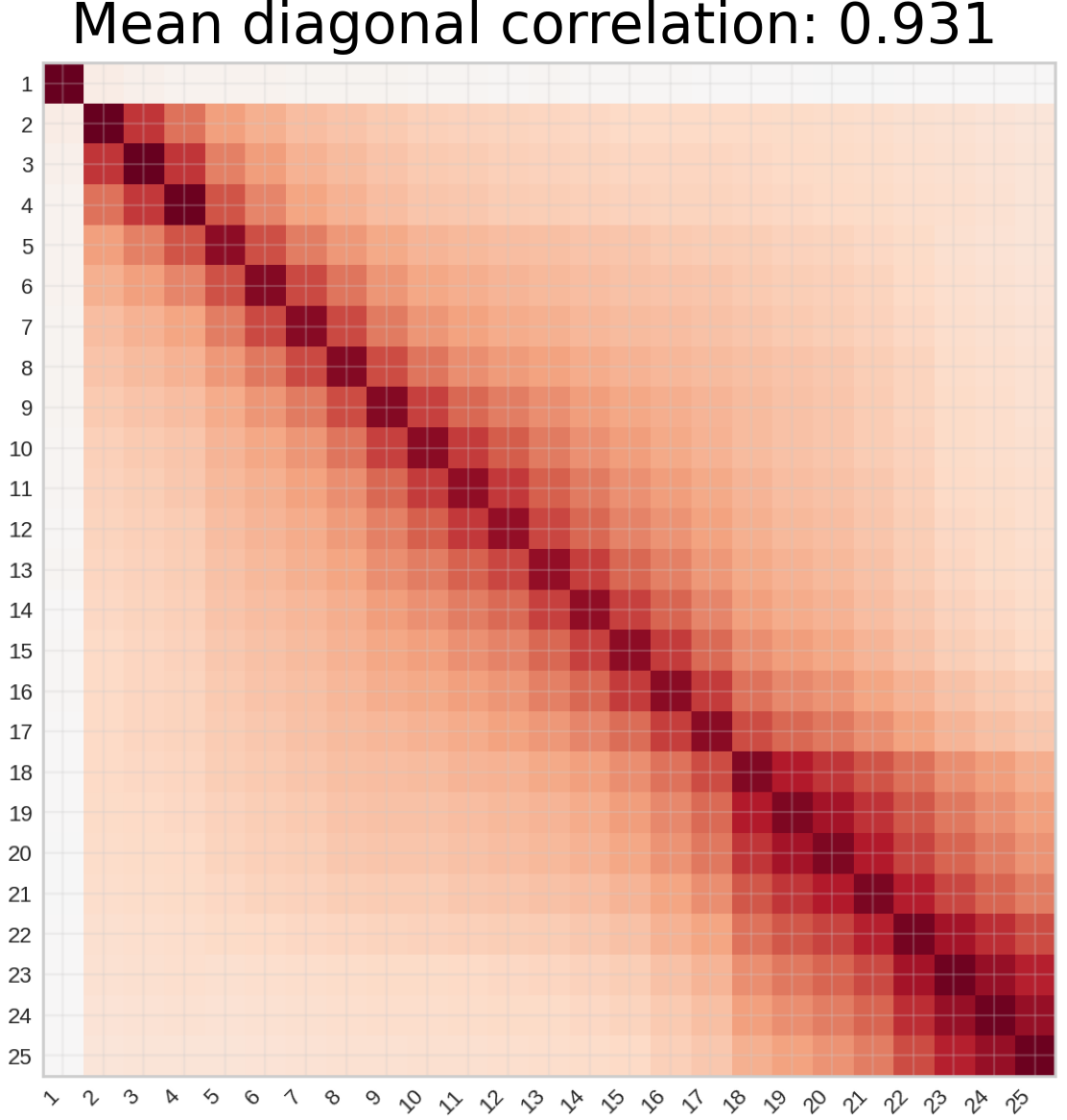} \\
            \centering \scriptsize \textbf{Qwen-2.5-0.5B-SimpleRL-Zoo-\textit{Emb}}
        \end{minipage} &

        \rotlabel{DeepSeek-R1-Distill-Qwen-1.5B-\textit{Emb}} &
        \begin{minipage}{0.18\textwidth} 
            \includegraphics[width=\linewidth]{figures/re_experiments/embedding_correlation_evaluation/DeepSeek-R1-Distill-Qwen-1.5B-Reasoning-Embedding_vs_Nemotron-Research-Reasoning-Qwen-1.5B-Reasoning-Embedding/qwen3_32b_LiveMathbench_evaluated/plots/corr_heatmap_DeepSeek-R1-Distill-Qwen-1.5B-Reasoning-Embedding_vs_Nemotron-Research-Reasoning-Qwen-1.5B-Reasoning-Embedding_processed.png} \\
            \centering \scriptsize \textbf{Nemotron-Research-Reasoning-Qwen-1.5B-\textit{Emb}}
        \end{minipage} &

        \rotlabel{Qwen3-4B-\textit{Emb}} &
        \begin{minipage}{0.18\textwidth} 
            \includegraphics[width=\linewidth]{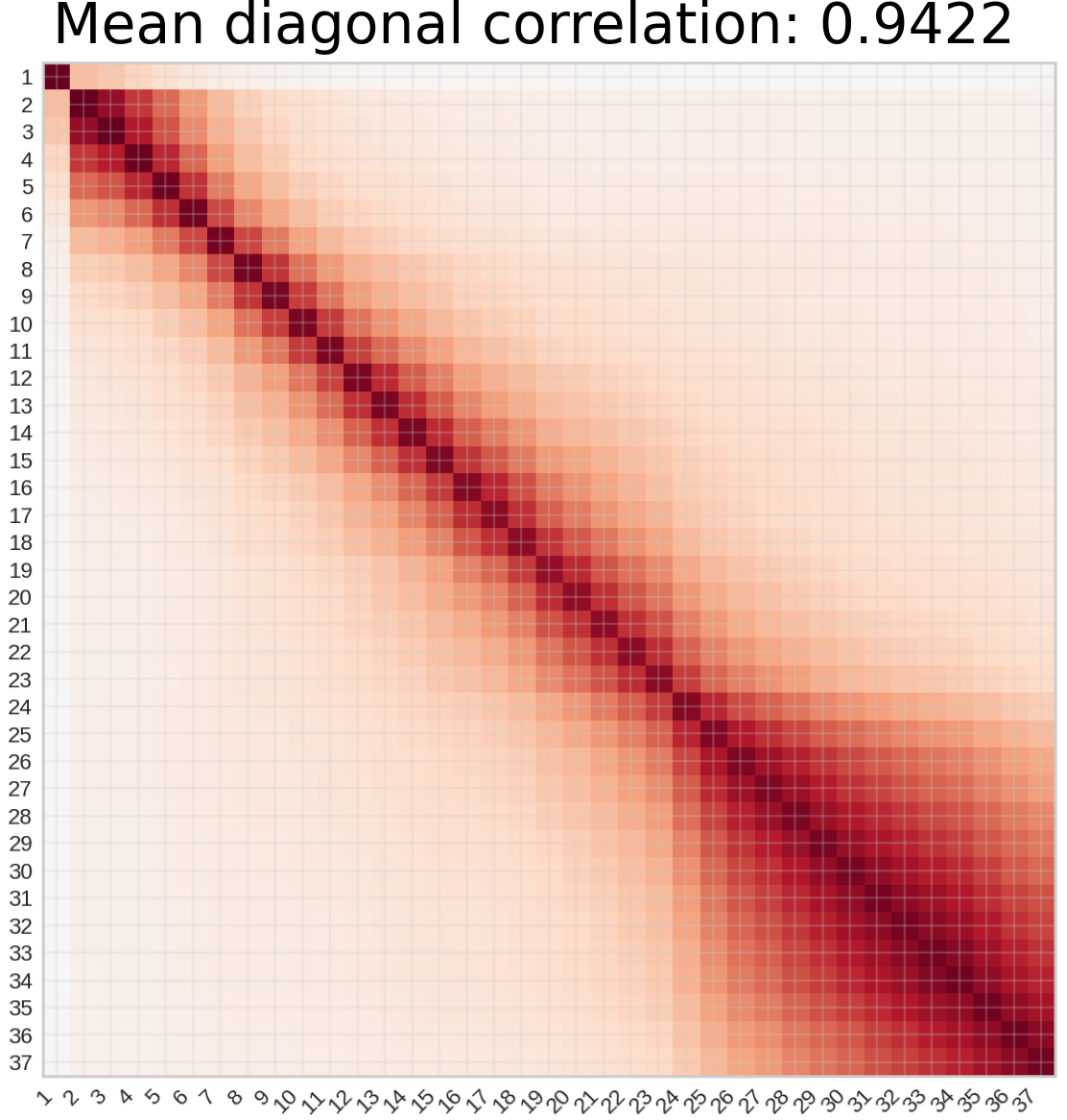} \\
            \centering \scriptsize \textbf{Qwen3-4B-PSR-\textit{Emb}}
        \end{minipage} \\
    \end{tabular}%
    } 

    \vspace{0.5em}
    \rotatebox{-90}{\includegraphics[height=6cm, width=\linewidth, keepaspectratio]{figures/re_experiments/embedding_correlation_evaluation/corr_heatmap.png}} 
    
    \vspace{1.0em}

    \sectionbox{Dataset: MMLU-Pro}
    \vspace{1ex} 

    \makebox[\textwidth][c]{%
    \begin{tabular}{
        c @{\hspace{1pt}} c  @{\hspace{0.5em}}
        c @{\hspace{1pt}} c  @{\hspace{0.5em}}
        c @{\hspace{1pt}} c
    }
        \rotlabel{Qwen2.5-Math-1.5B-\textit{Emb}} &
        \setlength{\fboxsep}{3pt}%
        \colorbox{red!20}{%
            \begin{minipage}{0.18\textwidth} 
                \includegraphics[width=\linewidth]{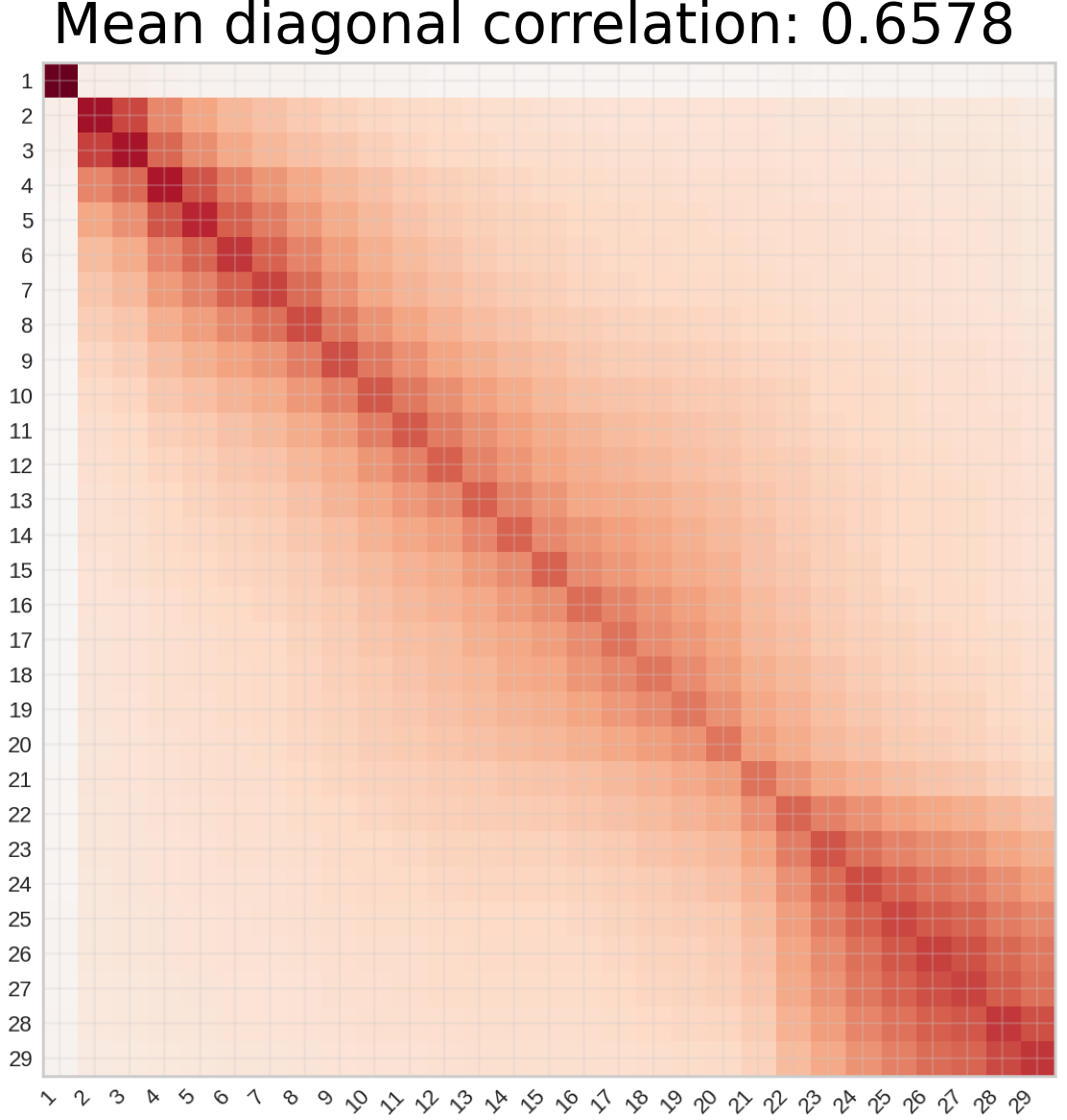}
                \centering \scriptsize \textbf{DeepSeek-R1-Distill-Qwen-1.5B-\textit{Emb}}
            \end{minipage}%
        } &

        \rotlabel{Qwen3-0.6B-Base-\textit{Emb}} &
        \setlength{\fboxsep}{3pt}%
        \colorbox{red!20}{%
            \begin{minipage}{0.18\textwidth} 
                \includegraphics[width=\linewidth]{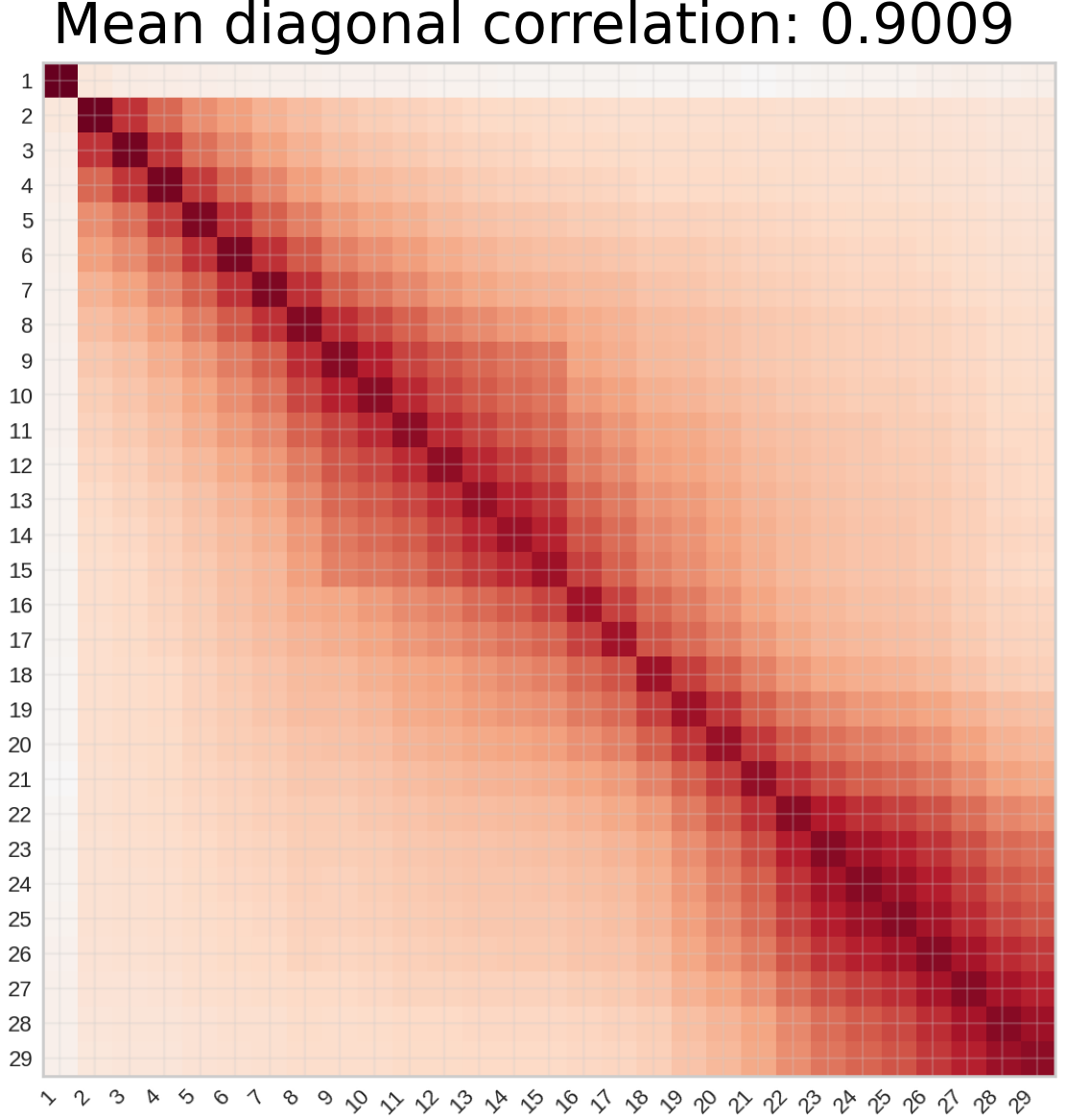} \\
                \centering \scriptsize \textbf{Qwen3-0.6B-\textit{Emb}}
            \end{minipage}%
        } &

        \rotlabel{Qwen2.5-1.5B-\textit{Emb}} &
        \begin{minipage}{0.18\textwidth} 
            \includegraphics[width=\linewidth]{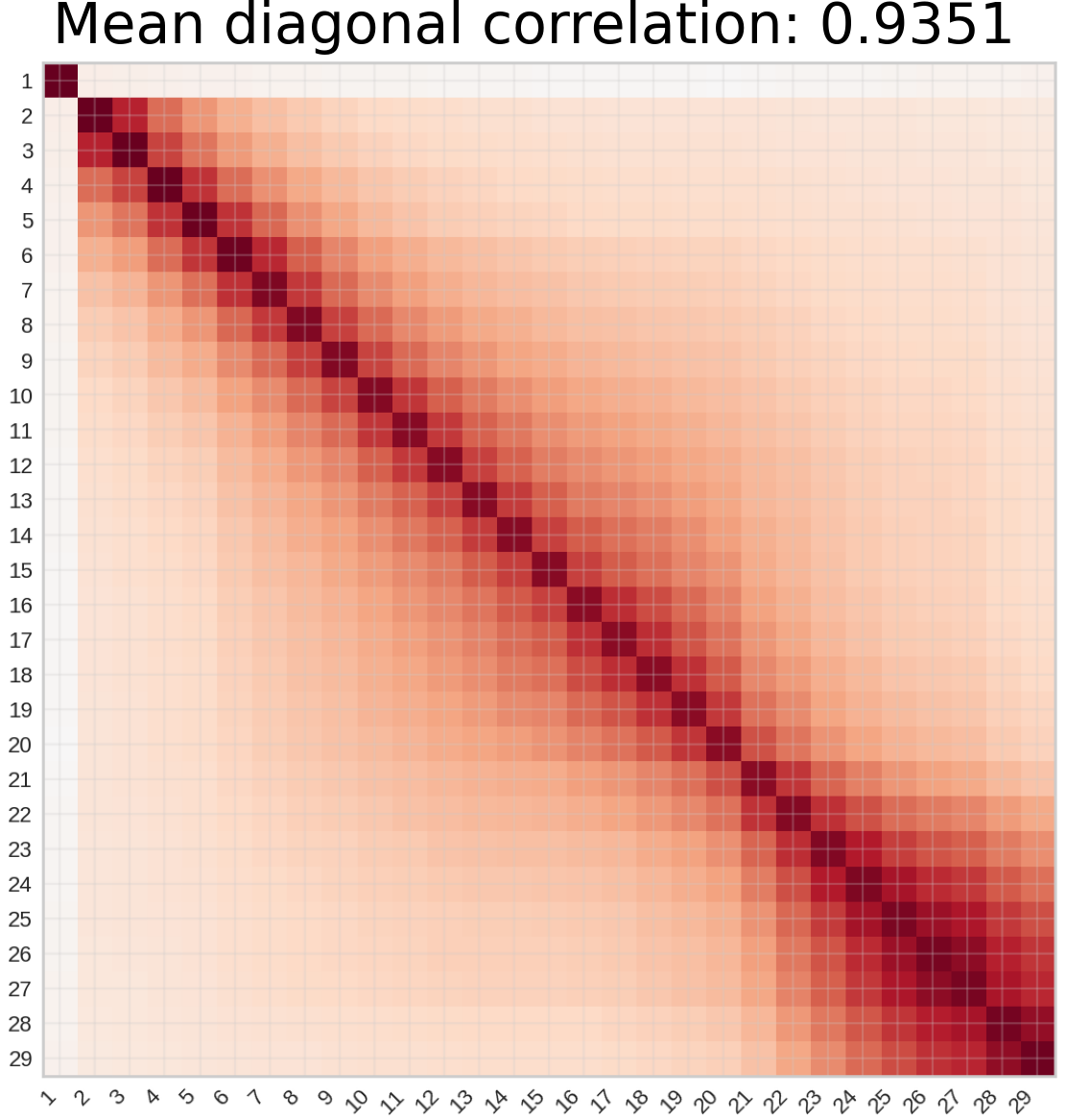} \\
            \centering \scriptsize \textbf{Qwen-2.5-1.5B-SimpleRL-Zoo-\textit{Emb}}
        \end{minipage} \\


        \rotlabel{Qwen2.5-0.5B-\textit{Emb}} &
        \begin{minipage}{0.18\textwidth} 
            \includegraphics[width=\linewidth]{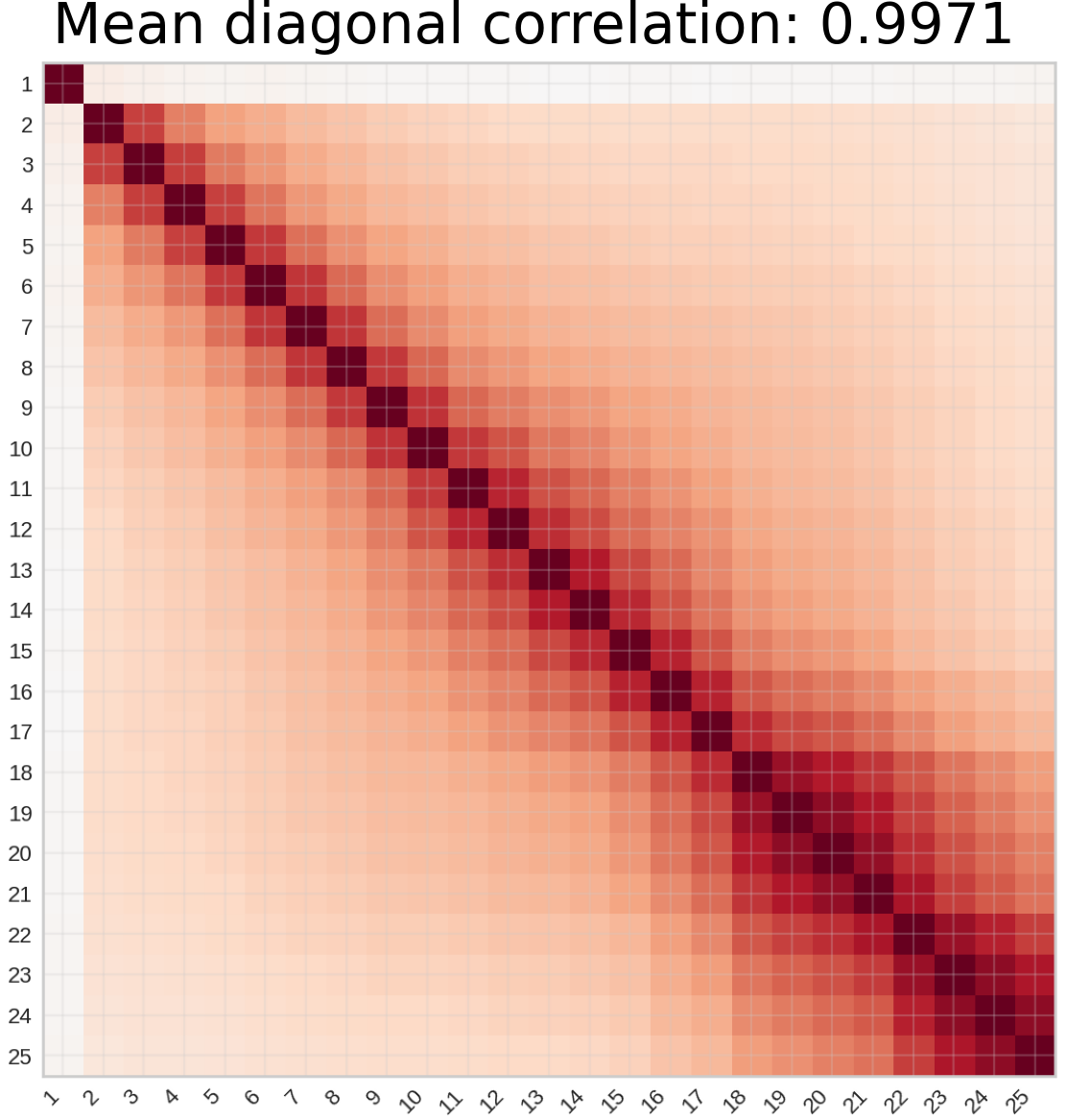} \\
            \centering \scriptsize \textbf{Qwen-2.5-0.5B-SimpleRL-Zoo-\textit{Emb}}
        \end{minipage} &

        \rotlabel{DeepSeek-R1-Distill-Qwen-1.5B-\textit{Emb}} &
        \begin{minipage}{0.18\textwidth} 
            \includegraphics[width=\linewidth]{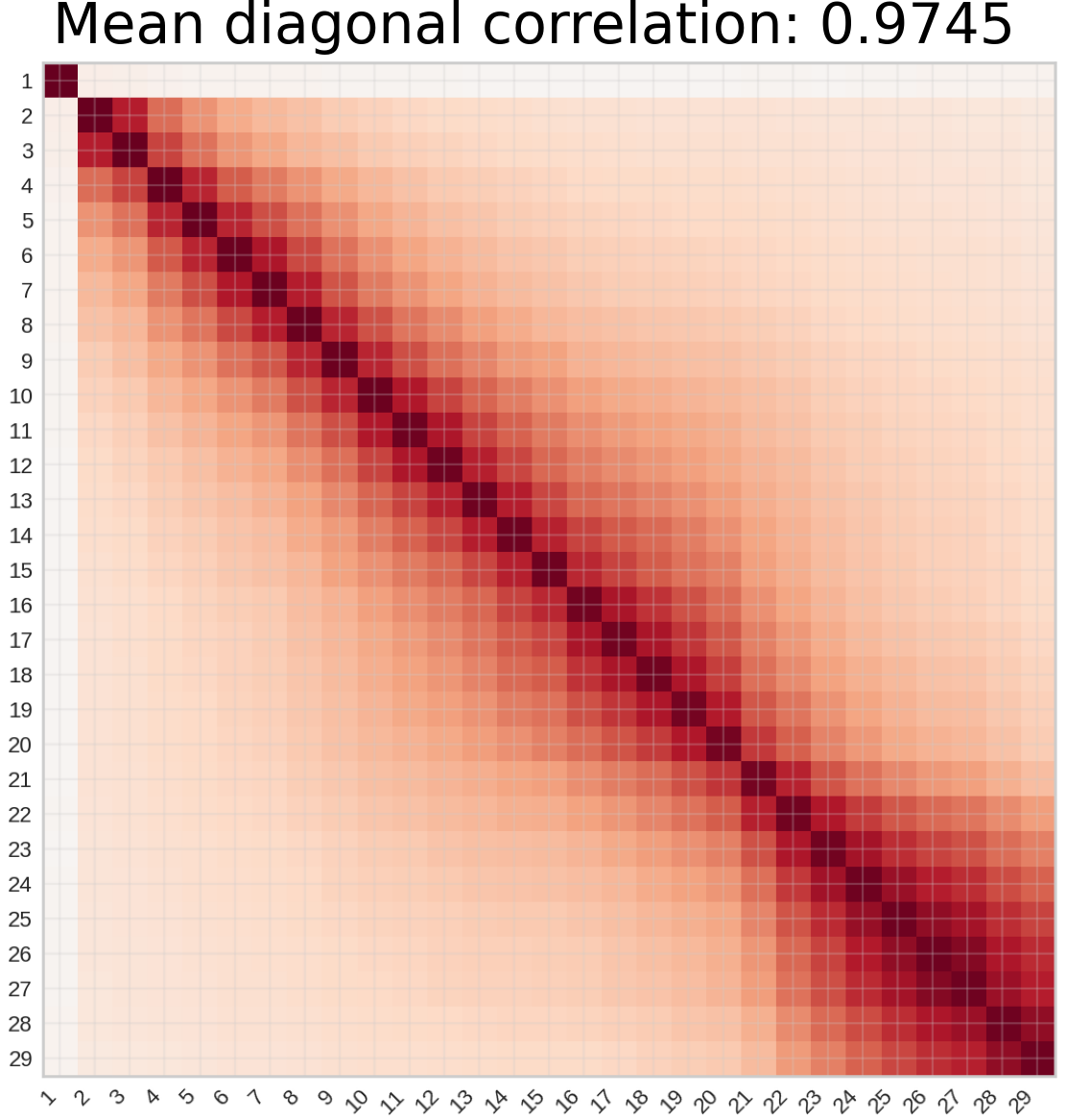} \\
            \centering \scriptsize \textbf{Nemotron-Research-Reasoning-Qwen-1.5B-\textit{Emb}}
        \end{minipage} &

        \rotlabel{Qwen3-4B-\textit{Emb}} &
        \begin{minipage}{0.18\textwidth} 
            \includegraphics[width=\linewidth]{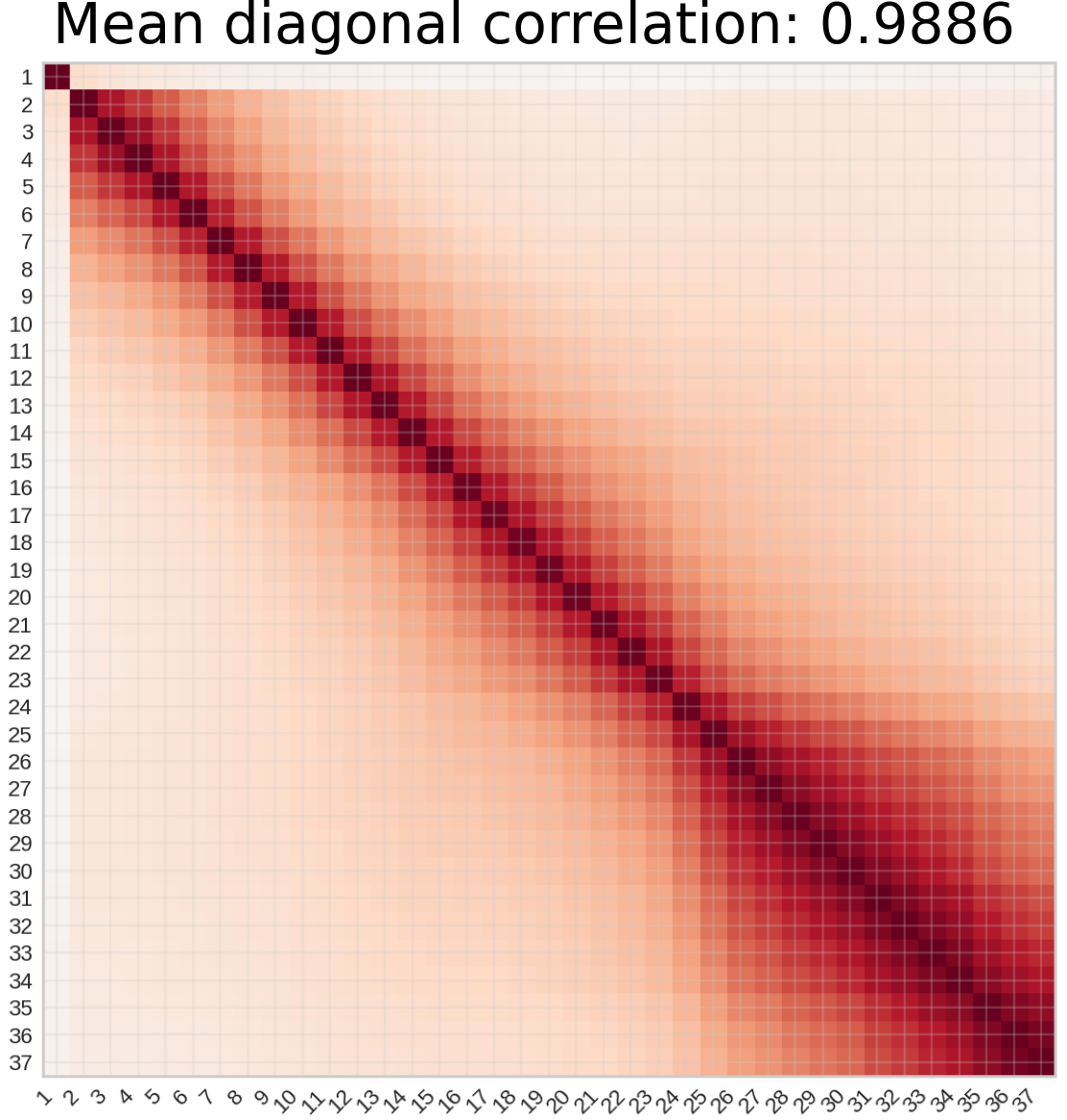} \\
            \centering \scriptsize \textbf{Qwen3-4B-PSR-\textit{Emb}}
        \end{minipage} \\
    \end{tabular}%
    } 

    \vspace{0.5em}
    \rotatebox{-90}{\includegraphics[height=6cm, width=\linewidth, keepaspectratio]{figures/re_experiments/embedding_correlation_evaluation/corr_heatmap.png}} 


    \caption{Additional Results on Dimension-Wise Correlation separated by dataset. The vertical axis and horizontal axis are Base Model Layer Index and Reasoning Model Layer Index, respectively. The \textbf{\textcolor{red}{red}} background indicates their backbone LLMs are SFT-tuned pairs.}
    \label{fig: appendix-corr-base-embedding-vs-reasoning-embedding}
\end{figure*}


\begin{table*}[p]
    \centering
    \setlength{\tabcolsep}{2.5pt}
    \renewcommand{\arraystretch}{1.15}

    \caption{Inverse row entropy $H_{\text{inv}}$ of the orthogonal matrix $O^*$ for $\mathcal{M}_{base}$ vs. $\mathcal{M}_{reason}$ across different datasets. Higher inverse row entropy indicates more axis-aligned correspondence, while lower inverse row entropy indicates more globally mixed features. The model pairs are separated by the algorithm used to train their reasoning model $\mathcal{M}_{reason}$.}

    {\large \textbf{Orthogonal Procrustes Analysis}} \par\medskip
    {\large \textbf{Base Models $\mathcal{M}_{base}$ vs. Reasoning Models $\mathcal{M}_{reason}$}} \par\medskip

    \sectionbox{Dataset: CoT Dataset}
    \vspace{1ex} \par\medskip
    \begin{tabular}{@{}lc@{}}
        \toprule
        \textbf{Model Pair} & \textbf{$H_{\text{inv}}\, \uparrow$} \\
        \midrule
        \textbf{SFT} & \\
        Qwen2.5-Math-1.5B vs DeepSeek-R1-Distill-Qwen-1.5B & 0.1076 \\
        \midrule
        \textbf{RLVR} & \\
        Qwen3-4B vs Polaris-4B-Preview & 0.4365 \\
        DeepSeek-R1-Distill-Qwen-7B vs Polaris-7B-Preview & 0.6576 \\
        Qwen2.5-7B vs zero\_\_ppo\_\_think\_\_Qwen2.5-7B & 0.6338 \\
        Qwen2.5-1.5B vs Qwen-2.5-1.5B-SimpleRL-Zoo & 0.9481 \\
        Qwen2.5-0.5B vs Qwen-2.5-0.5B-SimpleRL-Zoo & 0.9923 \\
        DeepSeek-R1-Distill-Qwen-1.5B vs Nemotron-Research-Reasoning-Qwen-1.5B & 0.1613 \\
        Qwen3-4B vs Qwen3-4B-PSR & 0.8122 \\
        \bottomrule
    \end{tabular}

    \vspace{2em}

    \sectionbox{Dataset: MMLU-Pro}
    \vspace{1ex} \par\medskip
    \begin{tabular}{@{}lcc@{}}
        \toprule
        \textbf{Model Pair} & \textbf{$H_{\text{inv}}\, \uparrow$} \\
        \midrule
        \textbf{SFT} & \\
        Qwen2.5-Math-1.5B vs DeepSeek-R1-Distill-Qwen-1.5B & 0.2229 \\
        \midrule
        \textbf{RLVR} & \\
        Qwen3-4B vs Polaris-4B-Preview & 0.9711 \\
        DeepSeek-R1-Distill-Qwen-7B vs Polaris-7B-Preview & 0.9623 \\
        Qwen2.5-7B vs zero\_\_ppo\_\_think\_\_Qwen2.5-7B & 0.9922 \\
        Qwen2.5-1.5B vs Qwen-2.5-1.5B-SimpleRL-Zoo & 0.9963 \\
        Qwen2.5-0.5B vs Qwen-2.5-0.5B-SimpleRL-Zoo & 0.9981 \\
        DeepSeek-R1-Distill-Qwen-1.5B vs Nemotron-Research-Reasoning-Qwen-1.5B & 0.8336 \\
        Qwen3-4B vs Qwen3-4B-PSR & 0.9843 \\
        \bottomrule
    \end{tabular}
    \label{tab: appendix-orthogonal-base-vs-reasoning}
\end{table*}

\begin{table*}[p]
    \centering
    \setlength{\tabcolsep}{2.5pt}
    \renewcommand{\arraystretch}{1.15}
    
    \caption{Inverse row entropy $H_{\text{inv}}$ of the orthogonal matrix $O^*$ for $\mathcal{M}_{base}^{Emb}$ vs. $\mathcal{M}_{reason}^{Emb}$ across different datasets. Higher inverse row entropy indicates more axis-aligned correspondence, while lower inverse row entropy indicates more globally mixed features. The model pairs are separated by the algorithm used to train their reasoning model backbone $\mathcal{M}_{reason}$.}

    {\large \textbf{Orthogonal Procrustes Analysis}} \par\medskip
    {\large \textbf{Base Embedding Models $\mathcal{M}_{base}^{Emb}$ vs. Reasoning Embedding Models $\mathcal{M}_{reason}^{Emb}$}} \par\medskip

    \sectionbox{Dataset: CoT Dataset}
    \vspace{1ex} \par\medskip
    \begin{tabular}{@{}lcc@{}}
        \toprule
        \textbf{Model Pair} & \textbf{ $H_{\text{inv}}\, \uparrow$} \\
        \midrule
        \textbf{SFT} & \\
        Qwen2.5-Math-1.5B-\textit{Emb} vs DeepSeek-R1-Distill-Qwen-1.5B-\textit{Emb} & 0.1429 \\
        Qwen3-0.6B-Base-\textit{Emb} vs Qwen3-0.6B-\textit{Emb} & 0.4915 \\
        \midrule
        \textbf{RLVR} & \\
        Qwen2.5-1.5B-\textit{Emb} vs Qwen-2.5-1.5B-SimpleRL-Zoo-\textit{Emb} & 0.8826 \\
        Qwen2.5-0.5B-\textit{Emb} vs Qwen-2.5-0.5B-SimpleRL-Zoo-\textit{Emb} & 0.9835 \\
        DeepSeek-R1-Distill-Qwen-\textit{Emb} vs Nemotron-Research-Reasoning-Qwen-\textit{Emb} & 0.8637 \\
        Qwen3-4B-\textit{Emb} vs Qwen3-4B-PSR-\textit{Emb} & 0.5863 \\
        \bottomrule
    \end{tabular}

    \vspace{2em}

    \sectionbox{Dataset: MMLU-Pro}
    \vspace{1ex} \par\medskip
    \begin{tabular}{@{}lcc@{}}
        \toprule
        \textbf{Model Pair} & \textbf{$H_{\text{inv}}\, \uparrow$} \\
        \midrule
        \textbf{SFT} & \\
        Qwen2.5-Math-1.5B-\textit{Emb} vs DeepSeek-R1-Distill-Qwen-1.5B-\textit{Emb} & 0.7105 \\
        Qwen3-0.6B-Base-\textit{Emb} vs Qwen3-0.6B-\textit{Emb} & 0.9164 \\
        \midrule
        \textbf{RLVR} & \\
        Qwen2.5-1.5B-\textit{Emb} vs Qwen-2.5-1.5B-SimpleRL-Zoo-\textit{Emb} & 0.9794 \\
        Qwen2.5-0.5B-\textit{Emb} vs Qwen-2.5-0.5B-SimpleRL-Zoo-\textit{Emb} & 0.9978 \\
        DeepSeek-R1-Distill-Qwen-\textit{Emb} vs Nemotron-Research-Reasoning-Qwen-\textit{Emb} & 0.9814 \\
        Qwen3-4B-\textit{Emb} vs Qwen3-4B-PSR-\textit{Emb} & 0.9933 \\
        \bottomrule
    \end{tabular}
    \label{tab: appendix-orthogonal-base-embedding-vs-reasoning-embedding}
\end{table*}


\begin{figure*}[p]
    \centering
    {\large \textbf{Linear CKA}} \par\smallskip
    {\large \textbf{Base Models $\mathcal{M}_{base}$ vs. Reasoning Models $\mathcal{M}_{reason}$}} \par\medskip

    \setlength{\tabcolsep}{1pt}

    \sectionbox{Dataset: CoT Datset}
    \vspace{1ex} 

    \makebox[\textwidth][c]{%
    \begin{tabular}{
        c @{\hspace{1pt}} c  @{\hspace{0.5em}}
        c @{\hspace{1pt}} c  @{\hspace{0.5em}}
        c @{\hspace{1pt}} c  @{\hspace{0.5em}}
        c @{\hspace{1pt}} c
    }
        \rotlabel{Qwen2.5-Math-1.5B} &
        \setlength{\fboxsep}{3pt}%
        \colorbox{red!20}{
            \begin{minipage}{0.20\textwidth}
                \centering
                \includegraphics[width=\linewidth]{figures/re_experiments/cka_evaluation/Qwen2.5-Math-1.5B_vs_DeepSeek-R1-Distill-Qwen-1.5B/qwen3_32b_LiveMathbench_evaluated/cka_heatmap_Qwen2.5-Math-1.5B_vs_DeepSeek-R1-Distill-Qwen-1.5B_processed.png}
                \par\vspace{2pt} 
                \scriptsize \textbf{DeepSeek-R1-Distill-Qwen-1.5B}
            \end{minipage}%
        } &

        \rotlabel{Qwen3-4B} &
        \begin{minipage}{0.20\textwidth} 
            \includegraphics[width=\linewidth]{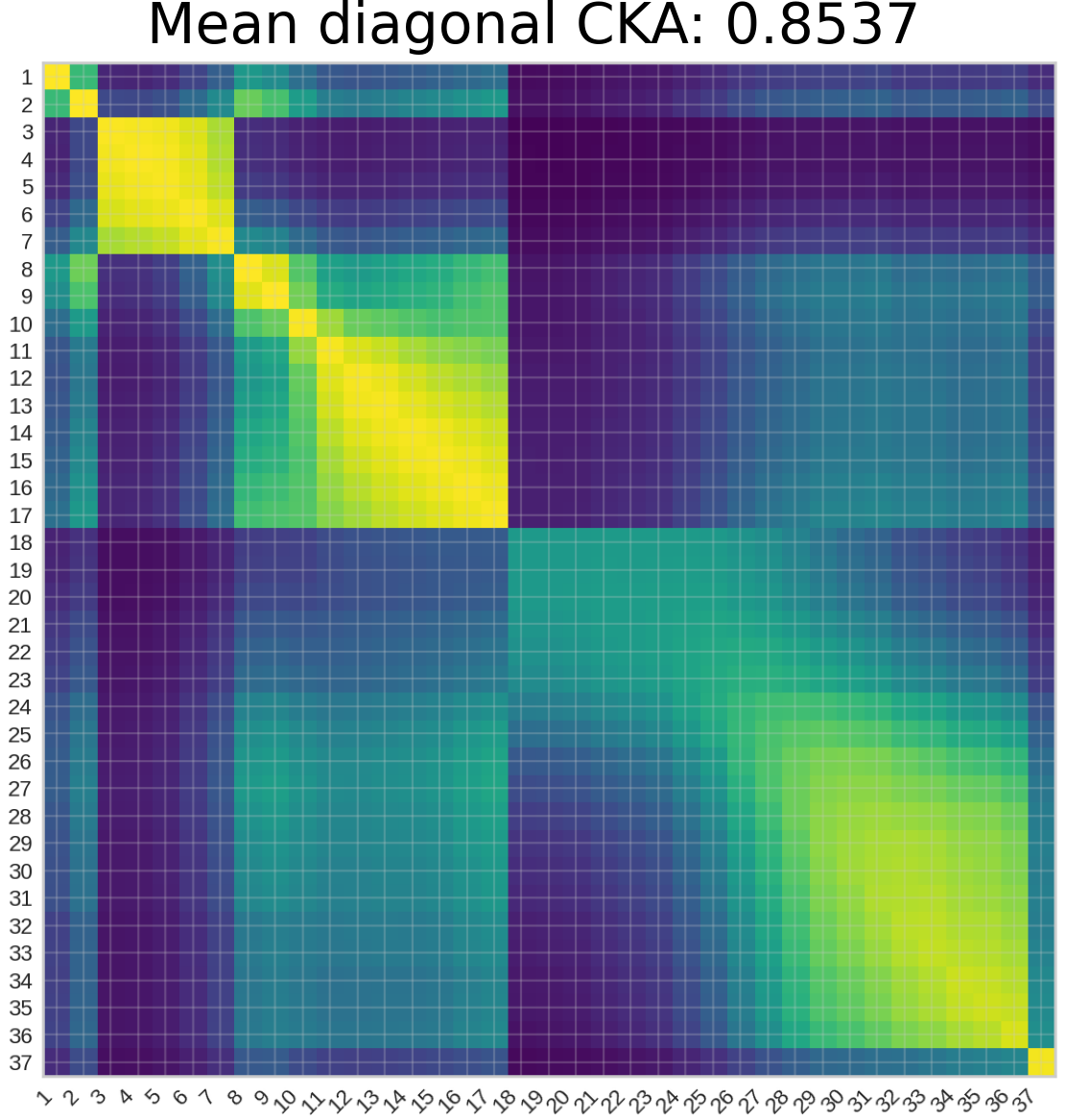} \\
            \centering \scriptsize \textbf{Polaris-4B-Preview}
        \end{minipage} &

        \rotlabel{DeepSeek-R1-Distill-Qwen-7B} &
        \begin{minipage}{0.20\textwidth} 
            \includegraphics[width=\linewidth]{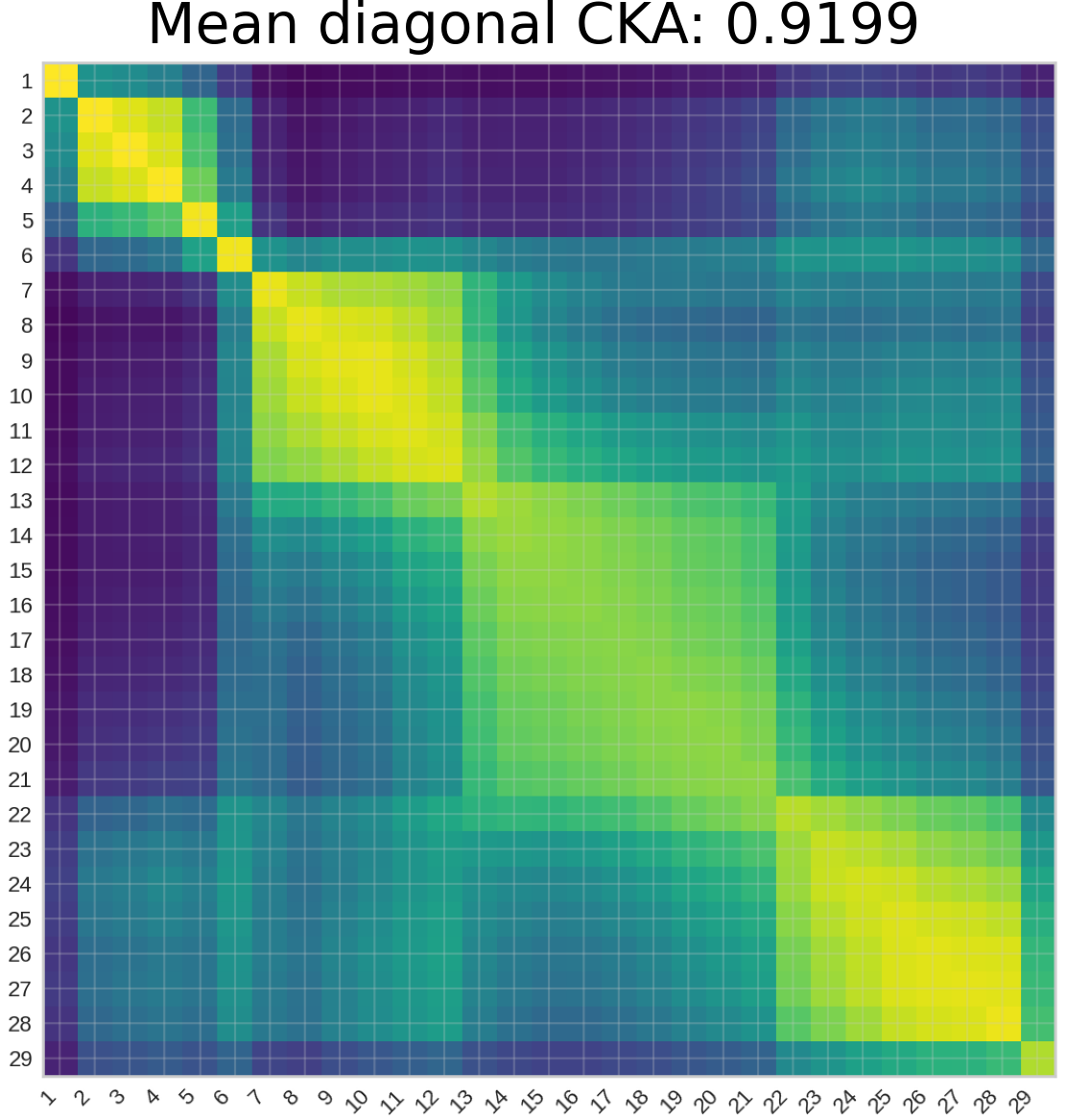} \\
            \centering \scriptsize \textbf{Polaris-7B-Preview}
        \end{minipage} &

        \rotlabel{Qwen2.5-7B} &
        \begin{minipage}{0.20\textwidth} 
            \includegraphics[width=\linewidth]{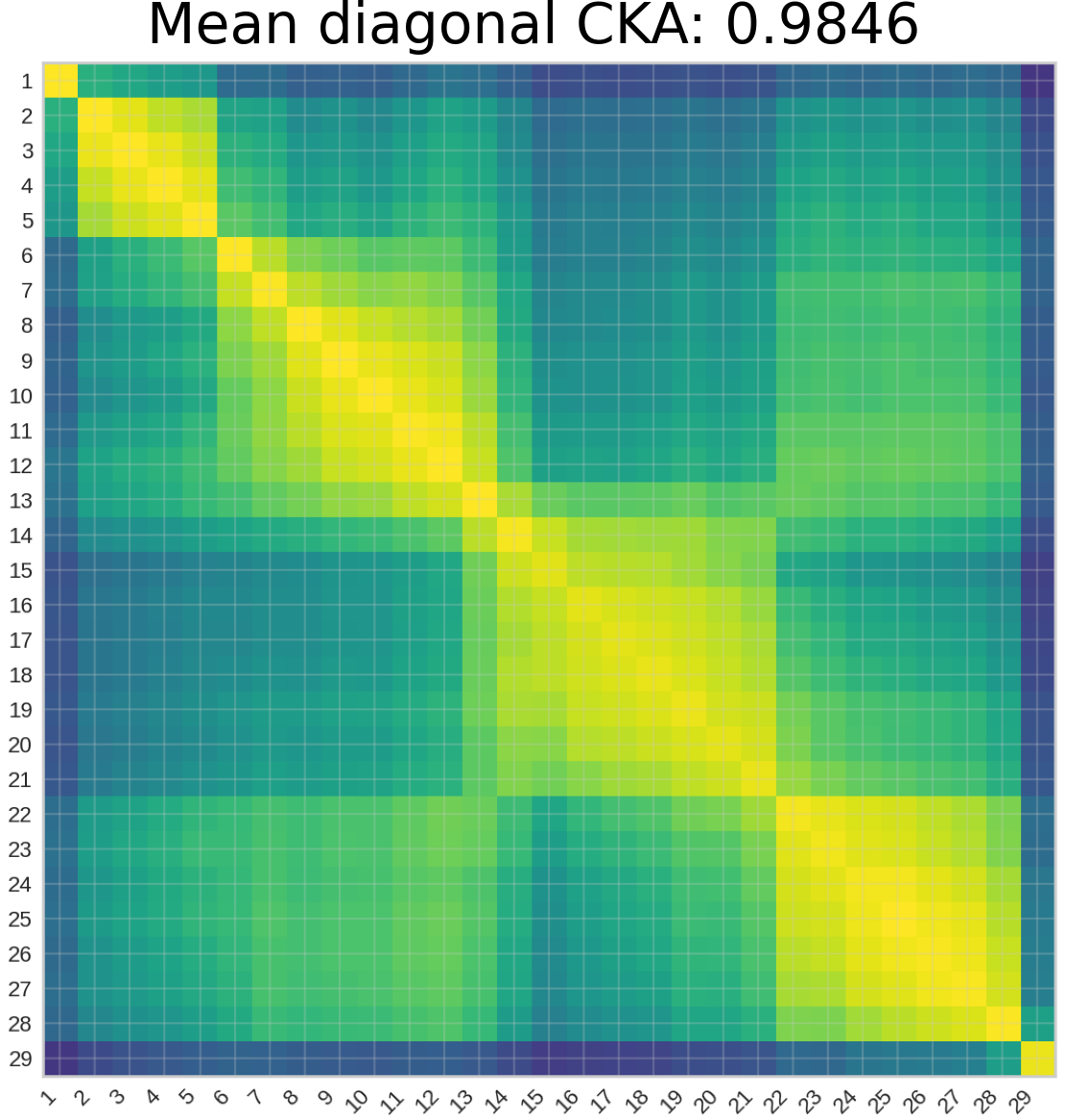} \\
            \centering \scriptsize \textbf{zero\_\_ppo\_\_think\_\_Qwen2.5-7B}
        \end{minipage} \\


        \rotlabel{Qwen2.5-1.5B} &
        \begin{minipage}{0.20\textwidth} 
            \includegraphics[width=\linewidth]{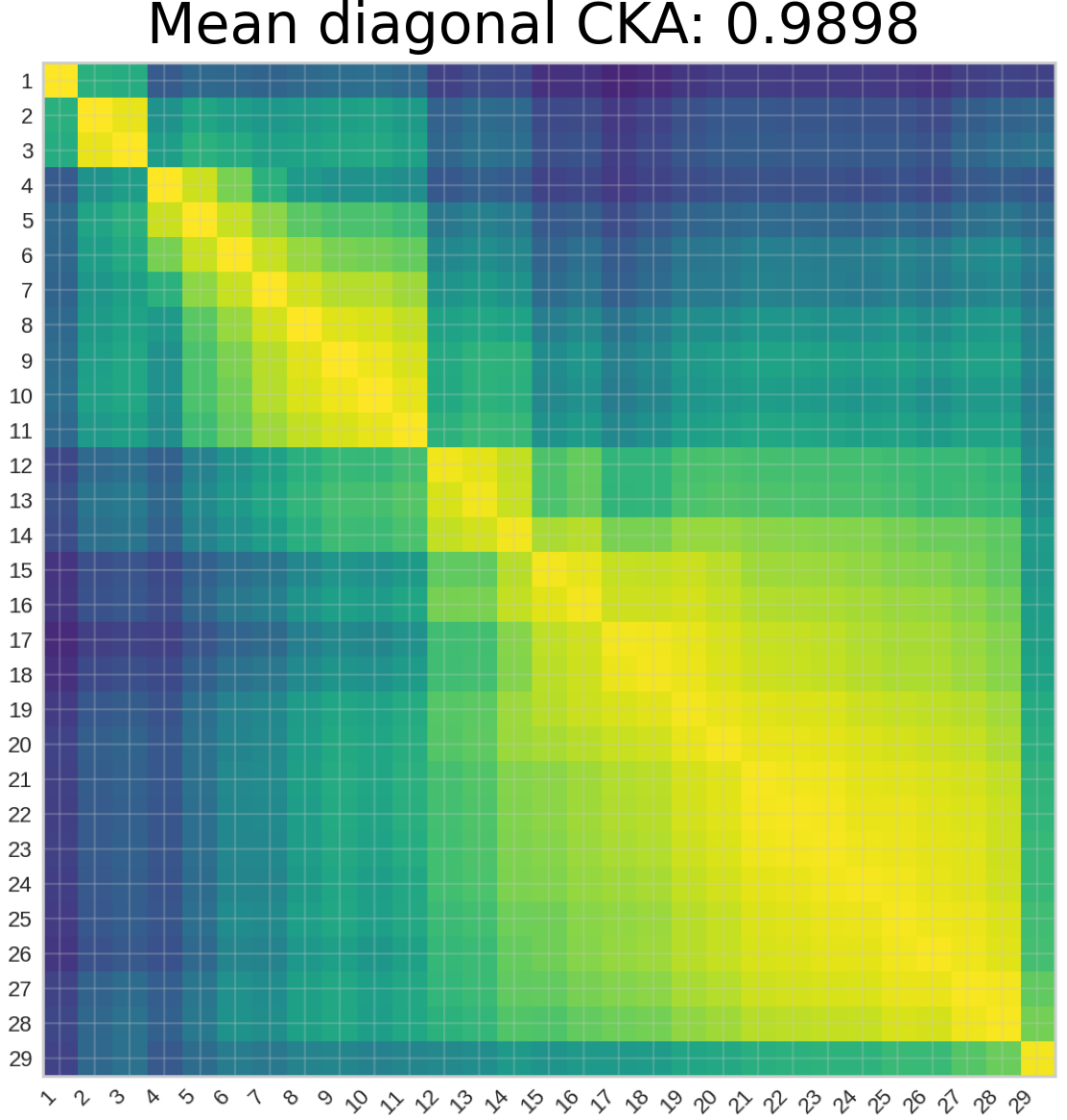} \\
            \centering \scriptsize \textbf{Qwen-2.5-1.5B-SimpleRL-Zoo}
        \end{minipage} &

        \rotlabel{Qwen2.5-0.5B} &
        \begin{minipage}{0.20\textwidth} 
            \includegraphics[width=\linewidth]{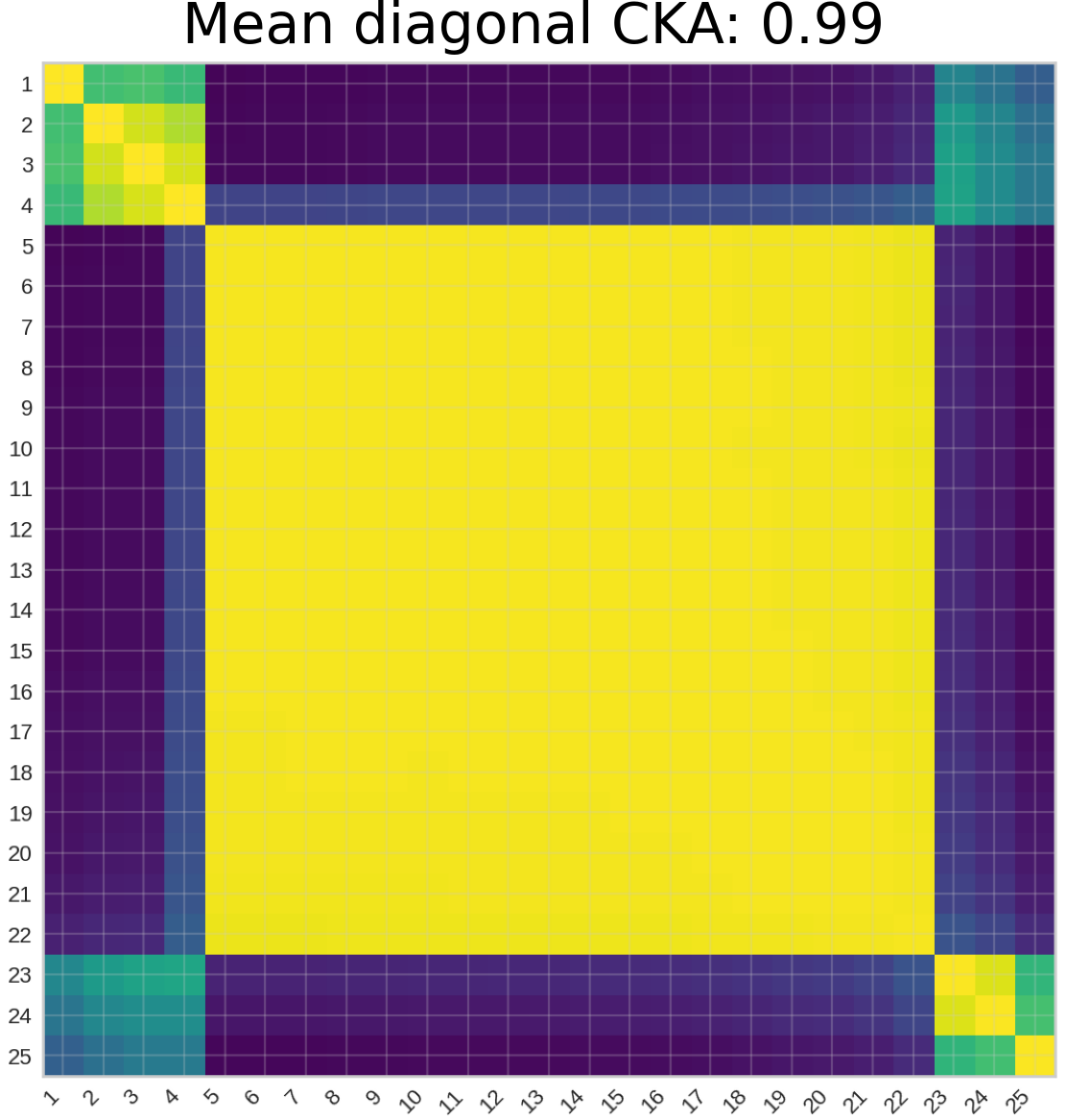} \\
            \centering \scriptsize \textbf{Qwen-2.5-0.5B-SimpleRL-Zoo}
        \end{minipage} &

        \rotlabel{DeepSeek-R1-Distill-Qwen-1.5B} &
        \begin{minipage}{0.20\textwidth} 
            \includegraphics[width=\linewidth]{figures/re_experiments/cka_evaluation/DeepSeek-R1-Distill-Qwen-1.5B_vs_Nemotron-Research-Reasoning-Qwen-1.5B/qwen3_32b_LiveMathbench_evaluated/cka_heatmap_DeepSeek-R1-Distill-Qwen-1.5B_vs_Nemotron-Research-Reasoning-Qwen-1.5B_processed.png} \\
            \centering \scriptsize \textbf{Nemotron-Research-Reasoning-Qwen-1.5B}
        \end{minipage} &

        \rotlabel{Qwen3-4B} &
        \begin{minipage}{0.20\textwidth} 
            \includegraphics[width=\linewidth]{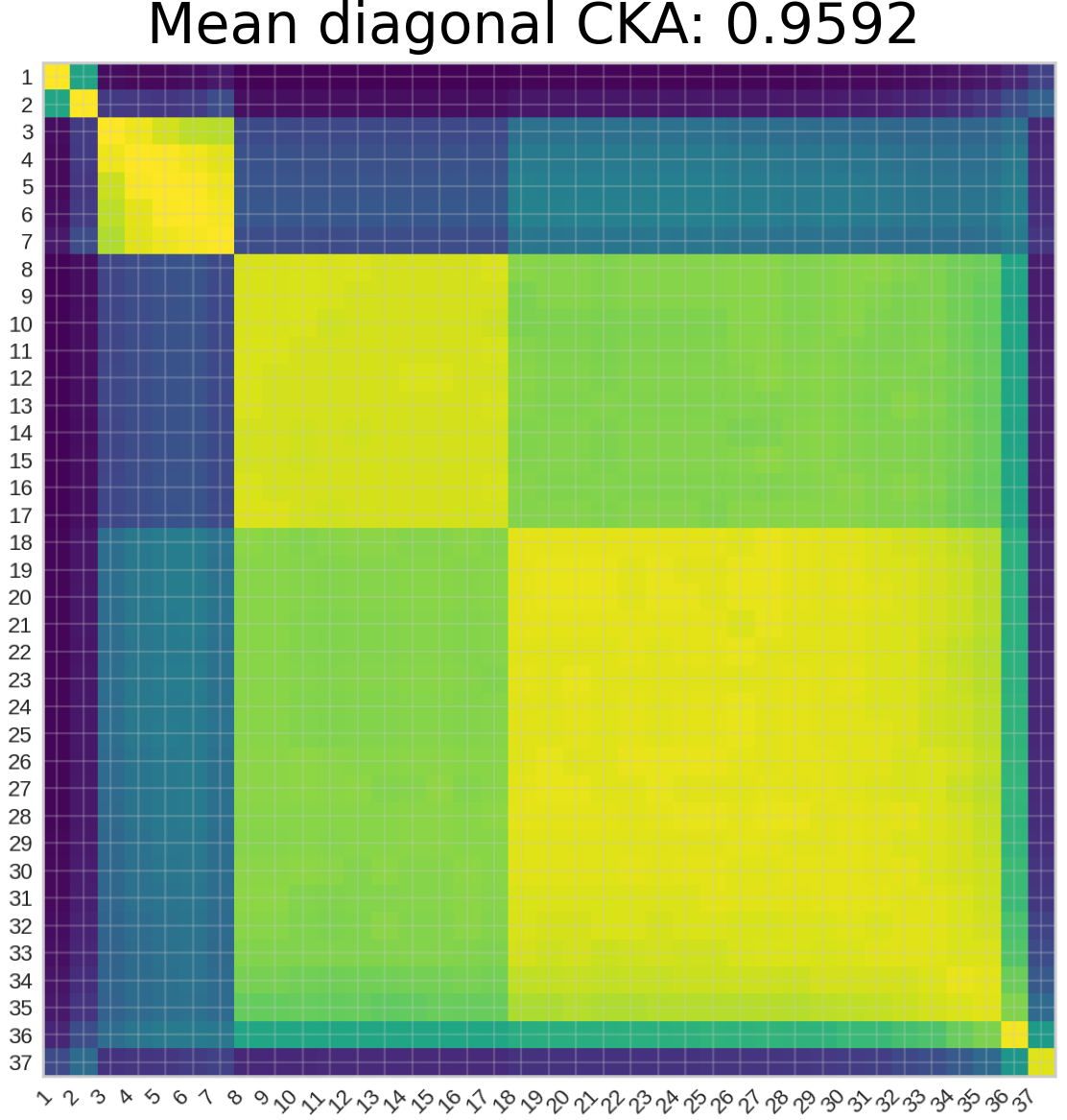} \\
            \centering \scriptsize \textbf{Qwen3-4B-PSR}
        \end{minipage} \\
    \end{tabular}%
    } 

    \vspace{0.5em}
    \rotatebox{-90}{\includegraphics[height=6cm, width=\linewidth, keepaspectratio]{figures/re_experiments/cka_evaluation/cka_heatmap.png}} 
    
    \vspace{1.0em}

    \sectionbox{Dataset: MMLU-Pro}
    \vspace{1ex} 

    \makebox[\textwidth][c]{%
    \begin{tabular}{
        c @{\hspace{1pt}} c  @{\hspace{0.5em}}
        c @{\hspace{1pt}} c  @{\hspace{0.5em}}
        c @{\hspace{1pt}} c  @{\hspace{0.5em}}
        c @{\hspace{1pt}} c
    }
        \rotlabel{Qwen2.5-Math-1.5B} &
        \setlength{\fboxsep}{3pt}%
        \colorbox{red!20}{
            \begin{minipage}{0.20\textwidth}
                \centering
                \includegraphics[width=\linewidth]{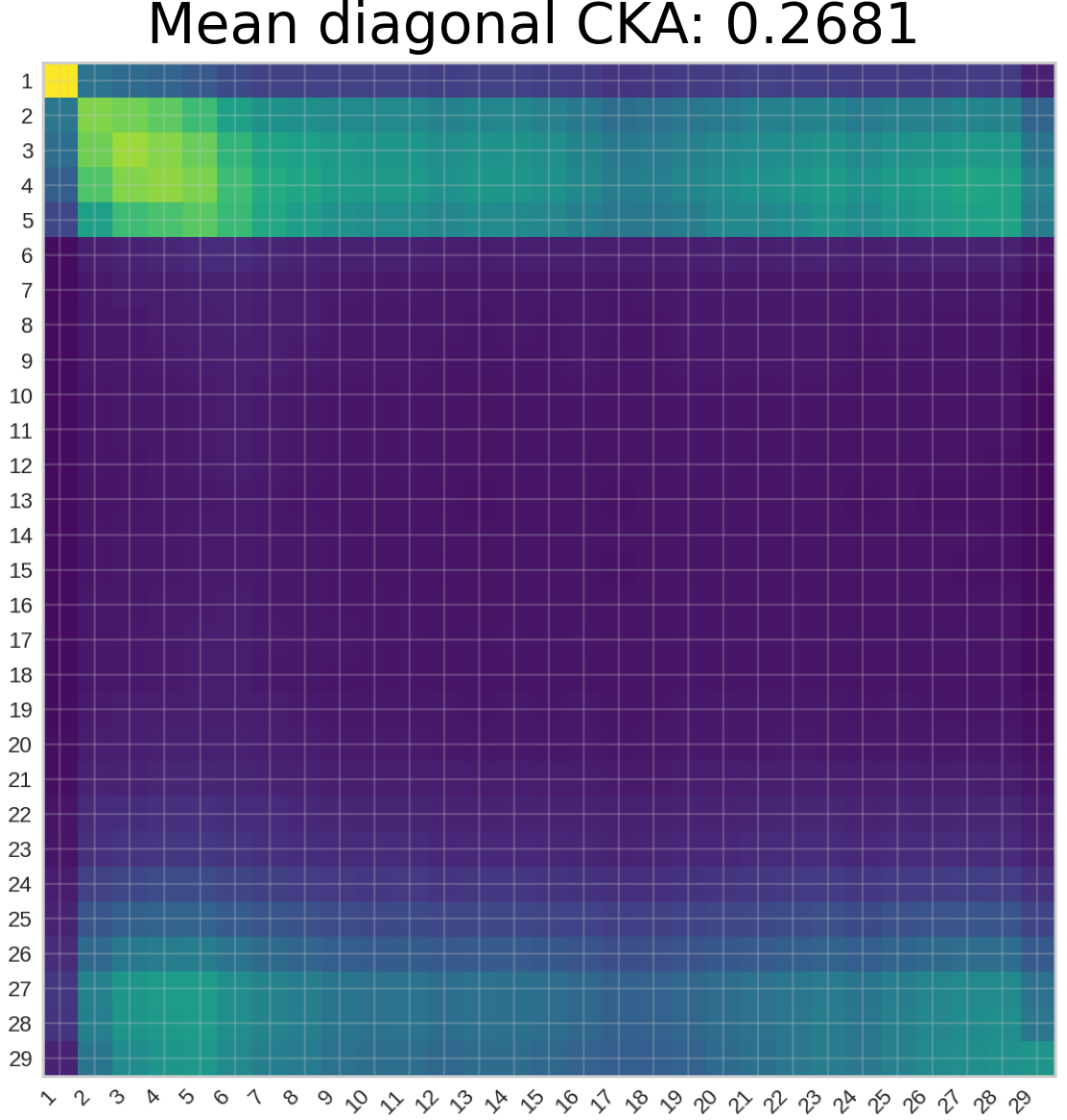}
                \par\vspace{2pt} 
                \scriptsize \textbf{DeepSeek-R1-Distill-Qwen-1.5B}
            \end{minipage}%
        } &

        \rotlabel{Qwen3-4B} &
        \begin{minipage}{0.20\textwidth} 
            \includegraphics[width=\linewidth]{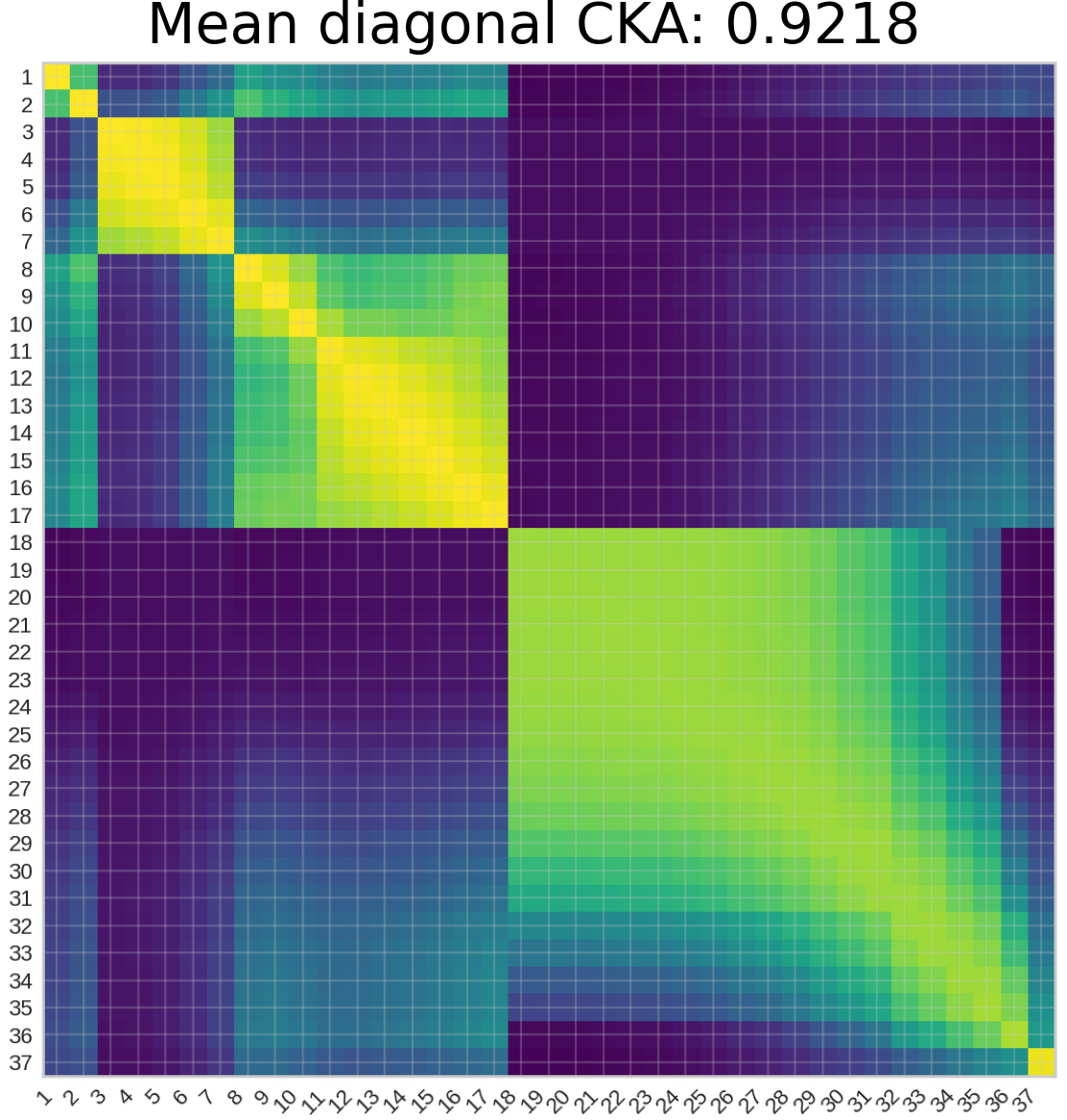} \\
            \centering \scriptsize \textbf{Polaris-4B-Preview}
        \end{minipage} &

        \rotlabel{DeepSeek-R1-Distill-Qwen-7B} &
        \begin{minipage}{0.20\textwidth} 
            \includegraphics[width=\linewidth]{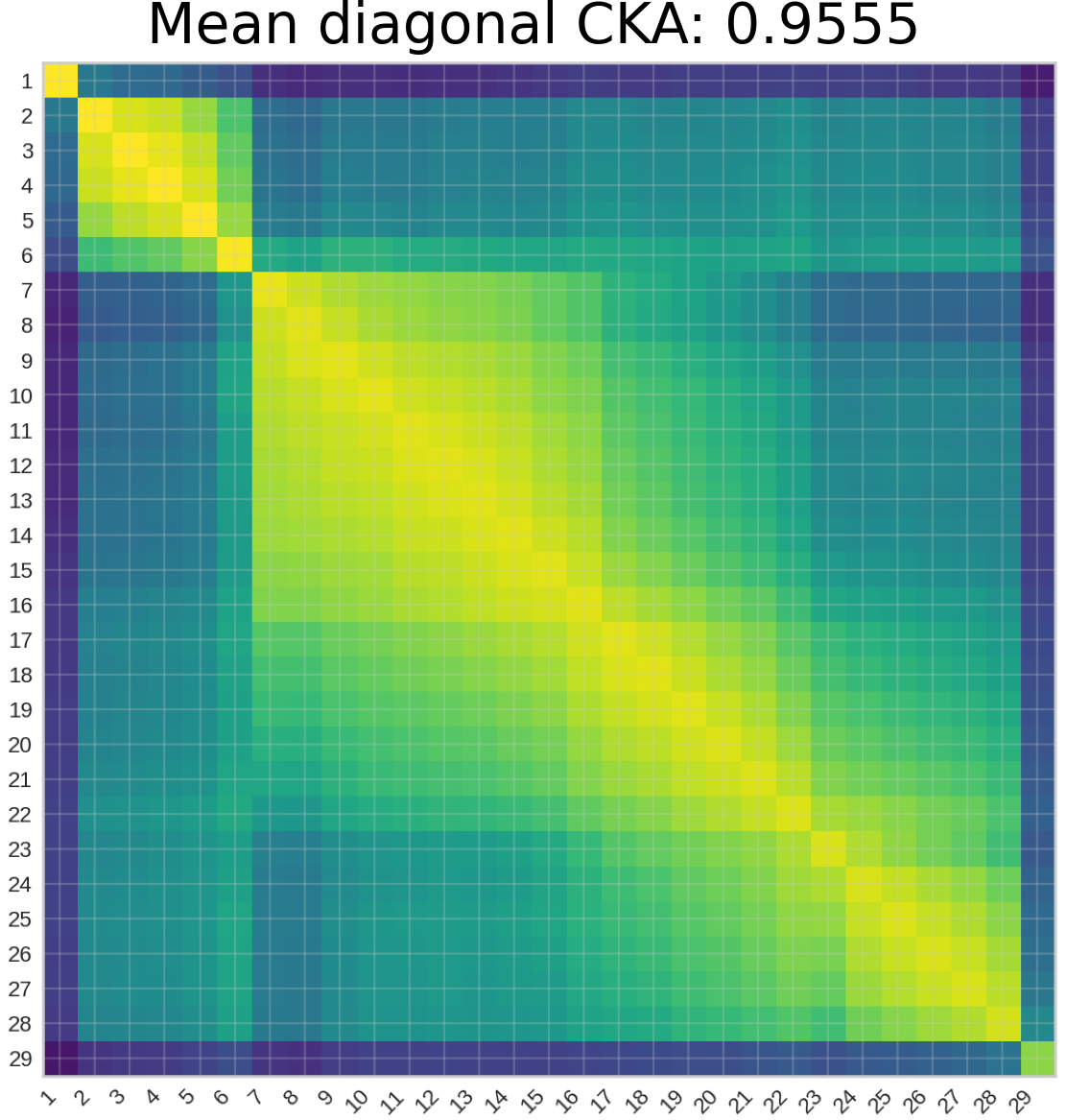} \\
            \centering \scriptsize \textbf{Polaris-7B-Preview}
        \end{minipage} &

        \rotlabel{Qwen2.5-7B} &
        \begin{minipage}{0.20\textwidth} 
            \includegraphics[width=\linewidth]{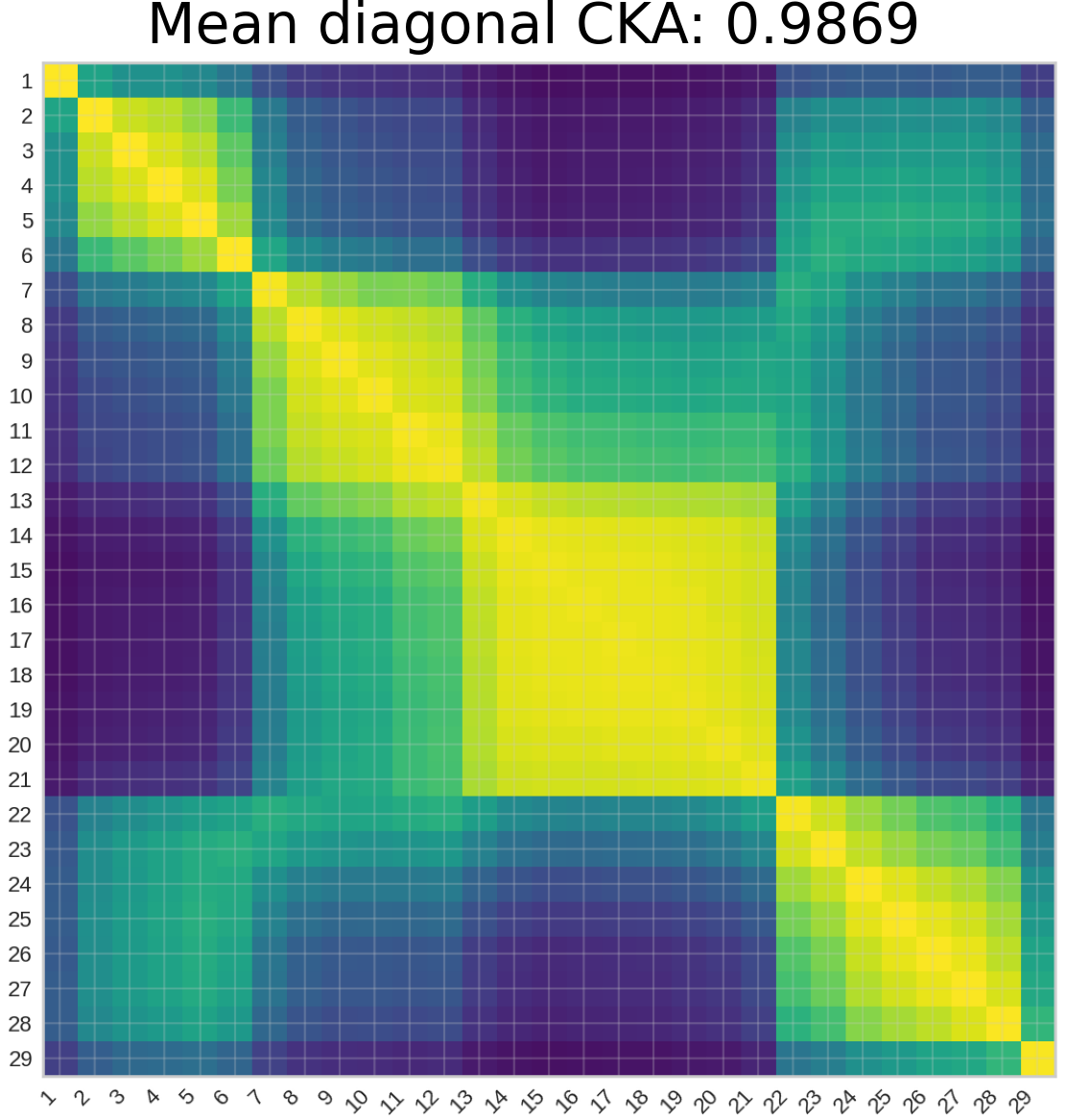} \\
            \centering \scriptsize \textbf{zero\_\_ppo\_\_think\_\_Qwen2.5-7B}
        \end{minipage} \\


        \rotlabel{Qwen2.5-1.5B} &
        \begin{minipage}{0.20\textwidth} 
            \includegraphics[width=\linewidth]{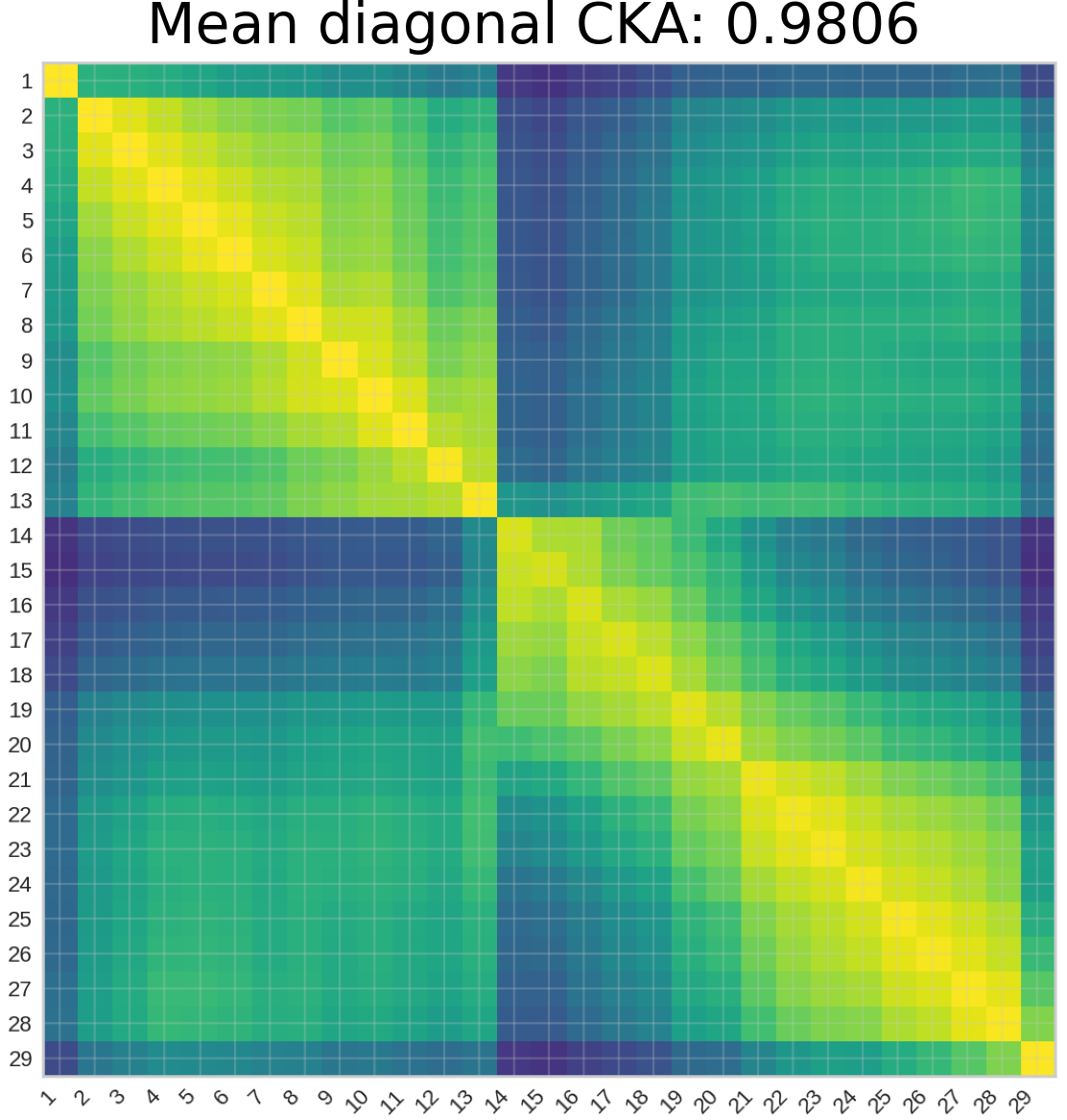} \\
            \centering \scriptsize \textbf{Qwen-2.5-1.5B-SimpleRL-Zoo}
        \end{minipage} &

        \rotlabel{Qwen2.5-0.5B} &
        \begin{minipage}{0.20\textwidth} 
            \includegraphics[width=\linewidth]{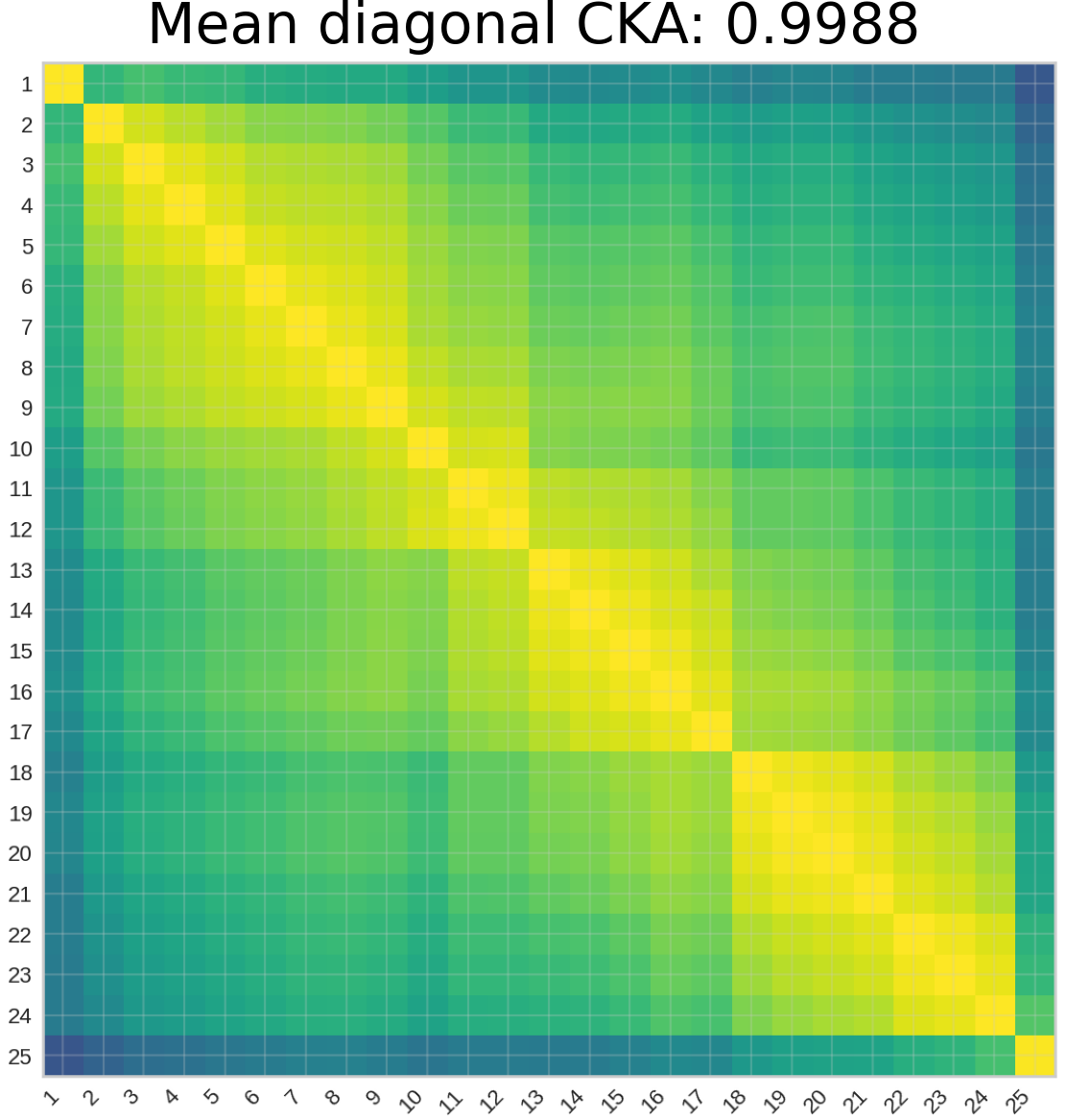} \\
            \centering \scriptsize \textbf{Qwen-2.5-0.5B-SimpleRL-Zoo}
        \end{minipage} &

        \rotlabel{DeepSeek-R1-Distill-Qwen-1.5B} &
        \begin{minipage}{0.20\textwidth} 
            \includegraphics[width=\linewidth]{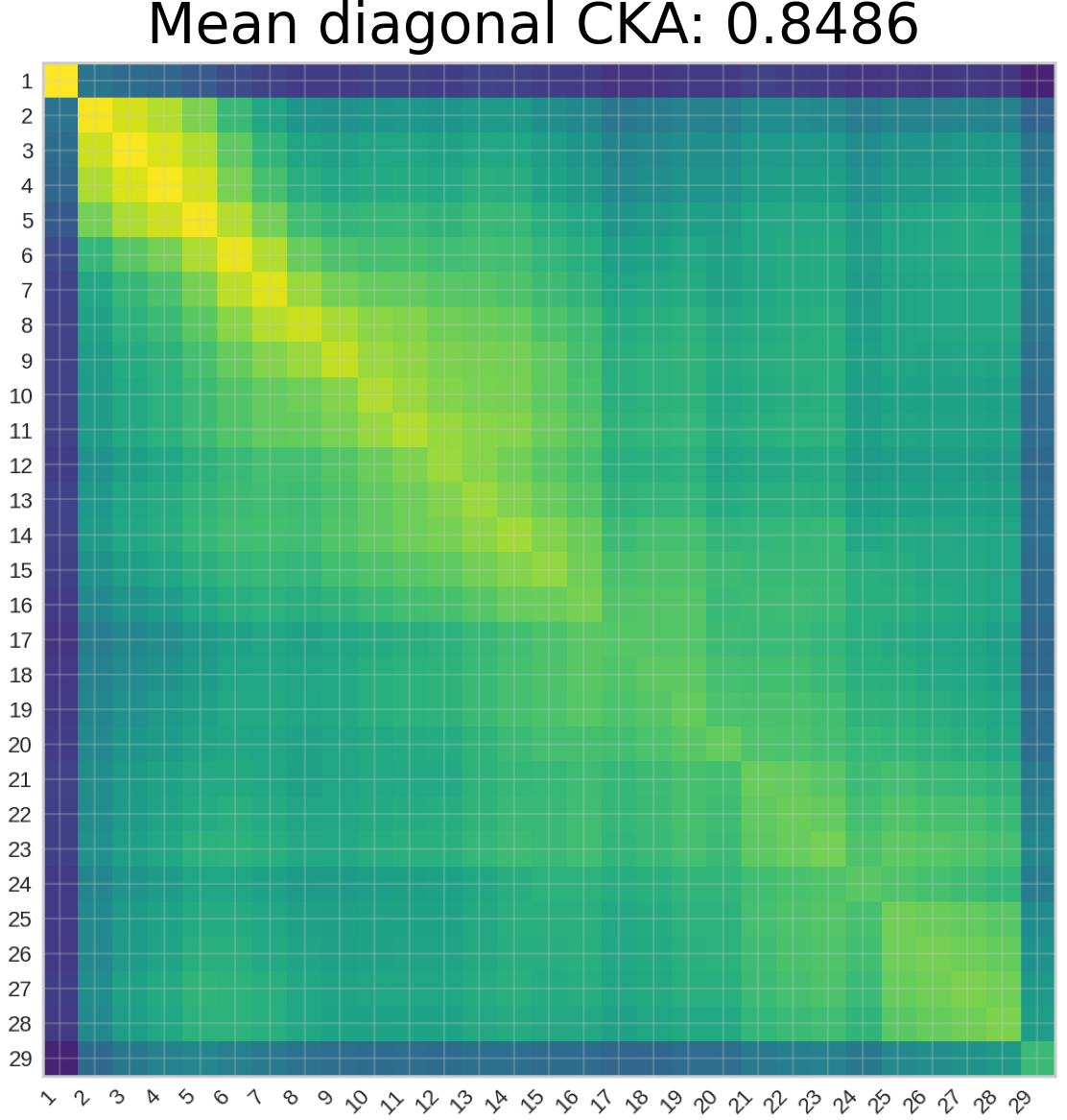} \\
            \centering \scriptsize \textbf{Nemotron-Research-Reasoning-Qwen-1.5B}
        \end{minipage} &

        \rotlabel{Qwen3-4B} &
        \begin{minipage}{0.20\textwidth} 
            \includegraphics[width=\linewidth]{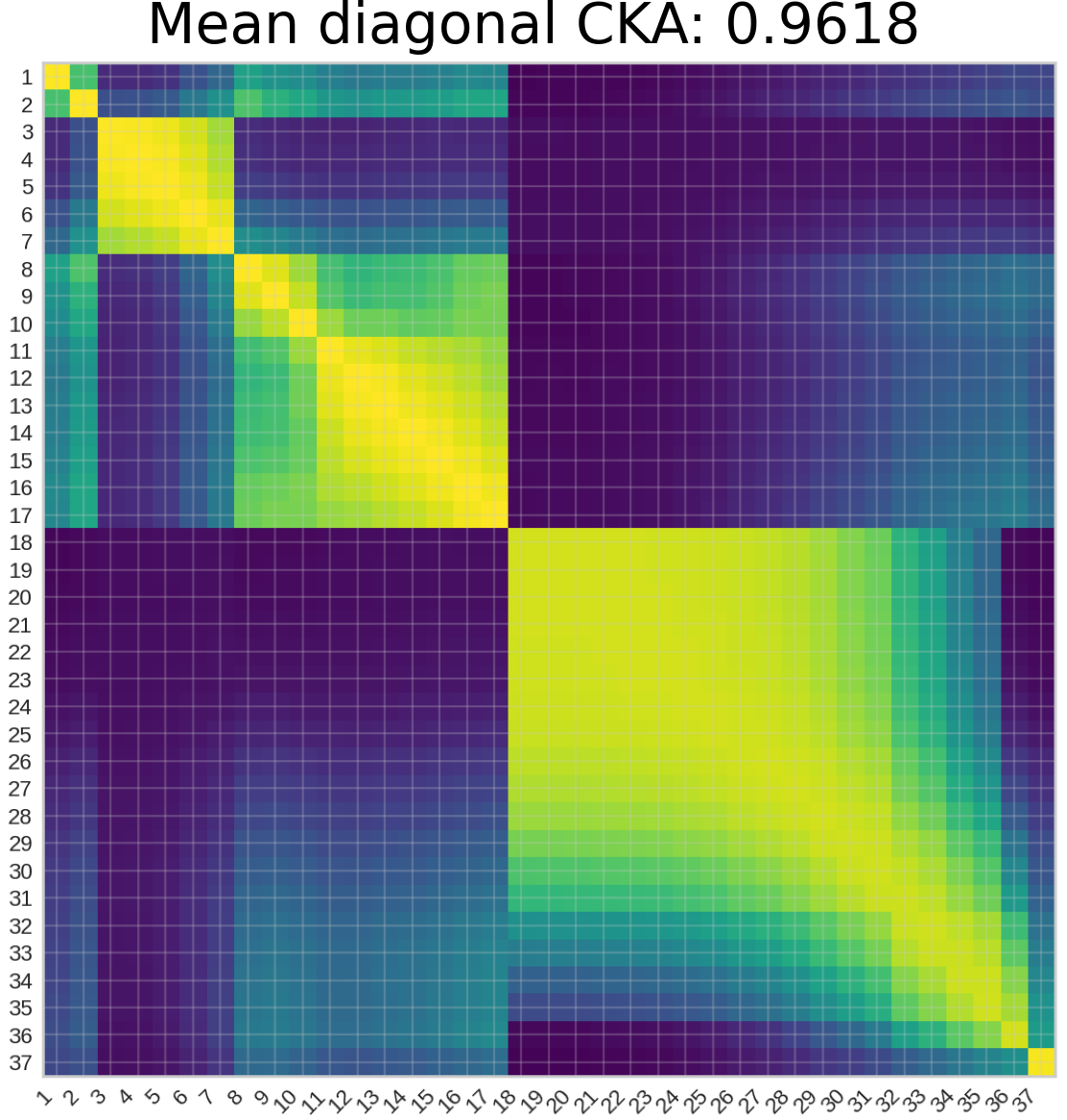} \\
            \centering \scriptsize \textbf{Qwen3-4B-PSR}
        \end{minipage} \\
    \end{tabular}%
    } 

    \vspace{0.5em}
    \rotatebox{-90}{\includegraphics[height=6cm, width=\linewidth, keepaspectratio]{figures/re_experiments/cka_evaluation/cka_heatmap.png}} 


    \caption{Additional Results on Linear CKA separated by dataset. The vertical axis and horizontal axis are Base Model Layer Index and Reasoning Model Layer Index, respectively. The \textbf{\textcolor{red}{red}} background indicates SFT-tuned pairs.}
    \label{fig: appendix-corr-base-vs-reasoning}
\end{figure*}

\begin{figure*}[p]
    \centering
    {\large \textbf{Linear CKA}} \par\smallskip
    {\large \textbf{Base Embedding Models $\mathcal{M}_{base}^{Emb}$ vs. Reasoning Embedding Models $\mathcal{M}_{reason}^{Emb}$}} \par\medskip

    \setlength{\tabcolsep}{1pt}

    \sectionbox{Dataset: CoT Datset}
    \vspace{1ex} 

    \makebox[\textwidth][c]{%
    \begin{tabular}{
        c @{\hspace{1pt}} c  @{\hspace{0.5em}}
        c @{\hspace{1pt}} c  @{\hspace{0.5em}}
        c @{\hspace{1pt}} c
    }
        \rotlabel{Qwen2.5-Math-1.5B-\textit{Emb}} &
        \setlength{\fboxsep}{3pt}%
        \colorbox{red!20}{
            \begin{minipage}{0.18\textwidth}
                \centering
                \includegraphics[width=\linewidth]{figures/re_experiments/cka_evaluation/Qwen2.5-Math-1.5B-Reasoning-Embedding_vs_DeepSeek-R1-Distill-Qwen-1.5B-Reasoning-Embedding/qwen3_32b_LiveMathbench_evaluated/cka_heatmap_Qwen2.5-Math-1.5B-Reasoning-Embedding_vs_DeepSeek-R1-Distill-Qwen-1.5B-Reasoning-Embedding_processed.png}
                \par\vspace{2pt} 
                \scriptsize \textbf{DeepSeek-R1-Distill-Qwen-1.5B-\textit{Emb}}
            \end{minipage}%
        } &

        \rotlabel{Qwen3-0.6B-Base-\textit{Emb}} &
        \setlength{\fboxsep}{3pt}%
        \colorbox{red!20}{
            \begin{minipage}{0.18\textwidth}
                \centering
                \includegraphics[width=\linewidth]{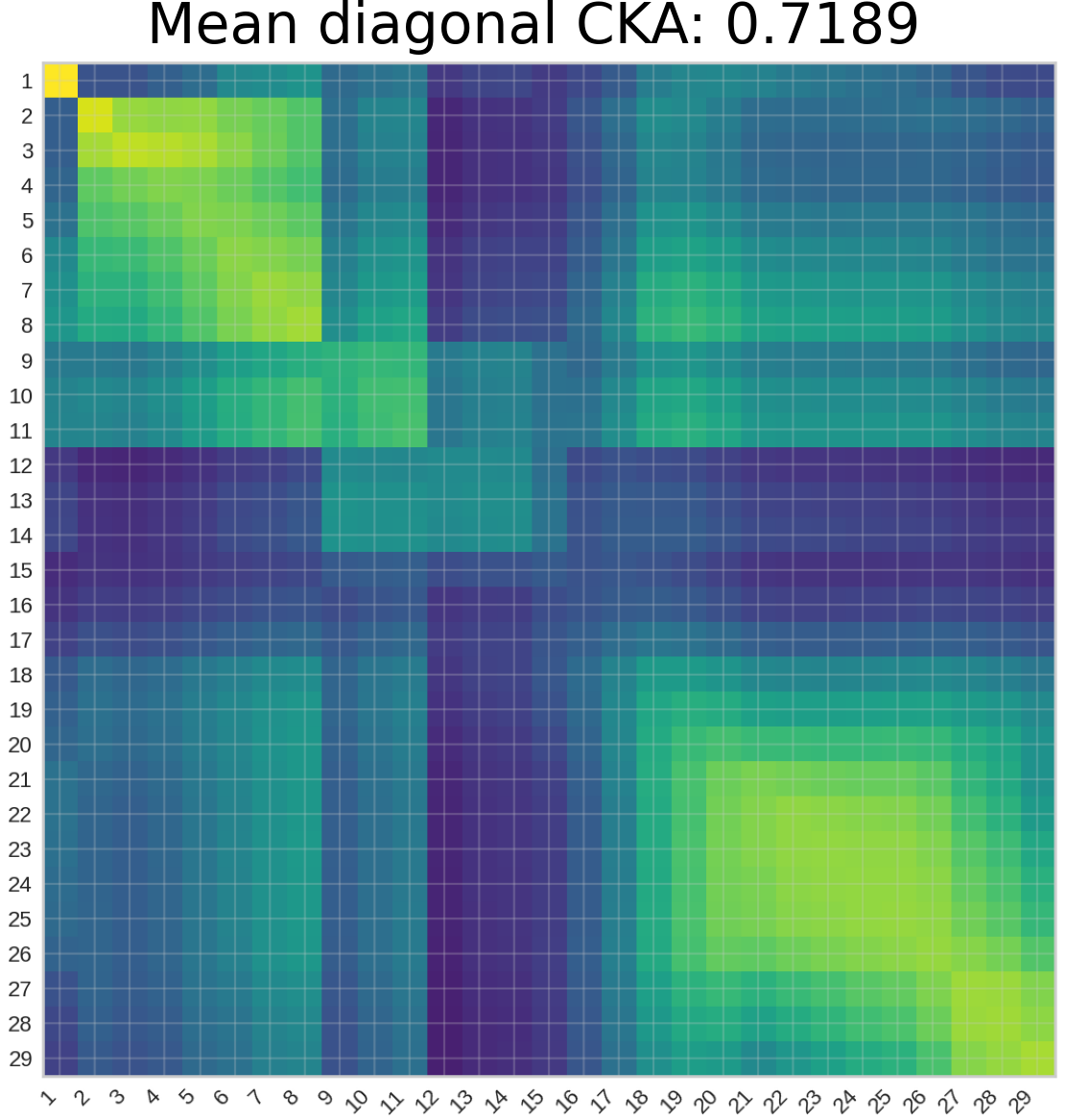}
                \par\vspace{2pt} 
                \scriptsize \textbf{Qwen3-0.6B-\textit{Emb}}
            \end{minipage}%
        } &
    
        \rotlabel{Qwen2.5-1.5B-\textit{Emb}} &
        \begin{minipage}{0.18\textwidth} 
            \includegraphics[width=\linewidth]{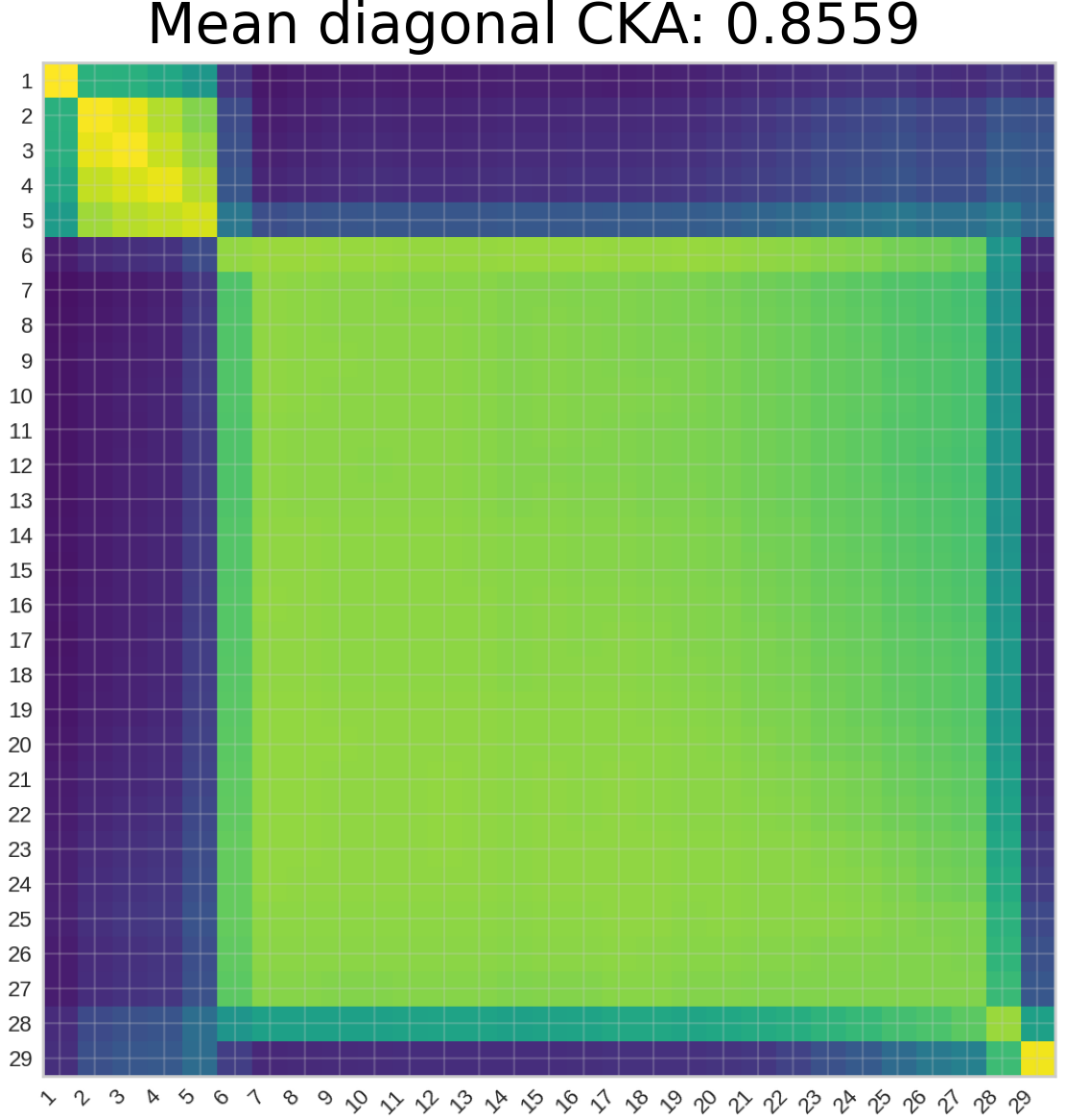} \\
            \centering \scriptsize \textbf{Qwen-2.5-1.5B-SimpleRL-Zoo-\textit{Emb}}
        \end{minipage} \\


        \rotlabel{Qwen2.5-0.5B-\textit{Emb}} &
        \begin{minipage}{0.18\textwidth} 
            \includegraphics[width=\linewidth]{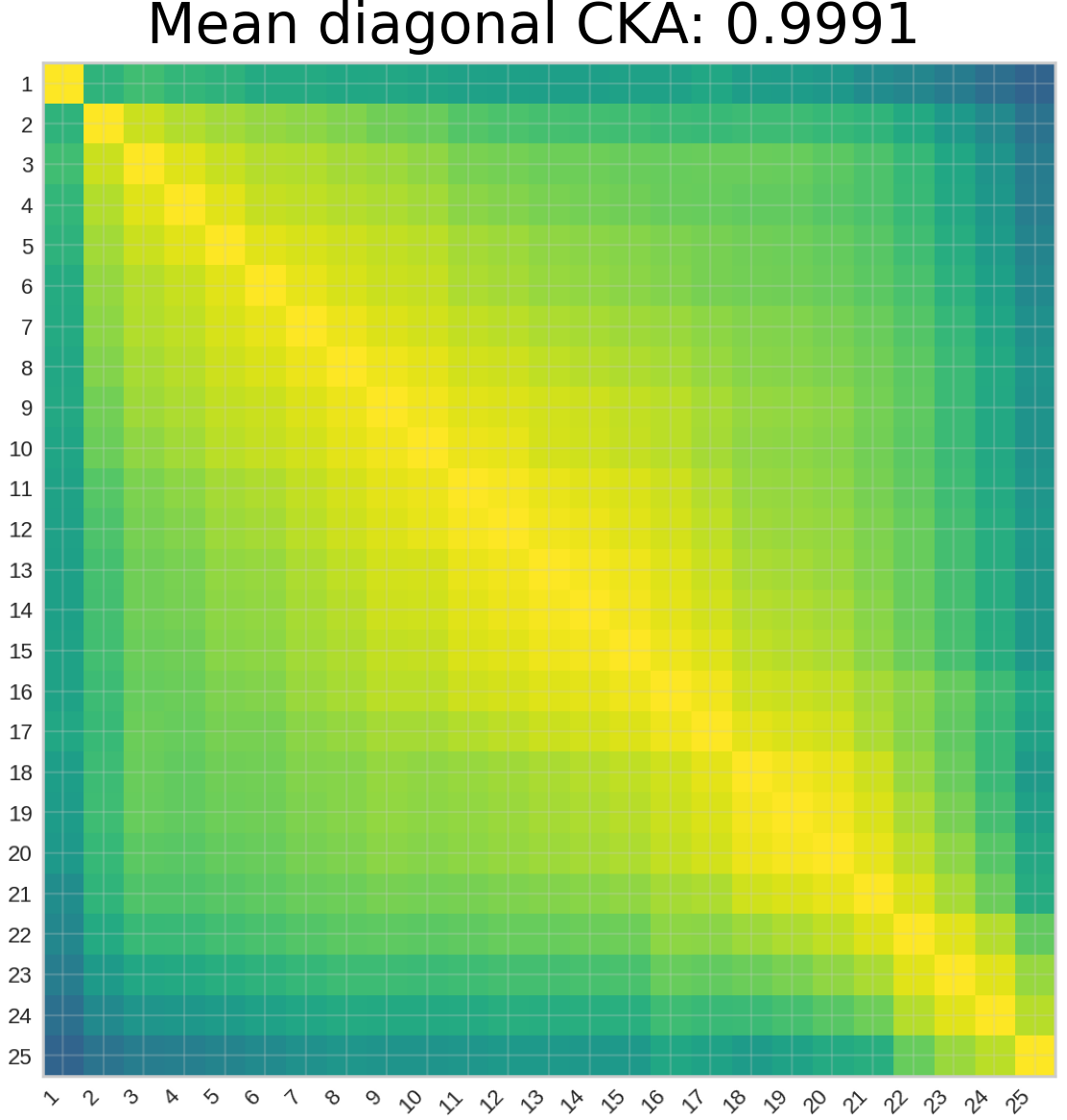} \\
            \centering \scriptsize \textbf{Qwen-2.5-0.5B-SimpleRL-Zoo-\textit{Emb}}
        \end{minipage} &

        \rotlabel{DeepSeek-R1-Distill-Qwen-1.5B-\textit{Emb}} &
        \begin{minipage}{0.18\textwidth} 
            \includegraphics[width=\linewidth]{figures/re_experiments/cka_evaluation/DeepSeek-R1-Distill-Qwen-1.5B-Reasoning-Embedding_vs_Nemotron-Research-Reasoning-Qwen-1.5B-Reasoning-Embedding/qwen3_32b_LiveMathbench_evaluated/cka_heatmap_DeepSeek-R1-Distill-Qwen-1.5B-Reasoning-Embedding_vs_Nemotron-Research-Reasoning-Qwen-1.5B-Reasoning-Embedding_processed.png} \\
            \centering \scriptsize \textbf{Nemotron-Research-Reasoning-Qwen-1.5B-\textit{Emb}}
        \end{minipage} &

        \rotlabel{Qwen3-4B-\textit{Emb}} &
        \begin{minipage}{0.18\textwidth} 
            \includegraphics[width=\linewidth]{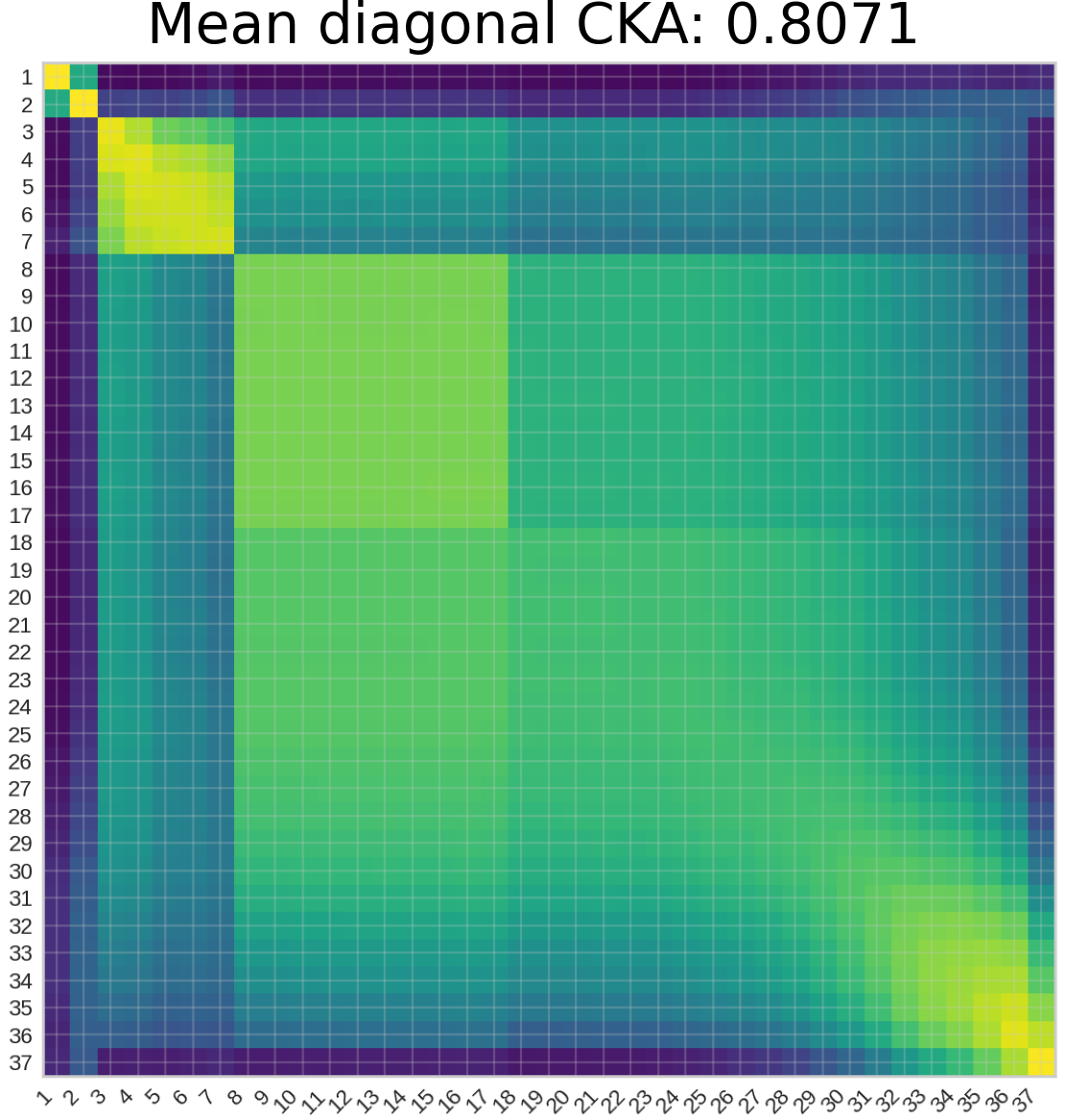} \\
            \centering \scriptsize \textbf{Qwen3-4B-PSR-\textit{Emb}}
        \end{minipage} \\
    \end{tabular}%
    } 

    \vspace{0.5em}
    \rotatebox{-90}{\includegraphics[height=6cm, width=\linewidth, keepaspectratio]{figures/re_experiments/cka_evaluation/cka_heatmap.png}} 
    
    \vspace{1.0em}

    \sectionbox{Dataset: MMLU-Pro}
    \vspace{1ex} 

    \makebox[\textwidth][c]{%
    \begin{tabular}{
        c @{\hspace{1pt}} c  @{\hspace{0.5em}}
        c @{\hspace{1pt}} c  @{\hspace{0.5em}}
        c @{\hspace{1pt}} c
    }
        \rotlabel{Qwen2.5-Math-1.5B-\textit{Emb}} &
        \setlength{\fboxsep}{3pt}%
        \colorbox{red!20}{
            \begin{minipage}{0.18\textwidth}
                \centering
                \includegraphics[width=\linewidth]{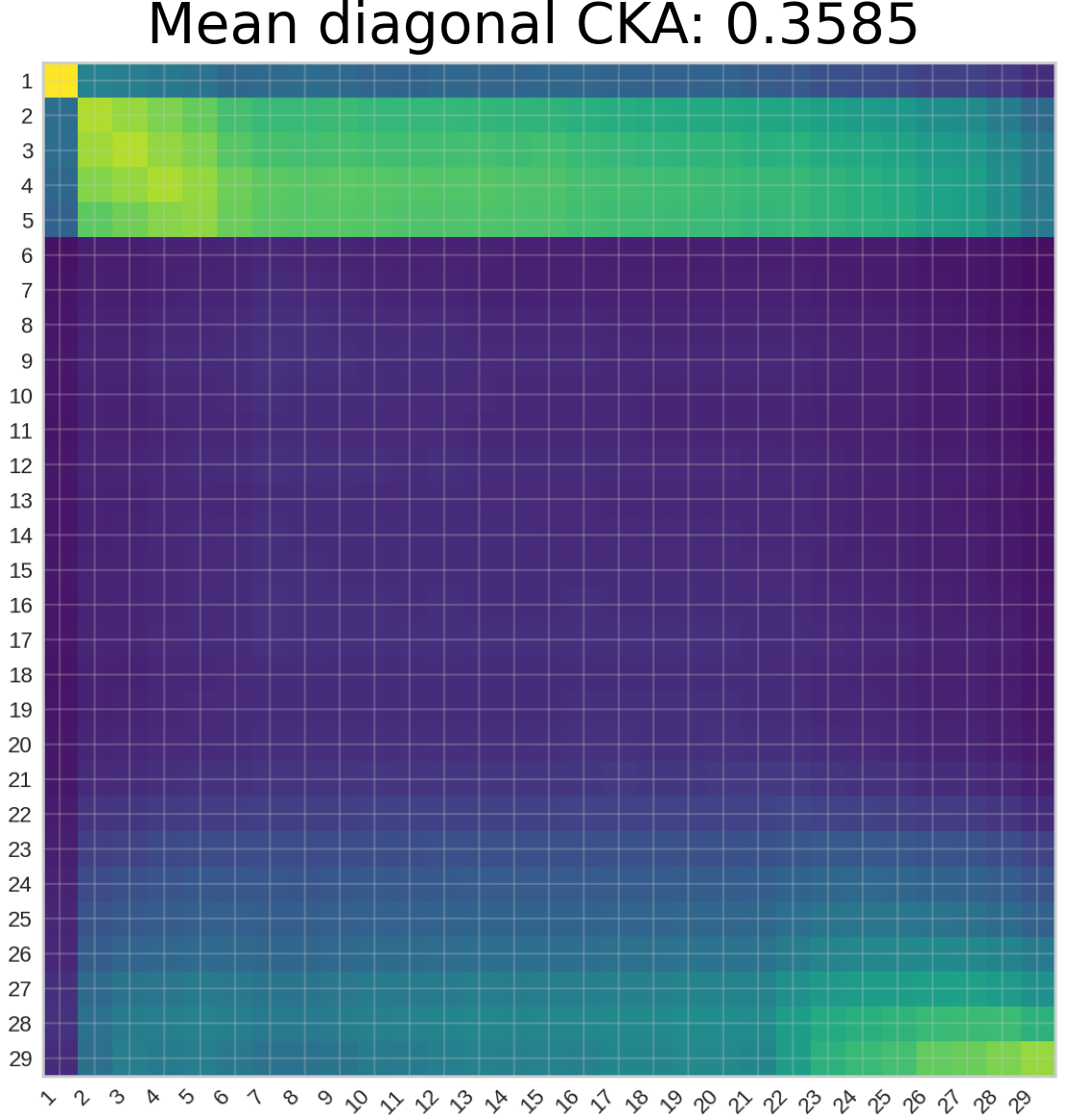}
                \par\vspace{2pt} 
                \scriptsize \textbf{DeepSeek-R1-Distill-Qwen-1.5B-\textit{Emb}}
            \end{minipage}%
        } &

        \rotlabel{Qwen3-0.6B-Base-\textit{Emb}} &
        \setlength{\fboxsep}{3pt}%
        \colorbox{red!20}{
            \begin{minipage}{0.18\textwidth}
                \centering
                \includegraphics[width=\linewidth]{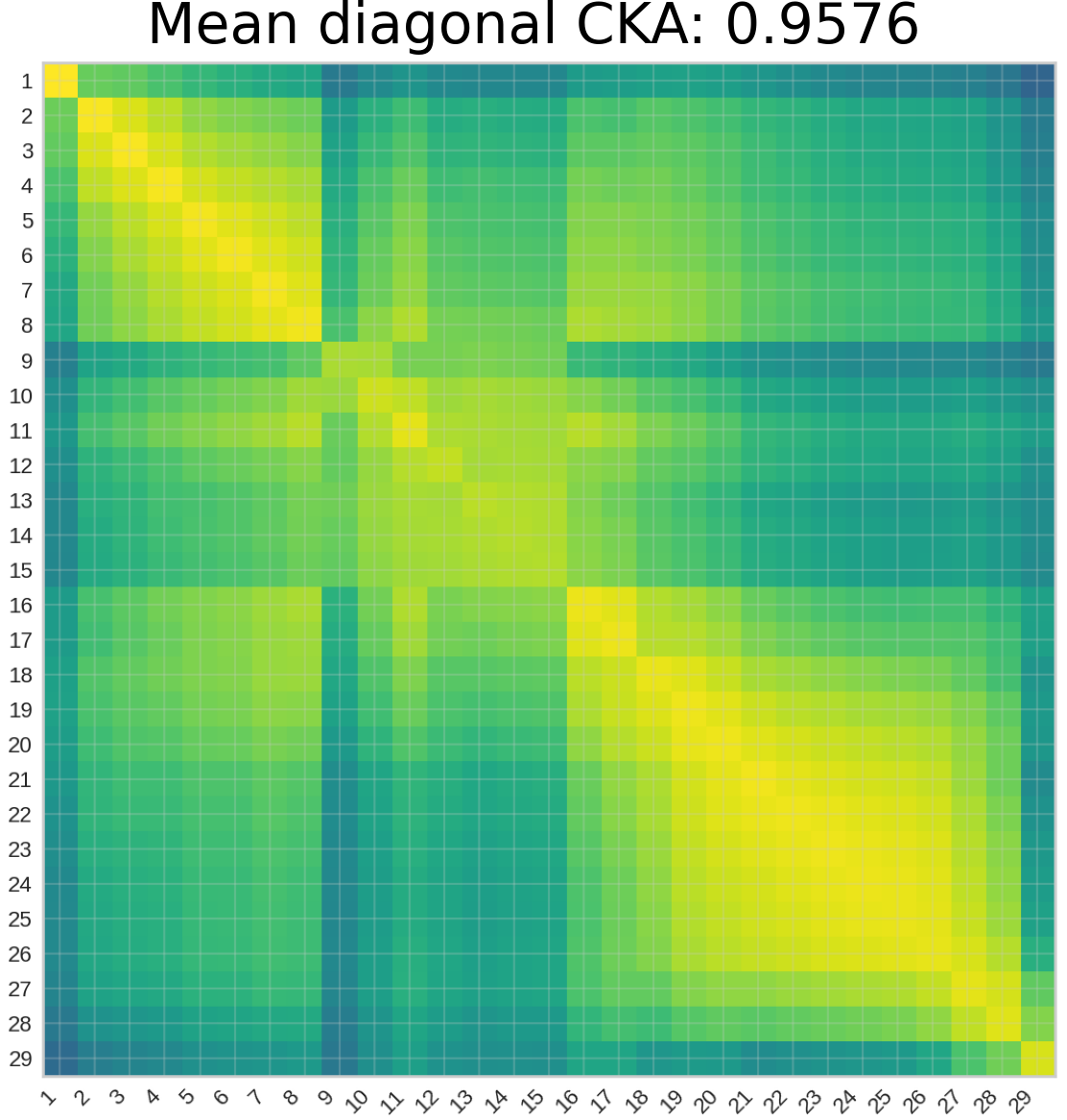}
                \par\vspace{2pt} 
                \scriptsize \textbf{Qwen3-0.6B-\textit{Emb}}
            \end{minipage}%
        } &

        \rotlabel{Qwen2.5-1.5B-\textit{Emb}} &
        \begin{minipage}{0.18\textwidth} 
            \includegraphics[width=\linewidth]{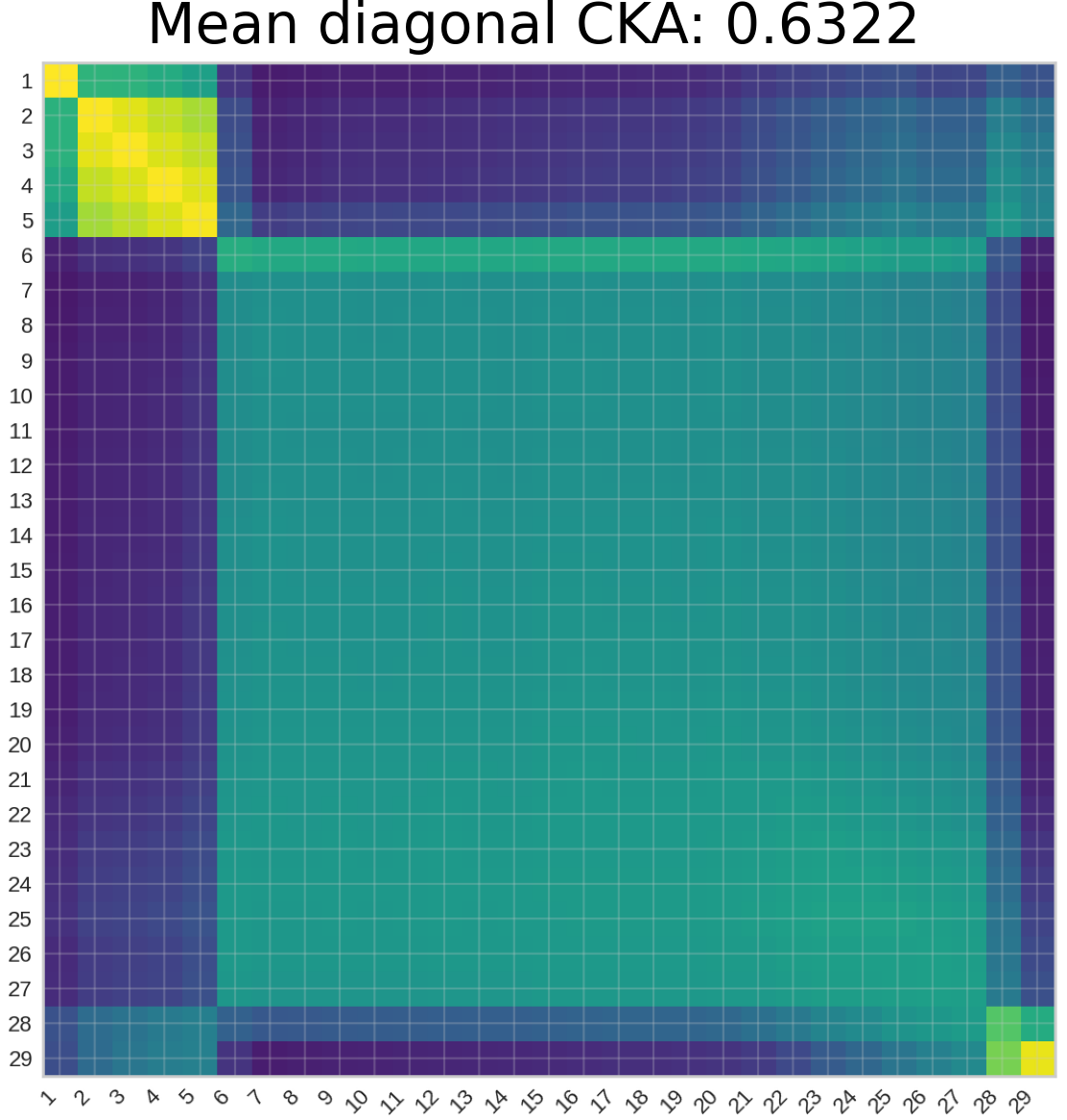} \\
            \centering \scriptsize \textbf{Qwen-2.5-1.5B-SimpleRL-Zoo-\textit{Emb}}
        \end{minipage} \\


        \rotlabel{Qwen2.5-0.5B-\textit{Emb}} &
        \begin{minipage}{0.18\textwidth} 
            \includegraphics[width=\linewidth]{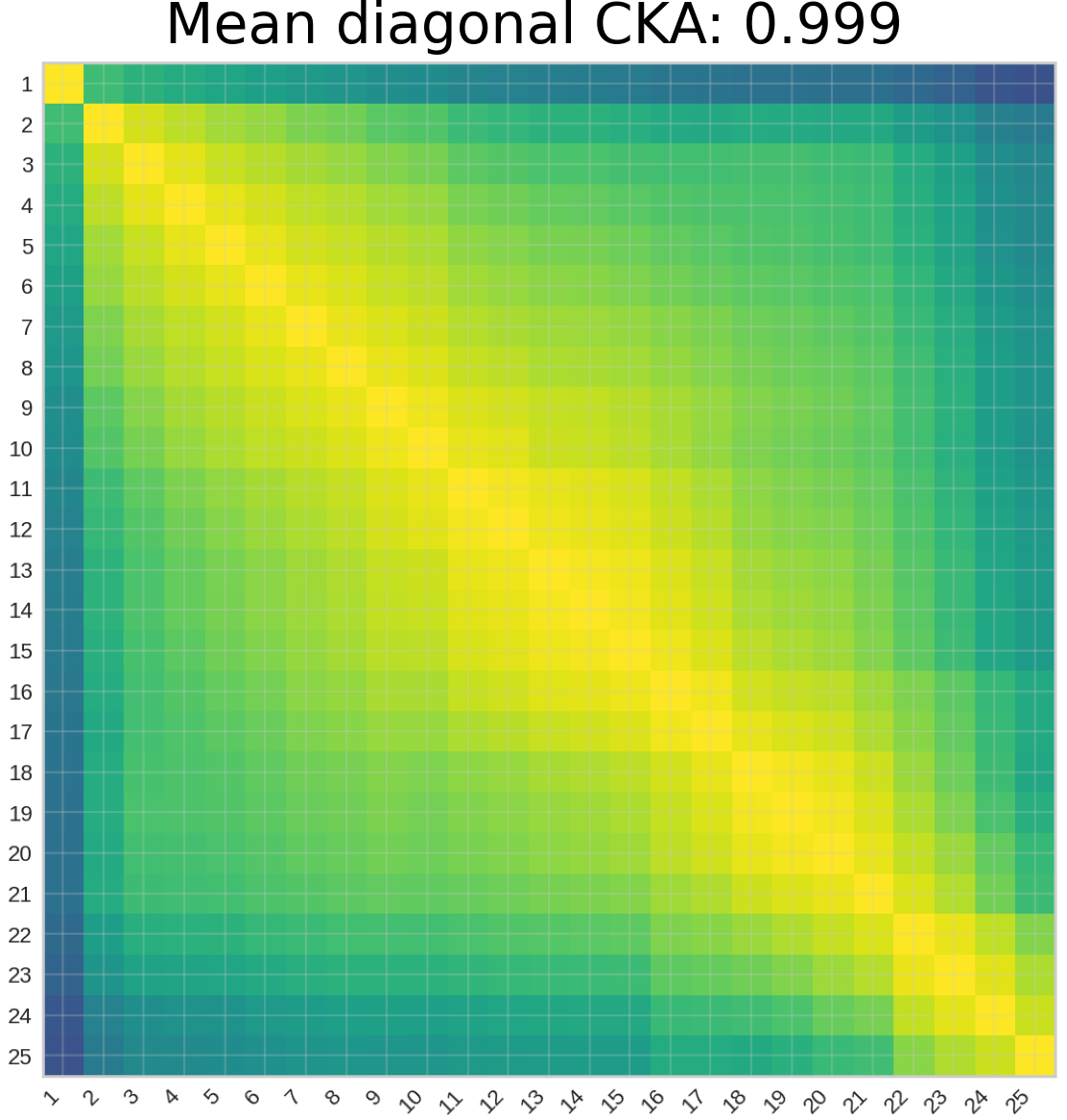} \\
            \centering \scriptsize \textbf{Qwen-2.5-0.5B-SimpleRL-Zoo-\textit{Emb}}
        \end{minipage} &

        \rotlabel{DeepSeek-R1-Distill-Qwen-1.5B-\textit{Emb}} &
        \begin{minipage}{0.18\textwidth} 
            \includegraphics[width=\linewidth]{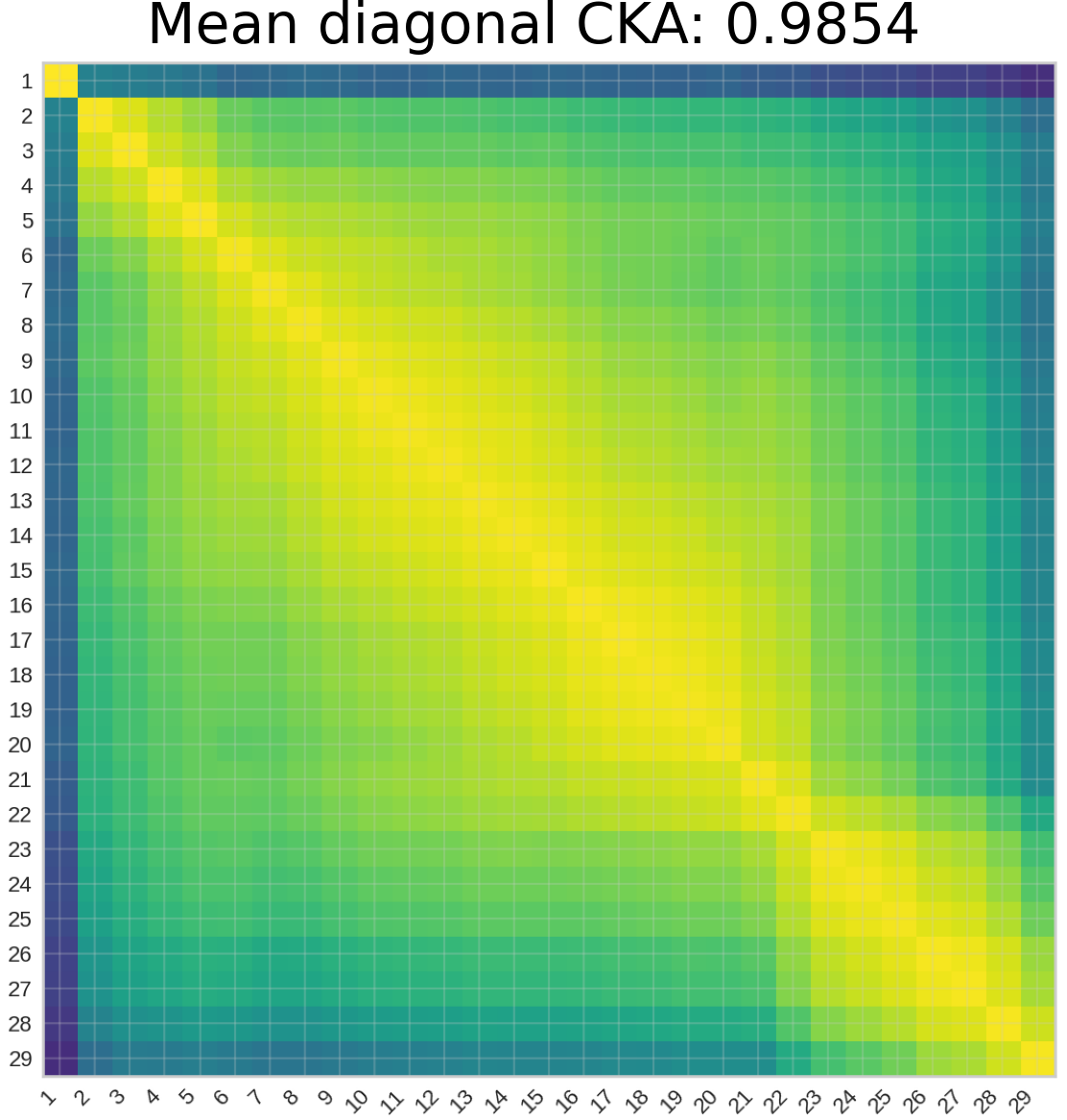} \\
            \centering \scriptsize \textbf{Nemotron-Research-Reasoning-Qwen-1.5B-\textit{Emb}}
        \end{minipage} &

        \rotlabel{Qwen3-4B-\textit{Emb}} &
        \begin{minipage}{0.18\textwidth} 
            \includegraphics[width=\linewidth]{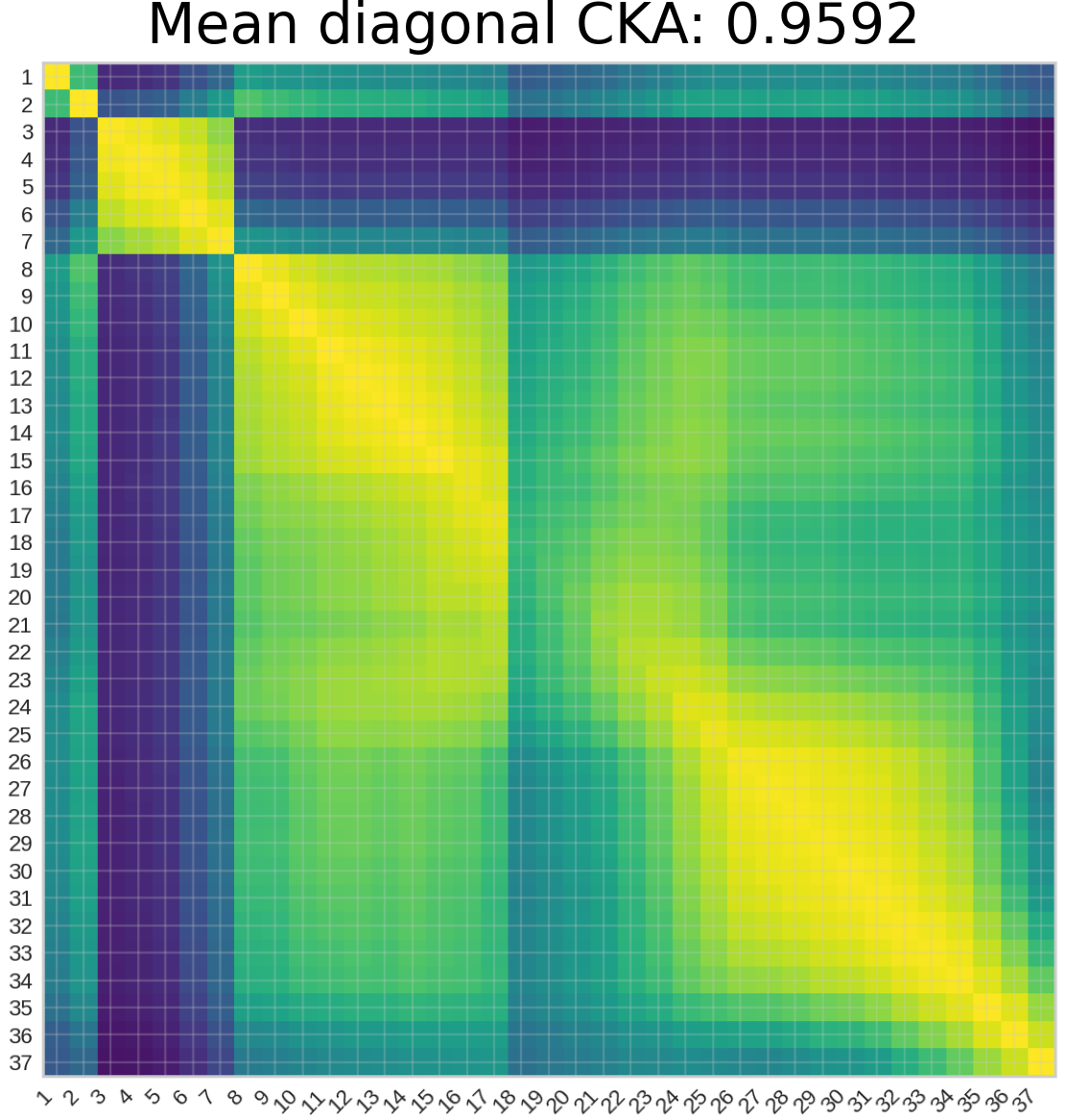} \\
            \centering \scriptsize \textbf{Qwen3-4B-PSR-\textit{Emb}}
        \end{minipage} \\
    \end{tabular}%
    } 

    \vspace{0.5em}
    \rotatebox{-90}{\includegraphics[height=6cm, width=\linewidth, keepaspectratio]{figures/re_experiments/cka_evaluation/cka_heatmap.png}} 


    \caption{Additional Results on Linear CKA separated by dataset. The vertical axis and horizontal axis are Base Model Layer Index and Reasoning Model Layer Index, respectively. The \textbf{\textcolor{red}{red}} background indicates their backbone LLMs are SFT-tuned pairs.}
    \label{fig: appendix-corr-base-embedding-vs-reasoning-embedding}
\end{figure*}


\begin{figure*}[p]
    \centering
    {\large \textbf{$k$-NN Overlap}} \par\smallskip
    {\large \textbf{Base Models $\mathcal{M}_{base}$ vs. Reasoning Models $\mathcal{M}_{reason}$}} \par\medskip

    \setlength{\tabcolsep}{1pt}

    \sectionbox{Dataset: CoT Datset}
    \vspace{1ex} 

    \makebox[\textwidth][c]{%
    \begin{tabular}{
        c @{\hspace{1pt}} c  @{\hspace{0.5em}}
        c @{\hspace{1pt}} c  @{\hspace{0.5em}}
        c @{\hspace{1pt}} c  @{\hspace{0.5em}}
        c @{\hspace{1pt}} c
    }
        \rotlabel{Qwen2.5-Math-1.5B} &
        \setlength{\fboxsep}{3pt}%
        \colorbox{red!20}{
            \begin{minipage}{0.22\textwidth}
                \centering
                \includegraphics[width=\linewidth]{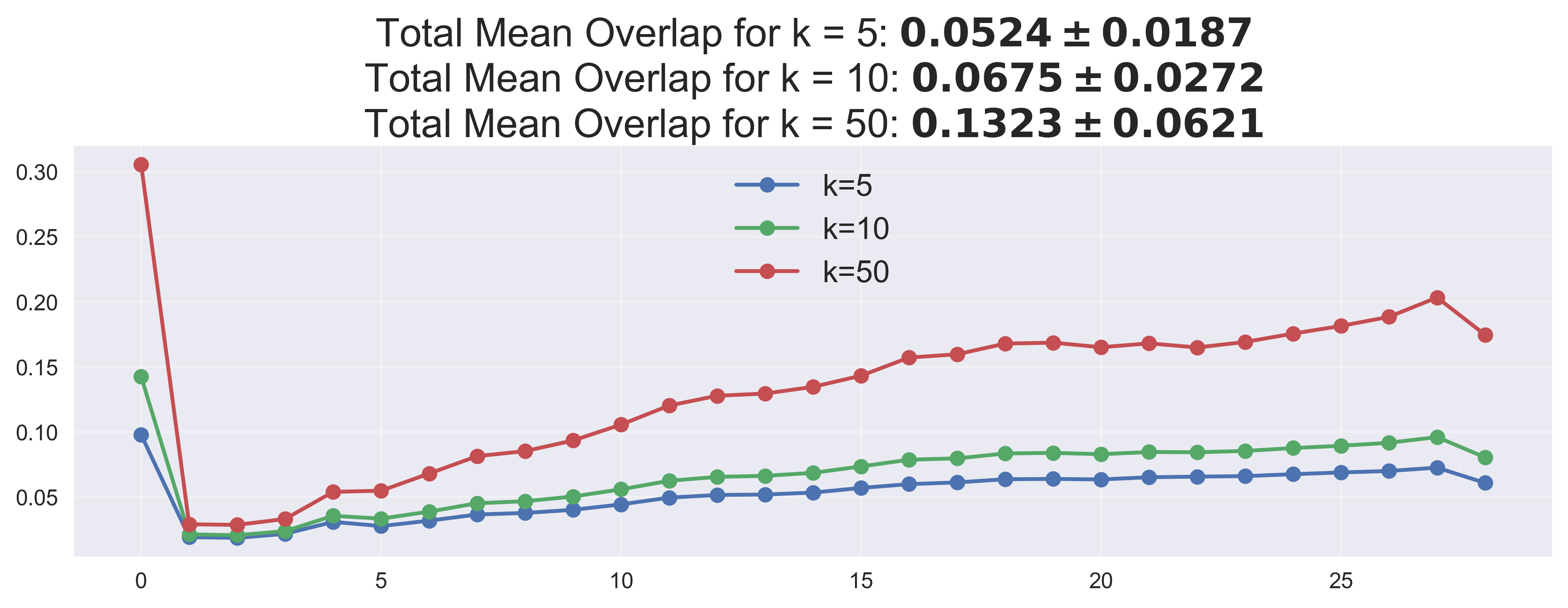}
                \par\vspace{2pt} 
                \scriptsize \textbf{DeepSeek-R1-Distill-Qwen-1.5B}
            \end{minipage}%
        } &

        \rotlabel{Qwen3-4B} &
        \begin{minipage}{0.22\textwidth} 
            \includegraphics[width=\linewidth]{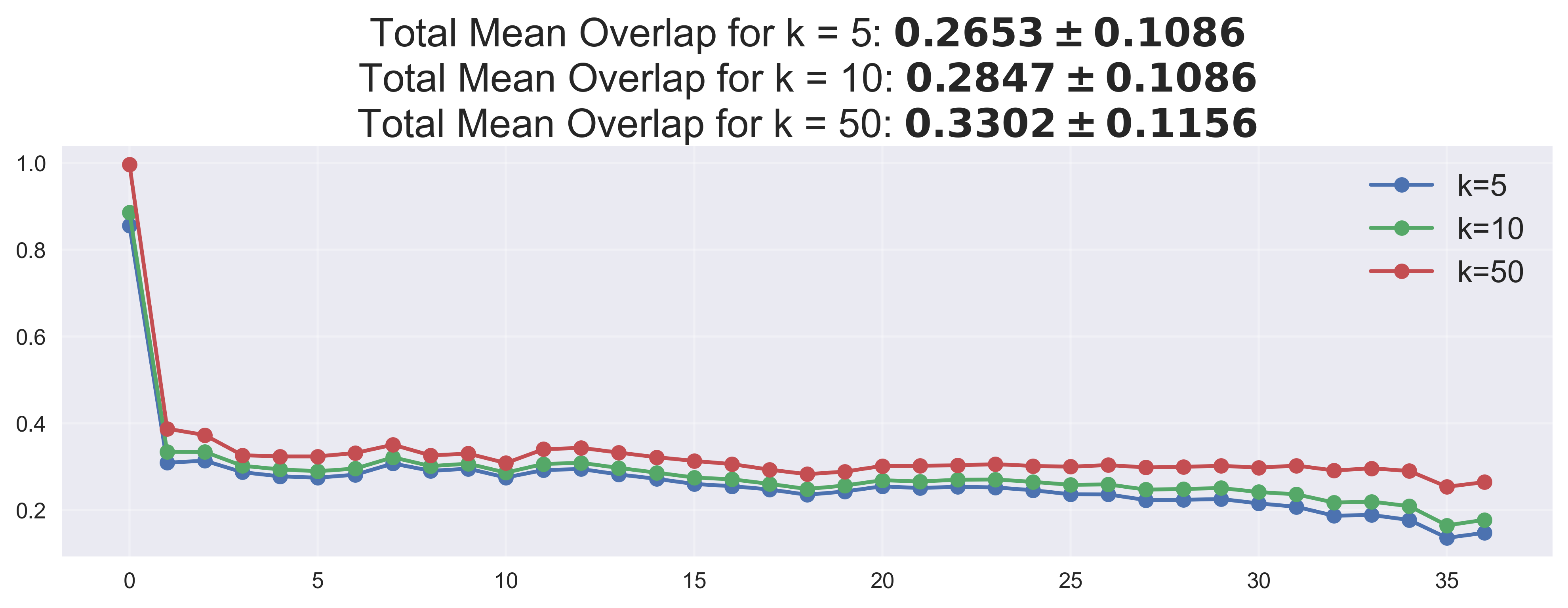} \\
            \centering \scriptsize \textbf{Polaris-4B-Preview}
        \end{minipage} &

        \rotlabel{DeepSeek-R1-Distill-Qwen-7B} &
        \begin{minipage}{0.22\textwidth} 
            \includegraphics[width=\linewidth]{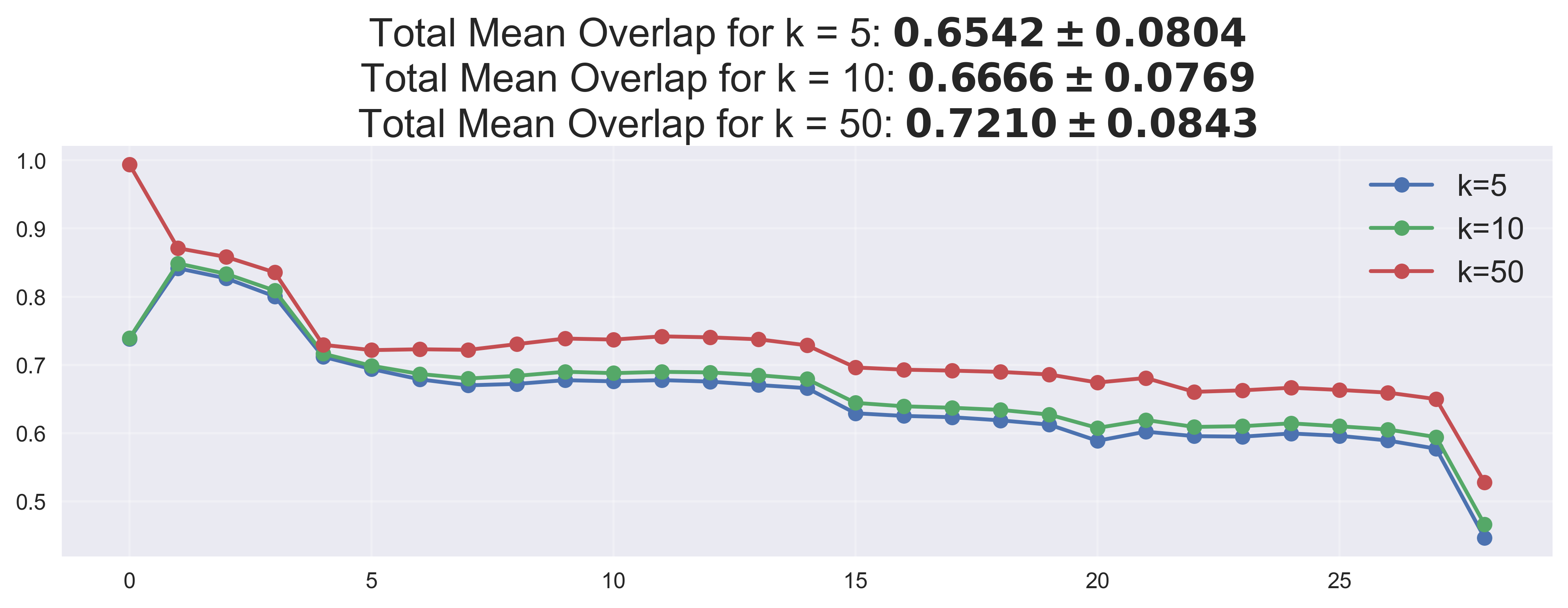} \\
            \centering \scriptsize \textbf{Polaris-7B-Preview}
        \end{minipage} &

        \rotlabel{Qwen2.5-7B} &
        \begin{minipage}{0.22\textwidth} 
            \includegraphics[width=\linewidth]{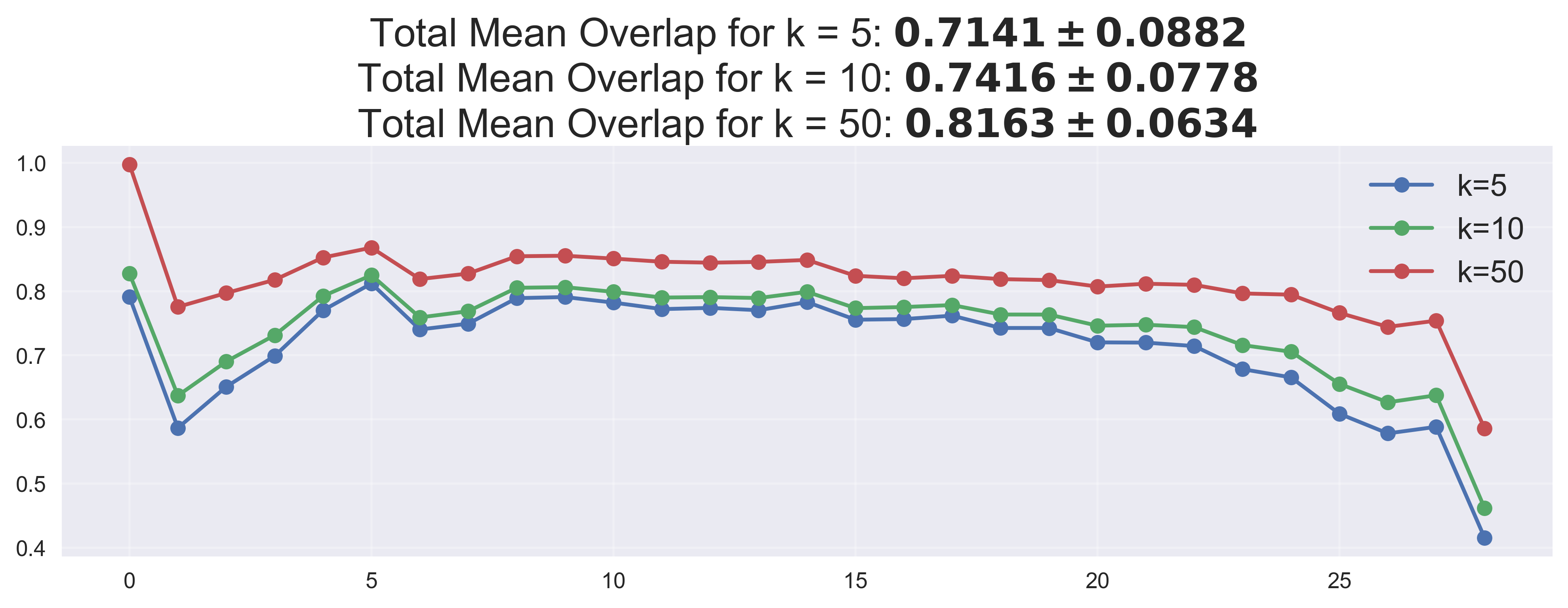} \\
            \centering \scriptsize \textbf{zero\_\_ppo\_\_think\_\_Qwen2.5-7B}
        \end{minipage} \\

        \multicolumn{8}{c}{\vspace{0.1em}} \\ 

        \rotlabel{Qwen2.5-1.5B} &
        \begin{minipage}{0.22\textwidth} 
            \includegraphics[width=\linewidth]{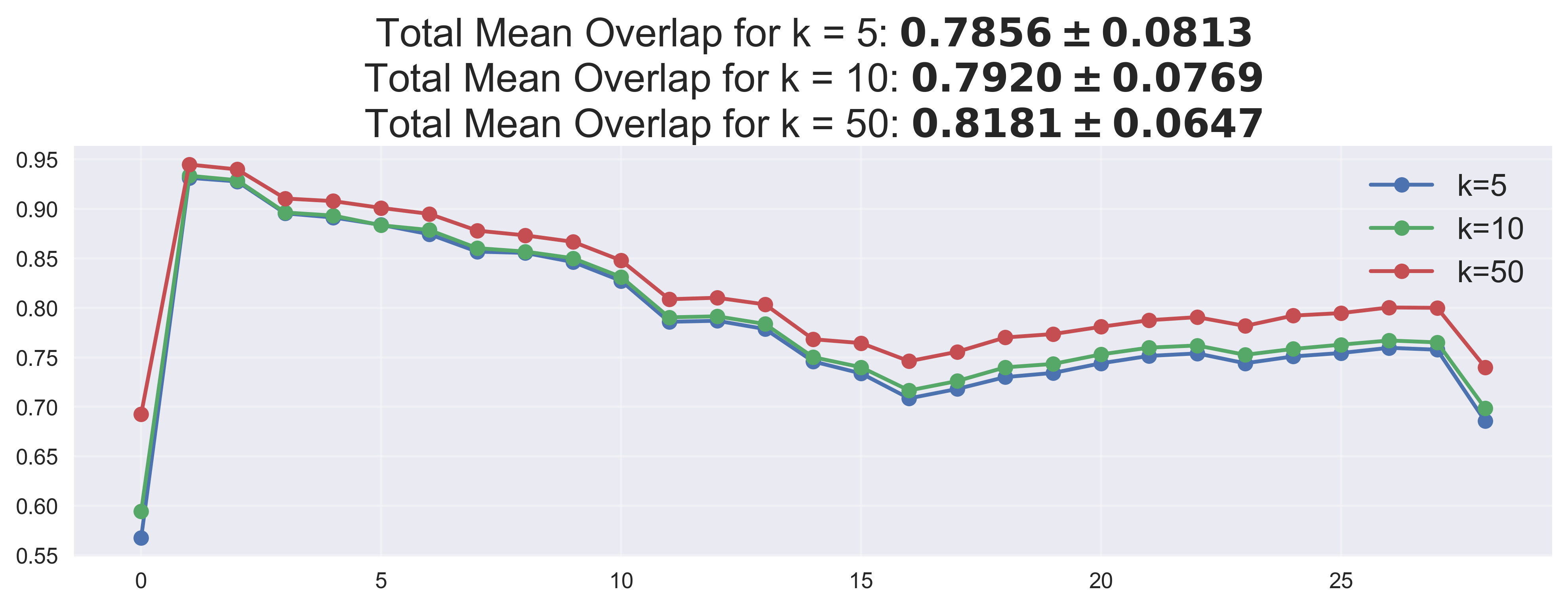} \\
            \centering \scriptsize \textbf{Qwen-2.5-1.5B-SimpleRL-Zoo}
        \end{minipage} &

        \rotlabel{Qwen2.5-0.5B} &
        \begin{minipage}{0.22\textwidth} 
            \includegraphics[width=\linewidth]{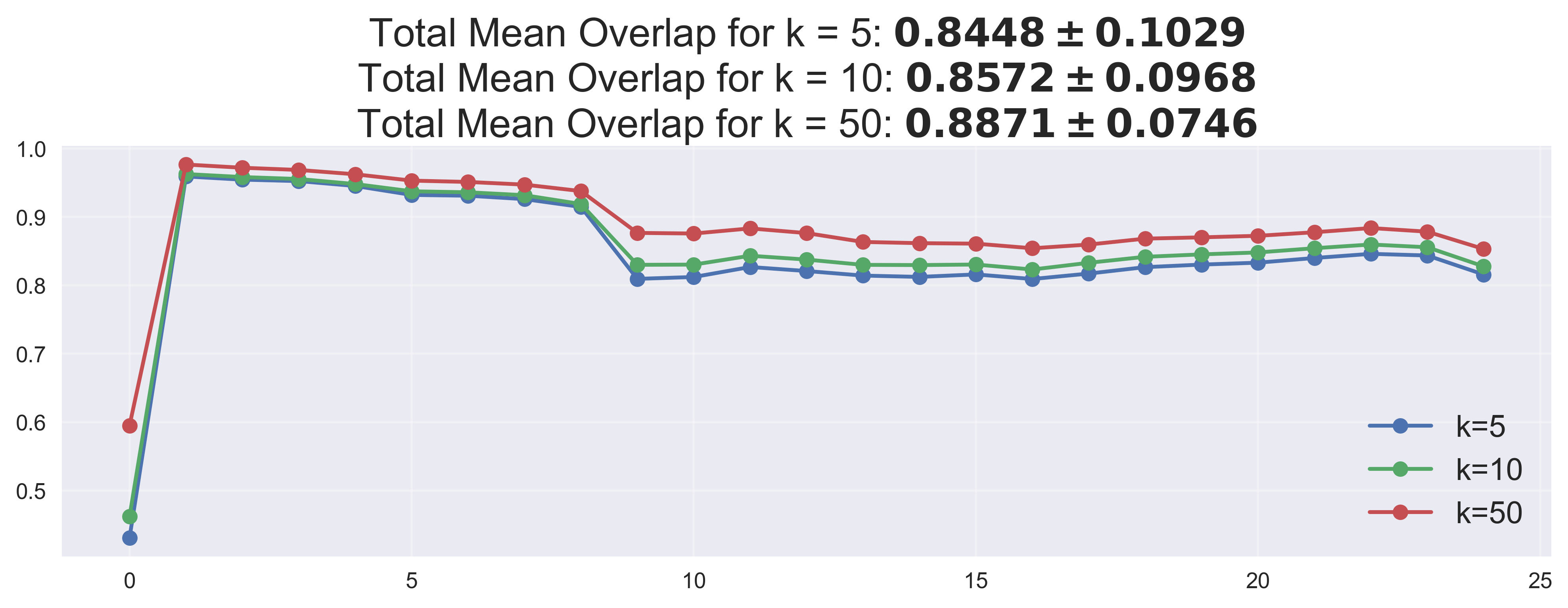} \\
            \centering \scriptsize \textbf{Qwen-2.5-0.5B-SimpleRL-Zoo}
        \end{minipage} &

        \rotlabel{DeepSeek-R1-Distill-Qwen-1.5B} &
        \begin{minipage}{0.22\textwidth} 
            \includegraphics[width=\linewidth]{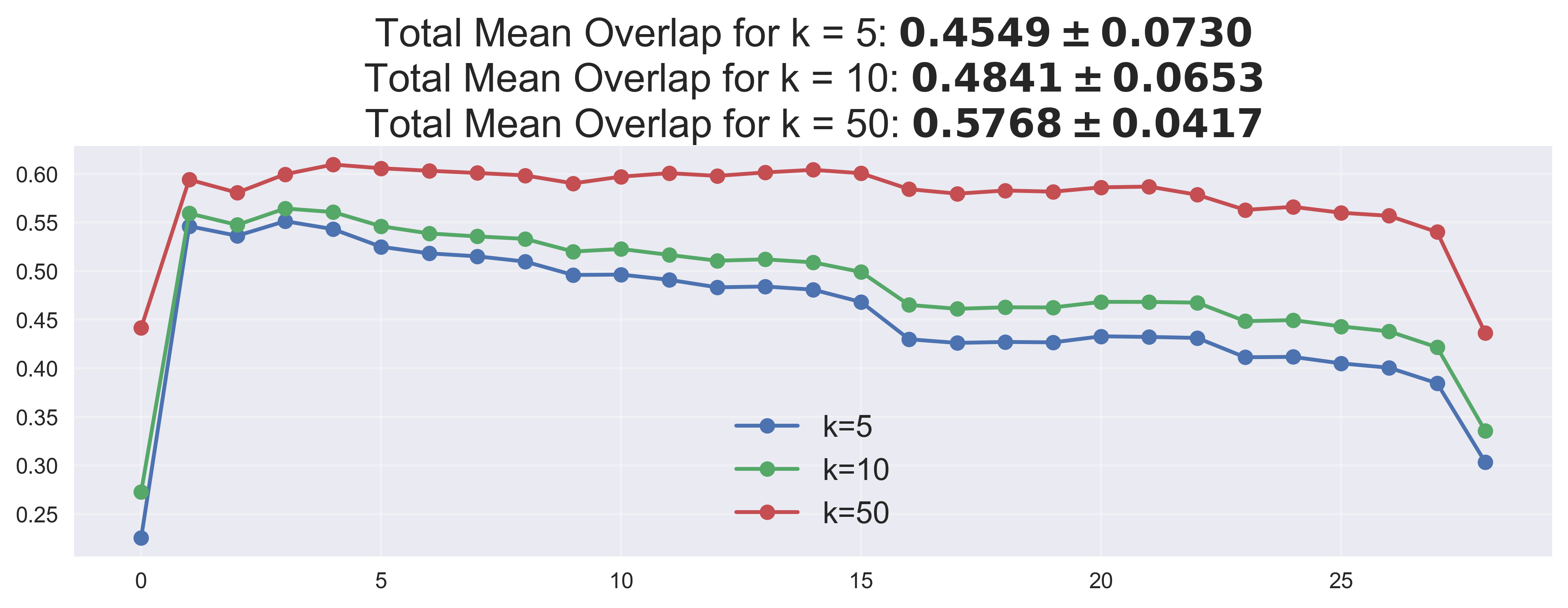} \\
            \centering \scriptsize \textbf{Nemotron-Research-Reasoning-Qwen-1.5B}
        \end{minipage} &

        \rotlabel{Qwen3-4B} &
        \begin{minipage}{0.22\textwidth} 
            \includegraphics[width=\linewidth]{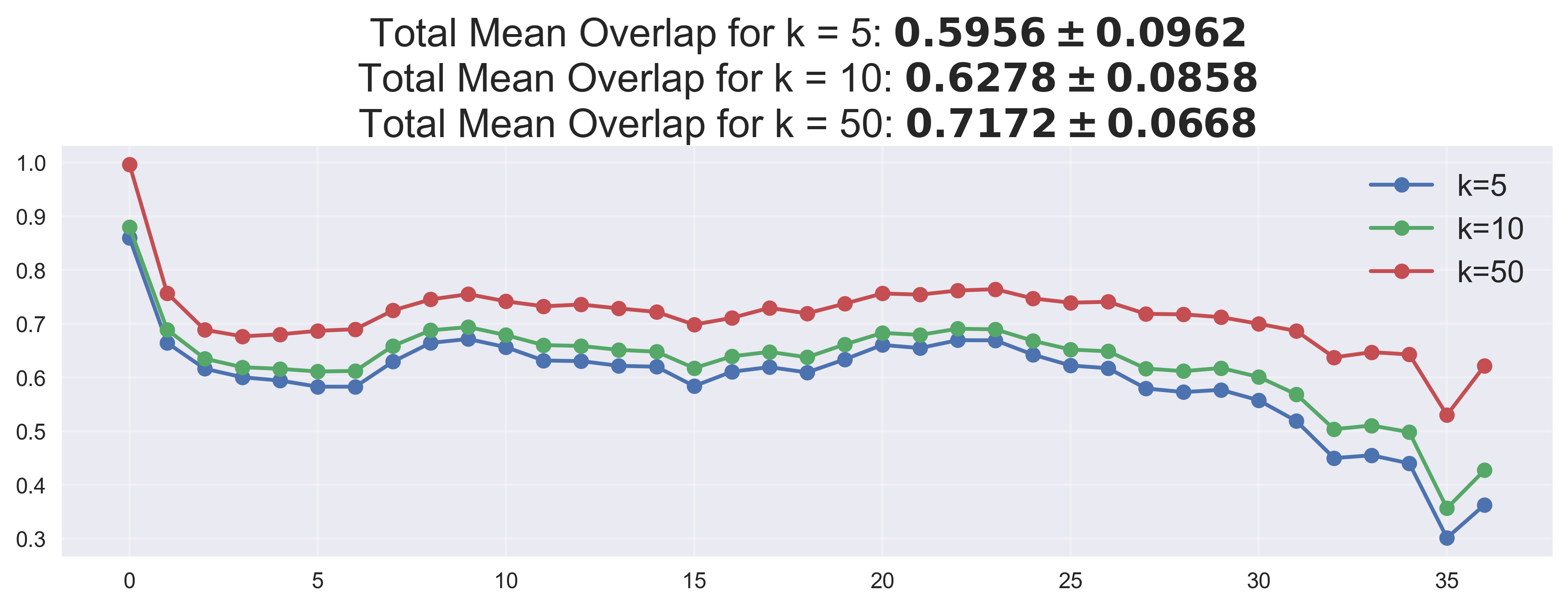} \\
            \centering \scriptsize \textbf{Qwen3-4B-PSR}
        \end{minipage} \\
    \end{tabular}%
    } 

    \vspace{0.5em}
    \textbf{Model Layer Index}
    
    \vspace{1.0em}

    \sectionbox{Dataset: MMLU-Pro}
    \vspace{1ex} 

    \makebox[\textwidth][c]{%
    \begin{tabular}{
        c @{\hspace{1pt}} c  @{\hspace{0.5em}}
        c @{\hspace{1pt}} c  @{\hspace{0.5em}}
        c @{\hspace{1pt}} c  @{\hspace{0.5em}}
        c @{\hspace{1pt}} c
    }
        \rotlabel{Qwen2.5-Math-1.5B} &
        \setlength{\fboxsep}{3pt}%
        \colorbox{red!20}{
            \begin{minipage}{0.22\textwidth}
                \centering
                \includegraphics[width=\linewidth]{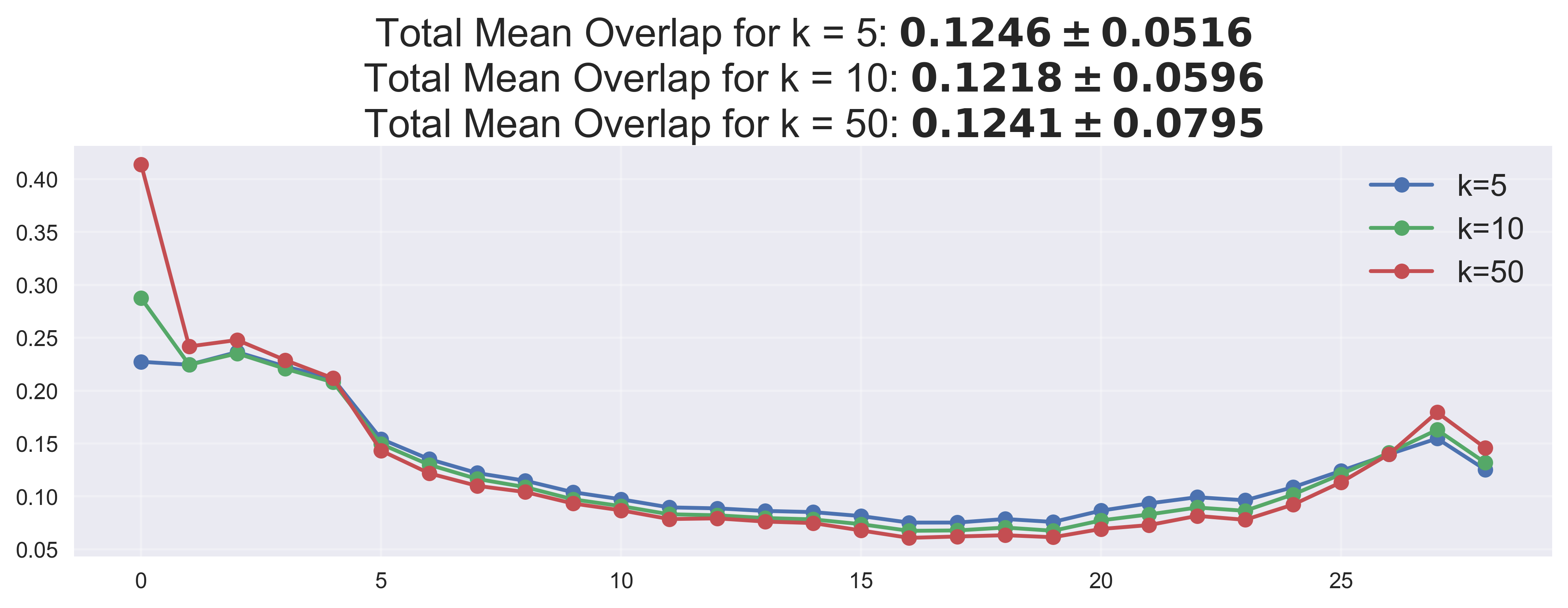}
                \par\vspace{2pt} 
                \scriptsize \textbf{DeepSeek-R1-Distill-Qwen-1.5B}
            \end{minipage}%
        } &

        \rotlabel{Qwen3-4B} &
        \begin{minipage}{0.22\textwidth} 
            \includegraphics[width=\linewidth]{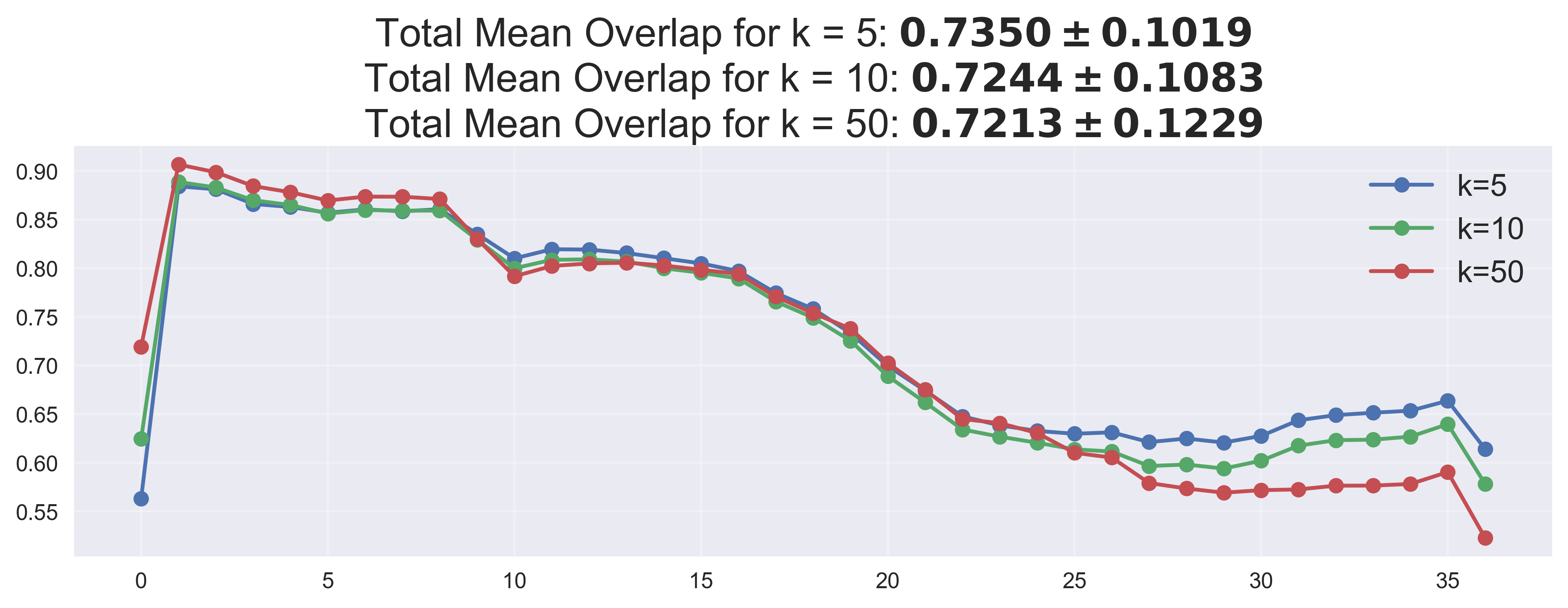} \\
            \centering \scriptsize \textbf{Polaris-4B-Preview}
        \end{minipage} &

        \rotlabel{DeepSeek-R1-Distill-Qwen-7B} &
        \begin{minipage}{0.22\textwidth} 
            \includegraphics[width=\linewidth]{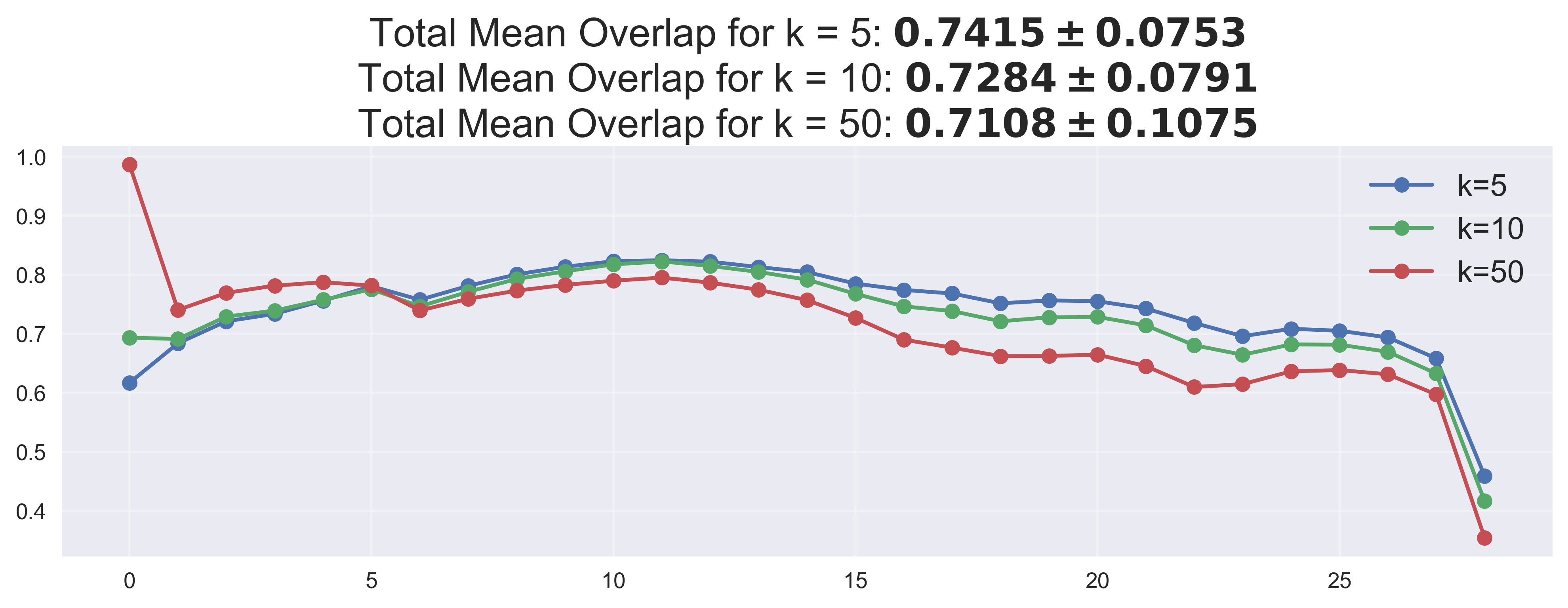} \\
            \centering \scriptsize \textbf{Polaris-7B-Preview}
        \end{minipage} &

        \rotlabel{Qwen2.5-7B} &
        \begin{minipage}{0.22\textwidth} 
            \includegraphics[width=\linewidth]{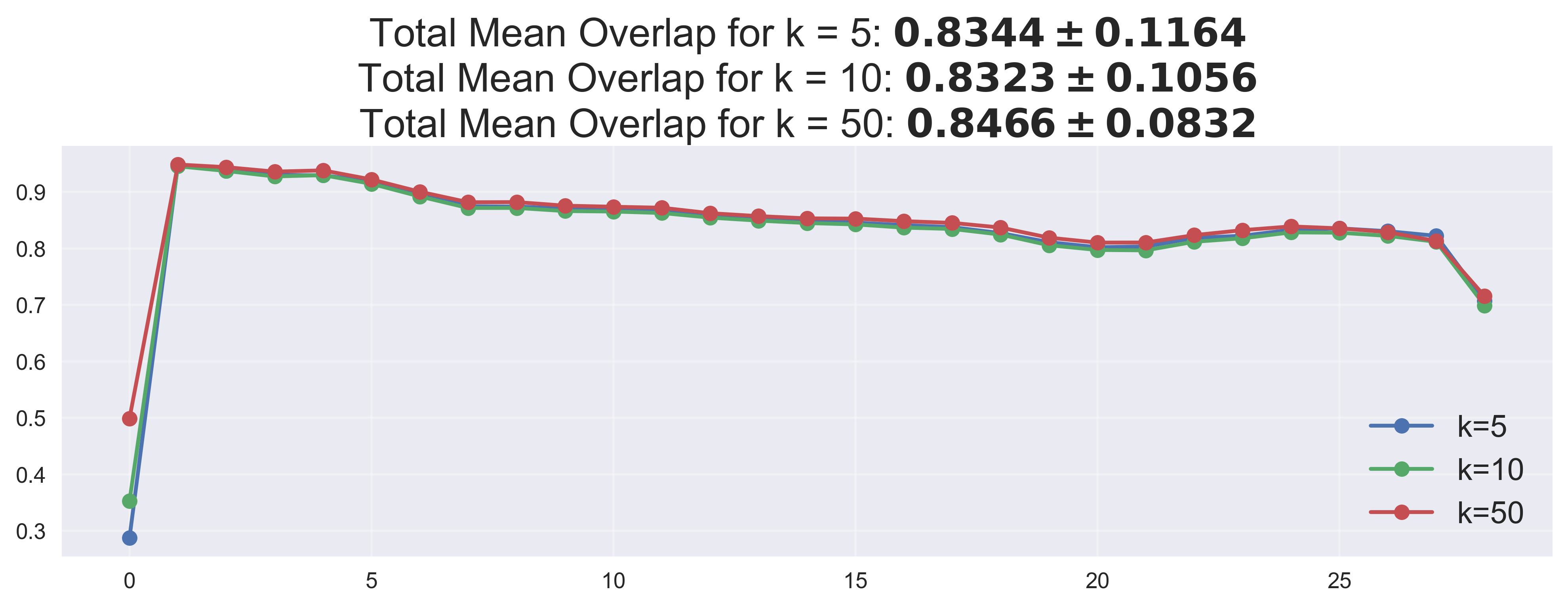} \\
            \centering \scriptsize \textbf{zero\_\_ppo\_\_think\_\_Qwen2.5-7B}
        \end{minipage} \\

        \multicolumn{8}{c}{\vspace{0.1em}} \\ 

        \rotlabel{Qwen2.5-1.5B} &
        \begin{minipage}{0.22\textwidth} 
            \includegraphics[width=\linewidth]{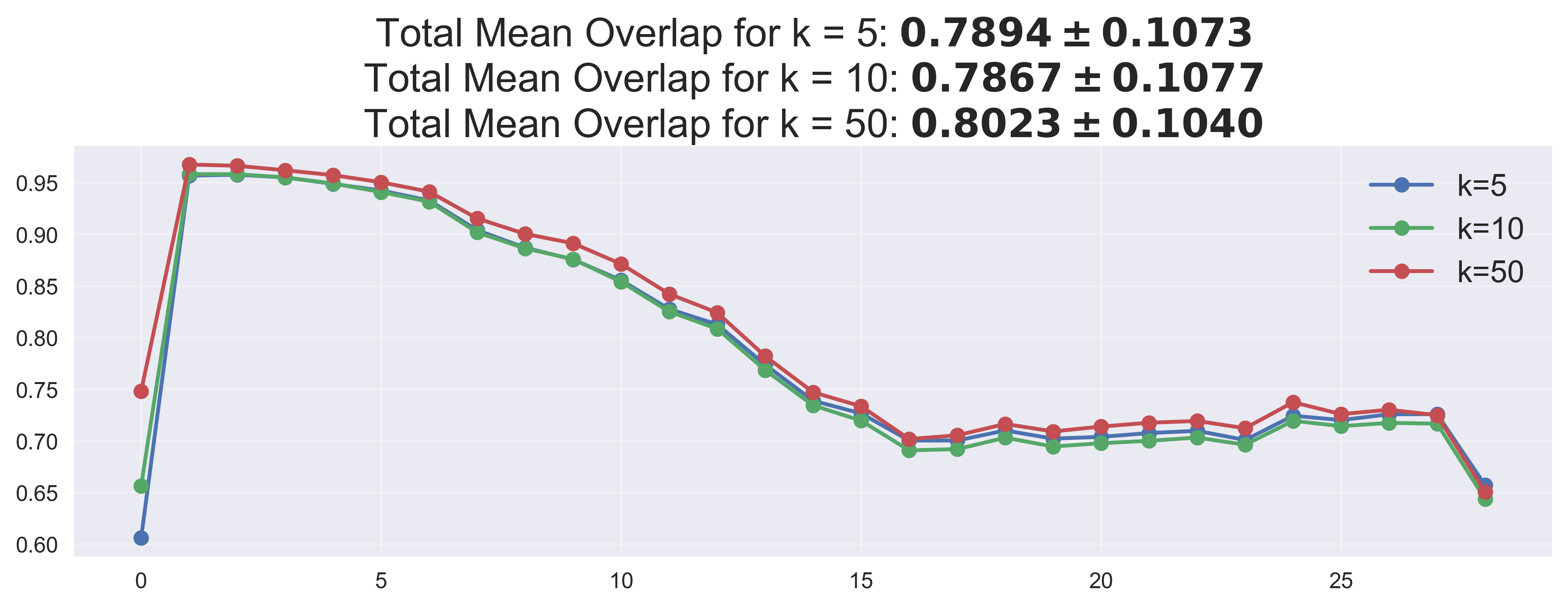} \\
            \centering \scriptsize \textbf{Qwen-2.5-1.5B-SimpleRL-Zoo}
        \end{minipage} &

        \rotlabel{Qwen2.5-0.5B} &
        \begin{minipage}{0.22\textwidth} 
            \includegraphics[width=\linewidth]{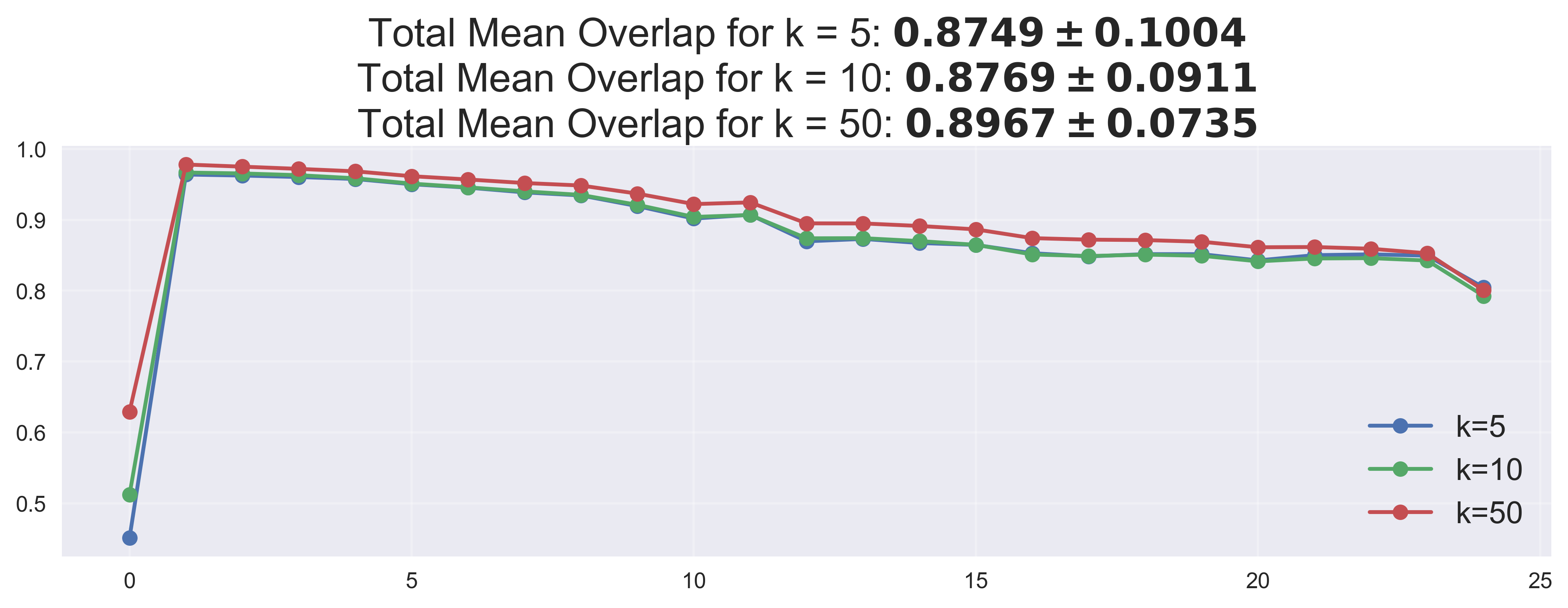} \\
            \centering \scriptsize \textbf{Qwen-2.5-0.5B-SimpleRL-Zoo}
        \end{minipage} &

        \rotlabel{DeepSeek-R1-Distill-Qwen-1.5B} &
        \begin{minipage}{0.22\textwidth} 
            \includegraphics[width=\linewidth]{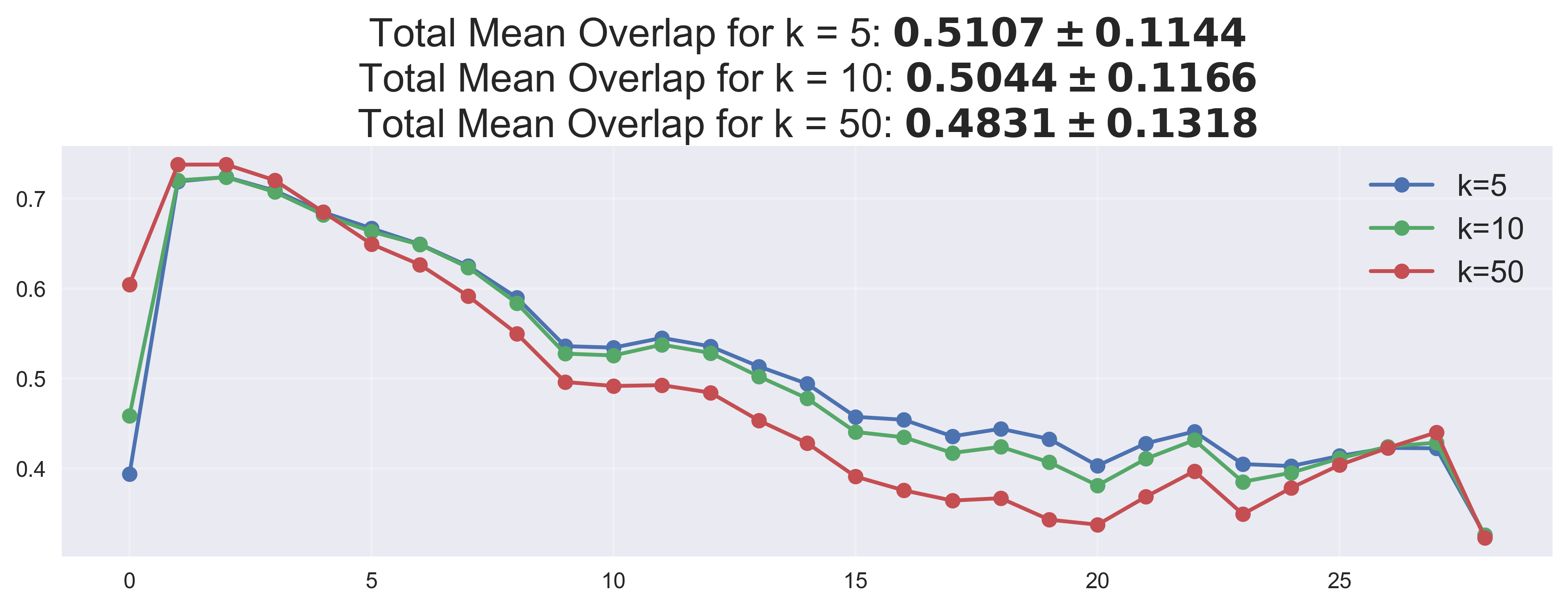} \\
            \centering \scriptsize \textbf{Nemotron-Research-Reasoning-Qwen-1.5B}
        \end{minipage} &

        \rotlabel{Qwen3-4B} &
        \begin{minipage}{0.22\textwidth} 
            \includegraphics[width=\linewidth]{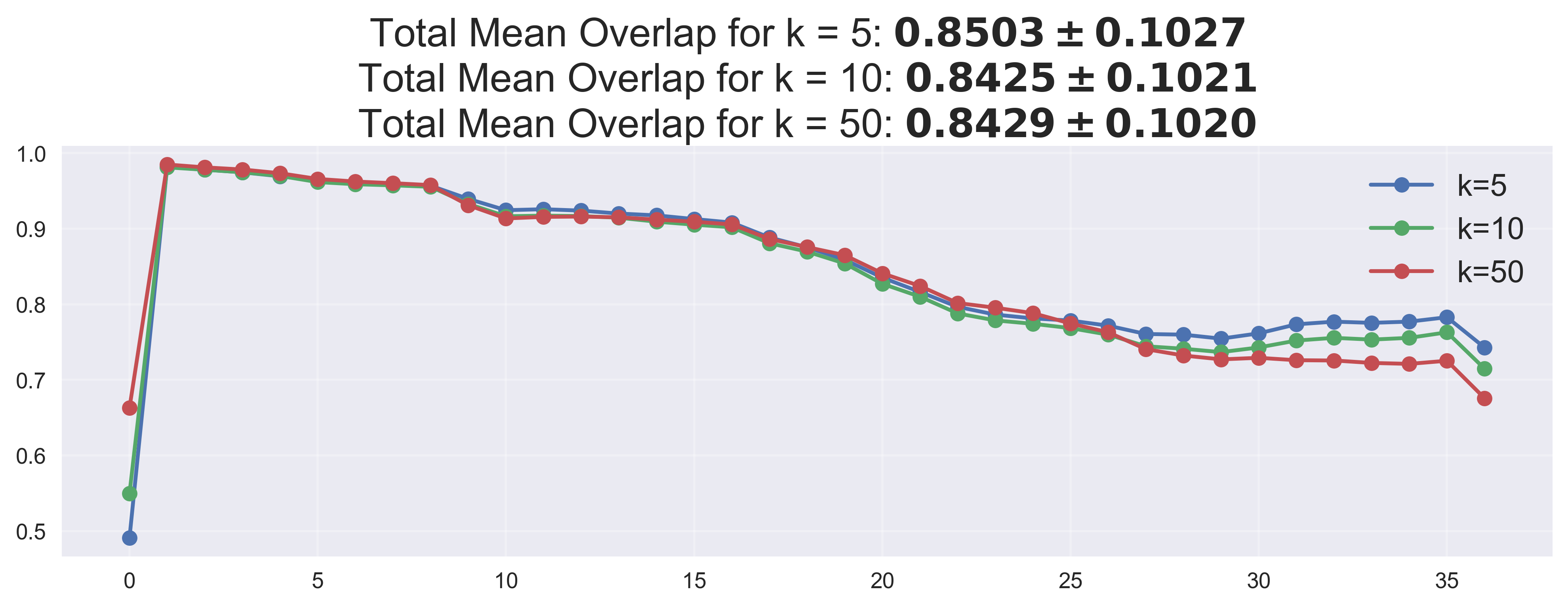} \\
            \centering \scriptsize \textbf{Qwen3-4B-PSR}
        \end{minipage} \\
    \end{tabular}%
    } 

    \vspace{0.5em}
    \textbf{Model Layer Index}


    \caption{Additional Results on $k$-NN Overlap separated by dataset. The vertical axis and horizontal axis are Mean Overlap and Model Layer Index, respectively. The \textbf{\textcolor{red}{red}} background indicates SFT-tuned pairs.}
    \label{fig: appendix-knn-overlap-base-vs-reasoning}
\end{figure*}

\begin{figure*}[p]
    \centering
    {\large \textbf{$k$-NN Overlap}} \par\smallskip
    {\large \textbf{Base Embedding Models $\mathcal{M}_{base}^{Emb}$ vs. Reasoning Embedding Models $\mathcal{M}_{reason}^{Emb}$}} \par\medskip

    \setlength{\tabcolsep}{1pt}

    \sectionbox{Dataset: CoT Datset}
    \vspace{1ex} 

    \makebox[\textwidth][c]{%
    \begin{tabular}{
        c @{\hspace{1pt}} c  @{\hspace{0.5em}}
        c @{\hspace{1pt}} c  @{\hspace{0.5em}}
        c @{\hspace{1pt}} c
    }
        \rotlabel{Qwen2.5-Math-1.5B-\textit{Emb}} &
        \setlength{\fboxsep}{3pt}%
        \colorbox{red!20}{
            \begin{minipage}{0.30\textwidth} 
            \includegraphics[width=\linewidth]{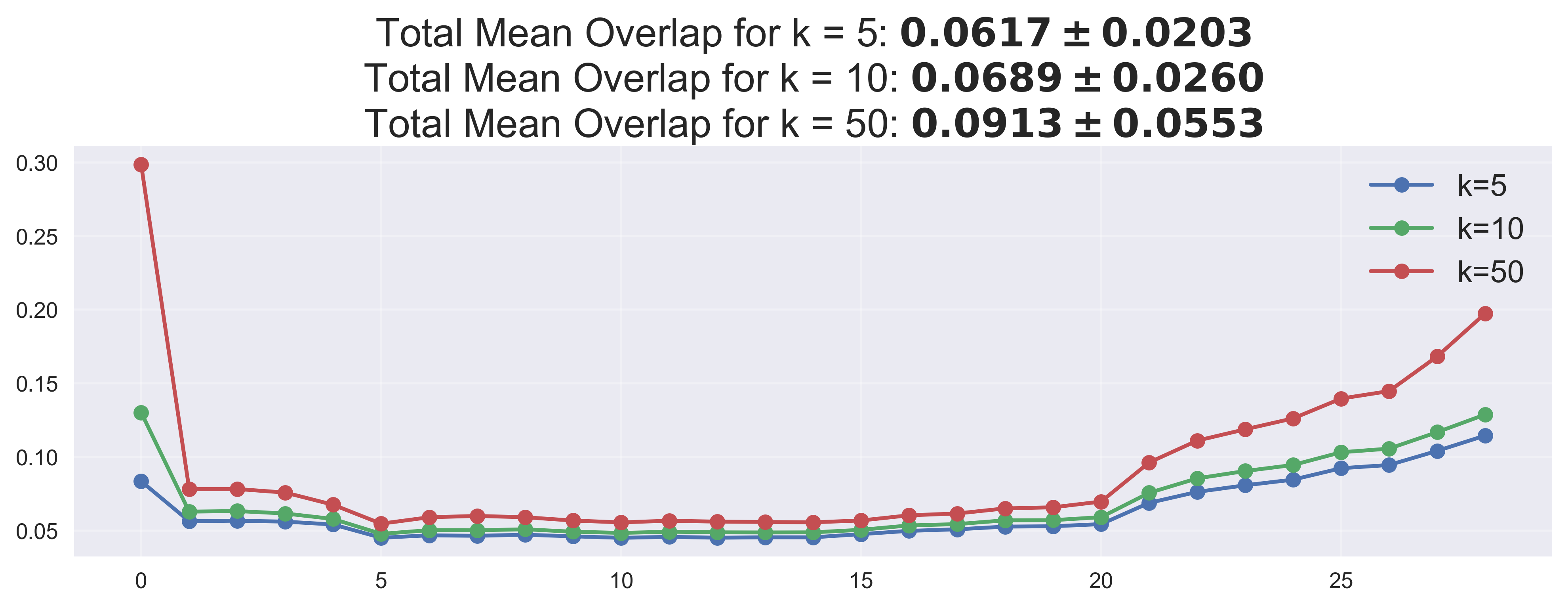}
            \centering \scriptsize \textbf{DeepSeek-R1-Distill-Qwen-1.5B-\textit{Emb}}
        \end{minipage}%
        } &

        \rotlabel{Qwen3-0.6B-Base-\textit{Emb}} &
        \setlength{\fboxsep}{3pt}%
        \colorbox{red!20}{
            \begin{minipage}{0.30\textwidth} 
            \includegraphics[width=\linewidth]{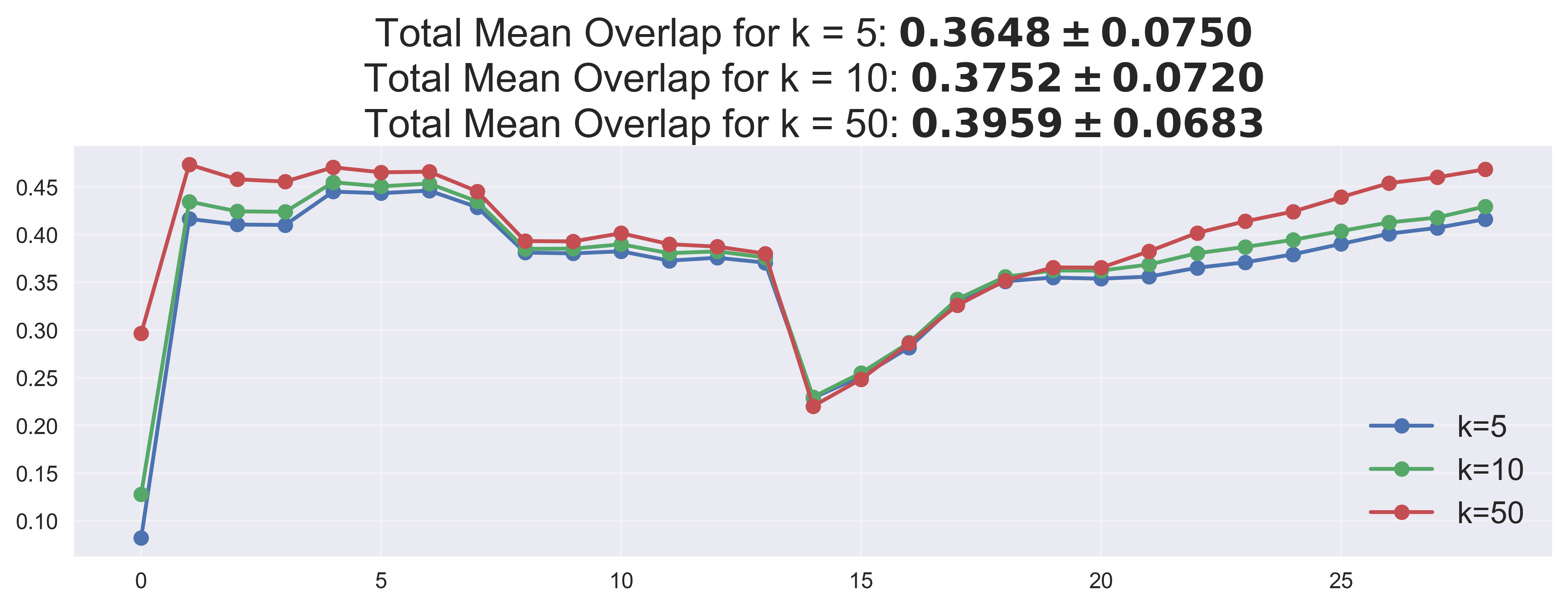} \\
            \centering \scriptsize \textbf{Qwen3-0.6B-\textit{Emb}}
        \end{minipage}%
        } &

        \rotlabel{Qwen2.5-1.5B-\textit{Emb}} &
        \begin{minipage}{0.30\textwidth} 
            \includegraphics[width=\linewidth]{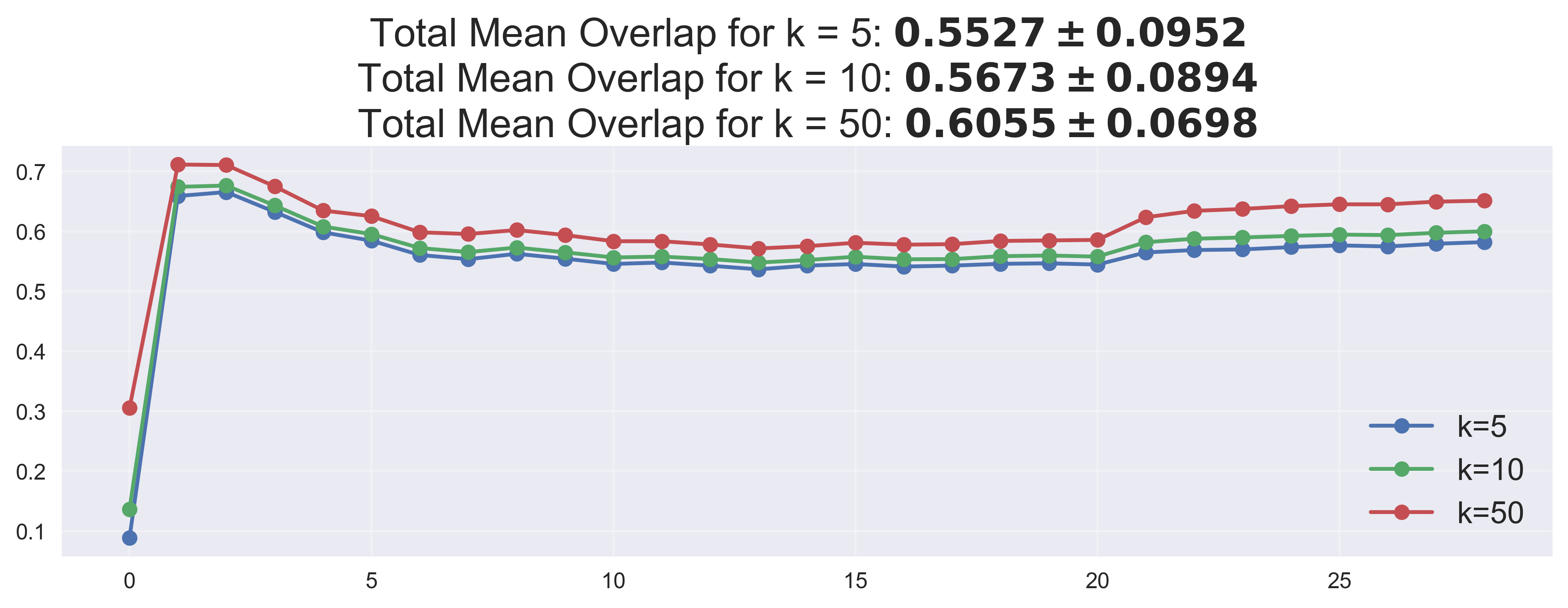} \\
            \centering \scriptsize \textbf{Qwen-2.5-1.5B-SimpleRL-Zoo-\textit{Emb}}
        \end{minipage} \\


        \rotlabel{Qwen2.5-0.5B-\textit{Emb}} &
        \begin{minipage}{0.30\textwidth} 
            \includegraphics[width=\linewidth]{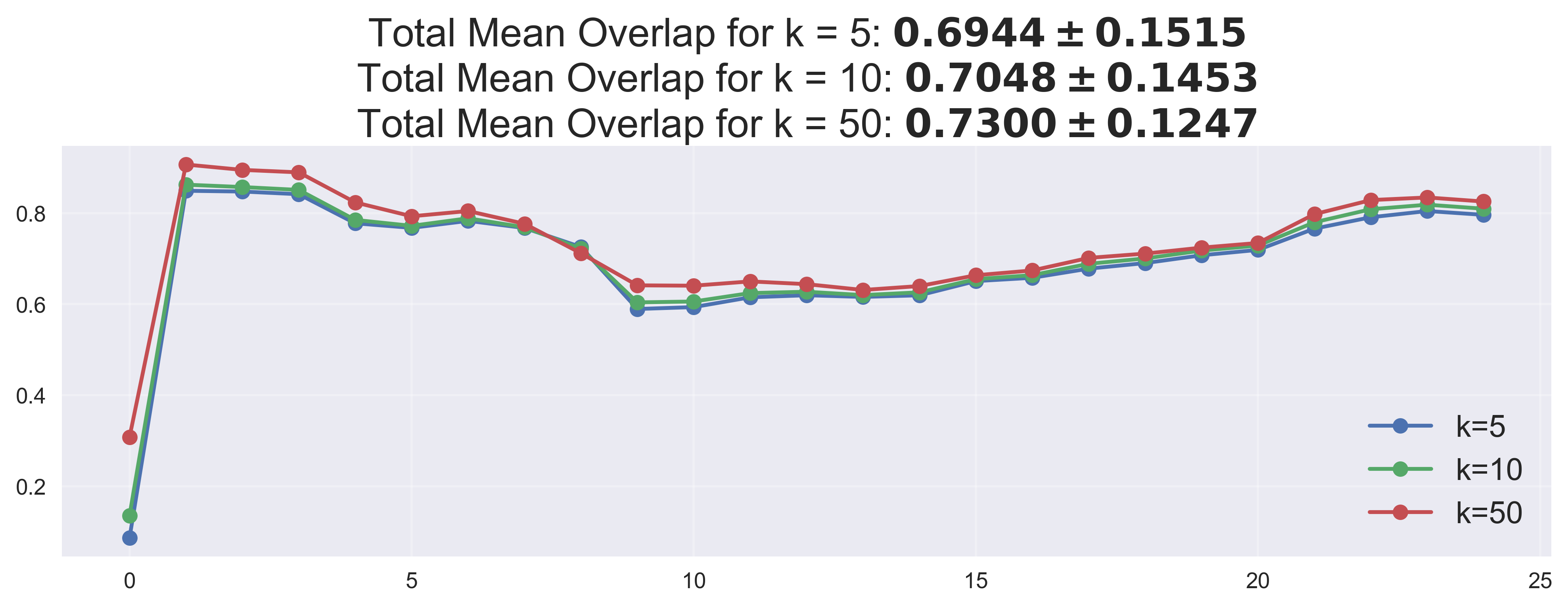} \\
            \centering \scriptsize \textbf{Qwen-2.5-0.5B-SimpleRL-Zoo-\textit{Emb}}
        \end{minipage} &

        \rotlabel{DeepSeek-R1-Distill-Qwen-1.5B-\textit{Emb}} &
        \begin{minipage}{0.30\textwidth} 
            \includegraphics[width=\linewidth]{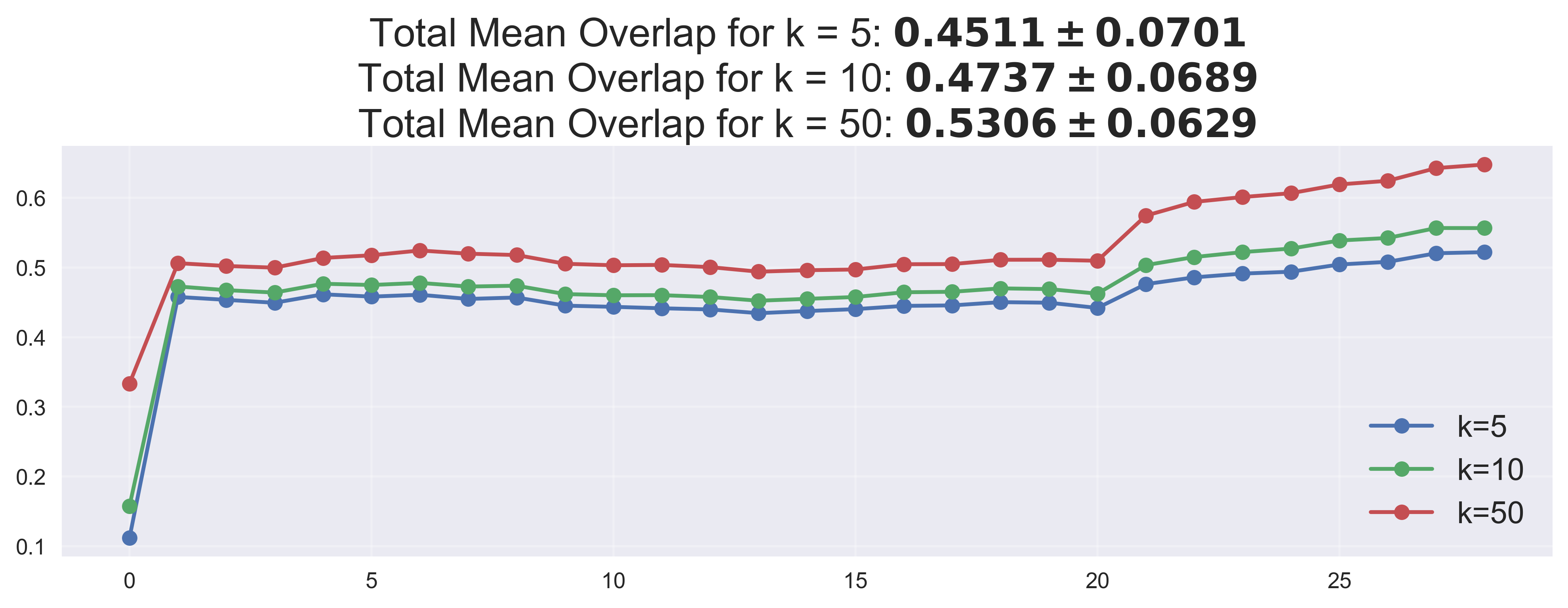} \\
            \centering \scriptsize \textbf{Nemotron-Research-Reasoning-Qwen-1.5B-\textit{Emb}}
        \end{minipage} &

        \rotlabel{Qwen3-4B-\textit{Emb}} &
        \begin{minipage}{0.30\textwidth} 
            \includegraphics[width=\linewidth]{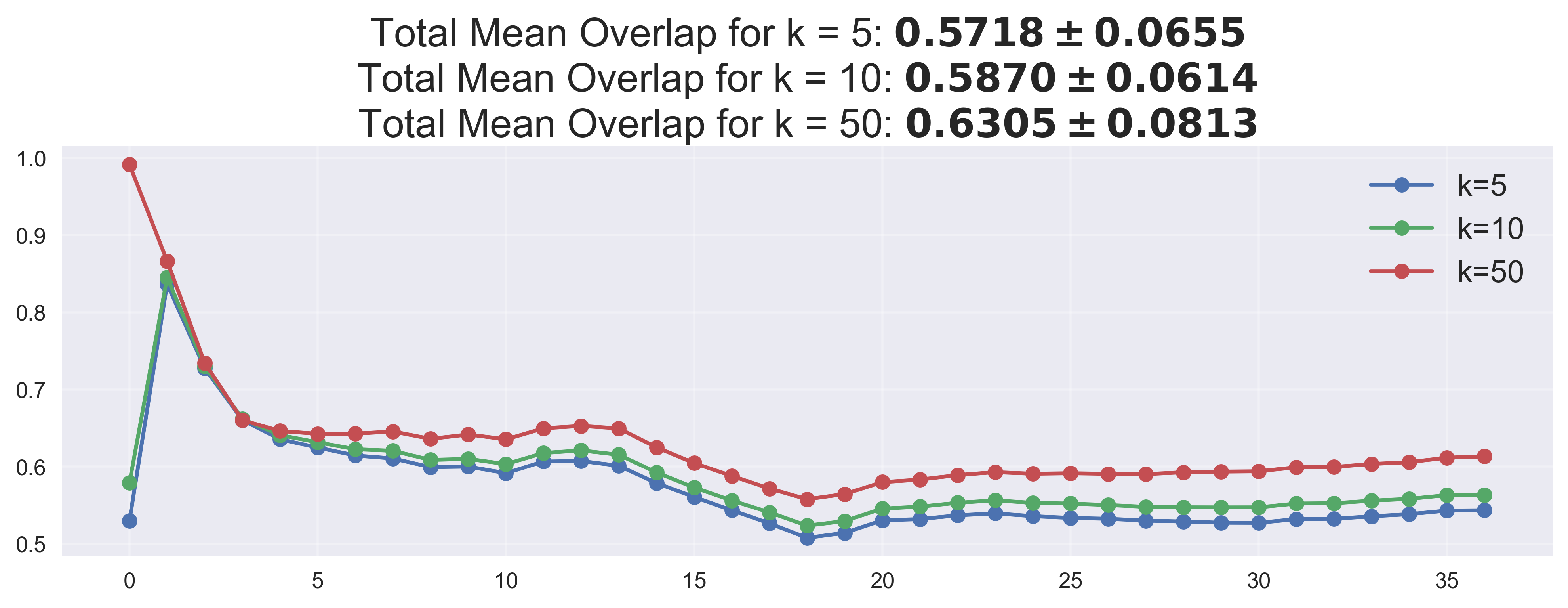} \\
            \centering \scriptsize \textbf{Qwen3-4B-PSR-\textit{Emb}}
        \end{minipage} \\
    \end{tabular}%
    } 

    \vspace{0.5em}
    \textbf{Model Layer Index}
    
    \vspace{1.0em}

    \sectionbox{Dataset: MMLU-Pro}
    \vspace{1ex} 

    \makebox[\textwidth][c]{%
    \begin{tabular}{
        c @{\hspace{1pt}} c  @{\hspace{0.5em}}
        c @{\hspace{1pt}} c  @{\hspace{0.5em}}
        c @{\hspace{1pt}} c
    }
        \rotlabel{Qwen2.5-Math-1.5B-\textit{Emb}} &
        \setlength{\fboxsep}{3pt}%
        \colorbox{red!20}{
            \begin{minipage}{0.30\textwidth} 
            \includegraphics[width=\linewidth]{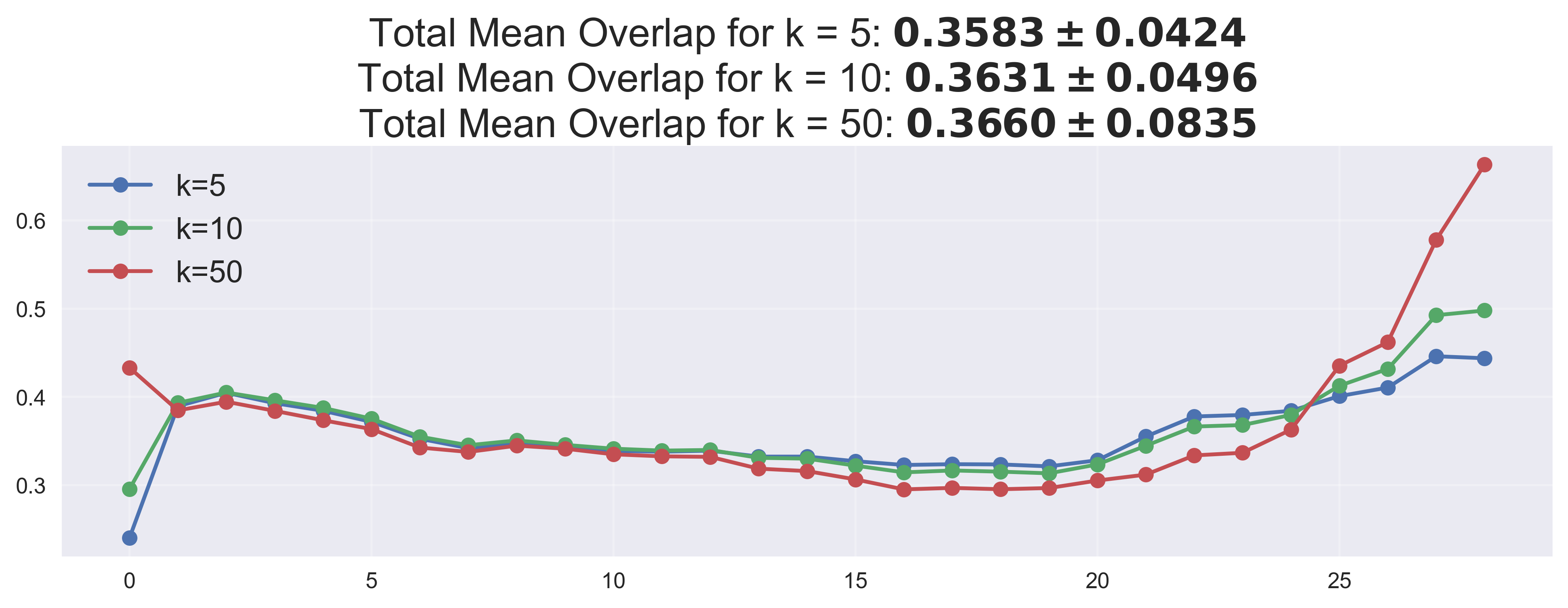}
            \centering \scriptsize \textbf{DeepSeek-R1-Distill-Qwen-1.5B-\textit{Emb}}
        \end{minipage}%
        } &

        \rotlabel{Qwen3-0.6B-Base-\textit{Emb}} &
        \setlength{\fboxsep}{3pt}%
        \colorbox{red!20}{
            \begin{minipage}{0.30\textwidth} 
            \includegraphics[width=\linewidth]{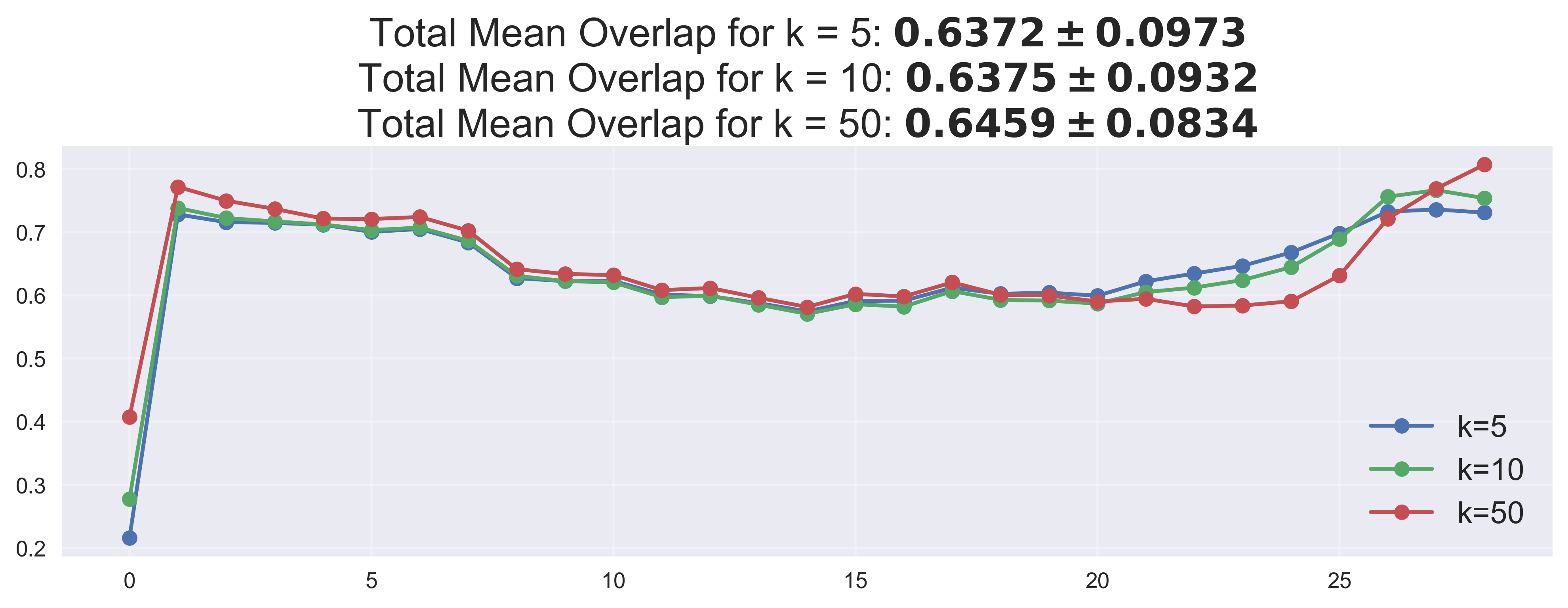} \\
            \centering \scriptsize \textbf{Qwen3-0.6B-\textit{Emb}}
        \end{minipage}%
        } &

        \rotlabel{Qwen2.5-1.5B-\textit{Emb}} &
        \begin{minipage}{0.30\textwidth} 
            \includegraphics[width=\linewidth]{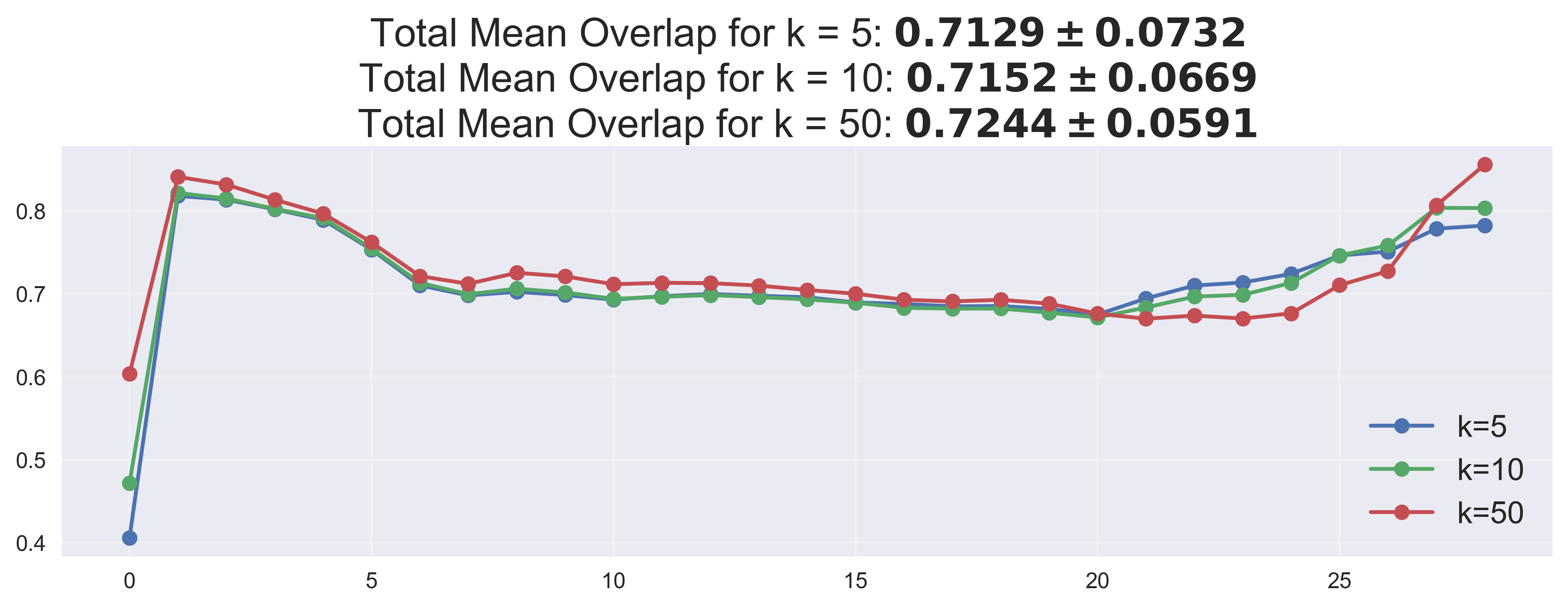} \\
            \centering \scriptsize \textbf{Qwen-2.5-1.5B-SimpleRL-Zoo-\textit{Emb}}
        \end{minipage} \\


        \rotlabel{Qwen2.5-0.5B-\textit{Emb}} &
        \begin{minipage}{0.30\textwidth} 
            \includegraphics[width=\linewidth]{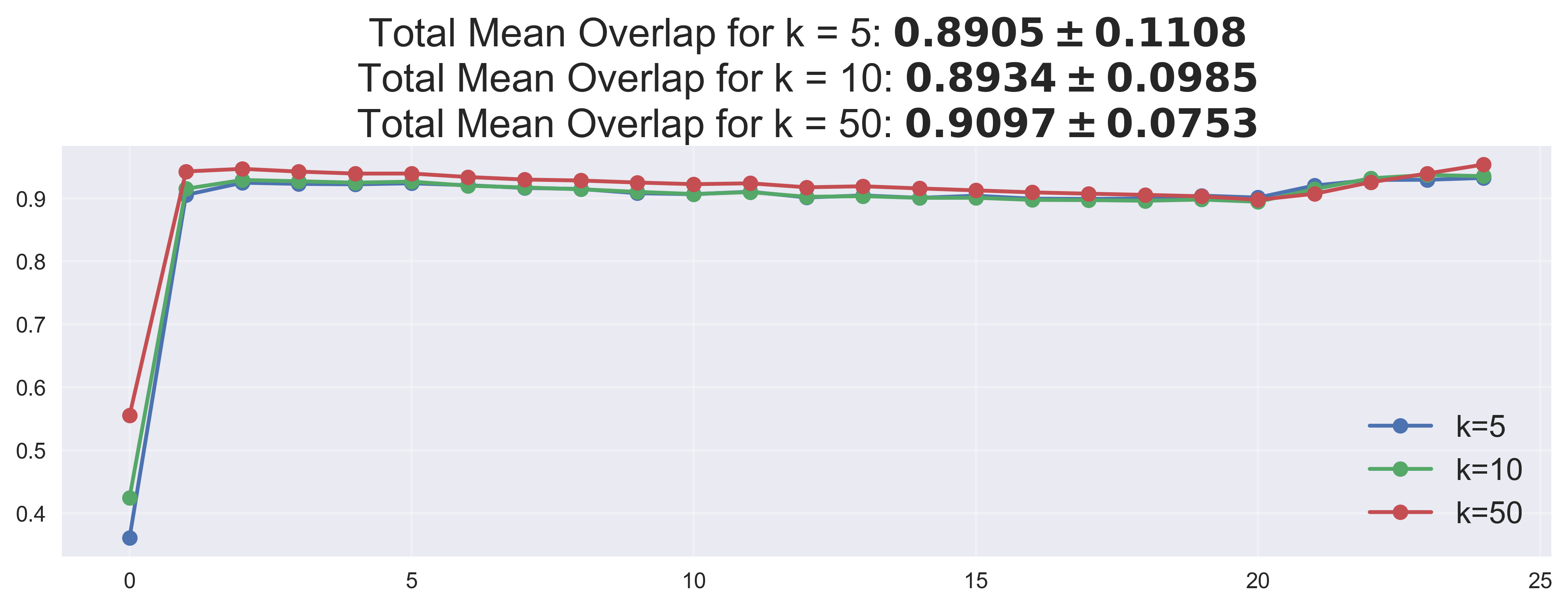} \\
            \centering \scriptsize \textbf{Qwen-2.5-0.5B-SimpleRL-Zoo-\textit{Emb}}
        \end{minipage} &

        \rotlabel{DeepSeek-R1-Distill-Qwen-1.5B-\textit{Emb}} &
        \begin{minipage}{0.30\textwidth} 
            \includegraphics[width=\linewidth]{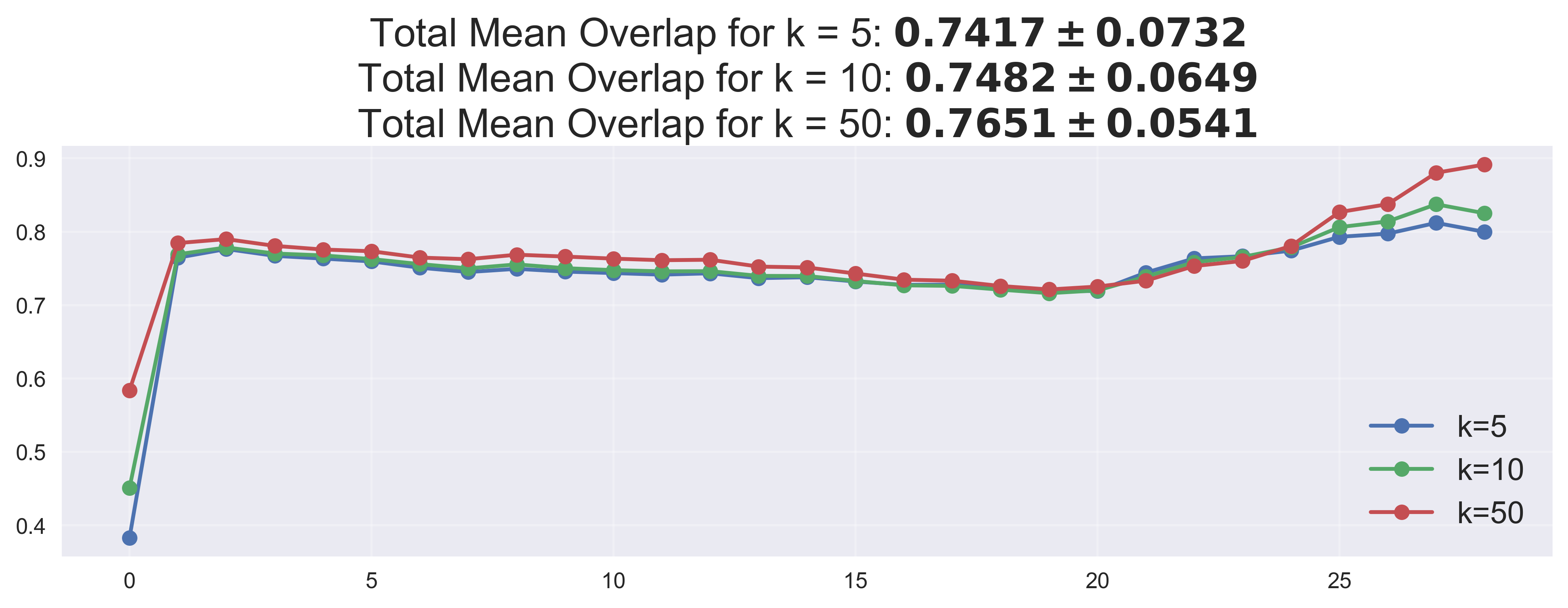} \\
            \centering \scriptsize \textbf{Nemotron-Research-Reasoning-Qwen-1.5B-\textit{Emb}}
        \end{minipage} &

        \rotlabel{Qwen3-4B-\textit{Emb}} &
        \begin{minipage}{0.30\textwidth} 
            \includegraphics[width=\linewidth]{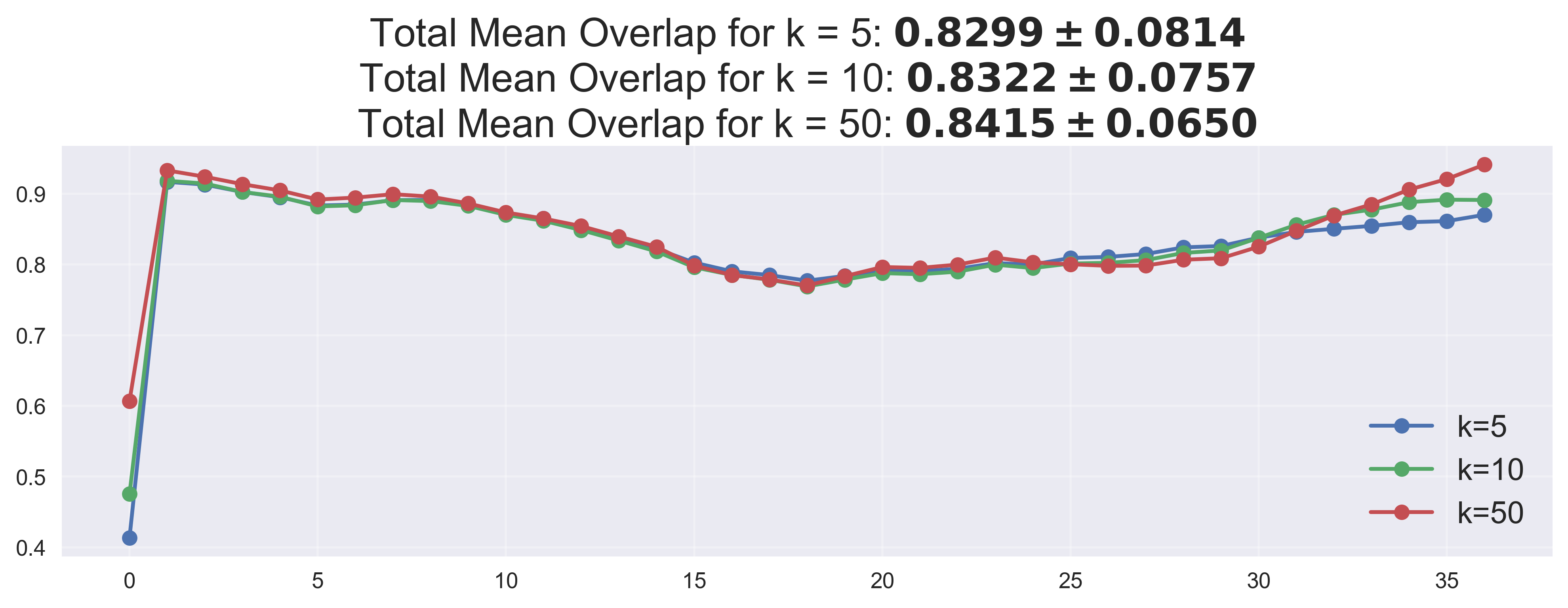} \\
            \centering \scriptsize \textbf{Qwen3-4B-PSR-\textit{Emb}}
        \end{minipage} \\
    \end{tabular}%
    } 

    \vspace{0.5em}
    \textbf{Model Layer Index}


    \caption{Additional Results on $k$-NN Overlap separated by dataset. The vertical axis and horizontal axis are Mean overlap and Model Layer Index, respectively. The \textbf{\textcolor{red}{red}} background indicates their backbone LLMs are SFT-tuned pairs.}
    \label{fig: appendix-knn-overlap-base-embedding-vs-reasoning-embedding}
\end{figure*}


\begin{figure*}[p]
    \centering
    {\large \textbf{Cross-Model Linear Probes}} \par\smallskip
    {\large \textbf{Base Models $\mathcal{M}_{base}$ vs. Reasoning Models $\mathcal{M}_{reason}$}} \par\medskip

    \setlength{\tabcolsep}{1pt}

    \sectionbox{Dataset: AG's News Topic Classification}
    \vspace{1ex} 

    \makebox[\textwidth][c]{%
    \begin{tabular}{
        c @{\hspace{1pt}} c  @{\hspace{0.5em}}
        c @{\hspace{1pt}} c  @{\hspace{0.5em}}
        c @{\hspace{1pt}} c  @{\hspace{0.5em}}
        c @{\hspace{1pt}} c
    }
        \rotlabel{Qwen2.5-Math-1.5B} &
        \setlength{\fboxsep}{3pt}%
        \colorbox{red!20}{
            \begin{minipage}{0.22\textwidth}
                \centering
                \includegraphics[width=\linewidth]{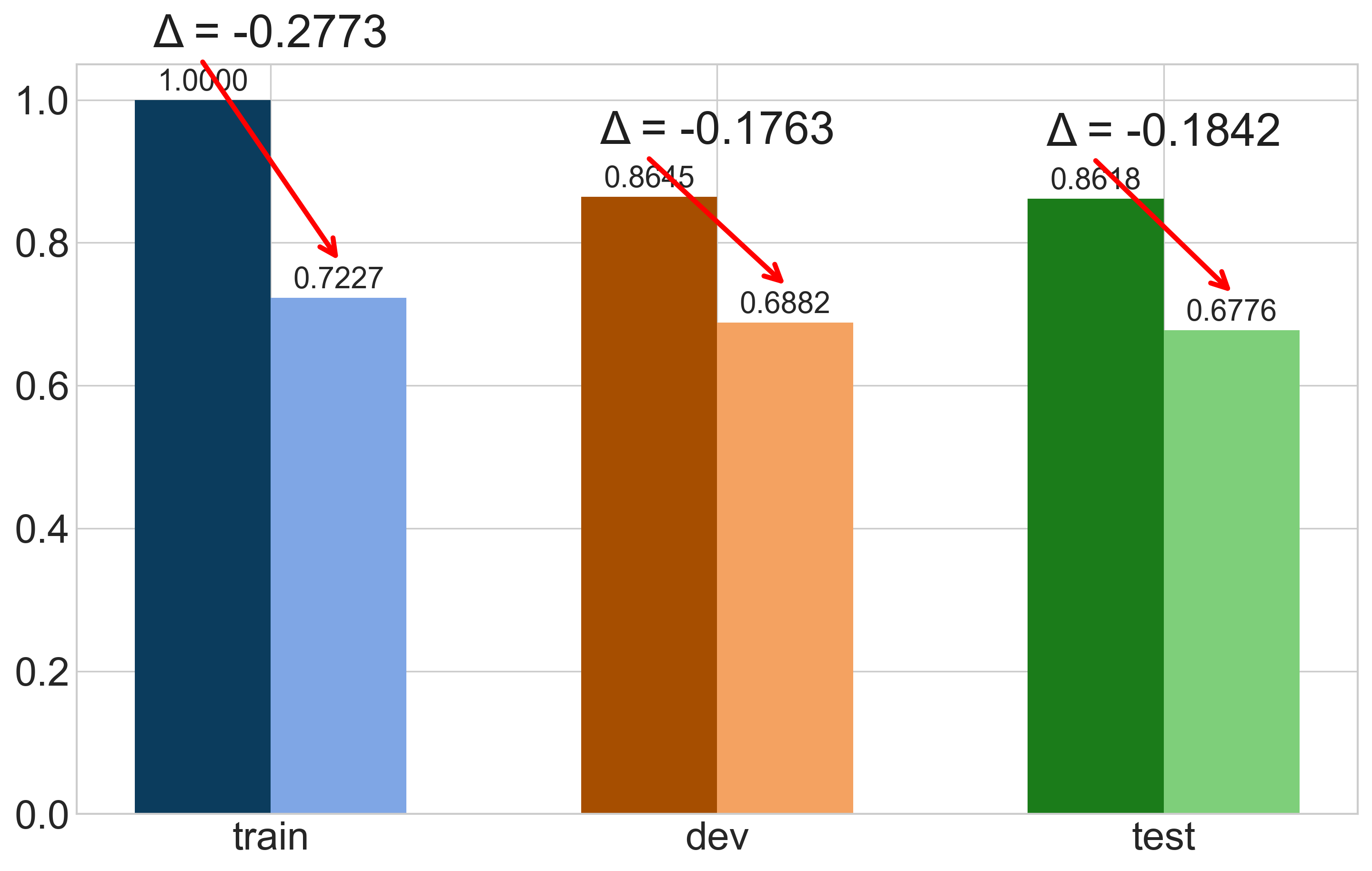}
                \par\vspace{2pt} 
                \scriptsize \textbf{DeepSeek-R1-Distill-Qwen-1.5B}
            \end{minipage}%
        } &

        \rotlabel{Qwen3-4B} &
        \begin{minipage}{0.22\textwidth} 
            \includegraphics[width=\linewidth]{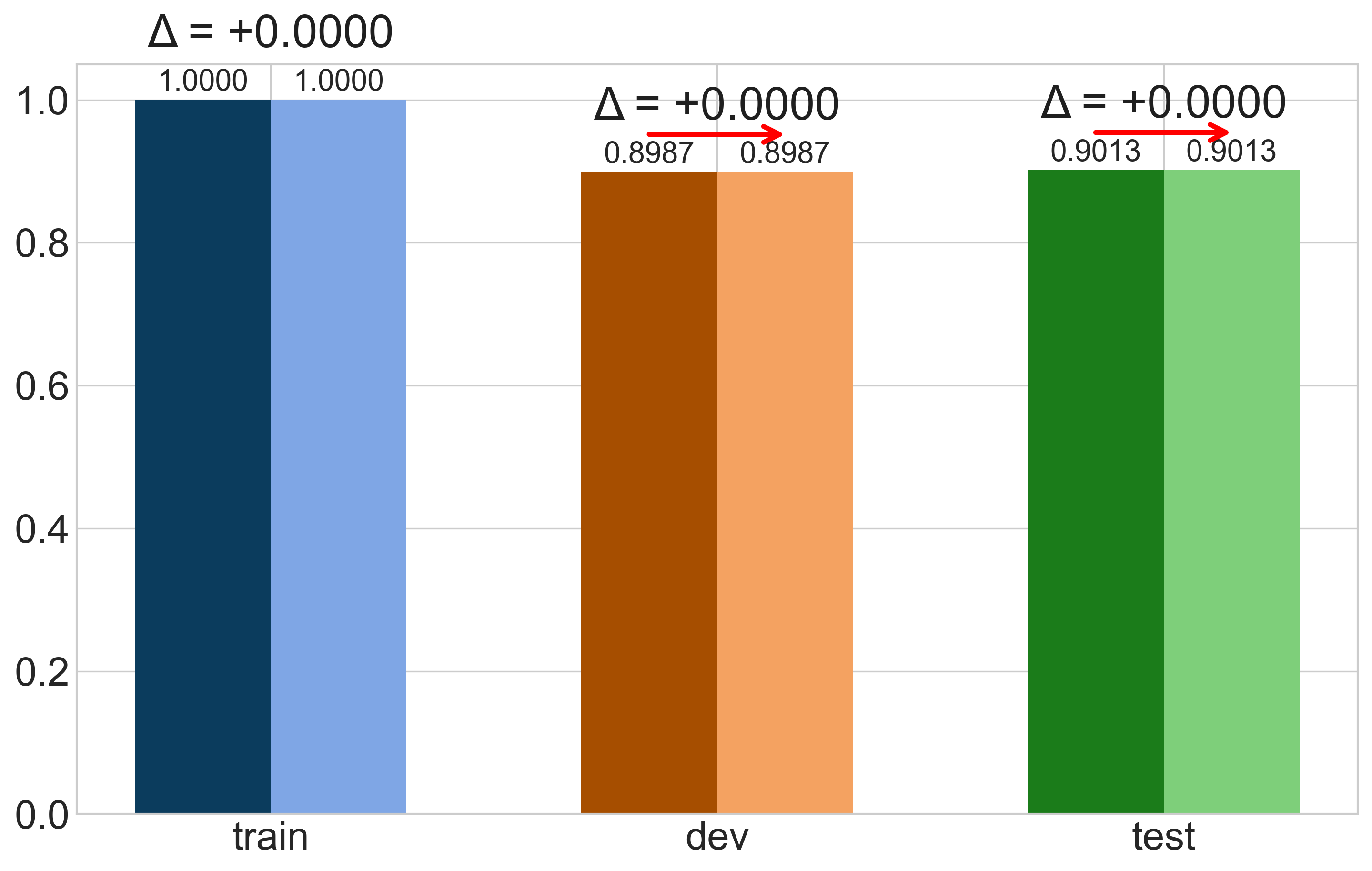} \\
            \centering \scriptsize \textbf{Polaris-4B-Preview}
        \end{minipage} &

        \rotlabel{DeepSeek-R1-Distill-Qwen-7B} &
        \begin{minipage}{0.22\textwidth} 
            \includegraphics[width=\linewidth]{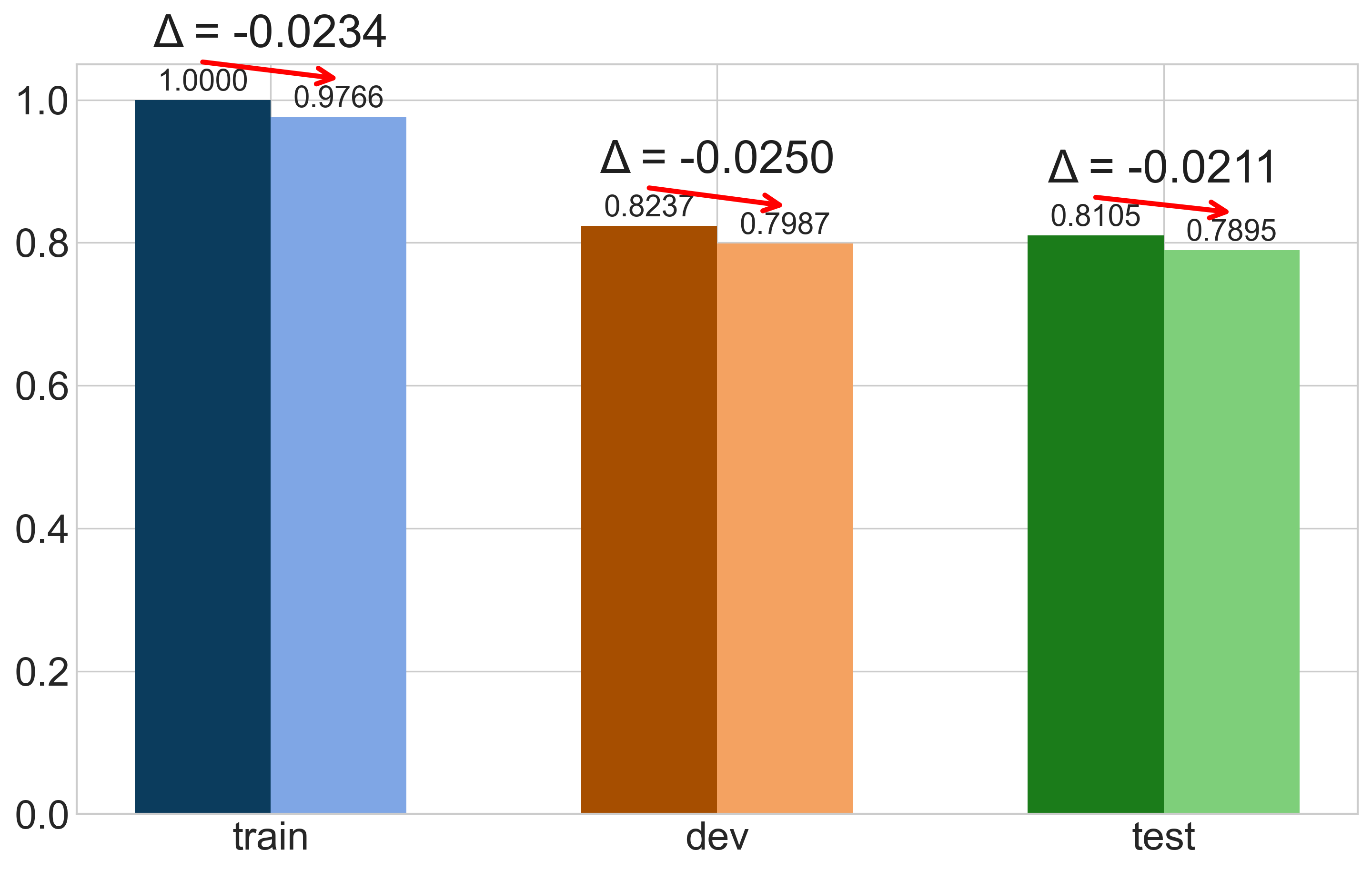} \\
            \centering \scriptsize \textbf{Polaris-7B-Preview}
        \end{minipage} &

        \rotlabel{Qwen2.5-7B} &
        \begin{minipage}{0.22\textwidth} 
            \includegraphics[width=\linewidth]{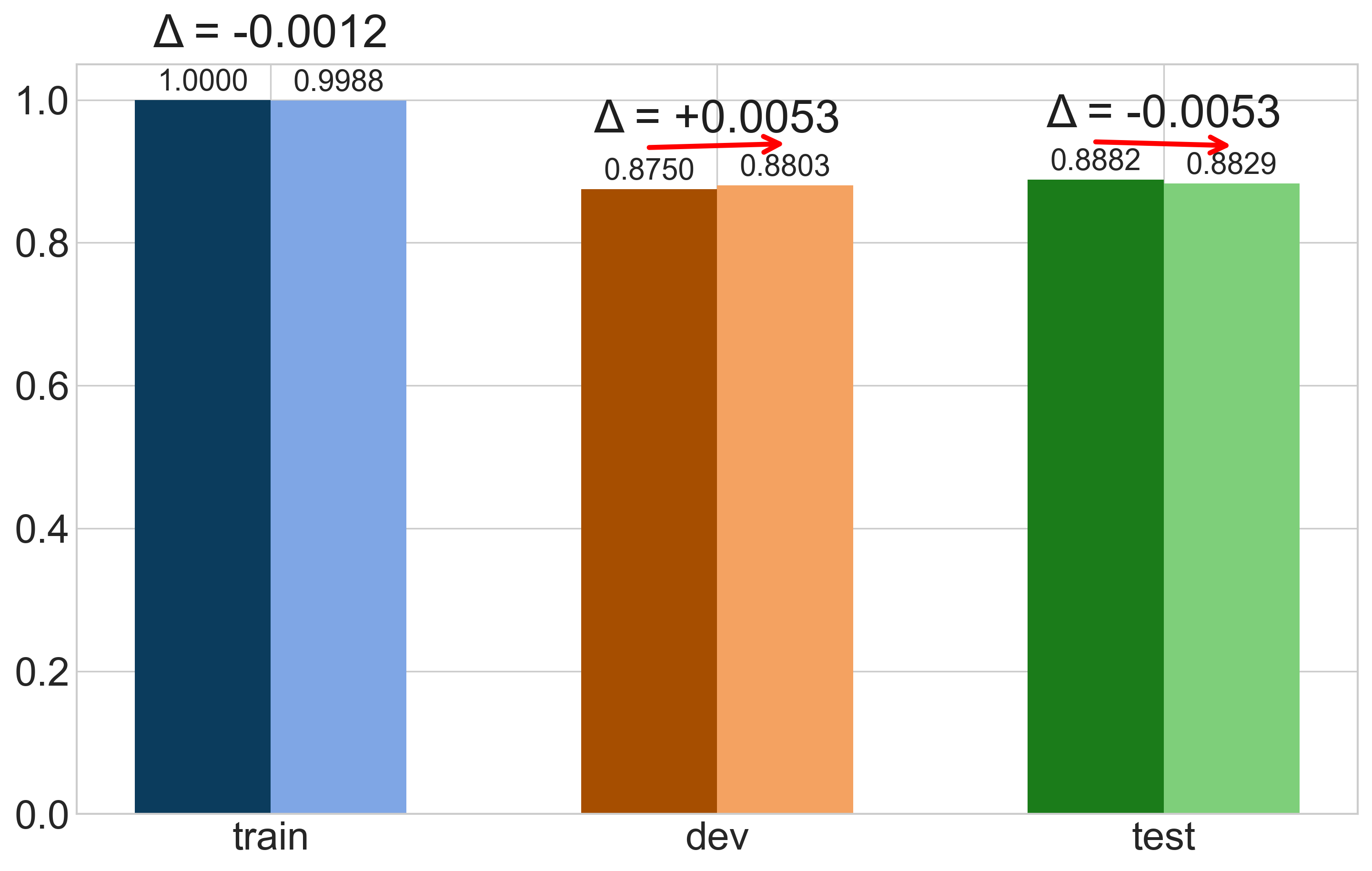} \\
            \centering \scriptsize \textbf{zero\_\_ppo\_\_think\_\_Qwen2.5-7B}
        \end{minipage} \\

        \multicolumn{8}{c}{\vspace{0.1em}} \\ 

        \rotlabel{Qwen2.5-1.5B} &
        \begin{minipage}{0.22\textwidth} 
            \includegraphics[width=\linewidth]{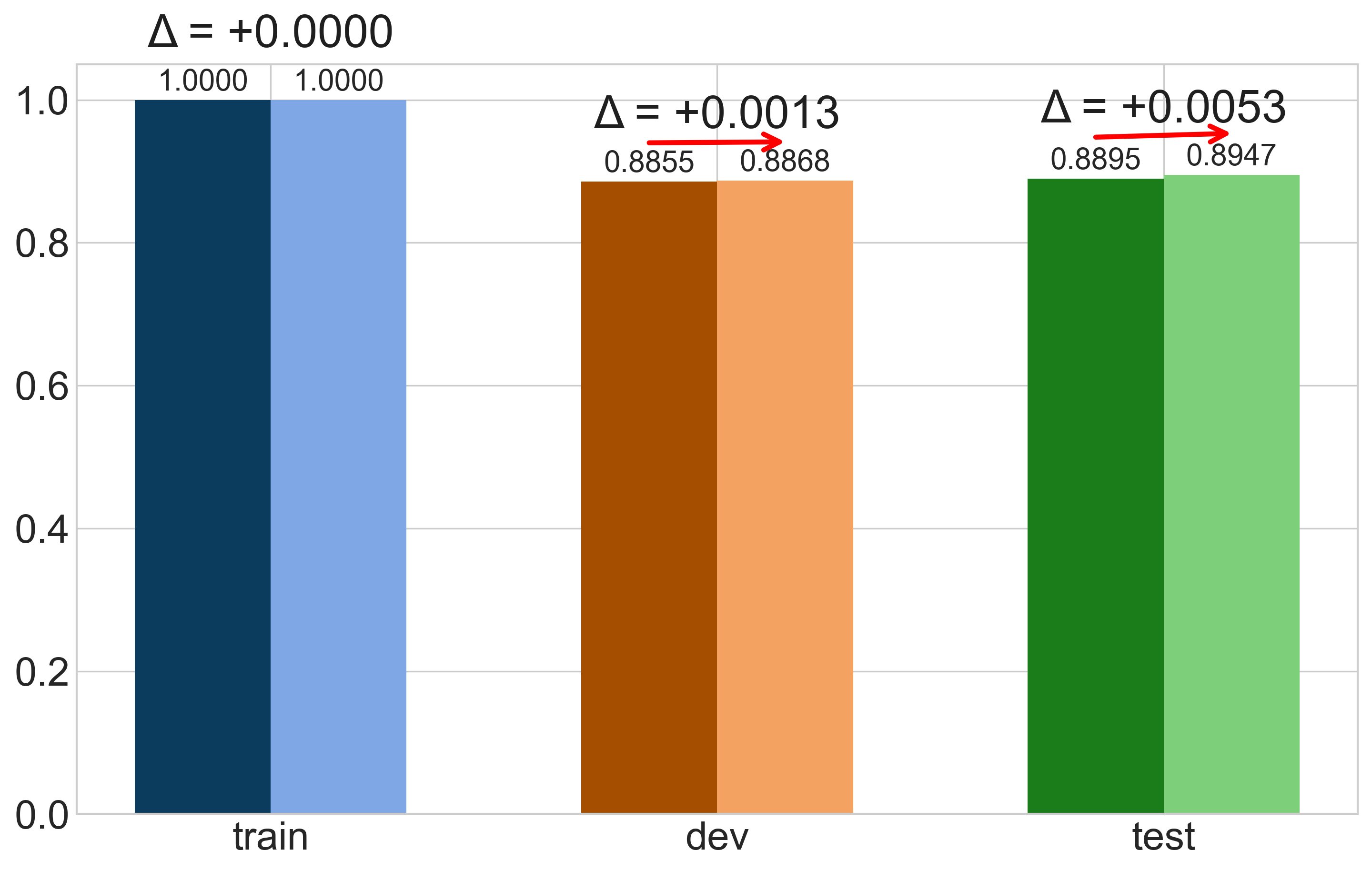} \\
            \centering \scriptsize \textbf{Qwen-2.5-1.5B-SimpleRL-Zoo}
        \end{minipage} &

        \rotlabel{Qwen2.5-0.5B} &
        \begin{minipage}{0.22\textwidth} 
            \includegraphics[width=\linewidth]{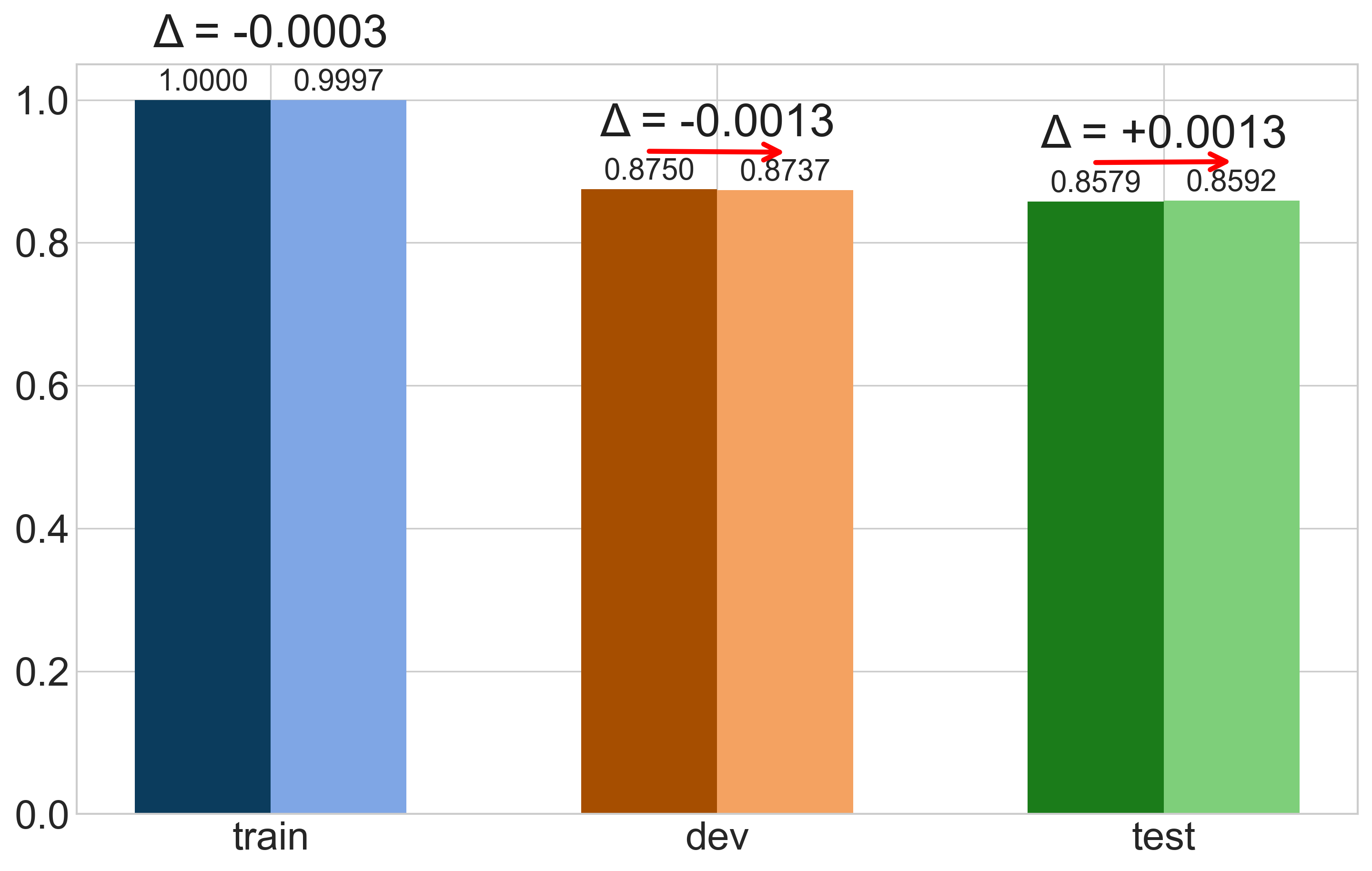} \\
            \centering \scriptsize \textbf{Qwen-2.5-0.5B-SimpleRL-Zoo}
        \end{minipage} &

        \rotlabel{DeepSeek-R1-Distill-Qwen-1.5B} &
        \begin{minipage}{0.22\textwidth} 
            \includegraphics[width=\linewidth]{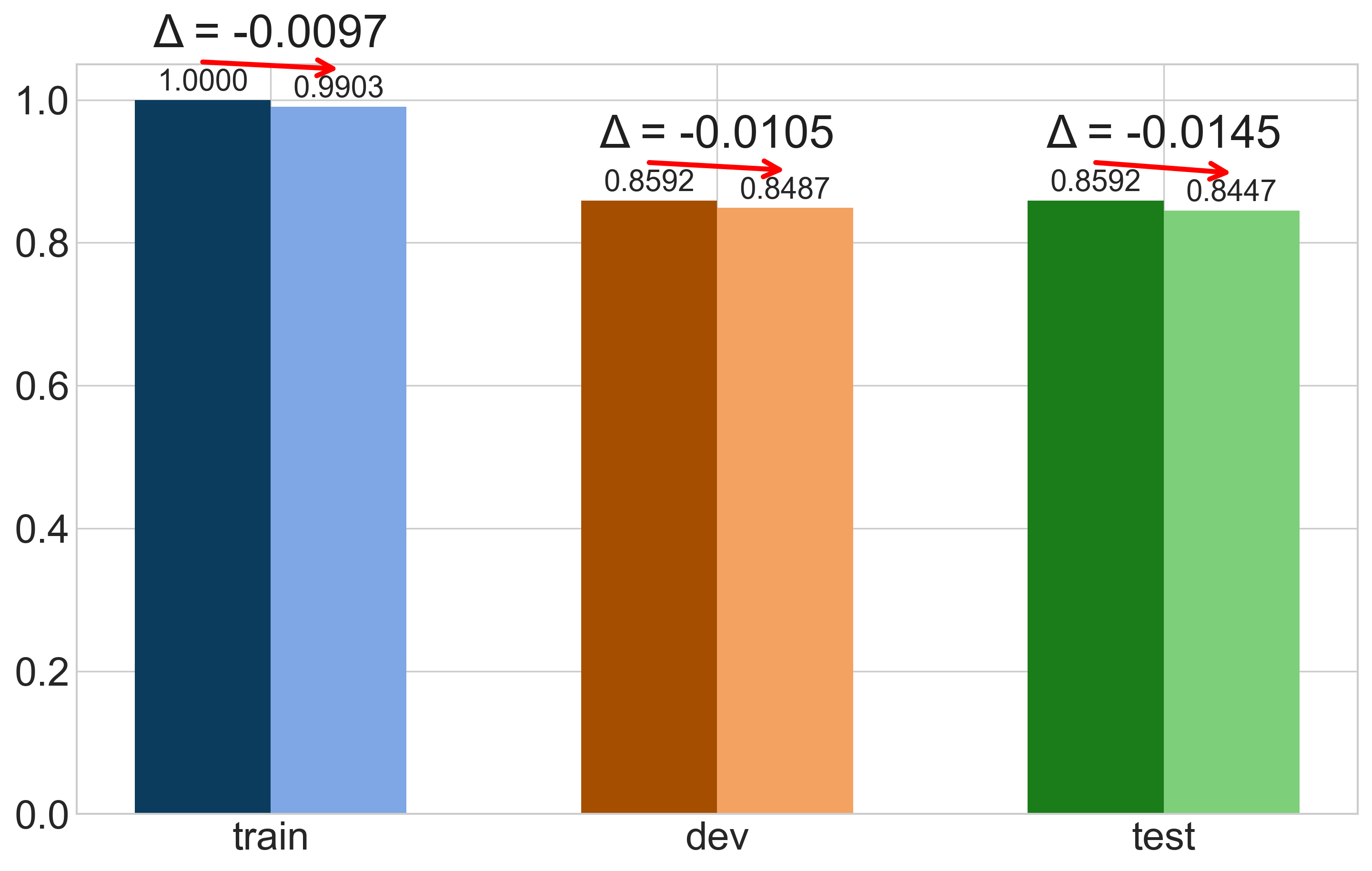} \\
            \centering \scriptsize \textbf{Nemotron-Research-Reasoning-Qwen-1.5B}
        \end{minipage} &

        \rotlabel{Qwen3-4B} &
        \begin{minipage}{0.22\textwidth} 
            \includegraphics[width=\linewidth]{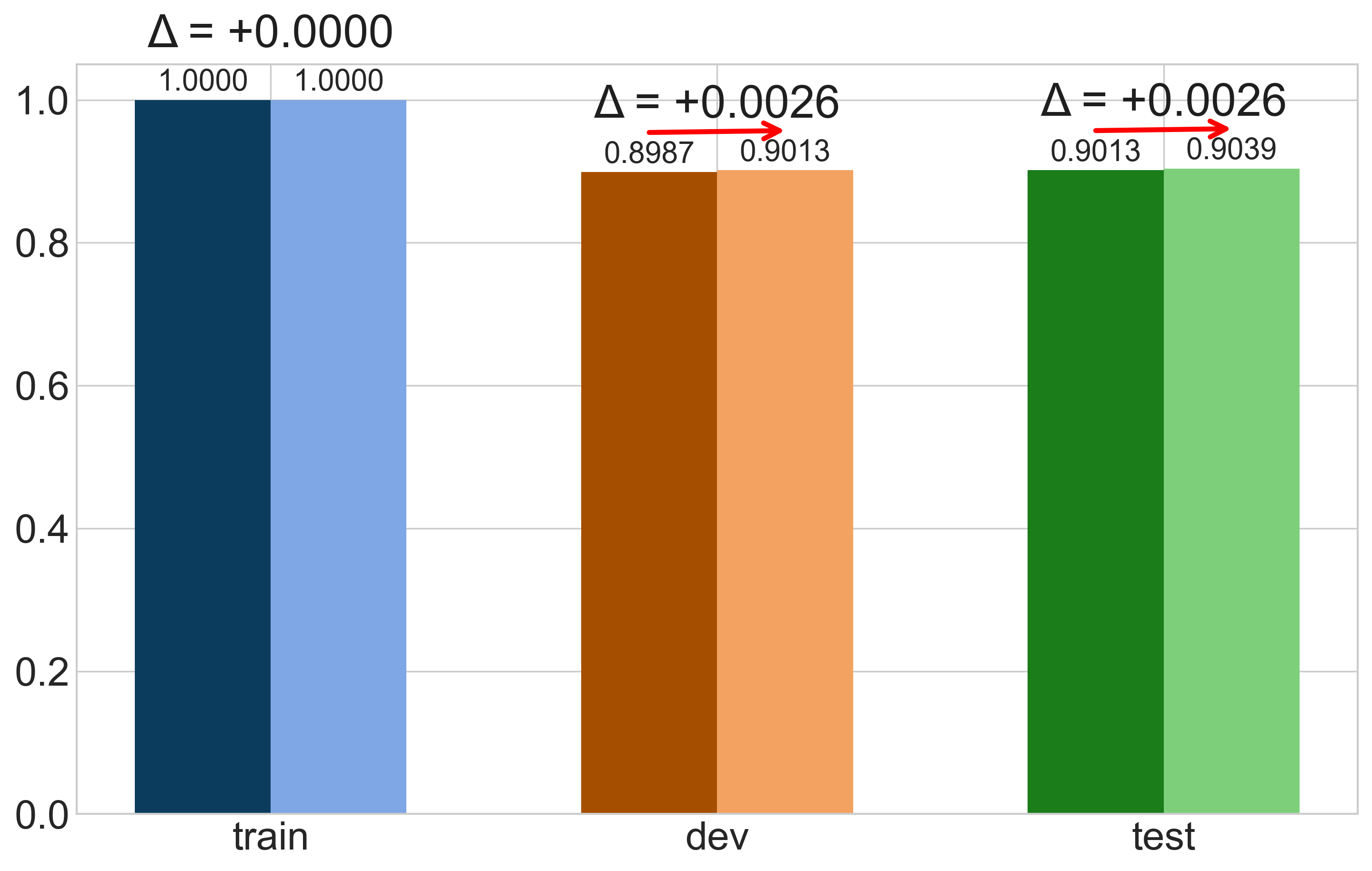} \\
            \centering \scriptsize \textbf{Qwen3-4B-PSR}
        \end{minipage} \\
    \end{tabular}%
    } 

    \vspace{1em}
    \textbf{Dataset Types}

    \vspace{1em}
    \includegraphics[width=0.5\linewidth, keepaspectratio]{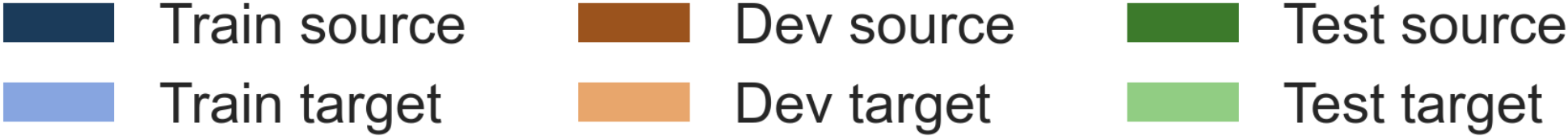} 

    \caption{Additional Results on Cross-Model Linear Probe. The vertical axis and horizontal axis are the Accuracy of the linear probe and Dataset types (train, dev, test), respectively. The \textbf{\textcolor{red}{red}} background indicates SFT-tuned pairs.}
    \label{fig: appendix-linear-probe-base-vs-reasoning}
\end{figure*}

\begin{figure*}[p]
    \centering
    {\large \textbf{Cross-Model Linear Probes}} \par\smallskip
    {\large \textbf{Base Embedding Models $\mathcal{M}_{base}^{Emb}$ vs. Reasoning Embedding Models $\mathcal{M}_{reason}^{Emb}$}} \par\medskip

    \setlength{\tabcolsep}{1pt}

    \sectionbox{Dataset: AG's News Topic Classification}
    \vspace{1ex} 

    \makebox[\textwidth][c]{%
    \begin{tabular}{
        c @{\hspace{1pt}} c  @{\hspace{0.5em}}
        c @{\hspace{1pt}} c  @{\hspace{0.5em}}
        c @{\hspace{1pt}} c
    }
        \rotlabel{Qwen2.5-Math-1.5B-\textit{Emb}} &
        \setlength{\fboxsep}{3pt}%
        \colorbox{red!20}{
            \begin{minipage}{0.3\textwidth} 
                \includegraphics[width=\linewidth]{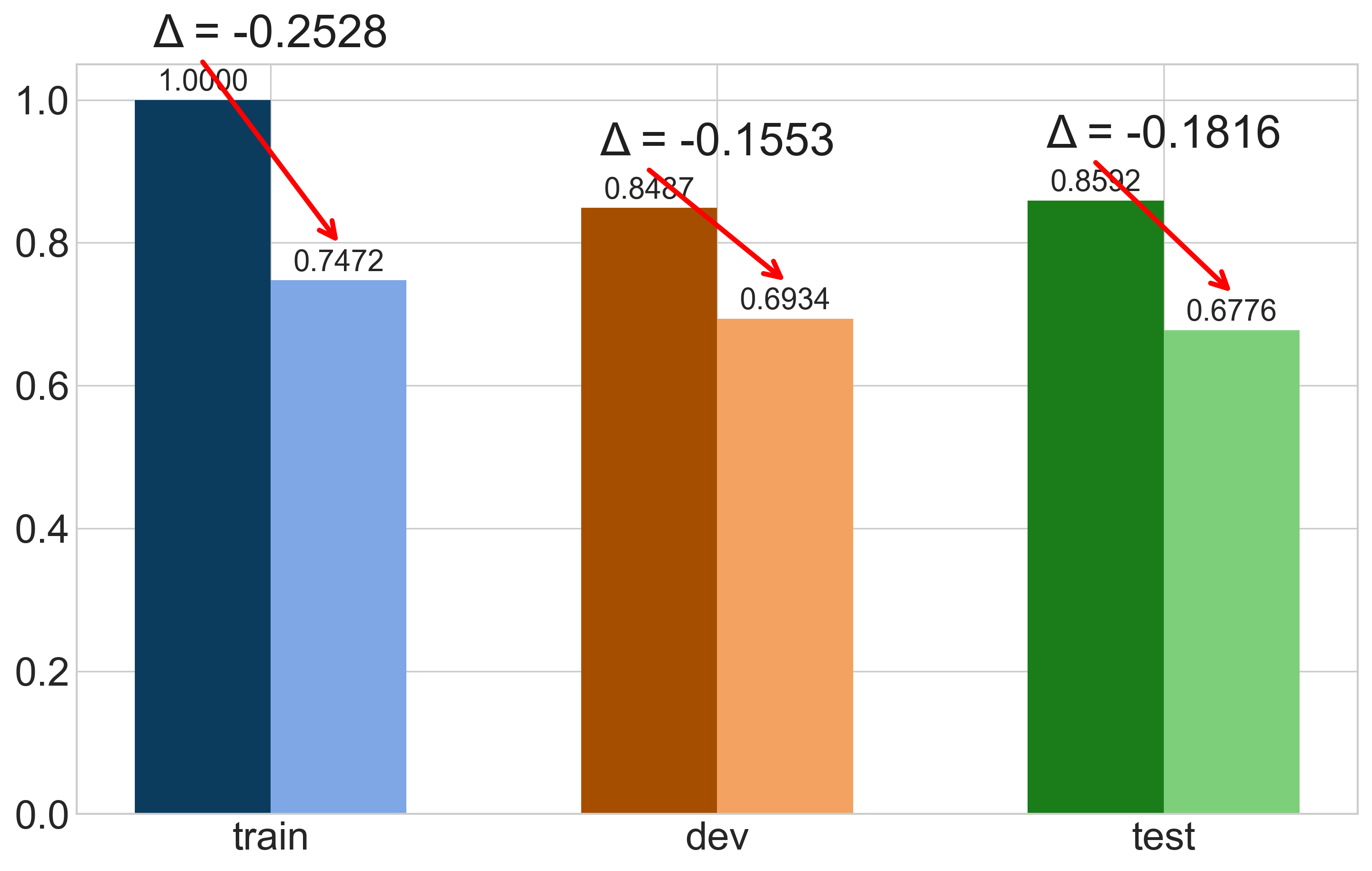}
                \centering \scriptsize \textbf{DeepSeek-R1-Distill-Qwen-1.5B-\textit{Emb}}
            \end{minipage}%
        } &


        \rotlabel{Qwen3-0.6B-Base-\textit{Emb}} &
        \setlength{\fboxsep}{3pt}%
        \colorbox{red!20}{
            \begin{minipage}{0.3\textwidth} 
                \includegraphics[width=\linewidth]{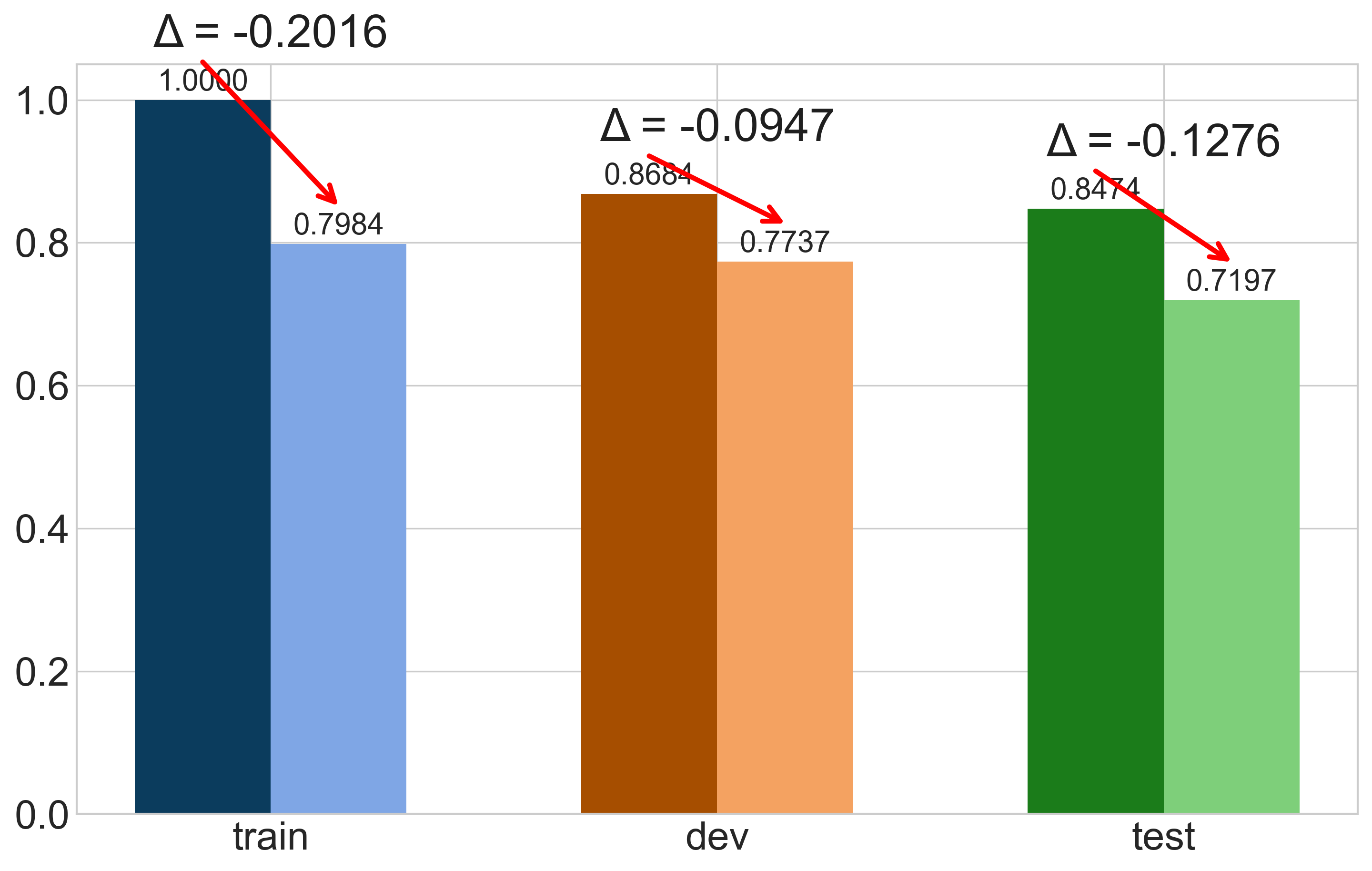} \\
                \centering \scriptsize \textbf{Qwen3-0.6B-\textit{Emb}}
            \end{minipage}%
        } &


        \rotlabel{Qwen2.5-1.5B-\textit{Emb}} &
        \begin{minipage}{0.3\textwidth} 
            \includegraphics[width=\linewidth]{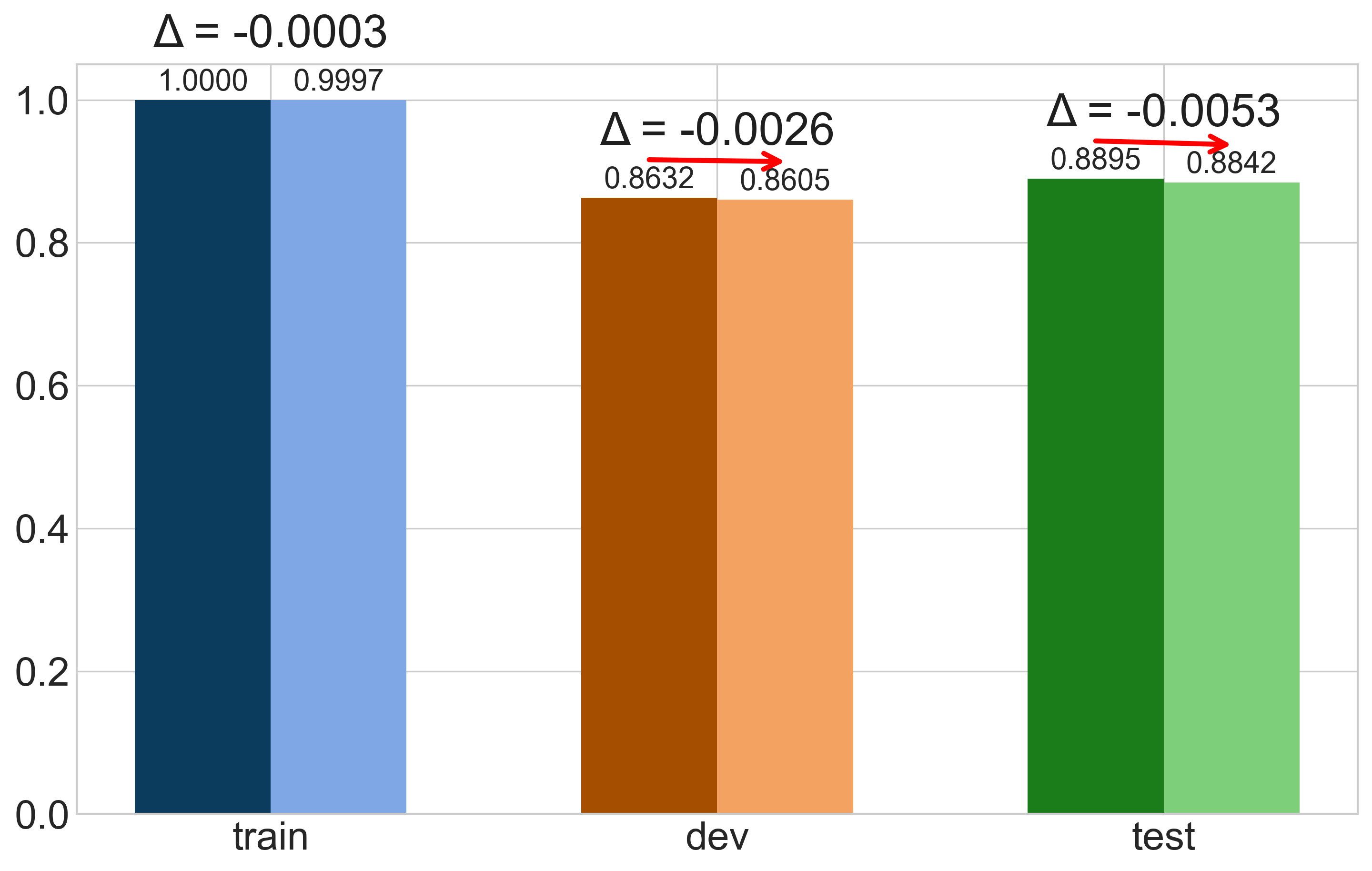} \\
            \centering \scriptsize \textbf{Qwen-2.5-1.5B-SimpleRL-Zoo-\textit{Emb}}
        \end{minipage} \\

        \multicolumn{6}{c}{\vspace{0.1em}} \\ 

        \rotlabel{Qwen2.5-0.5B-\textit{Emb}} &
        \begin{minipage}{0.3\textwidth} 
            \includegraphics[width=\linewidth]{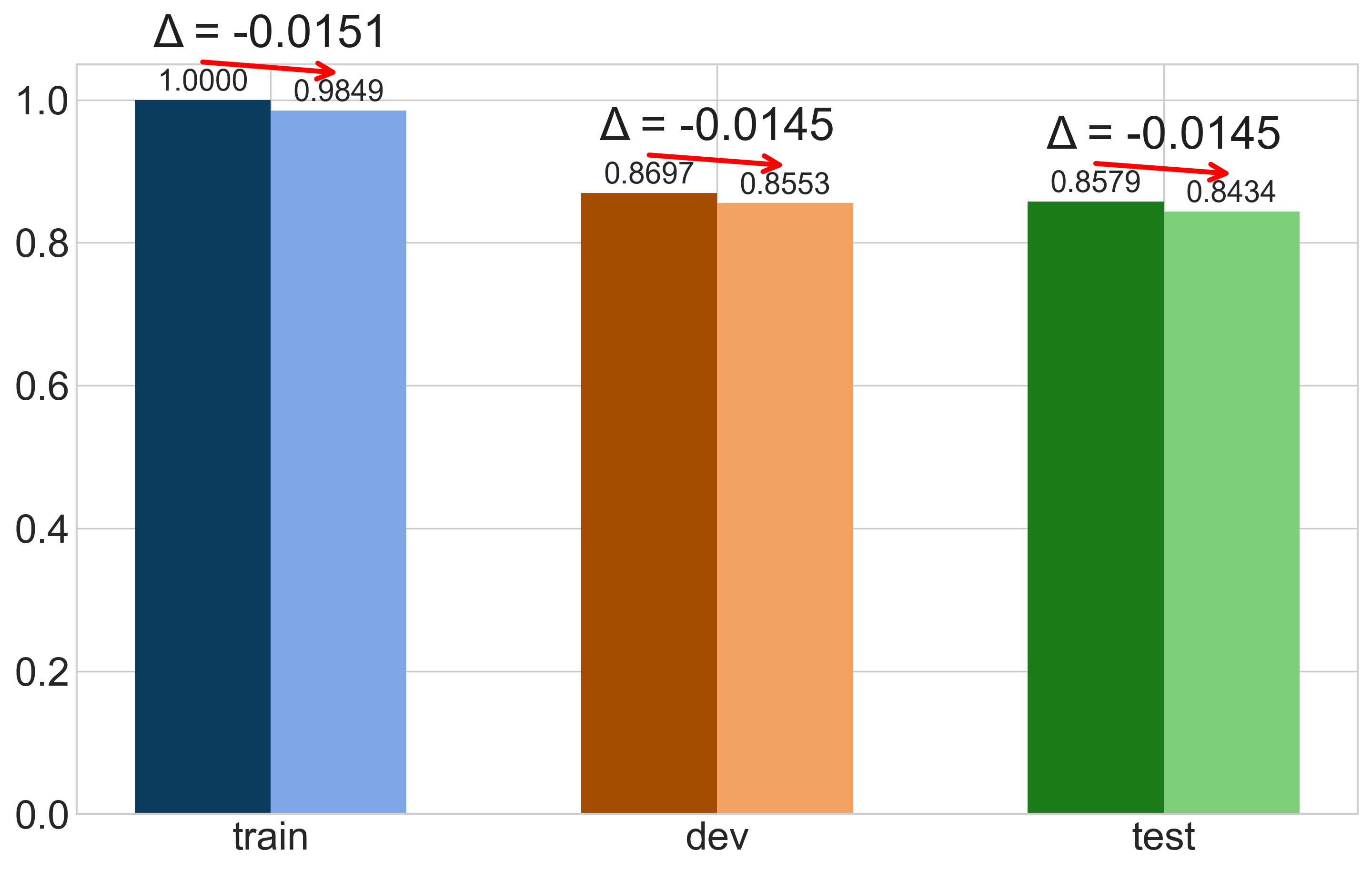} \\
            \centering \scriptsize \textbf{Qwen-2.5-0.5B-SimpleRL-Zoo-\textit{Emb}}
        \end{minipage} &

        \rotlabel{DeepSeek-R1-Distill-Qwen-1.5B-\textit{Emb}} &
        \begin{minipage}{0.3\textwidth} 
            \includegraphics[width=\linewidth]{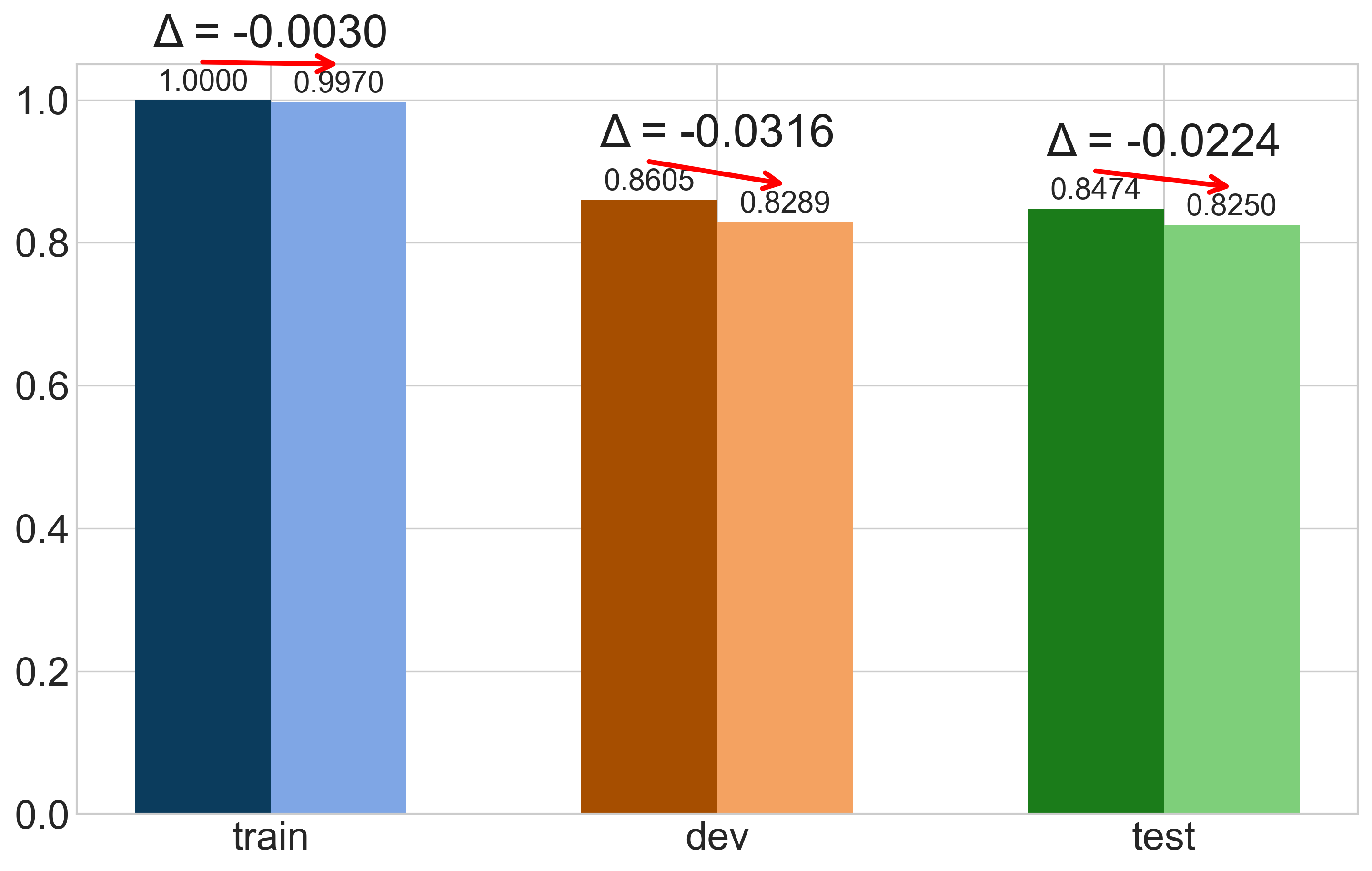} \\
            \centering \scriptsize \textbf{Nemotron-Research-Reasoning-Qwen-1.5B-\textit{Emb}}
        \end{minipage} &

        \rotlabel{Qwen3-4B-\textit{Emb}} &
        \begin{minipage}{0.3\textwidth} 
            \includegraphics[width=\linewidth]{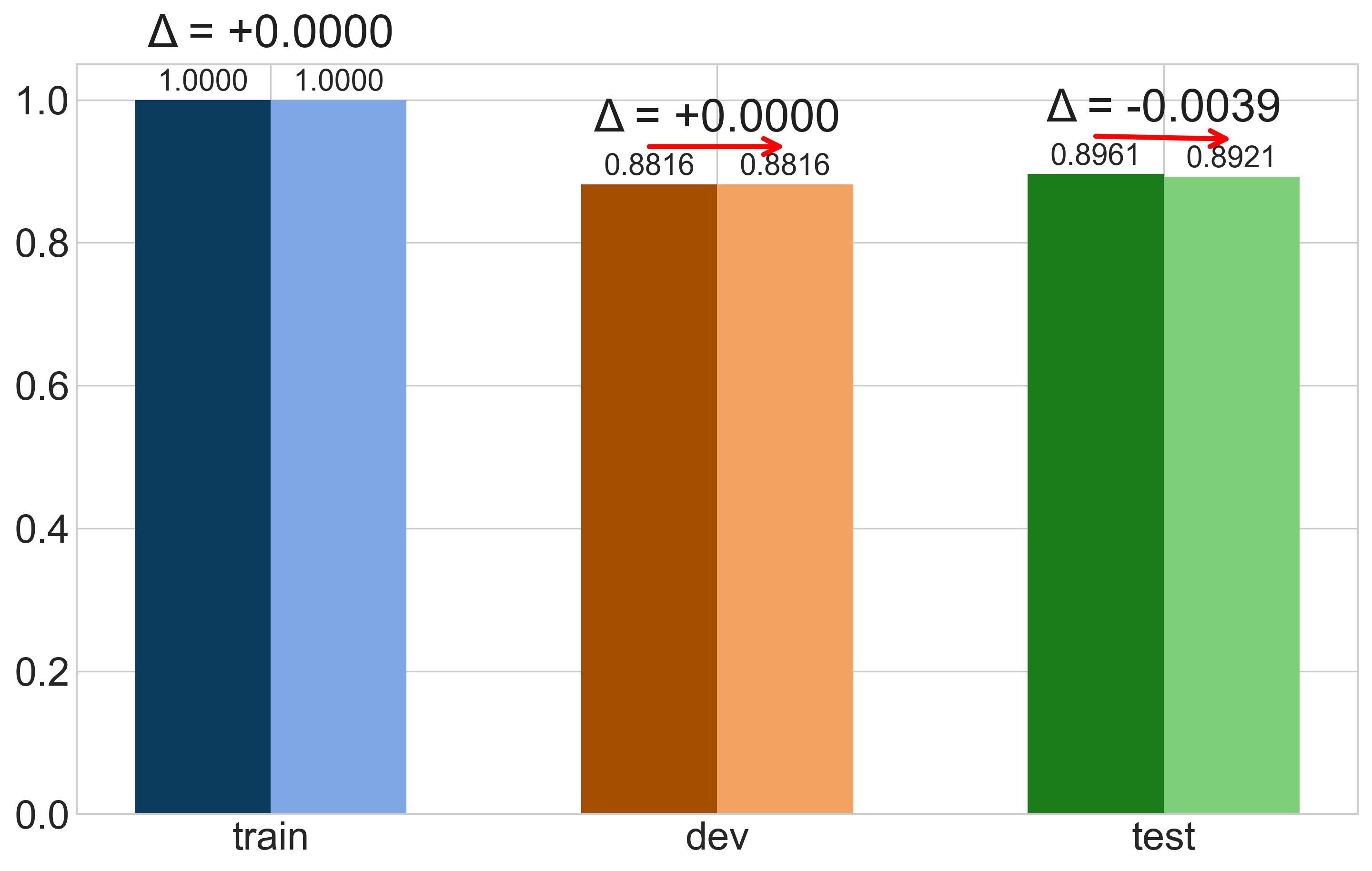} \\
            \centering \scriptsize \textbf{Qwen3-4B-PSR-\textit{Emb}}
        \end{minipage} \\
    \end{tabular}%
    } 

    \vspace{1em}
    \textbf{Dataset Types}

    \vspace{1em}
    \includegraphics[width=0.5\linewidth, keepaspectratio]{figures/re_experiments/linear_probe/cross_linear_probe_label.png}

    \caption{Additional Results on Cross-Model Linear Probe. The vertical axis and horizontal axis are the Accuracy of the linear probe and Dataset types (train, dev, test), respectively. The \textbf{\textcolor{red}{red}} background indicates their backbone LLMs are SFT-tuned pairs.}
    \label{fig: appendix-linear-probe-base-embedding-vs-reasoning-embedding}
\end{figure*}

\end{document}